\documentclass[final,12pt]{colt2026} %

\title[The Asymptotics of Self-Supervised Pre-training]{On the Asymptotics of Self-Supervised Pre-training: \\
  Two-Stage $M$-Estimation and Representation Symmetry\protect\footnotemark[1]}
\usepackage{times}
\usepackage{mathtools}
\usepackage{microtype}
\usepackage{subcaption}
\usepackage{booktabs}
\usepackage{tcolorbox}

\usepackage{enumitem}

\usepackage{hyperref}
\hypersetup{
    colorlinks,
    linkcolor={red!50!black},
    citecolor={red!50!black},
    urlcolor={red!80!black}
}
\usepackage{tikz}
\usetikzlibrary{arrows.meta, positioning, calc, decorations.pathmorphing, shapes.geometric, fit, backgrounds}
\usepackage{wrapfig}
\usepackage{comment}
\usepackage{color}
\usepackage{xcolor}



\usepackage[capitalize]{cleveref}

\crefname{appendix}{Appendix}{Appendices}
\Crefname{appendix}{Appendix}{Appendices}

\usepackage{thmtools}



 \newtheorem{theorem}{Theorem}[section]
\newtheorem{lemma}[theorem]{Lemma}
\newtheorem{proposition}[theorem]{Proposition}
\newtheorem{corollary}[theorem]{Corollary}
\newtheorem{remark}[theorem]{Remark}
\newtheorem{definition}[theorem]{Definition}
\newtheorem{assmp}[theorem]{Assumption}

\renewcommand{\ge}{\geqslant}
\renewcommand{\geq}{\geqslant}
\renewcommand{\le}{\leqslant}
\renewcommand{\leq}{\leqslant}

\renewcommand{\succeq}{\succcurlyeq}

\DeclareMathOperator*{\argmin}{arg\min}

\DeclareMathOperator*{\rank}{rank}

\DeclareMathOperator*{\diag}{diag}
\DeclareMathOperator*{\Tr}{\mathrm{tr}}

\newcommand{\calL}{\mathcal{L}}

\newcommand{\R}{\ensuremath{\mathbb{R}}}

\newcommand{\N}{\ensuremath{\mathbb{N}}}

\newcommand{\Unif}{\mathrm{Unif}}

\newcommand{\norm}[1]{\lVert #1 \rVert}

\newcommand{\ip}[2]{\ensuremath{\langle #1, #2 \rangle}}

\newcommand{\E}{\mathbb{E}}

\renewcommand{\Pr}{\mathbb{P}}
\newcommand{\T}{\mathsf{T}}

\newcommand{\calN}{\mathcal{N}}
\newcommand{\calE}{\mathcal{E}}

\newcommand{\calF}{\mathcal{F}}

\newcommand{\calH}{\mathcal{H}}

\newcommand{\Cov}{\mathrm{Cov}}

\numberwithin{equation}{section}

\DeclarePairedDelimiterX{\infdivx}[2]{(}{)}{%
  #1\;\delimsize\|\;#2%
}

\newcommand{\opnorm}[1]{\norm{#1}_{\mathrm{op}}}

\newcommand{\distconv}{\stackrel{d}{\rightsquigarrow}}

\newcommand{\rmd}{\mathrm{d}}

\DeclareMathOperator{\tr}{tr}

\newcommand{\mycomment}[1]{}

\newcommand{\iid}{\mathrm{i.i.d.}}

\definecolor{myblue}{RGB}{0,90,156}
\definecolor{myred}{RGB}{176,48,48}
\definecolor{mygreen}{RGB}{34,110,60}
\definecolor{mygray}{RGB}{120,120,120}

\allowdisplaybreaks[2]

\coltauthor{%
 \Name{Mohammad Tinati} \Email{tinati@usc.edu}\\
 \addr Department of Electrical and Computer Engineering, University of Southern California
 \AND
 \Name{Stephen Tu} \Email{stephen.tu@usc.edu}\\
 \addr Department of Electrical and Computer Engineering, University of Southern California%
}

\begin{document}

\maketitle
\begingroup
\renewcommand{\thefootnote}{\fnsymbol{footnote}}
\footnotetext[1]{Accepted for presentation at the Conference on Learning Theory (COLT) 2026.}
\endgroup
\begin{abstract}%
Self-supervised pre-training, where large corpora of unlabeled data are used to learn representations for downstream fine-tuning, has become a cornerstone of modern machine learning. 
While a growing body of work has begun to analyze this paradigm, existing bounds leave open the question of how sharp current rates are, and whether they accurately capture the
complex interaction between pre-training and fine-tuning. 
In this paper, we address this gap by developing an asymptotic theory of pre-training via two-stage $M$-estimation. A key challenge is that the pre-training estimator is often identifiable only up to a group symmetry, a feature common in representation learning that requires careful treatment.
We address this issue using tools from Riemannian geometry to 
study the \emph{intrinsic} parameters of the pre-training representation,
which we link with the downstream predictor through a notion of
\emph{orbit-invariance}, precisely characterizing
the limiting distribution of the downstream test risk.
We apply our results to spectral pre-training, factor models, and Gaussian mixture models,
obtaining substantial improvements in problem-specific factors over prior art when applicable.
\end{abstract}

\begin{keywords}%
Self-supervised pre-training, two-stage $M$-estimation, Riemannian CLT.
\end{keywords}

\section{Introduction}

Self-supervised pre-training has emerged as a powerful paradigm for learning representations from large corpora of unlabeled data, which are subsequently adapted to downstream tasks via fine-tuning. This approach has achieved striking empirical success across a wide range of domains in modern machine learning. For instance in computer vision, a growing body of contrastive,
masked reconstruction, and self-distillation methods~\citep{chen2020simclr,zbontar2021barlowtwins,grill2020BYOL,oord2018representation,he2020momentum,wang2020alignmentuniformity,chen2021exploring,bardes2022vicreg,he2022masked}
have demonstrated that high-quality features can be learned without manual annotation and transferred effectively across tasks. More broadly, large language models and vision-language models trained on massive unlabeled or weakly labeled corpora have shown that pre-training can endow models with general-purpose capabilities that substantially reduce the amount of labeled data required for downstream adaptation~\citep{devlin2019bert,brown2020gpt3,radford2021CLIP,oquab2024dinov2}.

Motivated by these empirical advances, a growing body of theoretical work has begun to investigate self-supervised pre-training, including contrastive learning and other variants, from a statistical perspective~\citep{saunshi2019analysis,tosh2021contrastive,lee2021predicting,haochen2021provable,cabannes23SSLinterplay,saunshi2022inductivebias,ge2023provable,lin2025statistical}. 
Despite the varying problem setups,
loss functions, and structural assumptions in these works,
a central question across much of this literature is: when does the two-stage pipeline of pre-training on unlabelled data followed by fine-tuning on downstream task data \emph{provably outperform training from scratch on the downstream data alone?} 
A closely related question involves the marginal value of pre-training data: 
when is the downstream task error fundamentally bottlenecked by the amount of labeled fine-tuning data, so that additional pre-training samples yield \emph{diminishing improvement for downstream task performance?}

While recent works have made some progress towards answering these questions,
we still lack an instance-optimal theory that precisely characterizes the role of pre-training loss, data distribution, and representation properties in downstream task performance.
Indeed, much of the existing theory focuses on sufficient conditions and upper bounds, leaving open the question of how sharply current rates capture true behavior.
Moreover, available results are typically not instance-adaptive: they do not explicitly reflect the interaction between the specific pre-training and fine-tuning distributions, losses, models, and representation structure. 
Contrast this to standard supervised learning, where classical $M$-estimation theory provides instance-specific asymptotic characterization of the excess risk;
these bounds then serve as a benchmark for deriving sharp non-asymptotic results~\citep{spokoiny2012parametric,frostig15competing,ostrovskii2021finite}. 

\begin{wrapfigure}[14]{r}{0.38\textwidth}
\vspace{-1.0em}
\centering
\begin{tikzpicture}[
  >=Latex,
  line width=0.75pt,
  scale=1,
  every node/.style={font=\scriptsize}
]
  \def\xmin{0.20}
  \def\xmax{3.60}
  \def\xstart{0.40}
  \def\xzero{1.49}
  \def\yfloor{0.50}
  \def\ybase{1.25}
  \def\yhigh{2.25}
  \def\decayk{1.270}

  \draw[->, mygray!70] (-0.2,0) -- (3.85,0)
    node[right, text=mygray] {$\log \alpha$};
  \draw[->, mygray!70] (0,-0.2) -- (0,3.05);

  \node[mygray, rotate=90, font=\scriptsize] at (-0.42,1.45)
    {Asymptotic scaled risk};

  \fill[mygreen!12] (\xzero,\yfloor) rectangle (\xmax,\ybase);

  \draw[myblue!60!black, dashed] (\xmin,\yfloor) -- (\xmax,\yfloor);
  \node[myblue!60!black, right=2pt, font=\tiny] at (\xmax,\yfloor)
    {$\sigma^2 d_\mathrm{eff}$};

  \draw[myred, thick] (\xmin,\ybase) -- (\xmax,\ybase);
  \node[myred, right=2pt, font=\tiny] at (\xmax,\ybase) {$\sigma^2 d_\mathrm{base}$};

  \draw[myblue!80!black, thick] (\xstart,\yhigh) -- (\xzero,\ybase);
  \draw[myblue!80!black, thick, domain=\xzero:\xmax, samples=60, smooth]
    plot (\x, {\yfloor + (\ybase-\yfloor)*exp(-\decayk*(\x-\xzero))});

  \fill[myred] (\xzero,\ybase) circle (1.7pt);
  \draw[mygray!60, densely dotted] (\xzero,0) -- (\xzero,\ybase);
  \node[myred, font=\tiny, below=1pt] at (\xzero,-0.02) {$\alpha_0$};

  \node[black!70!black, font=\tiny, align=center] at (2.70,0.95)
    {pre-training\\ wins};

  \begin{scope}[shift={(1.22,2.30)}]
    \draw[black!55, fill=white, line width=0.4pt]
      (0,0) rectangle (3.42,0.62);
    \draw[myred, thick] (0.10,0.44) -- (0.42,0.44);
    \node[anchor=west, font=\tiny, inner sep=1pt] at (0.44,0.44)
      {downstream-only ($\theta \circ \psi$)};
    \draw[myblue!80!black, thick] (0.10,0.18) -- (0.42,0.18);
    \node[anchor=west, font=\tiny, inner sep=1pt] at (0.44,0.18)
      {pre-training ($\psi$) + fine-tuning ($\theta)$};
  \end{scope}
\end{tikzpicture}
\caption{Schematic risk crossover. Here, $d_{\mathrm{base}}$ and $d_{\mathrm{eff}}$ denote the effective numbers of parameters in the downstream-only and known-representation limits,
respectively.}
\label{fig:intro-transition-pretraining}
\vspace{-1.0em}
\end{wrapfigure}

\paragraph{Our contribution.} In this paper, we take a step towards bridging this gap by developing an asymptotic theory of self-supervised pre-training followed by fine-tuning with linear regression, in the joint limit of pre-training and fine-tuning data. Let $\alpha := m/n$ denote the ratio of pre-training samples to downstream labeled samples. Our main result shows that the scaled downstream excess risk of the two-stage estimator decomposes into
two leading contributions: an intrinsic fine-tuning term, corresponding to well-specified least-squares regression on the limiting representation, and a pre-training interaction term that decays as $1/\alpha$ (see \Cref{fig:intro-transition-pretraining}). Thus, as the amount of pre-training data grows relative to downstream data, the pre-training interaction term vanishes, and the risk approaches the well-specified linear regression floor. Importantly, our result  identifies a precise threshold characterization (i.e., $\alpha > \alpha_0$) where pre-training plus fine-tuning provides strict improvement over a downstream-only baseline.
Applying our main result to several case studies---including pre-training with a spectral loss, a latent factor model, and
a Gaussian mixture model---illuminates
substantial gaps in problem-specific factors in prior art.

A key technical challenge which arises in our analysis is that pre-training estimators are often identifiable only up to a group symmetry, which complicates the direct application of two-stage $M$-estimation theory~\citep[see e.g.,][]{pagan1986twostage,newey1994largesample}.
We address this challenge for a general pre-training loss that learns a representation used in downstream linear regression. We first establish asymptotic normality of the \emph{intrinsic} pre-training representation by building on recent results in Riemannian $M$-estimation~\citep{brunel2023geodesically}.
We then link pre-training and downstream regression 
together via a key conceptual step that identifies
structural conditions on the learned features that ensure \emph{orbit-invariance} of the downstream predictor. 
Our general proof strategy should be broadly applicable beyond analyzing self-supervised pre-training pipelines, and we discuss future extensions in the conclusion.

\section{Related Work}
\label{sec:related_work}

Our work draws on ideas from self-supervised learning, the asymptotic theory of two-stage estimators, and Riemannian $M$-estimation. We defer a more comprehensive discussion of related work to \Cref{sec:appendix:related_work}, and focus here on the prior work most directly relevant to our contributions.

Most related to our work are \citet{cabannes23SSLinterplay,ge2023provable,zhai2024understanding}.
First, \citet{cabannes23SSLinterplay} studies a VICReg-style~\citep{balestriero2022contrastive} pre-training loss combined with downstream RKHS regression.
They control the downstream test risk by
the (scaled) pretrain loss, which they bound
using Rademacher complexity arguments.
While their downstream RKHS setup is more flexible, 
our analysis holds for more general pre-training losses.
Next, \citet{ge2023provable} 
combine MLE pre-training with ERM fine-tuning. 
Their \emph{$\kappa$-informative condition} shares high-level similarity with our goal of quantifying invariance in pre-training; however, 
the details differ substantially from our geometric approach.
Finally, \citet{zhai2024understanding} study downstream error through the lens of RKHS approximation,
showing that downstream error is influenced by two key terms: (a) 
the complexity of the RKHS induced by the augmentation distribution, and (b) how well
the pre-trained encoder approximates the induced augmentation RKHS.
In the aforementioned works, whether or not the upper bounds achieve optimal dependence on problem-specific constants is left open.
In \Cref{sec:examples}, we show non-trivial gaps between the upper bounds provided by \cite{cabannes23SSLinterplay,ge2023provable} and those
that arise from our asymptotic analysis in several settings.\footnote{The bounds from \citet{zhai2024understanding} are not directly 
comparable, as discussed further in \Cref{sec:appendix:related_work}.}

\section{Problem Formulation}
\label{sec:problem-formulation}

Let $\mu_{\mathrm{pre}}$ and $\mu_{\mathrm{down}}$ be probability measures on input spaces $\mathcal{Z}$ and $\mathcal X$, respectively.
We consider two training datasets:
\emph{(i)} a \emph{pre-training} dataset
$D_{\mathrm{pre}}^{(m)} \coloneqq  \{ z_j \}_{j=1}^{m}$,
where $z_j \stackrel{\iid}{\sim} \mu_{\mathrm{pre}}$,
and \emph{(ii)} a \emph{downstream} dataset
$D_{\mathrm{down}}^{(n)} \coloneqq  \{ (x_i, y_i) \}_{i=1}^{n}$,
where $(x_i,y_i) \stackrel{\iid}{\sim} (X,Y)$ and $X \sim \mu_{\mathrm{down}}$.
The datasets $D_\mathrm{pre}^{(m)}$ and $D_{\mathrm{down}}^{(n)}$ are drawn independently.
The pair $(X, Y)$ is further assumed to satisfy:
\begin{align}
Y &= f_\star(X) + \varepsilon,
\qquad
\mathbb{E}[\varepsilon \mid X] = 0,
\qquad
\sigma^2 \coloneq \mathbb{E}[\varepsilon^2 \mid X] < \infty ,
\label{eq:X_and_Y_pair}
\end{align}
for some unknown regression function $f_\star : \mathcal{X} \mapsto \R$.
Towards parameterizing $f_\star$,
we fix a feature dimension $p \in \N_+$, and consider a differentiable representation $\psi(x, w) \in \R^p$,
where $w\in\R^{q_0}$ is the representation parameter.
For each $w$, define the linear hypothesis class
$\mathcal{H}_w \coloneqq \{ f_{\theta,w} \mid \theta \in \mathbb{R}^p \}$
with $f_{\theta,w}(x) \coloneq \langle \theta, \psi(x,w) \rangle$.
We assume that $f_\star$ in \eqref{eq:X_and_Y_pair} is \emph{well-specified} with respect to 
$\calF \coloneqq  \bigcup_{w \in \R^{q_0}} \mathcal{H}_w$,
i.e., $f_\star \in \calF$.
Let $(w_\star, \theta_\star)$ denote a pair such that
$f_\star = f_{\theta_\star, w_\star}$.

\paragraph{Notation.} %
Throughout, $L^2\coloneqq L^2(\mu_{\mathrm{down}})$ denotes the Hilbert space of real-valued square-integrable
functions $g:\mathcal X\mapsto \R$ with inner product
$\langle g,h\rangle \coloneqq  \E_{X \sim \mu_{\mathrm{down}}}[g(X)h(X)]$.
The notation $\distconv$ denotes convergence in distribution, and $\xrightarrow{\mathbb P}$ denotes
convergence in probability.
The set $B_d(w, r) \coloneq \{ w \in \R^d \mid \norm{w} \leq r \}$ denotes the closed $\ell_2$-ball of radius $r$ in $\R^d$; we drop the subscript $d$ when the dimension is implicit.
The set $O(p)$ denotes the orthogonal group $O(p) \coloneq \{ Q \in \R^{p \times p} \mid Q^\top Q = QQ^\top = I \}$.
Finally, $(\cdot)^+$ denotes the Moore-Penrose pseudo-inverse for a matrix.

\subsection{Pre-training Loss, Downstream Least-Squares Estimation, and Final Test Risk}
\label{sec:problem_formulation:losses}

\paragraph{Pre-training objective.}
Let 
$\ell_{\mathrm{pre}}:\R^{q_0}\times\mathcal Z\mapsto\R$
denote a 
pre-training loss which is twice continuously differentiable with respect to $w$
for almost every $Z$,
and let 
$L_{\mathrm{pre}}(w)\coloneqq \E_{Z\sim\mu_{\mathrm{pre}}}\big[\ell_{\mathrm{pre}}(w;Z)\big]$
denote the corresponding population-level pre-training loss.
The pre-training stage solves:
\begin{align}
\hat w_m \in \argmin_{w\in\R^{q_0}}\hat L_{\mathrm{pre}}(w;D_{\mathrm{pre}}^{(m)}) :=  \frac{1}{m}\sum_{j=1}^m \ell_{\mathrm{pre}}(w;z_j). \label{eq:pretrain_estimator}
\end{align}
 Our notation deliberately abstracts away the specific form of the pre-training loss; the analysis applies broadly to
standard contrastive and representation-learning losses used in practice.

\paragraph{Downstream estimation.}
The downstream estimator uses both the pre-trained parameter
$\hat w_m \in \mathbb R^{q_0}$ and the downstream training data
$D_{\rm down}^{(n)}$ to compute:\footnote{When $\hat\theta_{m,n}$ is not unique, we define it as the minimum Euclidean norm solution.
}
\begin{align}
\hat{\theta}_{m,n}
\in \argmin_{\theta \in \mathbb{R}^p}
\hat{L}_{\mathrm{down}}(\theta; \hat{w}_m, D_{\mathrm{down}}^{(n)})\coloneq
\frac{1}{n}\sum_{i=1}^n
\bigl(y_i-\langle \theta,\psi(x_i,w)\rangle\bigr)^2  . \label{eq:downstream_estimator}
\end{align}
The resulting predictor for the downstream task is then
$\hat{f}_{m,n}(\cdot) \coloneq  \langle \hat{\theta}_{m,n}, \psi(\cdot,\hat{w}_m) \rangle \in \calH_{\hat{w}_m}$.

\paragraph{Final test risk.}
Let $(X_{\mathrm{new}},Y_{\mathrm{new}})$ be an independent test pair with the same distribution as $(X,Y)$ (cf.~\eqref{eq:X_and_Y_pair}).
We focus on a \emph{conditional} notion of test-time risk that conditions on the realized pre-training dataset and downstream design,
while still averaging over downstream label noise.
Specifically, write $X_{1:n} \coloneq (X_1,\dots,X_n)$ and define the (conditional) test-time risk:
\begin{align}
R(D_{\mathrm{pre}}^{(m)},X_{1:n})
\coloneqq \E[\bigl(Y_{\mathrm{new}}-\hat f_{m,n}(X_{\mathrm{new}})\bigr)^2
\,|\,D_{\mathrm{pre}}^{(m)},\,X_{1:n}].
\label{eq:main_risk_cond}
\end{align}
With this notation in place, the main goal of this work is the following asymptotic characterization.
\begin{tcolorbox}[colback=black!3,colframe=black!60,boxrule=1pt,arc=2pt,left=3pt,right=3pt,top=3pt,bottom=3pt]
\textbf{Goal:}
Characterize as $(m,n)\to(\infty,\infty)$, with $m/n\to\alpha\in(0,\infty)$, the joint-sample limit:
\begin{align}\label{eq:risk_convergence_goal}
\mathcal{E}_{m,n}
\coloneq
n\Bigl(R(D_{\mathrm{pre}}^{(m)},X_{1:n})-\sigma^2\Bigr)
\;\distconv\;
\mathcal{E}_\alpha.
\end{align}
\end{tcolorbox}
\paragraph{Interpreting the distributional limit \eqref{eq:risk_convergence_goal}.}
From \eqref{eq:risk_convergence_goal}, several important implications follow.
By Fatou's lemma, we have 
the lower bound $\E[ \mathcal{E}_\alpha ] \leq \lim_{m,n} \E[ \mathcal{E}_{m,n} ]$ (here, $\lim_{m,n}$ is understood as $(m,n) \to \infty$ with $m/n\to \alpha$), which gives statistical \emph{lower bounds} on the downstream performance. If the sequence $\{ \mathcal{E}_{m,n} \}_{m,n}$ can be shown to be uniformly integrable, then this lower bound can be upgraded to equality, which yields an \emph{exact characterization} of the asymptotic excess risk in expectation.
Absent uniform integrability, we can still compute exact asymptotic high-probability upper bounds: since $\Pr( \mathcal{E}_\alpha \geq t ) = \lim_{m,n} \Pr( \mathcal{E}_{m,n} \geq t )$ for any $t > 0$ (assuming $\mathcal{E}_\alpha$ is a continuous distribution), letting $t(\delta)$ be such that $\Pr( \mathcal{E}_\alpha \geq t(\delta)) = \delta$, we have that
$\Pr( \mathcal{E}_{m,n} \geq t(\delta) ) = \delta + o_{m,n}(1)$.
In this work, we do not show uniform integrability,
as this generally requires additional small-ball assumptions~\citep{mourtada2022exact} on the features,
which are not actually needed for \eqref{eq:risk_convergence_goal} to hold (cf.~\Cref{rem:fully-avg-risk}).

The risk definition \eqref{eq:main_risk_cond} takes the conditional expectation over the randomness in $(X_{\mathrm{new}},Y_{\mathrm{new}})$ and over the downstream label noise in
$D_{\mathrm{down}}^{(n)}$ (i.e., over $(Y_1,\dots,Y_n)$ conditional on $X_{1:n}$), holding fixed $D_{\mathrm{pre}}^{(m)}$ and $X_{1:n}$.
This is intentional, as it allows the limit analysis to focus on the interaction
between the pre-train and fine-tune covariates $(D^{(m)}_{\mathrm{pre}}, X_{1:n})$; see \Cref{app:decomp-risk-general} for more discussion.

\section{Symmetries of the Two-Stage Pipeline}
\label{sec:identifiability}

\Cref{sec:problem_formulation:losses}
defines a two-stage $M$-estimation procedure
via \eqref{eq:pretrain_estimator} and \eqref{eq:downstream_estimator},
which in principle can be analyzed
using the standard toolkit for classical $M$-estimation: consistency, asymptotic normality, and delta-method expansions~\citep{van2000asymptotic}. 
However, a crucial technical hurdle is that many pre-training objectives are invariant under symmetries, i.e., there exists a compact Lie group $G$ acting \emph{smoothly} on feature parameters $\R^{q_0}$ such that
\begin{align}
\ell_{\mathrm{pre}}(g \cdot w;z) = \ell_{\mathrm{pre}}(w;z) ,\;
\text{for all } g \in G,\ w \in \R^{q_0},\ z \in \mathcal{Z}.
\label{eq:pretrain_group_invariance}
\end{align}
Concretely, consider a simple setting
where $\mathcal{Z} = \R^d$ and we aim to learn a linear representation $\psi(x, A) = A x$ with $A \in \R^{p \times d}$
in pre-training.
Now, consider any family of pre-training losses (e.g., 
contrastive losses such as SimCLR~\citep{chen2020simclr,oord2018representation})
that act on this representation 
through a {similarity measure} $\mathrm{sim}(x, x'; A) := \langle \psi(x, A), \psi(x', A) \rangle$.
This similarity measure, and hence the pre-training loss, is invariant under any orthogonal transform $Q \in O(p)$, i.e., 
$\mathrm{sim}(x, x'; A) = \mathrm{sim}(x, x'; QA)$.

Consequently, population minimizers are typically not identifiable: if $w$ minimizes $L_{\mathrm{pre}}(w)$, then the entire orbit
$[w] \coloneqq \{ g \cdot w\mid g \in G \}$ does as well. This lack of identifiability rules out both consistency
and asymptotic normality  
for the direct parameter $w$.
One of our key contributions
is to make explicit the 
types of symmetry encountered in self-supervised pre-training---in a way that remains compatible
with asymptotic analysis---utilizing concepts
from Riemannian geometry and smooth manifolds~\citep{lee2018introduction}.

\subsection{Manifold Identifiability and Asymptotic Normality}
\label{sec:manifold_identifiability}

The invariance \eqref{eq:pretrain_group_invariance} implies that the intrinsic parameter is an orbit $[w]$,
i.e., an element of the quotient $\R^{q_0}/G$. Rather than work directly with the quotient, we represent it through a
\emph{descriptor map} $D:\R^{q_0}\mapsto\R^q$ that is \emph{(a)} constant along orbits: $D(g\cdot w)=D(w)$ for all $g\in G,\ w\in\R^{q_0}$,
and \emph{(b)} {separates orbits} locally around $w_\star$:
$\exists r>0$ such that for $w,w'\in B(w_\star,r)$, we have
$D(w)=D(w')$ iff $w' \in [w]$.
We will also require that
$\mathcal{M}\coloneq D\bigl(B(w_\star,r')\bigr)\subset\mathbb R^q$, for some $0 < r' \leq r$, is a well-defined $C^2$ embedded sub-manifold
with $\Omega_\star \coloneq D(w_\star)$ in its interior;
\Cref{ass:local-quotient-chart} details the minimal assumptions needed
for $D$ to satisfy these requirements.
We endow $\mathcal M$ with the Riemannian metric inherited from the ambient Euclidean space $\R^q$.
Accordingly, for $\Omega\in\mathcal M$ we write $T_\Omega\mathcal M$ for the tangent space, and denote by
$\mathrm{grad}\,L_{\mathrm{pre}}(\Omega)\in T_\Omega\mathcal M$ and
$\mathrm{Hess}\,L_{\mathrm{pre}}(\Omega):T_\Omega\mathcal M\mapsto T_\Omega\mathcal M$ the Riemannian gradient and Hessian of
$L_{\mathrm{pre}}$ restricted to $\mathcal M$.

\paragraph{Identifiability in descriptor coordinates.}
We will study pre-training through the induced estimator
$\hat\Omega_m\coloneqq D(\hat w_m)$, rather than through the non-identifiable representative $\hat w_m$ itself.
We start by assuming the population minimizer
$\Omega_\star
:=\argmin_{\Omega\in D(\R^{q_0}) \coloneqq  \{D(w)\mid w\in\R^{q_0}\}}
L_{\mathrm{pre}}(\Omega)$ is unique in the descriptor space.
We overload the notation $L_{\mathrm{pre}}(\Omega)\coloneqq L_{\mathrm{pre}}(w)$ for any $w\in D^{-1}(\Omega)$, which is
well-defined as $L_{\mathrm{pre}}$ is $G$-invariant (cf.~\eqref{eq:pretrain_group_invariance}).
Since $\Omega_\star$ is unique and lies in the interior of the manifold $\mathcal{M}$, 
we have
$\mathrm{grad}\,L_{\mathrm{pre}}(\Omega_\star)=0$.
To obtain local quadratic control and a well-posed second-order expansion on $\mathcal{M}$, we further assume that 
$\mathrm{Hess}\,L_{\mathrm{pre}}(\Omega_\star)$
is invertible on the tangent space: there exists $\mu>0$ such that for all $v \in T_{\Omega_\star} \mathcal{M}$,
$\big\langle v,\ \mathrm{Hess}\,L_{\mathrm{pre}}(\Omega_\star)\,v \big\rangle
 \ge
 \mu\,\|v\|^2$.

\paragraph{Asymptotic normality on the descriptor manifold.}
Let
$H_\star \coloneq \mathrm{Hess}\,L_{\mathrm{pre}}(\Omega_\star)$
denote the Hessian of the population pre-training loss, and let
$\Sigma_\star \coloneq
\E\!\left[
\mathrm{grad}\,\ell_{\mathrm{pre}}(\Omega_\star;Z)
\mathrm{grad}\,\ell_{\mathrm{pre}}(\Omega_\star;Z)^\top
\right]$
denote the Fisher Information of the pre-training score.
Writing $v_m \coloneqq\log_{\Omega_\star}(\hat\Omega_m)\in T_{\Omega_\star}\mathcal M$, then under the regularity conditions stated formally in~\Cref{ass:riem_mestimation}, 
we show (\Cref{thm:riem_mestimator_clt}) that:
\begin{align}
\sqrt m\,v_m
\;\distconv\;
\mathcal N\!\left(0,\;H_\star^{-1}\Sigma_\star H_\star^{-1}\right)
\qquad\text{in }T_{\Omega_\star}\mathcal M. \label{eq:manifold_dist_conv}
\end{align}
\Cref{eq:manifold_dist_conv} is the manifold analogue of classical asymptotic normality for the pre-training estimator. It follows by extracting out the asymptotic normality argument from \cite{brunel2023geodesically}, replacing geodesic convexity assumptions with
local smooth regularity conditions that
apply to our pre-training setting; see 
\Cref{app:mestimation_riem} for a detailed discussion.

\subsection{Relating Pre-training and Downstream Estimation via Orthogonal Equivariance}
\label{sec:orthogonal_equivariance}

As detailed in \Cref{sec:problem-formulation},
the downstream stage depends on the pre-training stage 
through the representation map $\psi(\cdot,\hat{w}_m)$.
However, as the symmetry in \eqref{eq:pretrain_group_invariance}
precludes $\hat{w}_m$ from asymptotically converging to a fixed optimal parameter value, this implies that the convergence of both 
$\psi(\cdot, \hat{w}_m)$ and its induced linear hypothesis class
$\mathcal{H}_{\hat{w}_m}$ are not well-defined
without extra structure.
To handle this issue, we assume that the symmetry of pre-training is compatible with downstream prediction in the sense that the induced
hypothesis class is \emph{orbit-invariant}:
$\mathcal H_{g\cdot w}=\mathcal H_w$ for all 
$g\in G,\ w\in\R^{q_0}$.
This condition expresses that different representatives within the same orbit $[w]$ yield the same family of predictors.
However, this assumption alone is insufficient:
when the OLS minimizer \eqref{eq:downstream_estimator} is not
unique, different pre-training parameters $\hat{w}_m$
within the same orbit can still lead to different minimum-norm choices.
Therefore, we introduce the following condition,
which we call \emph{orthogonal equivariance} to address
this issue. 
Specifically, we assume there exists a homomorphism
$\rho:G\mapsto O(p)$ (i.e., $\rho(g_1 g_2)=\rho(g_1)\rho(g_2)$ for all $g_1,g_2 \in G$)
such that
\begin{align}
\psi(x,g\cdot w)=\rho(g)\,\psi(x,w),
\; \text{for all }g\in G,\ w\in\R^{q_0},\ x\in\mathcal X.
\label{eq:psi_orth_equiv}
\end{align}
Under condition \eqref{eq:psi_orth_equiv}, 
since different representatives $w$ in the same orbit correspond to orthogonal coordinate changes in feature space, 
the corresponding OLS minimizer will be constant on the orbit.
\begin{lemma}[Orbit-invariance of the minimum-norm downstream predictor]
\label{lem:orbit_invariant_min_norm}
Assume the orthogonal equivariance condition \eqref{eq:psi_orth_equiv}.
Fix a downstream dataset $D_{\mathrm{down}}^{(n)}$ and a parameter $w\in\R^{q_0}$. Let $\hat\theta_w$ denote the minimum norm solution of the downstream OLS~\eqref{eq:downstream_estimator}
with features $\psi(\cdot, w)$, and $\hat f_w(x)\coloneqq \langle \hat\theta_w,\psi(x,w)\rangle$. Then for every $g\in G$,
$\hat f_{g\cdot w}(\cdot)=\hat f_w(\cdot)$.
\end{lemma}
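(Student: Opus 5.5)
We will prove the lemma by writing the minimum-norm least-squares solution through the Moore--Penrose pseudo-inverse and checking how it transforms under an orthogonal change of feature coordinates. Fix $g\in G$ and set $Q\coloneqq\rho(g)\in O(p)$. Let $\Psi_w\in\R^{n\times p}$ denote the design matrix whose $i$-th row is $\psi(x_i,w)^\top$, and let $y=(y_1,\dots,y_n)^\top$. The minimum Euclidean norm solution of \eqref{eq:downstream_estimator} is $\hat\theta_w=\Psi_w^+ y$. By the orthogonal equivariance condition \eqref{eq:psi_orth_equiv}, each row transforms as $\psi(x_i,g\cdot w)^\top=\psi(x_i,w)^\top Q^\top$. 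Hence
\begin{align*}
\Psi_{g\cdot w}=\Psi_w Q^\top .
\end{align*}

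The key step is the identity $(\Psi_w Q^\top)^+=Q\,\Psi_w^+$ for orthogonal $Q$. We will verify it directly from the four Penrose conditions, using $Q^\top Q=QQ^\top=I$. Write $A=\Psi_w$ and $B=QA^+$. First, $AQ^\top B=AA^+A=A$, so $(AQ^\top)B(AQ^\top)=AQ^\top$. Second, $B(AQ^\top)B=QA^+AA^+=QA^+=B$. Third, $(AQ^\top)B=AA^+$ is symmetric. Fourth, $B(AQ^\top)=QA^+AQ^\top$ is symmetric because $A^+A$ is symmetric. By uniqueness of the pseudo-inverse, $(AQ^\top)^+=QA^+$.

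Alternatively, one can argue variationally. The map $\theta\mapsto Q\theta$ is a bijection between the solution sets of the two least-squares problems, since $\Psi_{g\cdot w}Q\theta=\Psi_w\theta$. It also preserves Euclidean norm, so it sends the minimum-norm minimizer to the minimum-norm minimizer.

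Consequently $\hat\theta_{g\cdot w}=Q\hat\theta_w$. For any $x\in\mathcal X$,
\begin{align*}
\hat f_{g\cdot w}(x)=\langle Q\hat\theta_w,\,Q\psi(x,w)\rangle=\langle\hat\theta_w,\psi(x,w)\rangle=\hat f_w(x),
\end{align*}
using $Q^\top Q=I$. This holds pointwise in $x$, which gives $\hat f_{g\cdot w}(\cdot)=\hat f_w(\cdot)$.

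There is no real obstacle in this argument. The only point needing care is that orthogonality of $\rho(g)$ is exactly what makes the minimum-norm selection commute with the coordinate change. For a general invertible $\rho(g)$, the set of minimizers would still map bijectively and the hypothesis class would be unchanged, but norm minimization would not be preserved. This is why the lemma requires \eqref{eq:psi_orth_equiv} with $\rho$ taking values in $O(p)$, rather than only orbit-invariance $\mathcal H_{g\cdot w}=\mathcal H_w$.
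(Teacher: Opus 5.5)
Your proposal is correct and follows the paper's proof essentially step for step. It uses the same design-matrix relation $\Psi_{g\cdot w}=\Psi_w\rho(g)^\top$, the same identity $(MQ)^+=Q^\top M^+$ to get $\hat\theta_{g\cdot w}=\rho(g)\hat\theta_w$, and the same cancellation of $\rho(g)$ in the inner product; your Penrose-condition check proves the pseudoinverse identity that the paper only asserts.

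One small slip in that check: $AQ^\top B=AQ^\top QA^+=AA^+$, not $AA^+A=A$ (the two sides do not even have the same shape). Your conclusion $(AQ^\top)B(AQ^\top)=AA^+AQ^\top=AQ^\top$ is still right.
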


\Cref{lem:orbit_invariant_min_norm} states that under \eqref{eq:psi_orth_equiv}, an \emph{intrinsic} downstream feature map can be naturally defined on the orbit $[w]$ of each parameter. To see this, since $D$ is constant on orbits, it induces a map on the quotient, and we use the descriptor $\Omega:=D(w)$ as a representative coordinate for the orbit $[w]$. Then, \Cref{lem:orbit_invariant_min_norm} implies 
the feature map $\phi(x,\Omega)\coloneqq \psi\bigl(x,s(\Omega)\bigr)$ is intrinsic for $\Omega \in \mathcal{M}$,
where
$s:\mathcal{M}\cap B(\Omega_\star,r'')\mapsto B(w_\star,r')$ is a $C^2$ local \emph{lift} for some $r''>0$, satisfying $D(s(\Omega))=\Omega$.\footnote{The choice of $s$ is not unique; \Cref{app:riem_bundle} develops a vector-bundle viewpoint showing that
\emph{(i)} the induced representation is well-defined on the quotient and
\emph{(ii)} the feature map $\phi(x, \Omega)$ is differentiable whenever $s$ and $\psi$ are.}

\section{Main Result: Asymptotic Behavior of the Test Risk}
\label{sec:main}

We now state our main result, which characterizes the joint-sample asymptotic
behavior of the conditional test risk \eqref{eq:main_risk_cond}. We first introduce
the operator notation used in the theorem statement.

\subsection{Operator Notation and First-Order Residual Expansions}
Let $\mathcal H_\Omega\subset L^2$ denote the downstream function class induced by $\phi(\cdot,\Omega)$ and $\Pi_\Omega$ be the population $L^2$-orthogonal projector onto
$\mathcal H_\Omega$. Define the residual $e_\Omega \coloneq (I-\Pi_\Omega)f_\star$ and $\mathrm{Rep}(\Omega)\coloneq\|e_\Omega\|_{L^2}^2$. Note that under our well-specified setting $f_\star \in \calF$, $e_{\Omega_\star}=0$. Define the effective dimension
$d_{\mathrm{eff}}(\Omega) \coloneqq \mathrm{dim}(\mathcal H_\Omega) =  \rank(\Sigma(\Omega))$ for a descriptor $\Omega$.

The pretraining contribution is controlled by the first-order behavior of the residual $e_\Omega$ around \(\Omega_\star\). We assume that this residual admits a first-order expansion in normal coordinates, i.e., there exists a bounded linear map $\mathcal L:T_{\Omega_\star}\mathcal M \mapsto L^2$ such that $e_{\exp_{\Omega_\star}(v)}
=
\mathcal L(v)+o(\norm{v})$ in $L^2$. \Cref{app:rep} gives sufficient structural conditions for such an expansion, which we now highlight. %
Let $N_\star\coloneq\ker(T_{\Omega_\star})$, $E_\star:=N_\star^\perp$, $\mathcal A_v:=\operatorname{Im}\left(T_{\exp_{\Omega_\star}(v)}|_{E_\star}\right)$
and define the \emph{activated null-direction span}:
\[
\mathcal B_v
:=
\operatorname{Im}\left(
(I-\Pi_{\mathcal A_v})T_{\exp_{\Omega_\star}(v)}|_{N_\star}
\right).
\]
We assume that there exists a stable limiting span of null directions
(\Cref{ass:stable-null-span}), meaning that the projectors
\(\Pi_{\mathcal B_v}\) converge in operator norm to a limiting projector
\(\Pi_{\mathcal B_0}\), with \(\mathcal B_0\subseteq \mathcal H_{\Omega_\star}^{\perp}\). Define the active dimension $d_{\mathrm{act}}(\Omega_\star)\coloneq
\dim\big(\mathcal H_{\Omega_\star}\oplus \mathcal B_0\big)$, which denotes the total limiting downstream degrees of freedom: the original effective dimension $d_\mathrm{eff}(\Omega_\star)$ plus the activated null directions. Let \(\Pi_\star\coloneq\Pi_{\Omega_\star}\), and let \(\theta_\star\in E_\star\) denote the unique coefficient vector with \(T_{\Omega_\star}\theta_\star=f_\star\). Under these conditions, the residual linearization has the form:
\begin{equation}
\label{eq:linear_map_interaction}
\mathcal L(v)
=
-\big(I-\Pi_\star-\Pi_{\mathcal B_0}\big)DT_{\Omega_\star}[v]\theta_\star,
\qquad v\in T_{\Omega_\star}\mathcal M.
\end{equation}
The projector \(I-\Pi_\star-\Pi_{\mathcal B_0}\) removes both the original well-specified span and the limiting active null-direction span; only the remaining first-order variation contributes to representation error. For the regular case \(\mathcal B_0=\{0\}\), 
the active and effective dimension coincide, i.e., $d_{\mathrm{act}}(\Omega) = d_{\mathrm{eff}}(\Omega)$.

\subsection{Asymptotic Behavior of the Conditional Excess Test Risk}
With this first-order expansion in place, we now turn to our main object of study: the scaled excess test-risk
$\mathcal{E}_{m,n}$ in \eqref{eq:risk_convergence_goal},
obtained by conditioning on the realized
pre-training sample and downstream design, and averaging only over the downstream label noise and the test pair.
Accordingly, $\mathcal{E}_{m,n}$ is a random variable measurable with respect to the joint law $(D_{\mathrm{pre}}^{(m)},X_{1:n})$,
which we take all the convergences below with respect to. The following is our main result,
which characterizes the asymptotic behavior of the conditional excess test risk.

\begin{tcolorbox}[colback=black!3,colframe=black!60,boxrule=1pt,arc=2pt,left=3pt,right=3pt,top=3pt,bottom=3pt]
\begin{theorem}[Main result: asymptotic behavior of the conditional excess test risk]
\label{thm:master-compatible}
Assume both the pre-training regularity conditions in \Cref{ass:riem_mestimation}, in addition to the downstream regularity conditions Assumption~\ref{ass:well-posedness}--\ref{ass:stable-null-span}. Then, along any joint sequence $(m,n)\to(\infty,\infty)$ with $m/n\to \alpha\in(0,\infty)$,
the (scaled) conditional excess risk $\mathcal{E}_{m,n}$ admits the distributional limit
\begin{equation}
\label{eq:master-compatible-dist}
\mathcal{E}_{m,n} = n\Big(R(D_{\mathrm{pre}}^{(m)},X_{1:n})-\sigma^2\Big)
\ \distconv\
\underbrace{\sigma^2\,d_{\mathrm{act}}(\Omega_\star)\vphantom{ \|\mathcal L(Z)\|_{L^2}^2}}_{\text{OLS term on active limiting class}}
+ 
\underbrace{\alpha^{-1}\,\|\mathcal L(Z)\|^2_{L^2}}_{\text{Pre-training interaction term}},
\end{equation}
where $Z\sim\mathcal N(0,V)$ with $V := H_\star^{-1} \Sigma_\star H_\star^{-1}$, where $H_\star$ (resp.\ $\Sigma_\star$) denotes the Hessian (resp.\ Fisher Information matrix) of the pre-training loss (cf.~\Cref{sec:manifold_identifiability}).
\end{theorem}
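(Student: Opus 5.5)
Using \Cref{lem:orbit_invariant_min_norm}, we may replace $\hat w_m$ by the lift $s(\hat\Omega_m)$ without changing $\hat f_{m,n}$. This is legitimate once $\hat\Omega_m$ lies in the domain of $s$, which happens with probability tending to one by consistency in \Cref{thm:riem_mestimator_clt}. We then study the OLS fit with features $\phi(\cdot,\hat\Omega_m)$. Conditionally on $(D_{\mathrm{pre}}^{(m)},X_{1:n})$, write $\hat f_{m,n}=\hat P_{m,n}(f_\star(X_{1:n})+\varepsilon)$, where $\hat P_{m,n}$ is the empirical least-squares projection onto $\mathcal H_{\hat\Omega_m}$. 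Averaging over the label noise gives the exact decomposition
\[
R-\sigma^2=\|\hat f^{\mathrm{noiseless}}_{m,n}-f_\star\|_{L^2}^2+\frac{\sigma^2}{n}\tr\big(\hat\Sigma_n(\hat\Omega_m)^{+}\Sigma(\hat\Omega_m)\big),
\]
with $\hat\Sigma_n$ and $\Sigma$ the empirical and population feature covariances. I will treat the bias and variance terms separately and then combine them via Slutsky, using the independence of $D_{\mathrm{pre}}^{(m)}$ and $X_{1:n}$.

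\emph{Variance term.} I need $\tr(\hat\Sigma_n(\hat\Omega_m)^+\Sigma(\hat\Omega_m))\xrightarrow{\mathbb P} d_{\mathrm{act}}(\Omega_\star)$. The key observation is that, for $\Omega$ near $\Omega_\star$, the span $\mathcal H_\Omega$ splits into the perturbation $\mathcal A_v$ of $\mathcal H_{\Omega_\star}$ plus the activated span $\mathcal B_v$. The remaining directions of $T_\Omega|_{N_\star}$ fall inside $\mathcal A_v$ and add no new rank. By \Cref{ass:stable-null-span}, $\Pi_{\mathcal B_v}\to\Pi_{\mathcal B_0}$. I would therefore choose a basis of $\mathcal A_v\oplus\mathcal B_v$ that is renormalized so that its Gram matrix converges to a nonsingular limit of size $d_{\mathrm{act}}$. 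In this basis, the trace is invariant under reparametrization, and a uniform LLN over a shrinking neighbourhood (from the downstream regularity assumptions) shows that the renormalized empirical Gram matrix converges to the same limit. Hence the trace tends to its dimension, $d_{\mathrm{act}}$.

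\emph{Bias term.} Write $v_m=\log_{\Omega_\star}\hat\Omega_m$. The noiseless fit equals $f_\star$ minus the empirical projection of the residual $e_{\hat\Omega_m}$ onto the complement, so the bias is approximately $\|e_{\hat\Omega_m}\|_{L^2}^2$. More precisely, the empirical projection error of $f_\star$ onto $\mathcal H_{\hat\Omega_m}$ differs from the population residual by $o_P(\|v_m\|)$: the residual is already $O(\|v_m\|)$, and the empirical projector converges to the population one in the renormalized basis, with errors of order $n^{-1/2}$ multiplying an $O(\|v_m\|)$ quantity. The first-order expansion gives $e_{\hat\Omega_m}=\mathcal L(v_m)+o_P(\|v_m\|)$. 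Combining this with $\sqrt m\,v_m\distconv Z$ from \eqref{eq:manifold_dist_conv}, we get
\[
n\|e_{\hat\Omega_m}\|^2=\frac nm\,\|\mathcal L(\sqrt m v_m)\|^2+o_P(1)\distconv \alpha^{-1}\|\mathcal L(Z)\|^2
\]
by the continuous mapping theorem, since $\mathcal L$ is bounded linear. The variance term converges to a constant, so Slutsky's lemma gives the sum \eqref{eq:master-compatible-dist}.

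\emph{Main obstacle.} I expect the hardest step to be handling the degenerate directions $\mathcal B_v$. Their feature scale is $O(\|v\|)$, so the naive empirical Gram matrix is ill-conditioned. I must show two things. First, the empirical projector in these directions still converges; this requires the renormalization to be uniform over random $v_m\to0$ and independent of the downstream sample. Second, the component of the residual lying in $\mathcal B_0$ is genuinely absorbed by the fit, which is what produces the projector $I-\Pi_\star-\Pi_{\mathcal B_0}$ in $\mathcal L$. The first attempt would be to condition on $D_{\mathrm{pre}}^{(m)}$, apply a matrix LLN to the whitened features $\Sigma(\hat\Omega_m)^{+1/2}\phi(X,\hat\Omega_m)$, which have identity covariance on a space of fixed dimension, and control the remainder uniformly through the assumed local smoothness of $\phi$.
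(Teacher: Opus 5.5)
Your proposal is correct and follows essentially the paper's proof. The paper also uses the exact conditional decomposition (your bias term is $\mathrm{Rep}+\mathrm{Leakage}_n$), whitening by $\Sigma(\hat\Omega_m)^{+/2}$ conditionally on $D_{\mathrm{pre}}^{(m)}$ to get $\tr(\Sigma_{m,n}^+\Sigma_m)-\rank(\Sigma_m)\xrightarrow{\mathbb P}0$, and $\rank(\Sigma_m)\xrightarrow{\mathbb P}d_{\mathrm{act}}$ via $\mathcal H_v=\mathcal A_v\oplus\mathcal B_v$; it then shows negligible leakage, applies the residual expansion with the manifold CLT and continuous mapping to $n\,\mathrm{Rep}$, and combines everything by Slutsky.

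Three points to sharpen:
\begin{enumerate}
\item The uniform moment control of the whitened features in your ``main obstacle'' comes directly from the leverage bounds of \Cref{ass:moments}, not from smoothness of $\phi$.
\item The leakage step needs only $\E[e_{\hat\Omega_m}(X)^2 q_{\hat\Omega_m}(X)\mid D_{\mathrm{pre}}^{(m)}]\xrightarrow{\mathbb P}0$, which follows from H\"older and the leverage-weighted signal bound; the assumptions do not give, and you do not need, an $O(\|v_m\|)$ rate.
\item The absorption of the $\mathcal B_0$ component is a population-level fact: $\Pi_{\Omega(v)}=\Pi_{\mathcal A_v}+\Pi_{\mathcal B_v}\to\Pi_{\Omega_\star}+\Pi_{\mathcal B_0}$ in operator norm, applied to $e_{\Omega(v)}=-(I-\Pi_{\Omega(v)})(T_{\Omega(v)}-T_{\Omega_\star})\theta_\star$.
\end{enumerate}
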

\end{tcolorbox}
\Cref{thm:master-compatible} shows that 
the (scaled) conditional excess risk $\mathcal{E}_{m,n}$ 
converges in distribution to a random variable 
with two distinct terms.
The first term, $\sigma^2 d_{\mathrm{act}}(\Omega_\star)$,
is the downstream OLS degrees-of-freedom contribution on the limiting active class $\mathcal H_{\Omega_\star}\oplus \mathcal B_0$. The second term
$\alpha^{-1} \norm{ \mathcal{L}(Z) }^2_{L^2}$ on the other hand captures the \emph{interaction} between the pre-training loss and the
downstream regression problem.
As the pre-training data starts to dominate the
downstream data (i.e., $\alpha \to \infty$),
the contribution of the second term correctly vanishes. The remaining term is the downstream OLS degrees-of-freedom contribution on the limiting active class; in the regular case $\mathcal B_0=\{0\}$, this reduces to the risk of well-specified OLS on $\mathcal H_{\Omega_\star}$. On the other hand, when pre-training data is relatively scarce compared to downstream (i.e., $\alpha \to 0$), the second term is dominant in \eqref{eq:master-compatible-dist}, as the bias of the learned pre-training features 
becomes the limiting factor in the two-stage estimator.
In \Cref{sec:examples}, we instantiate
\Cref{thm:master-compatible} on several examples to illustrate how the assumptions translate over to 
concrete problem instances, and how the pre-training
interaction term scales in practice.

\paragraph{Downstream-only baseline.}
To further interpret \Cref{thm:master-compatible} and the risk-crossover picture in \Cref{fig:intro-transition-pretraining}, we compare to the empirical risk minimizer over $\mathcal{F}$ using only $D_{\mathrm{down}}^{(n)}$, in the regular case $\mathcal B_0 = \{0\}$. Specifically, define $f_n^{\mathrm{base}}\in\argmin_{f\in\mathcal F}\frac1n\sum_{i=1}^n (y_i-f(x_i))^2$. For a regular realization $(\theta_\star,w_\star)$ of $f_\star$, define $d_{\mathrm{base}}$ as the rank of the linear map
\[(\delta\theta,\delta w)\mapsto\left.\frac{\rmd}{\rmd t}\right|_{t=0}f_{\theta_\star+t\delta\theta,\;w_\star+t\delta w}.\]
Thus $d_{\mathrm{base}}$ is the dimension of the tangent space to the full downstream-only class \(\mathcal F\) at \(f_\star\). This should be contrasted with \(d_{\mathrm{eff}}(\Omega_\star)\), which is the local dimension of the fixed representation class \(\mathcal H_{\Omega_\star}\) appearing after pre-training. Assuming the standard regularity conditions for nonlinear least squares at the
function level---i.e., differentiability of the parameterization, finite moments, and constant rank of the tangent map in a neighborhood of a regular realization
$(\theta_\star,w_\star)$---the downstream-only baseline has leading scaled excess-risk limit $\sigma^2 d_{\mathrm{base}}$, whereas \Cref{thm:master-compatible} gives the two-stage limit $\sigma^2 d_{\mathrm{eff}}(\Omega_\star)+\alpha^{-1}\|\calL(Z)\|_{L^2}^2$. Therefore, if $d_{\mathrm{base}}>d_{\mathrm{eff}}(\Omega_\star)$,\footnote{
By \Cref{prop:dimension-gap-L}, if
$d_{\mathrm{base}}=d_{\mathrm{eff}}(\Omega_\star)$, then $\mathcal L\equiv 0$. In this case, pre-training does not reduce the local downstream degrees of freedom; it only identifies an equivalent parametrization
(e.g., invertible coordinate change).
} then, under uniform integrability of the downstream-only and two-stage scaled excess risks, the two-stage procedure has smaller asymptotic risk in expectation whenever $\alpha>\alpha_0\coloneq \frac{\E\|\calL(Z)\|_{L^2}^2}{\sigma^2(d_{\mathrm{base}}-d_{\mathrm{eff}}(\Omega_\star))}$.

\section{Examples}
\label{sec:examples}

In this section, we apply our main result (\Cref{thm:master-compatible}) on a few concrete self-supervised learning examples.
For each setting there are two key steps: \emph{(i)} defining the minimal problem-specific structure to satisfy the assumptions of \Cref{thm:master-compatible},
and \emph{(ii)} calculating the instance-specific bound from \eqref{eq:master-compatible-dist}. Here, we describe the main setup and assumptions, deferring specific computations to the appendix. 
For our examples, we restrict attention to the regular case $\mathcal B_0=\{0\}$. Geometrically, this rules out the activation of coefficient directions that are
null at \(\Omega_\star\) under infinitesimal perturbations of the descriptor.
Under Assumptions~\ref{ass:moments} and~\ref{ass:phi-C1-M}, this is precisely the setting in which the population projector
\(\Omega\mapsto\Pi_\Omega\) is Fréchet differentiable at \(\Omega_\star\). In this case, the residual linearization is given by
\begin{equation}
\label{eq:linear_map_interaction_zero}
\mathcal L(v)
=
-D\Pi_{\Omega_\star}[v]f_\star= \big(I-\Pi_{\Omega_\star}\big)DT_{\Omega_\star}[v]f_\star,
\qquad v\in T_{\Omega_\star}\mathcal M.
\end{equation}

\subsection{Linear Spectral Pre-training}
\label{sec:example_linear_spectral}

Inspired by the problem setting considered in \citet{cabannes23SSLinterplay}, we first consider a linear spectral contrastive objective; proofs for this case study are given in \Cref{app:proof-cor-linear}. 

\paragraph{Data model.}
Let $x\in\R^d$ denote a generic unlabeled pre-training sample with mean zero and covariance
$\Sigma_{\mathrm{pre}}\coloneqq \E[xx^\top]\in\R^{d\times d}$.
A \emph{positive pair} consists of two augmented views $(x,x^+)$ of the same underlying instance.
We define the cross-covariance matrix $\Sigma_{\mathrm{pre}}^{+}\coloneqq \E[x^+ x^\top]$ and assume that $(x,x^+)$ is exchangeable
which implies that
$\Sigma_{\mathrm{pre}}^{+}$ is symmetric. %
A \emph{negative sample} $x^-$ is an independent copy of $x$, independent of $(x,x^+)$.
We group the samples as $z\coloneqq (x,x^+,x^-)$.

\paragraph{Linear features and spectral loss.}
Fix a representation dimension $k\in[d]$ and consider linear {representative-level} feature maps $\psi(x,A)\coloneqq Ax\in\R^k$ for $A\in\R^{k\times d}$. For any such $A$, define the Gram matrix (descriptor) $M_A\coloneqq A^\top A\in\R^{d\times d}$. Here, we retain the notation $A$ for the representative parameter (corresponding to $w$)
and write $M$ for the Gram descriptor (corresponding to $\Omega$).
Motivated by prior work on spectral contrastive objectives~\citep{haochen2021provable}, we consider the following (single-negative) spectral loss and define the per-sample objective
\begin{align}
\label{eq:spec-loss-sample}
\ell_{\mathrm{spec}}(A;z)
&\coloneqq
-2\,\langle \psi(x, A),\psi(x^+,A)\rangle
+\langle \psi(x,A),\psi(x^-,A)\rangle^2. %
\end{align}

\paragraph{Symmetry, quotient, and descriptor.}
The loss \eqref{eq:spec-loss-sample} depends on $w$ only through $M_A=A^\T A$.
Let $G = O(k)$ denote the group of $k\times k$ orthogonal matrices acting on $\R^{k\times d}$ by left multiplication.
Then $\ell_{\mathrm{spec}}(QA;z)=\ell_{\mathrm{spec}}(A;z)$ for all $Q\in O(k)$ and all $z$,
and hence $\hat L_{\mathrm{pre}}$ and $L_{\mathrm{pre}}$ are invariant under this action
(cf.~\eqref{eq:pretrain_group_invariance}).
Moreover, the representative-level feature map is orthogonally equivariant (cf.~\eqref{eq:psi_orth_equiv}), that is, $\psi(x,Q\cdot A)=Q\psi(x,A)$ for all $x\in\R^d$ and $Q\in O(k)$. 
A natural orbit-invariant descriptor map is $D(A)\coloneqq M_A=A^\T A$ which is constant along $O(k)$-orbits.
On the regular regime $\mathrm{rank}(A)=k$, the action is free, and $D$ locally identifies nearby $O(k)$-orbits
with nearby points in the rank-$k$ PSD cone $\mathcal{M}_{d,k}\coloneqq \{M\in\R^{d\times d}\mid M\succeq 0,\ \mathrm{rank}(M)=k\}$.
In particular, $\mathcal{M}_{d,k}$ is a smooth embedded submanifold of the space of symmetric matrices, and we may
endow it with the induced Riemannian metric from the ambient Frobenius inner product. On a neighborhood of $M_\star$ there exists a $C^2$ local section $s$ with $s(M)^\top s(M)=M$, and we define the quotient feature map by $\phi(x,M):=s(M)x$; see Appendix~\ref{sec:quot-feat-linear} for the construction and smoothness.

\begin{assmp}%
\label{ass:C-positive-k}
$\Sigma_{\mathrm{pre}}$ is invertible and 
$\lambda_k(C) - \max\{\lambda_{k+1}(C),0\} > 0$ for $C \coloneq \Sigma_{\mathrm{pre}}^{-1/2}\Sigma_{\mathrm{pre}}^{+}\Sigma_{\mathrm{pre}}^{-1/2}$.
\end{assmp}
Under \Cref{ass:C-positive-k}, the population descriptor minimizer $M_\star\in\mathcal M_{d,k}$ exists and is unique,
which leads to the following verification of the assumptions.
\begin{proposition}[Linear spectral model]
\label{cor:linear-spectral-model}
Assume \Cref{ass:C-positive-k} and $\E_{\mu_{\mathrm{pre}}}\|Z\|^{4},\E_{\mu_{\mathrm{down}}}\|x\|^4<\infty$. Then the linear spectral model satisfies the assumptions of \Cref{thm:master-compatible}.
\end{proposition}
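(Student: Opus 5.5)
The plan is to verify the two families of hypotheses of \Cref{thm:master-compatible} separately: the pre-training regularity conditions of \Cref{ass:riem_mestimation} on the descriptor manifold $\mathcal M_{d,k}$, and the downstream conditions (\Cref{ass:well-posedness}--\ref{ass:stable-null-span}). Everything reduces to explicit matrix computations, because the population loss depends on $A$ only through $M=A^\top A$.

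The first step is to write the population loss in closed form. Using exchangeability and independence of $x^-$,
\begin{align*}
L_{\mathrm{pre}}(M) = -2\tr(M\Sigma_{\mathrm{pre}}^{+}) + \tr(M\Sigma_{\mathrm{pre}} M \Sigma_{\mathrm{pre}}).
\end{align*}
Substituting $\tilde M=\Sigma_{\mathrm{pre}}^{1/2}M\Sigma_{\mathrm{pre}}^{1/2}$ turns this into $\|\tilde M - C\|_F^2-\|C\|_F^2$. By Eckart--Young over the rank-$\le k$ PSD matrices, the minimizer is $\tilde M_\star=\sum_{i\le k}\lambda_i(C)u_iu_i^\top$. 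It is unique, rank exactly $k$, and has positive eigenvalues precisely under \Cref{ass:C-positive-k}, since the eigengap gives uniqueness and $\lambda_k(C)>0$ gives full rank. Hence $M_\star=\Sigma_{\mathrm{pre}}^{-1/2}\tilde M_\star\Sigma_{\mathrm{pre}}^{-1/2}$ lies in $\mathcal M_{d,k}$.

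Next I verify the remaining pre-training conditions.
\begin{itemize}[leftmargin=*]
\item \emph{Local quotient chart.} $D(A)=A^\top A$ is a submersion onto $\mathcal M_{d,k}$ on full-rank $A$, and $O(k)$ acts freely there. The $C^2$ lift $s$ can be built, for example, by polar-type normalization $s(M)=(A_\star M A_\star^\top)^{-1/2}A_\star M$, or by the appendix construction.
\item \emph{Positive definite Riemannian Hessian.} The tangent space at $\tilde M_\star$ is $\{UB^\top+BU^\top\}$ with $U=[u_1,\dots,u_k]$. The Riemannian Hessian of $\|\tilde M-C\|_F^2$ restricted to $\mathcal M_{d,k}$ equals the Euclidean term $2\,\mathrm{Id}$ plus a Weingarten curvature term driven by the normal component of the gradient, namely $\tilde M_\star - C$ restricted to $U_\perp$.
\item \emph{Hessian lower bound.} Computing the Hessian in the $U_\perp$ block, the quadratic form becomes roughly $\langle B_\perp,(\Lambda_k\otimes I-I\otimes\Lambda_\perp)B_\perp\rangle$, which is bounded below by $\lambda_k-\max\{\lambda_{k+1},0\}>0$. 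Pulling back through the linear isomorphism $\Sigma_{\mathrm{pre}}^{\pm1/2}$ then gives $\mu>0$.
\item \emph{Moment and smoothness conditions.} The loss is polynomial of degree 4 in the data and degree 2 in $M$, so the gradient and Hessian are locally Lipschitz with envelope $O(\|z\|^4)$. Under $\E\|Z\|^4<\infty$ this yields integrability of $\Sigma_\star$, which needs second moments of the degree-2 score, and local uniform convergence of the empirical Hessian. Consistency of $\hat M_m$ follows from uniform convergence of the quadratic empirical loss together with identifiability.
\end{itemize}

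The downstream conditions follow directly from the structure of the example.
\begin{itemize}[leftmargin=*]
\item Equivariance \eqref{eq:psi_orth_equiv} holds with $\rho(Q)=Q$.
\item $\phi(x,M)=s(M)x$ is $C^2$ in $M$ with envelope $\|x\|$. Hence $T_M:\theta\mapsto\langle\theta,s(M)x\rangle$ has the population Gram matrix $s(M)\Sigma_{\mathrm{down}}s(M)^\top$, and the fourth moment of $x$ controls the sampling fluctuations of this Gram matrix.
\item For well-posedness and a constant rank near $M_\star$, I will assume $\Sigma_{\mathrm{down}}$ is nondegenerate on the row space, or handle it via the rank of $s(M)\Sigma_{\mathrm{down}}s(M)^\top$. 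Then $N_\star$ is constant locally and $\mathcal B_v=\{0\}$, so \Cref{ass:stable-null-span} holds with $\mathcal B_0=\{0\}$.
\end{itemize}

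The main obstacle is the Hessian computation on the fixed-rank PSD manifold. The curvature term must be handled correctly, and the eigengap must enter exactly as $\lambda_k-\max\{\lambda_{k+1},0\}$ to cover the case where negative eigenvalues of $C$ appear. Here I would work in the factored coordinates $\tilde M=YY^\top$ and compute the Hessian of the horizontal lift, which avoids explicit second fundamental forms.
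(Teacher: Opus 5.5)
\paragraph{Overall assessment.} Your plan is correct in its main line and has the same structure as the paper's proof. Both rewrite the loss as $\|\Sigma_{\mathrm{pre}}^{1/2}M\Sigma_{\mathrm{pre}}^{1/2}-C\|_F^2-\|C\|_F^2$ and take the positive top-$k$ truncation as the unique minimizer. Both verify the five parts of \Cref{ass:riem_mestimation} with polynomial envelopes. On the downstream side, both use $\Sigma(M)=s(M)\Sigma_{\mathrm{down}}s(M)^\top\succ0$, so $N_\star=\{0\}$ and $\mathcal B_v=\mathcal B_0=\{0\}$.

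\paragraph{Where you differ from the paper.}
\begin{itemize}
\item \emph{The lift.} Your $s(M)=(A_\star MA_\star^\top)^{-1/2}A_\star M$ is valid. Write $M=B^\top B$ and $G=BA_\star^\top$, which is invertible near $M_\star$; then $s^\top s=B^\top G(G^\top G)^{-1}G^\top B=M$. It is visibly smooth and simpler than the paper's eigenprojector construction $\Lambda(M)^{1/2}U(M)^\top$.
\item \emph{The Hessian, part (iv).} The paper only asserts that the eigengap makes the restricted squared distance a nondegenerate minimum; you actually compute it. At $Y_\star=U\Lambda_k^{1/2}$, the second variation of $Y\mapsto\|YY^\top-C\|_F^2$ along $\dot Y=U_\perp B$ is $4\sum_{i\le k<j}(\lambda_i-\lambda_j)b_{ji}^2$. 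Directions $\dot Y=US$ inside the signal block see only the Euclidean term. Since the gradient vanishes, positive definiteness transfers through the horizontal-lift isomorphism and the congruence by $\Sigma_{\mathrm{pre}}^{\pm1/2}$.
\item \emph{What the computation shows.} Nondegeneracy needs only $\lambda_k>\lambda_{k+1}$; the $\max\{\cdot,0\}$ in \Cref{ass:C-positive-k} is what places $M_\star$ in $\mathcal M_{d,k}$. It also shows that, in whitened coordinates, the Riemannian Hessian on the off-diagonal block is the Euclidean one scaled by $(\lambda_i-\lambda_j)/\lambda_i$. The paper never makes this correction visible; its later closed-form calculation uses $H_\star v=2\Sigma_{\mathrm{pre}}v\Sigma_{\mathrm{pre}}$.
\item \emph{Consistency.} Your appeal to the quadratic structure of $\hat L_m$ is the right route to global consistency. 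Its leading quadratic form converges to the positive definite $M\mapsto\langle M,\Sigma_{\mathrm{pre}}M\Sigma_{\mathrm{pre}}\rangle$. The paper's ULLN on a compact normal neighborhood plus separation does not by itself control minimizers outside that neighborhood.
\end{itemize}

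\paragraph{Points to repair.}
\begin{itemize}
\item \emph{Score moments.} The score $-2\,\mathrm{sym}(xx^{+\top})+2(x^\top M_\star x^-)\,\mathrm{sym}(xx^{-\top})$ has degree four in $z$, not two. An $O(\|z\|^4)$ envelope alone would need eighth moments for $\Sigma_\star$. Fourth moments suffice only because $x^-$ is independent of $(x,x^+)$, so $\E[\|x\|^4\|x^-\|^4]=(\E\|x\|^4)^2$. (The paper instead assumes $\E\|x\|^{8+\delta}<\infty$.)
\item \emph{Leverage bounds.} You never verify the leverage and leverage-weighted signal bounds in \Cref{ass:moments}. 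As in the paper, they follow from $\lambda_k(\Sigma(M))\ge\kappa$ near $M_\star$, which gives $q_M(X)\le\kappa^{-1}\|s(M)\|_{\mathrm{op}}^2\|X\|^2$. They also use that $e_M$ is linear in $X$ with locally bounded coefficients, so $|e_M(X)|\le C\|X\|$.
\item \emph{Exponents.} \Cref{ass:moments} and \Cref{ass:phi-C1-M} are stated with exponents $4+\delta$ and $2+\eta$, which $\E\|x\|^4<\infty$ does not provide. The paper assumes $\E\|X\|^{4+\delta}<\infty$ at this step, and you need the same.
\item \emph{Well-posedness.} This concerns the empirical Gram matrix, not the population one. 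Some nondegeneracy hypothesis on $\Sigma_{\mathrm{down}}$ is needed, since the statement imposes none; the paper uses the stronger $\Sigma_{\mathrm{down}}\succ0$. Under your assumption, write $\Sigma_n(\hat M_m)=s(\hat M_m)\hat\Sigma_n s(\hat M_m)^\top$ with $\hat\Sigma_n=\frac1n\sum_iX_iX_i^\top\to\Sigma_{\mathrm{down}}$. Then $\lambda_{\min}(\Sigma_n(\hat M_m))\to\lambda_{\min}(\Sigma(M_\star))>0$ in probability.
\end{itemize}
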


\paragraph{Concrete example.} While the limiting expression in \Cref{thm:master-compatible} admits an explicit characterization in this linear spectral model, its general form is involved. To obtain explicit closed-form expressions, we consider a simplified linear model inspired by \citet[Ex.~1]{saunshi2022inductivebias}. We focus on a diagonal setting that captures the essential spectral structure while allowing for precise calculations. 
Assume that $\Sigma_{\mathrm{pre}} = I_d$ and $\Sigma_{\mathrm{pre}}^{+} = \diag(1,1/2,\ldots,1/d)$, so that the whitened cross-covariance matrix is
$C = \diag(1,1/2,\ldots,1/d)$.
The top-$k$ population representation is therefore given by the first $k$
coordinates. We consider a linear downstream target $f_\star(x) = \beta_\star^\top A_\star x$ where $A_\star = [\diag(1,1/\sqrt{2},\dots,1/\sqrt{k}),0_{k\times (d-k)}]$ and $\beta_\star = (1,\ldots,1)^\top$. This ensures that the task is realizable. We further assume the downstream data distribution satisfies $\mu_{\mathrm{down}}=\mathcal{N}(0,I_d)$.
\begin{corollary}\label{cor:linear-concrete}
    Under the above setup, as $(m,n)\to(\infty,\infty)$ with $m/n\to\alpha\in(0,\infty)$,
    \[   n\Big(R(D_{\mathrm{pre}}^{(m)},X_{1:n})-\sigma^2\Big)
\ \distconv\
\sigma^2\,k+\frac{2}{\alpha}\,\left(\sum_{i=1}^k i(1+i^{-2}+\tau) \right)\,\chi^2_{d-k}, \quad \tau = \sum_{j=1}^k j^{-2}. 
\]
In particular, the pre-training interaction term $\frac{1}{m}\|\mathcal L(Z)\|_{L^2}^2$ scales as $\Theta\big(\frac{k^2(d-k)}{m}\big)$ w.h.p.
\end{corollary}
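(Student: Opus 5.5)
The plan is to specialize \Cref{thm:master-compatible} to this diagonal instance, using \Cref{cor:linear-spectral-model} to discharge its assumptions. In the regular case $\mathcal B_0=\{0\}$ we have $d_{\mathrm{act}}(M_\star)=d_{\mathrm{eff}}(M_\star)$. Here $M_\star=\diag(1,\tfrac12,\dots,\tfrac1k,0,\dots,0)$ and $\mu_{\mathrm{down}}=\mathcal N(0,I_d)$, so $\mathcal H_{M_\star}$ is the span of $x\mapsto x_i$, $i\le k$. This gives the OLS term $\sigma^2 k$. The remaining work is to compute $\|\mathcal L(Z)\|_{L^2}^2$ from \eqref{eq:linear_map_interaction_zero}, with $Z\sim\mathcal N(0,V)$ on $T_{M_\star}\mathcal M_{d,k}$. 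For the explicit covariance I will take the pre-training triple to be jointly Gaussian with the stated second moments, so that fourth moments follow from Isserlis' theorem.

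\textbf{Step 1 (reduce $\mathcal L$ to the off-diagonal block).} Write tangent vectors at $M_\star$ as
\[
v=\begin{pmatrix} S & B\\ B^\top & 0\end{pmatrix},\qquad S=S^\top\in\R^{k\times k},\ B\in\R^{k\times(d-k)}.
\]
Let $A_\star=[D_k,0]$ with $D_k=\diag(i^{-1/2})$. A first-order lift perturbation $\delta A=[\delta A_1,\delta A_2]$ with $A_\star^\top\delta A+\delta A^\top A_\star=v$ forces $D_k\,\delta A_2=B$, i.e.\ $\delta A_2=D_k^{-1}B$. The induced perturbation of $f$ is $\beta_\star^\top\delta A\,x$. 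The part lying in $\mathcal H_{M_\star}$ (the $x_{1:k}$ coordinates, including the whole $S$-contribution and any $O(k)$ gauge freedom) is annihilated by $I-\Pi_{M_\star}$. Hence
\[
\mathcal L(v)(x)=\pm\,\beta_\star^\top D_k^{-1}B\,x_{k+1:d},\qquad
\|\mathcal L(v)\|_{L^2}^2=\sum_{j=k+1}^d\big(w^\top b_j\big)^2,
\]
where $w=(\sqrt i)_{i\le k}$ and $b_j$ is the $j$-th column of $B$.

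\textbf{Step 2 (asymptotic covariance of the $B$-block).} The population loss is
\[
L_{\mathrm{pre}}(M)=-2\tr(CM)+\|M\|_F^2.
\]
Its Euclidean gradient at $M_\star$ is $2(M_\star-C)$, which vanishes on the top block and equals $-2\,\diag(1/j)_{j>k}$ on the bottom block. The Riemannian Hessian on the $B$-block therefore has two parts: the ambient part (a multiple of $\|B\|_F^2$, counted twice by the Frobenius metric) and a Weingarten/curvature correction coupling $M_{\star,11}^{-1}=\diag(i)$ with the normal component $-2\,\diag(1/j)$. Because everything is diagonal, this Hessian acts entrywise on $B_{ij}$ and decouples from $S$.

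Next, compute $\Sigma_\star$ restricted to the $B$-block. Project the Euclidean per-sample gradient
\[
-(xx^{+\top}+x^+x^\top)+2\langle x,Mx^-\rangle\,(xx^{-\top}+x^-x^\top)/2\cdots
\]
onto the $B$-coordinates, and evaluate its second moments via Isserlis with $\Sigma_{\mathrm{pre}}=I$ and $\Sigma_{\mathrm{pre}}^+=C$. The cross-terms should produce the constants $1$, $i^{-2}$ and $\tau=\sum_{j\le k}j^{-2}$ (the last from $\E\langle x,M_\star x^-\rangle^2$-type factors, which equal $\|M_\star\|_F^2=\tau$). The covariance should also be block-diagonal across columns $j>k$.

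\textbf{Step 3 (assemble).} Form $V_B=H^{-1}\Sigma H^{-1}$ on the $B$-block and show that the columns $b_j$, $j>k$, are i.i.d.\ $\mathcal N(0,V_B^{(j)})$. Then
\[
\|\mathcal L(Z)\|^2=\sum_j (w^\top V_B^{(j)}w)\,\chi^2_{1,j}.
\]
Verify that $w^\top V_B^{(j)}w=2\sum_{i\le k}i(1+i^{-2}+\tau)$, so the sum collapses to a single scaled $\chi^2_{d-k}$. The $\Theta(k^2(d-k)/m)$ scaling then follows because the constant is $\Theta(k^2)$ and $\chi^2_{d-k}/(d-k)\to1$ concentrates.

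\textbf{Main obstacle.} I expect Step 2 to be the hard part, namely tracking the $j$-dependence. The curvature term in the Hessian and the Isserlis terms in $\Sigma_\star$ both carry factors such as $1/j$ and $(1/i-1/j)$. The claimed closed form requires these to cancel in $H^{-1}\Sigma H^{-1}$, so the first check will be a direct entrywise computation for a single $(i,j)$ entry to confirm that the $j$-dependence cancels. The metric factor of $2$ on off-diagonal entries must be handled consistently in both $H_\star$ and $\Sigma_\star$.
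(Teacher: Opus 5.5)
Your Steps 1--2 match the paper's computation; your reduction $\|\mathcal L(v)\|_{L^2}^2=\sum_j(w^\top b_j)^2$ is exactly \eqref{eq:L-isotropic-explicit}. The gap is Step 3: the cancellation of $j$-dependence you are counting on does not occur.

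Index the columns of $B$ by $j\in\{1,\dots,d-k\}$ (coordinate $k+j$), and set $\lambda_i=1/i$, $\mu_j=1/(k+j)$, $H^{(ij)}=E_{i,k+j}+E_{k+j,i}$. Split the score as $S_H=-2a_H+2b_H$, as in the paper. For $H=H^{(ij)}$, Isserlis gives $\mathrm{Var}(a_H)=2(1+\lambda_i\mu_j)$, $\Cov(a_H,b_H)=\lambda_i(\lambda_i+\mu_j)$ and $\mathrm{Var}(b_H)=2\tau+4\lambda_i^2$. Hence $\mathrm{Var}(S_{H^{(ij)}})=8(1+\tau+\lambda_i^2)$, with distinct $(i,j)$ uncorrelated. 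The $\mu_j$-terms therefore already cancel \emph{inside} $\Sigma_\star$, and nothing is left to absorb the $j$-dependence of the Hessian.

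That $j$-dependence is genuine. Take the rank-$k$ curve $t\mapsto\begin{psmallmatrix}R & tRW\\ tW^\top R & t^2W^\top RW\end{psmallmatrix}$ with $R=\diag(1/i)$; its acceleration at $t=0$ is normal. Along it, $\mathrm{Hess}\,L(M_\star)[v,v]=4\sum_{ij}(1-\mu_j/\lambda_i)B_{ij}^2$ for $v$ with off-diagonal block $B=RW$. So the Weingarten term turns the ambient eigenvalue $2$ into $2(1-i/(k+j))$. This is the familiar $\lambda_i/(\lambda_i-\mu_j)$ amplification in perturbing a rank-$k$ truncation, and it is why the eigengap is needed; $H_\star$ is also block-diagonal between the $S$- and $B$-blocks. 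Keeping track of $\|H^{(ij)}\|_F^2=2$ and $\langle H^{(ij)},Z\rangle=2(Z_{12})_{ij}$, the entries $(Z_{12})_{ij}$ are independent with variance $(1+\tau+i^{-2})/\bigl(2(1-i/(k+j))^2\bigr)$. Hence
\[
\|\mathcal L(Z)\|_{L^2}^2\overset{d}{=}\sum_{j=1}^{d-k}\omega_j\,\chi^2_{1,j},
\qquad
\omega_j=\frac12\sum_{i=1}^{k}\frac{i\,(1+i^{-2}+\tau)}{\bigl(1-i/(k+j)\bigr)^2},
\]
which is a weighted chi-square with $j$-dependent weights, not a multiple of $\chi^2_{d-k}$. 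Since already $\omega_1\ge\frac12 k(k+1)^2(1+\tau)$, the $\Theta(k^2(d-k))$ claim also fails when $d-k$ is small compared with $k$.

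The paper reaches the displayed constant by a simplified route. In the proof of the corollary giving \eqref{eq:full-gaussian-cov}, it uses $H_\star v=2\Sigma_{\mathrm{pre}}v\Sigma_{\mathrm{pre}}$, the ambient Euclidean Hessian. That drops exactly the Weingarten term you correctly included, and is what makes the weights $j$-free. It then reads $\mathrm{Var}((Z_{12})_{ij})$ off as the coefficient $2(1+\tau+\lambda_i^2)$ of $(B_1)_{ij}(B_2)_{ij}$ in $\langle H_1,V_\star H_2\rangle$. But that coefficient is the variance of $\langle H^{(ij)},Z\rangle=2(Z_{12})_{ij}$. So even with the ambient Hessian, a consistent count gives $\frac12\sum_i i(1+i^{-2}+\tau)$ rather than $2\sum_i i(1+i^{-2}+\tau)$.

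Consequently your planned single-entry check will come out negative. No consistent execution of your plan reaches the stated limit, which is not what \Cref{thm:master-compatible} yields in this instance. To finish, you would have to either adopt those two simplifications, which \Cref{thm:riem_mestimator_clt} does not license, or prove the corrected weighted-$\chi^2$ limit above instead. The $\sigma^2k$ term and your Step 1 are unaffected.
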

We compare~\Cref{cor:linear-concrete} with \citet[Thm.~4]{cabannes23SSLinterplay}. Applied to this setup, 
\citet[Thm.~4]{cabannes23SSLinterplay} yields (ignoring $\log{n}$ factors) that
$\E[\calE_{m,n}] \lesssim \sigma^2 k + \alpha^{-1} k(d-k)\|C^{-1}A_\star^\top\beta_\star\|^2_2 \asymp \sigma^2 k + \alpha^{-1} k^3(d-k)$ (see~\Cref{sec:comparison-ssl}).
On the other hand, \Cref{cor:linear-concrete} implies that for large $m,n$, $\calE_{m,n} \asymp \sigma^2 k + \alpha^{-2} k^2(d-k)$ w.h.p.
Thus, our result improves the dependence in the second term by
factor of $k$.

\subsection{Factor Model Pre-training}
\label{sec:ex:factor}

We next specialize our main result to the latent factor example from \citet[Sec.~4]{ge2023provable}. This setting is structurally similar to \Cref{sec:example_linear_spectral}: the latent low-rank structure makes unsupervised pretraining natural, while a rotational invariance renders representation-level parameters non-identifiable. Proofs for this case study are presented in \Cref{app:proof-factor}.

\paragraph{Factor model and pre-training.}
Let $x\in\mathbb R^d$ be generated according to the factor model $x = A_\star h + \mu$ where $h\sim\mathcal N(0,I_k)$ is a latent factor, $\mu\sim\mathcal N(0,I_d)$ is independent noise,
and $A_\star\in\mathbb R^{d\times k}$ is an unknown full-rank factor loading matrix. We assume that $k\ll d$. 
Pre-training observes $m$ i.i.d.\ draws $\{z_i\}_{i=1}^{m}$ from the same distribution as $x$, and uses MLE to estimate the factor loading matrix,
i.e., $\ell_{\mathrm{pre}}(A; z_i) = \frac{1}{2}\left( \log\det( I_d + AA^\top ) + z_i^\top (I_d + AA^\top)^{-1} z_i \right)$.

\paragraph{Downstream regression.}
We observe labeled samples $(x,y)$ satisfying $y = \beta_\star^\top h + \nu$ where $\nu\sim\mathcal N(0,\sigma_{\nu}^2)$ and independent of $(h,\mu)$. The downstream learner does not observe $h$ and fits a linear predictor based on features extracted from $x$
using the pretrained representation.
Specifically, under the Gaussian factor model, the regression function is linear and admits a closed form $f_\star(x)= \beta_\star^\top A_\star^\top (I_d + A_\star A_\star^\top)^{-1}x$. Accordingly, we can write the downstream labels as $Y = f_\star(X) + \varepsilon$ where $\varepsilon \mid X\sim \mathcal{N}(0,\sigma^2)$ and $\sigma^2\coloneq\sigma_{\nu}^2 + \beta_\star^\top(I_d+A_\star A_\star^\top)^{-1}\beta_\star$.

\paragraph{Reduction to the linear case.}
For any candidate loading matrix $A\in\R^{d\times k}$ 
we define the 
{representative-level} feature map $\psi(x,A)\coloneqq W(A) x \in \R^k$ with $W(A) \coloneq A^\top(I_d+AA^\top)^{-1}$.
Writing the orbit-invariant descriptor as $M=AA^\top\in\mathcal M_{d,k}$ and choosing any local section $s(M)$ with $s(M)s(M)^\top=M$, any representative $A$ on the same orbit can be expressed as $A=s(M)Q$ for some $Q\in O(k)$, which yields $\psi(x,A)=Q^\top \phi(x,M)$ with $\phi(x,M)\coloneqq s(M)^\top(I_d+M)^{-1}x$. Thus, passing from $A$ to $M$ removes the rotational non-identifiability. Consequently, this factor-model instance is a direct specialization of the linear example at the descriptor level, with the downstream class determined solely by the $k$-dimensional subspace $\mathrm{range}(M)$. Let $M_\star = U_\star
\diag(\Sigma_\star,0_{d-k})U_\star^\top$ with $\Sigma_\star=\operatorname{diag}(\sigma_1,\dots,\sigma_k)$ and
$U_\star=[U_1\ U_2]$, where $U_1\in\R^{d\times k}$ spans $\operatorname{range}(M_\star)$.

\begin{corollary}
\label{cor:factor}
For the factor model example,
as $(m,n)\to(\infty,\infty)$ with $m/n\to\alpha\in(0,\infty)$,
\[
n\Big(R(D_{\mathrm{pre}}^{(m)},X_{1:n})-\sigma^2\Big)
\ \distconv\
\sigma^2\,k+\frac{1}{\alpha}\,\|(I_k+\Sigma_\star)^{-1/2}\,U_1^\top A_\star\,\beta_\star\|_2^2\,\chi^2_{d-k}.
\]
\end{corollary}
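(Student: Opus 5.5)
The plan is to derive \Cref{cor:factor} as a direct instance of \Cref{thm:master-compatible} in the regular case $\mathcal B_0=\{0\}$, reusing the linear-spectral reduction described above. The regularity assumptions will be verified as in \Cref{cor:linear-spectral-model}. The pre-training loss is the Gaussian log-likelihood with covariance $S(M)\coloneqq I_d+M$, which is smooth in $M$ on $\mathcal M_{d,k}$. All moments are finite because the data are Gaussian. The descriptor minimizer is unique: the model is well-specified, so $L_{\mathrm{pre}}$ is minimized exactly at $M_\star=A_\star A_\star^\top$. It then remains to check that the Riemannian Hessian is positive definite on $T_{M_\star}\mathcal M_{d,k}$, which follows because it equals the Fisher information of an injective linear parametrization $M\mapsto I+M$. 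The same well-specification gives $\Sigma_\star=H_\star$ (information equality), so $V=H_\star^{-1}$.

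\textbf{Downstream class and residual linearization.} Since $(I+M)^{-1}$ commutes with $M$, the rows of $\phi(\cdot,M)$ span $\{x\mapsto u^\top x: u\in\operatorname{range}(M)\}$, so $d_{\mathrm{eff}}=k$. This gives the $\sigma^2 k$ term. Under $\mu_{\mathrm{down}}=\mathcal N(0,I+M_\star)$, the $L^2$ inner product is $\langle u,v\rangle_{S_\star}$ with $S_\star=I+M_\star$. The subspace $\operatorname{range}(M_\star)=\operatorname{span}(U_1)$ is $S_\star$-invariant, so its $S_\star$-orthogonal complement is $\operatorname{span}(U_2)$. Moreover $f_\star(x)=\gamma^\top x$ with $\gamma=U_1(I+\Sigma_\star)^{-1}U_1^\top A_\star\beta_\star\in\operatorname{range}(M_\star)$.

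To compute $\mathcal L$, I will parametrize a neighborhood of $M_\star$ by $M=(U_1+U_2B)(\Sigma_\star+E)(U_1+U_2B)^\top$ with $B\in\R^{(d-k)\times k}$ and $E$ symmetric. The $E$-block does not move the subspace and therefore contributes nothing. Moving the subspace by $B$ changes the projector so that, to first order, $\mathcal L(v)(x)=-(U_2B\,U_1^\top\gamma)^\top x$. Because $S_\star=I$ on $\operatorname{span}(U_2)$, this gives
\[
\|\mathcal L(v)\|_{L^2}^2=\|B\,c\|_2^2,\qquad c\coloneqq(I+\Sigma_\star)^{-1}U_1^\top A_\star\beta_\star .
\]
This is exactly the formula \eqref{eq:linear_map_interaction_zero}.

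\textbf{Limiting law of $B$ and assembly.} The Gaussian Fisher information along an off-diagonal perturbation $\Delta=U_2B\Sigma_\star U_1^\top+(\cdot)^\top$ is $\tfrac12\tr(S_\star^{-1}\Delta S_\star^{-1}\Delta)$. This expression is block-diagonal across the $B$ and $E$ blocks, so the $B$-marginal of $Z\sim\mathcal N(0,V)$ has $d-k$ i.i.d.\ rows (the complement is isotropic), each Gaussian with a $k\times k$ covariance $\Gamma$ built from $\Sigma_\star$ and $(I+\Sigma_\star)$. Consequently $Bc$ has i.i.d.\ $\mathcal N(0,c^\top\Gamma c)$ entries, and $\|Bc\|^2\sim (c^\top\Gamma c)\,\chi^2_{d-k}$. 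Substituting into \eqref{eq:master-compatible-dist} then gives the stated limit, provided that $c^\top\Gamma c=\|(I+\Sigma_\star)^{-1/2}U_1^\top A_\star\beta_\star\|^2$.

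\textbf{Main obstacle.} The delicate step is getting $\Gamma$ exactly right, in particular the factors of $\Sigma_\star$ and $\tfrac12$. The quantity $H_\star^{-1}\Sigma_\star H_\star^{-1}$ is defined in normal coordinates for the induced Frobenius metric on $\mathcal M_{d,k}$, whereas the subspace coordinate $B$ relates to tangent vectors through $K=B\Sigma_\star$. I will handle this by using that the pushforward of the limiting Gaussian to the $L^2$ residual is coordinate-free: I will compute $\mathcal L$ and the Fisher information in the same chart $B$, and check that the Jacobian factors cancel. I will also use that $U_1^\top A_\star$ is invertible with $U_1^\top A_\star A_\star^\top U_1=\Sigma_\star$ to simplify $c^\top\Gamma c$ into the claimed norm. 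If a residual $\Sigma_\star$ factor survives, I will recheck the Hessian against a direct one-dimensional computation along $t\mapsto M_\star+t\Delta$.
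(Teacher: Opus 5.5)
\textbf{The gap is in the last step.} The verification of the assumptions, $d_{\mathrm{eff}}=k$, the information equality $V=H_\star^{-1}$, the block-diagonality across $B$ and $E$, and the residual formula $\|\mathcal L(v)\|_{L^2}^2=\|Bc\|_2^2$ are all sound. What fails is the identity you need at the end, $c^\top\Gamma c=\|(I_k+\Sigma_\star)^{-1/2}U_1^\top A_\star\beta_\star\|_2^2$. It is false, and no chart bookkeeping will produce it.

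In your own chart $K=B\Sigma_\star$, so the Fisher information is
\[
\tfrac12\tr(S_\star^{-1}\Delta S_\star^{-1}\Delta)=\tr\bigl(B\,\Sigma_\star(I_k+\Sigma_\star)^{-1}\Sigma_\star\,B^\top\bigr).
\]
Hence the rows of $B$ have covariance $\Gamma=\Sigma_\star^{-1}(I_k+\Sigma_\star)\Sigma_\star^{-1}$, and
\[
c^\top\Gamma c=\norm{\Sigma_\star^{-1}(I_k+\Sigma_\star)^{-1/2}U_1^\top A_\star\beta_\star}_2^2 .
\]
Coordinate invariance only guarantees that every chart returns this same intrinsic value. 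The surviving $\Sigma_\star^{-1}$ is the eigengap factor of subspace perturbation, not a Jacobian artifact, so rechecking the Hessian will not remove it.

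A scalar check confirms this. Take $d=2$, $k=1$, $A_\star=(\sqrt\lambda,0)^\top$ and $\beta_\star\in\R$. The top eigenvector of the sample covariance $S_m$ tilts by $\epsilon\approx(S_m)_{21}/\lambda$, with $\sqrt m\,(S_m)_{21}\distconv\mathcal N(0,1+\lambda)$. Also $\mathrm{Rep}\approx w_1^2\epsilon^2$ with $w_1=\sqrt\lambda\,\beta_\star/(1+\lambda)$. Hence $m\,\mathrm{Rep}\distconv\frac{\beta_\star^2}{\lambda(1+\lambda)}\chi^2_1$, whereas the displayed coefficient is $\frac{\lambda\beta_\star^2}{1+\lambda}$. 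The two agree only at $\lambda=1$.

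\textbf{Where the paper's proof goes wrong.} The paper obtains $Z$ differently, from the PPCA closed form by the delta method: $Z=P_\star G+GP_\star-P_\star GP_\star$. Its block $U_2^\top ZU_1$ is distributed as $\Xi(I_k+\Sigma_\star)^{1/2}$, where $\Xi$ has i.i.d.\ standard normal entries. This agrees with your $V=H_\star^{-1}$, so that part is fine and is a legitimate alternative to your information-equality route.

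The paper then uses $\|\mathcal L(Z)\|_{L^2}^2=\|(I_d-P_\star)Z\Sigma_x^{-1}A_\star\beta_\star\|_2^2$, where $\Sigma_x=S_\star=I_d+M_\star$. This rests on the claim that the section term $\Sigma_x^{-1}Ds(M_\star)[v]\beta_\star$ is annihilated by $I_d-P_\star$, and that claim is wrong. Differentiating $s(M)s(M)^\top=M$ at $s(M_\star)=A_\star$ gives
\[
(I_d-P_\star)Ds(M_\star)[v]=(I_d-P_\star)vA_\star(A_\star^\top A_\star)^{-1},
\]
which is nonzero whenever $v$ tilts the range. Restoring this term turns the paper's coefficient vector into $(I_d-P_\star)vU_1\Sigma_\star^{-1}(I_k+\Sigma_\star)^{-1}U_1^\top A_\star\beta_\star$. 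That is exactly your coefficient $U_2Bc$.

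So both routes, carried out correctly, give
\[
\sigma^2k+\frac1\alpha\norm{\Sigma_\star^{-1}(I_k+\Sigma_\star)^{-1/2}U_1^\top A_\star\beta_\star}_2^2\,\chi^2_{d-k} .
\]
This corrected statement, not the displayed one, is what your argument can prove.
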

Compared to \citet[Thm.~4.4]{ge2023provable},
which states that w.h.p.,\footnote{Technically, \citet[Thm.~4.4]{ge2023provable} controls the excess risk \emph{without} averaging over the conditional labels, as we do in \eqref{eq:main_risk_cond}.
Since we can bound their excess risk definition
by a constant factor of $R(D_{\mathrm{pre}}^{(m)},X_{1:n})$ via Markov's inequality (on a constant probability event), we ignore this difference in our comparison.
} 
$\mathcal{E}_{m,n} \lesssim D^4 k + \alpha^{-1} D^{12} (D^4 + \sigma_{\min}^{-4}(A_\star)) d$ whenever $m \gtrsim D^4 d$, $n \geq D^4 k$ where $D \coloneq \max\{ \opnorm{A_\star}, \norm{\beta_\star} , 1 \}$
(here, we ignore all $\log(1/\delta)$ for simplicity),
we see that \Cref{cor:factor} provides a substantial improvement. 
In particular, it implies a sharper bound of the form
$\mathcal{E}_{m,n} \asymp (\sigma_\nu^2 + D^2) k + \alpha^{-1} D^2 (d-k)$ w.h.p. for large $m,n$, yielding a significant improvement on the dependence of $D$.

\subsection{Gaussian Mixture Pre-training with Subspace-Aware Gating}
\label{sec:ex-mog}

Our final example considers unlabeled pretraining data drawn from a Gaussian mixture (MoG) with
unknown centers, while the downstream predictor uses \emph{subspace-aware} posterior responsibilities that depend
only on a low-dimensional centered-mean subspace.
This example is motivated by the latent classification models studied in e.g.\ \citet{wei2021pretraining,ge2023provable,lin2025statistical}, which we naturally extend to the regression setting. 
From a technical perspective, it also
instantiates the quotient-descriptor viewpoint in a setting with discrete non-identifiability. 
As before, we discuss only the
minimal ingredients needed to invoke~\Cref{thm:master-compatible}, and we defer many details to \Cref{app:proof-cor-mog}.

\paragraph{Pre-training data and loss.}
Fix $d\ge 1$ and $K\ge 2$. Let $U^\star=(u_1^\star,\dots,u_K^\star)\in(\R^d)^K$ be unknown centers and let
$\tau\sim\Unif([K])$. The unlabeled distribution is the MoG $Z \mid (\tau=i) \sim \mathcal N(u_i^\star, I_d)$ for $i\in[K]$. 
Given $m$ i.i.d.\ pre-training samples $Z_i \stackrel{\iid}{\sim} \frac{1}{K}\sum_{i=1}^K \mathcal{N}(u_i^\star,I_d)$, we estimate $U^\star$ by MLE using $\ell_{\mathrm{pre}}(U;Z) \coloneqq -\log(\frac{1}{K} \sum_{i=1}^K \varphi(Z-u_i))$ where $\varphi(\cdot)$ is the density of $\calN(0, I)$.

\paragraph{Subspace-aware features.}
Define the empirical mean $\bar u(U):=\frac1K\sum_{i=1}^K u_i$ and centered second-moment matrix $S(U):=\sum_{i=1}^K \big(u_i-\bar u(U)\big)\big(u_i-\bar u(U)\big)^\top$. Let $r_\star:=\mathrm{rank}(S(U_\star))$ and define $P_U\in O(d)$ to be the orthogonal projector onto the leading $r_\star$-dimensional eigenspace of $S(U)$
(on the regular neighborhood where this eigenspace is well-defined);
centering via $S(U)$ removes an irrelevant global shift and isolates the \emph{effective} subspace in which the mixture geometry varies.
For $U$ in the regular neighborhood, define responsibilities based on the \emph{projected} mixture:
\begin{equation}
\label{eq:ex-mog-gating}
\pi_i(x;U)
:=
\frac{\exp\!\big(\langle P_U u_i,\;P_U x\rangle-\tfrac12\|P_Uu_i\|_2^2\big)}
{\sum_{j=1}^K \exp\!\big(\langle P_U u_j,\;P_U x\rangle-\tfrac12\|P_Uu_j\|_2^2\big)},
\; i\in[K].
\end{equation}
These are exactly the Bayes posteriors $\pi_i(x;U)=\Pr_U(Z=i\mid P_U X=P_U x)$ for the projected model
$P_UX\mid (Z=i)\sim \mathcal N(P_Uu_i, I_{r_\star})$.
Define the feature map $\psi_U:\R^d\mapsto\R^{K(d+1)}$ by
\begin{align}
\label{eq:ex-mog-feature}
\psi_U(x)
:=
\Big(
\pi_1(x;U) P_U (x-u_1),\, \pi_1(x;U),\;
\dots,
\pi_K(x;U) P_U (x-u_K),\, \pi_K(x;U)
\Big).
\end{align}
For parameters $\theta=(\theta_1,b_1,\dots,\theta_K,b_K)$ with $\theta_i\in\R^{r_\star}$ and $b_i\in\R$, the
induced predictor is the linear model in features:
$f_{\theta,U}(x) = \ip{\theta}{\psi_U(x)}$.

\begin{figure}[t]
    \centering
    \includegraphics[width=\textwidth]{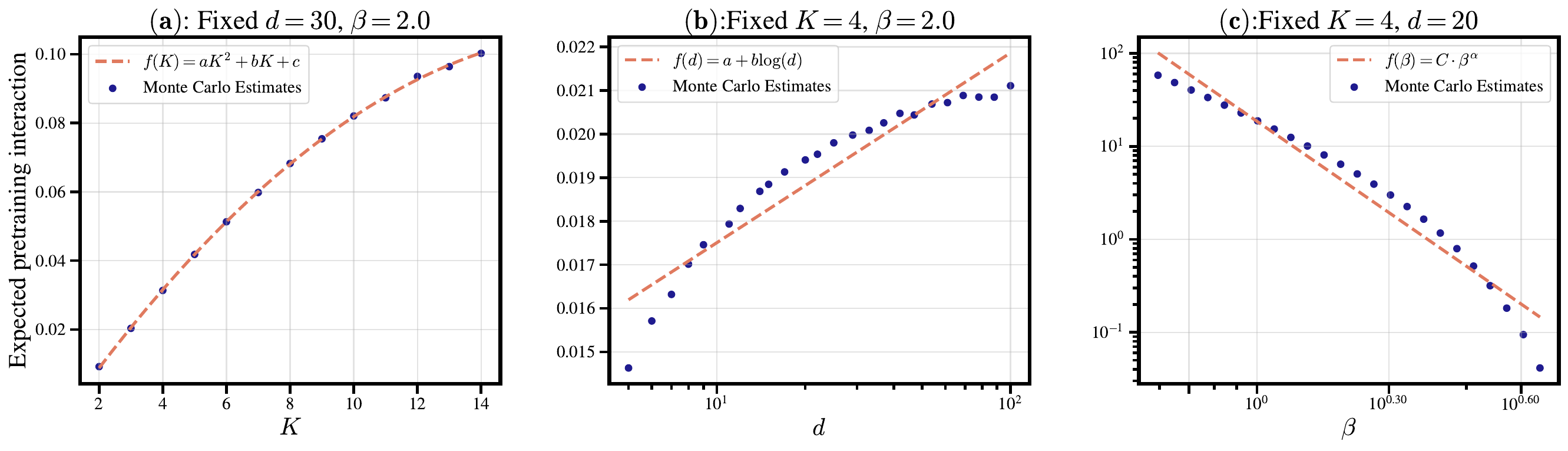}
    \caption{
        Monte Carlo evaluation of $\mathbb{E}[\|\mathcal{L}(Z)\|_{L^2}^2]$ in the MoG example with a block-structured downstream signal. (a) Varying \(K\) with \(d=30\) and \(\beta=2.0\). (b) Varying \(d\) with \(K=4\) and \(\beta=2.0\). (c) Varying \(\beta\) with \(K=4\) and \(d=20\). Dots denote Monte Carlo estimates and dashed curves denote simple fitted trends. See \Cref{app:experiment-details} for the full experimental setup and discussion.}
    \label{fig:gmm-rates-scan}
\end{figure}

\paragraph{Descriptor.}
The pretraining objective for the unlabeled mixture is invariant under permutations of the $K$ components. We take $G=S_K$, the permutation group, and define its action on the parameter $U=(u_1,\cdots,u_k)$ by relabeling.
The downstream hypothesis class depends on $U$ only through its orbit $\underline U=[U]\in (\R^d)^K$. \Cref{ass:ex-mog-regular} below guarantees that $U_\star$ lies in the regular regime of the group action. Thus, we can define the quotient feature via any local lift, i.e., choice of ordering. See~\Cref{app:proof-cor-mog:model} for the exact constructions. 

\begin{assmp}[Regular set for the mixture example]
\label{ass:ex-mog-regular}
The centers are distinct and the centered-mean subspace is locally stable,
i.e., 
(i) $u_i^\star\neq u_j^\star$ for all $i\neq j$,
(ii) the matrix 
has an eigengap
between its $r_\star$-th and $(r_\star+1)$-th eigenvalues, and
(iii) $\mathrm{rank}(\E[\psi_{U_\star}(X)\psi_{U_\star}(X)^\top])=K(r_\star+1)$.
\end{assmp}

\begin{proposition}%
\label{cor:GoM-model}
Under \Cref{ass:ex-mog-regular}, the MoG example satisfies
the assumptions of \Cref{thm:master-compatible}.
\end{proposition}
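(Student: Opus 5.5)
The plan is to verify, one at a time, the assumption blocks that \Cref{thm:master-compatible} needs: first the pre-training regularity conditions of \Cref{ass:riem_mestimation} for the Gaussian-mixture MLE on the quotient by $S_K$, and then the downstream conditions (well-posedness through \Cref{ass:stable-null-span}) for the features \eqref{eq:ex-mog-feature}.

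\textbf{Descriptor and local chart.} Since $G=S_K$ is finite and \Cref{ass:ex-mog-regular}(i) gives distinct centers, the action is free near $U_\star$. Choose $r>0$ smaller than half the minimal separation $\min_{i\neq j}\|u_i^\star-u_j^\star\|$. Then no nontrivial permutation maps $B(U_\star,r)$ into itself, so the identity is a valid local lift $s$. For the descriptor I take any smooth symmetric-function embedding, for example the power sums $D(U)=\big(\sum_i u_i^{\otimes a}\big)_{a\le K}$. It is constant on orbits and separates them locally, and because the quotient map is a local diffeomorphism, $\mathcal M=D(B(U_\star,r'))$ is a $dK$-dimensional embedded $C^2$ submanifold. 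In fact I can simply work in the coordinates $U$ themselves, with $T_{\Omega_\star}\mathcal M\cong(\R^d)^K$ and the metric pulled back.

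\textbf{Pre-training conditions.} The log-likelihood $\ell_{\mathrm{pre}}(U;Z)=-\log\frac1K\sum_i\varphi(Z-u_i)$ is $C^\infty$. Its gradient and Hessian are polynomially bounded in $\|Z\|$ uniformly on compact neighborhoods, since the responsibilities lie in $[0,1]$. Gaussian moments therefore give the envelope and moment conditions, and dominated convergence lets me exchange differentiation and expectation. Identifiability of the population minimizer modulo $S_K$ is the classical identifiability of finite Gaussian mixtures (Teicher/Yakowitz–Spragins), combined with the MLE minimizing KL divergence. The local Hessian condition is invertibility of the Fisher information at $U_\star$. This holds because the score components $\pi_i(Z;U_\star)(Z-u_i^\star)$ are linearly independent functions when the centers are distinct: a vanishing linear combination $\sum_i \varphi(z-u_i^\star)\langle a_i,z-u_i^\star\rangle\equiv0$ forces every $a_i=0$ by linear independence of Gaussian translates times polynomials. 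Global consistency comes from the standard Wald argument for mixtures on the quotient, restricting to a compact set via the usual coercivity of the mixture likelihood, or alternatively via the local-consistency version stated in \Cref{ass:riem_mestimation}.

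\textbf{Downstream conditions.} Equivariance \eqref{eq:psi_orth_equiv} holds with $\rho(\sigma)$ the block permutation matrix, which is orthogonal. This is because $P_U$ and $S(U)$ are permutation-invariant and the responsibilities permute accordingly. Smoothness of $U\mapsto P_U$ near $U_\star$ follows from the eigengap in \Cref{ass:ex-mog-regular}(ii) by analytic perturbation theory (Kato). Hence $\phi(x,U)$ is $C^2$ in $U$, with derivatives dominated by polynomials in $\|x\|$, which gives the moment and $C^1$ conditions. Here I read the features as $P_U(x-u_i)$ expressed in an $r_\star$-dimensional smooth orthonormal frame of $\mathrm{range}(P_U)$, so that the feature dimension is $K(r_\star+1)$. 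Full rank (iii) implies the Gram matrix $\Sigma(U)$ is invertible on a neighborhood, so $d_{\mathrm{eff}}$ is locally constant. Then $N_\star=\{0\}$, so $\mathcal B_v=\{0\}$ and \Cref{ass:stable-null-span} holds trivially with $\mathcal B_0=\{0\}$. Well-posedness (a unique OLS solution with probability tending to one) follows from invertibility of the population Gram matrix together with the LLN. Realizability $f_\star\in\mathcal H_{U_\star}$ is part of the setup.

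The main obstacle I expect is the consistency and identifiability part of \Cref{ass:riem_mestimation}, since mixture likelihoods are nonconvex. If global consistency proves delicate, my fallback is to interpret $\hat U_m$ as the consistent local root of the likelihood equation near the orbit of $U_\star$, as in the classical Cramér theory.
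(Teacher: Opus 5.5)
Your proposal is correct, and its architecture matches the paper's appendix proof. Both verify \Cref{ass:riem_mestimation} in a gauge-fixed chart using the $C(1+\|x\|)$ and $C(1+\|x\|^2)$ score and Hessian envelopes. Both then obtain local uniform invertibility of $\Sigma(\underline U)$ from continuity plus full rank at $\underline U_\star$, which gives $N_\star=\{0\}$ and $\mathcal B_0=\{0\}$.

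The genuine difference is on the pre-training side. The paper does not derive nondegeneracy. Its Lemma~\ref{lem:mog:D5} is stated under the hypotheses of Lemma~\ref{lem:mog-mle-unique-local}: it \emph{assumes} identifiability up to permutation, a bound $\nabla^2 L\succeq\mu I$ on a gauge-fixed neighborhood, and uniform Hessian convergence, and it takes $\hat U_m$ to be the gauge-fixed local minimizer. You derive these from \Cref{ass:ex-mog-regular}(i) alone:
\begin{itemize}
\item identifiability from Yakowitz--Spragins plus the KL identity;
\item positive definiteness of $H_\star$ from the Fisher-information identity together with linear independence of exponential polynomials $e^{\langle u_i^\star,z\rangle}p_i(z)$ with distinct frequencies;
\item the Hessian ULLN from the envelope.
\end{itemize}
So your route establishes the proposition as stated, whereas the paper's is conditional on extra curvature hypotheses; the paper's route is shorter and sidesteps global nonconvexity entirely.

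Your frame-coordinate reading is also the right way to reconcile the paper's $\psi_U\in\R^{K(d+1)}$ with \Cref{ass:ex-mog-regular}(iii) and with $\Sigma(\underline U_\star)\succ 0$. Add the one-line check that it changes nothing. Writing $\psi_U=J\tilde\psi_U$ with $J$ having orthonormal columns gives $(\tilde\Phi J^\top)^+=J\tilde\Phi^+$, so the min-norm OLS predictors coincide. Take the frame to depend only on $P_U$ so that equivariance is preserved.

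A few points to tighten:
\begin{itemize}
\item \emph{Consistency.} ``Coercivity'' is not the mechanism, since the population NLL stays bounded as centers escape. The Wald/Redner compactification argument works because $\log p_U\le -\tfrac d2\log(2\pi)$ uniformly. By Fatou, any limit with $s\ge 1$ escaping centers is a defective mixture whose cross-entropy exceeds $F(\underline U_\star)$ by at least $\log(K/(K-s))$. Your Cram\'er-style fallback is exactly what the paper does.
\item \emph{Well-posedness.} The LLN must be applied conditionally on $D_{\mathrm{pre}}^{(m)}$, using the uniform fourth-moment bound as in Lemma~\ref{lem:relative-covariance-lln}, because $\hat U_m$ moves with $m$.
\item \emph{Leverage conditions.} State the leverage conditions of \Cref{ass:moments} explicitly. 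They follow from $q_U\le\kappa^{-1}\|\phi\|^2$ and $|e_U(x)|\le C(1+\|x\|)$.
\item \emph{Descriptor.} For the power-sum descriptor, ``the quotient map is a local diffeomorphism'' does not show that $D$ is an immersion. It is one, since degree-$\le K$ polynomials can interpolate arbitrary gradients at $K$ distinct points, but this is moot once you work in $U$-coordinates.
\end{itemize}
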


\paragraph{Explicit computations.}
While \Cref{cor:GoM-model} allows us to invoke \Cref{thm:master-compatible} in this MoG example, the limiting expression in \eqref{eq:master-compatible-dist} does not admit a simple closed form. The main difficulty is the term $\mathcal{L}(Z)$, which depends on derivatives of the projection operator $\Pi_\Omega$ and is not analytically tractable even in simple problem instances. Nevertheless, the limit in \eqref{eq:master-compatible-dist} can be evaluated numerically via Monte-Carlo simulation. \Cref{fig:gmm-rates-scan} reports such an evaluation for a simple instance with $u_i^\star=\beta e_i$, where $e_i\in\R^d$ denotes $i$-th coordinate. It shows a monotone, concave dependence on the number of blocks $K$, slow growth with the ambient dimension $d$, and rapid decay as $\beta$ increases.

To complement the scaling study in Figure~\ref{fig:gmm-rates-scan}, we also provide a finite-sample distributional comparison of the prediction in~\Cref{thm:master-compatible}. Figure~\ref{fig:gmm-cdfs} reports empirical cumulative distribution functions (CDFs) of the total scaled excess risk and its two leading components for a fixed GMM instance. As $n$ grows with $m/n=\alpha$ held fixed, the empirical distributions move toward the asymptotic laws predicted by~\Cref{thm:master-compatible}. Further experimental details are given in~\Cref{app:experiment-details}. Code for reproducing the simulations is available at \url{https://github.com/mtinati/mog-subspace-jax}.

\begin{figure}[t]
    \centering
    \includegraphics[width=\textwidth]{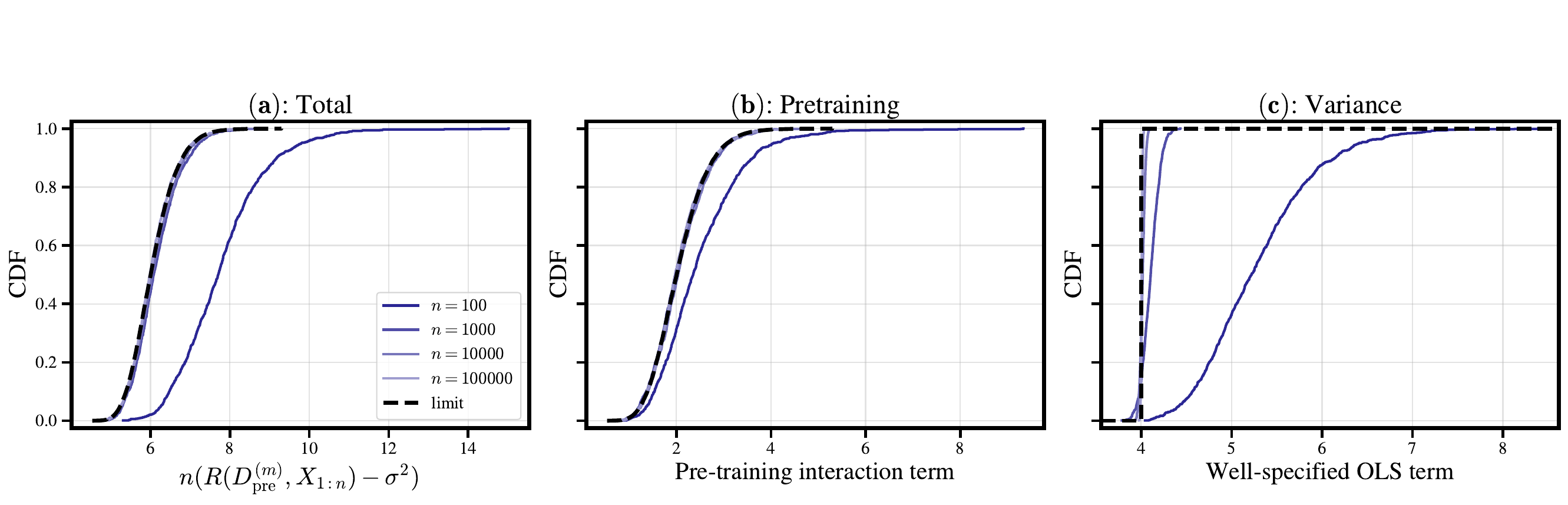}
    \caption{Finite-sample distributional convergence in the GMM example. We fix \(K=4\), \(d=20\), \(\beta=2\), and \(\alpha=m/n=2\), and compare empirical CDFs from repeated two-stage simulations with the asymptotic predictions of~\Cref{thm:master-compatible}. Plot (a) shows the total scaled excess risk \(n(R(D_{\mathrm{pre}}^{(m)},X_{1:n})-\sigma^2)\) and its limiting law \(\sigma^2 d_{\mathrm{eff}}+\alpha^{-1}\|\mathcal{L}(Z)\|_{L^2}^2\). Plot (b) isolates the pretraining contribution and compares it with the limiting fluctuation \(\alpha^{-1}\|\mathcal{L}(Z)\|_{L^2}^2\). Plot (c) shows the scaled variance contribution concentrating around the deterministic limit \(\sigma^2 d_{\mathrm{eff}}\). The CDF plots display representative values of $n$ from a logarithmic grid; dashed black curves denote the corresponding asymptotic limits.}
    \label{fig:gmm-cdfs}
\end{figure}

\section{Conclusion and Discussion}
\label{sec:conclusion}

We developed an asymptotic theory of self-supervised pre-training through a two-stage $M$-estimation framework, leveraging tools from Riemannian geometry to handle group symmetries arising in the pre-training stage. Our work opens up several promising future directions.
On the technical side, a natural next step is to extend the downstream model to more general parametric regression settings; the main challenge lies in generalizing our notion of orthogonal equivariance (cf.~\Cref{sec:orthogonal_equivariance}) to accommodate orbit-invariance for estimators beyond OLS solutions. Another direction
is developing non-asymptotic bounds for downstream risk whose leading-order terms match the asymptotic limits derived in this work.
Extending our framework to pre-training objectives with auxiliary pre-training heads that are discarded before transfer is also of interest.
Such objectives arise naturally in multi-task learning, where a shared representation $w$ is trained with task-specific heads $\{\theta_k\}$,
\[
\widehat L_{\mathrm{pre}}(w,\theta_1,\ldots,\theta_K)
=
\frac1m\sum_{i=1}^m
\sum_{k=1}^K
\ell_k(w,\theta_k;z_i),
\]
and in teacher-student settings, where a student representation $\psi(\cdot,w)$ is matched to a teacher signal through an auxiliary readout $\theta$. Since only $w$ is transferred downstream, the natural object is the profiled loss
\[
\widehat Q_m(w)
=
\inf_\theta \widehat L_{\mathrm{pre}}(w,\theta),
\]
viewed on the quotient space of representation parameters. Developing a quotient-level profile $M$-estimation theory for such objectives would allow auxiliary pre-training parameters to enter the asymptotic covariance through profile scores and Hessians, while keeping the downstream analysis centered on the learned representation descriptor.

More broadly, our asymptotic characterizations suggest a principled way to guide the design of pre-training losses and data-augmentation strategies, by directly optimizing the bound on the downstream risk over a diverse family of problem instances. This connection between asymptotic theory and practical pre-training design is an especially exciting avenue for future exploration.

\section*{Acknowledgments}
We thank the anonymous reviewers for their thoughtful comments and suggestions, which helped improve the manuscript. This work was partially supported by a grant from Coefficient Giving and an Okawa Foundation Research Grant.

\bibliography{paper}

\newpage

\setcounter{tocdepth}{2} %
\tableofcontents
\newpage 

\appendix
\crefalias{section}{appendix}
\crefalias{subsection}{appendix}
\section{More Detailed Related Work Discussion}
\label{sec:appendix:related_work}

\paragraph{Algorithmic approaches to pre-training and representation learning.}
Self-supervised pre-training has been developed through a diverse set of algorithmic paradigms, with the common goal of learning representations from unlabeled data that transfer effectively to downstream tasks. In natural language processing, masked-token prediction and autoregressive language modeling have been especially influential, as exemplified by BERT and GPT-style language models \citep{devlin2019bert,brown2020gpt3}.
Analogous masked-prediction and reconstruction-based approaches have also been highly influential in vision~\citep{he2022masked}. Contrastive learning methods---e.g.,
contrastive predictive coding \citep{oord2018representation}, SimCLR \citep{chen2020simclr}, MoCo \citep{he2020momentum}, and CLIP \citep{radford2021CLIP}---learn representations by bringing together semantically related positive pairs while separating unrelated negatives.
Algorithmic variants include modifying the comparison structure~\citep{caron2020unsupervised,dwibedi2021little,zhang2023trifactor} and negative-sampling mechanism~\citep{chuang2020debiased,robinson2021contrastive}.
A second line of joint-embedding methods avoids explicit negative samples, instead using architectural, statistical, or optimization mechanisms to prevent collapse; representative examples include BYOL \citep{grill2020BYOL}, SimSiam \citep{chen2021exploring}, Barlow Twins \citep{zbontar2021barlowtwins}, VICReg \citep{bardes2022vicreg}, and DINOv2 \citep{oquab2024dinov2}. 
Our work is complementary to algorithmic development;
our framework is agnostic to the specific pre-training loss used in the first stage.

\paragraph{Theoretical studies of pre-training and fine-tuning.}
A growing theoretical literature seeks to explain when
unsupervised pre-training produces representations that are useful for downstream prediction. Early theoretical work on contrastive learning introduced latent-class models and showed that representations with small contrastive loss can support sample-efficient downstream classification on related tasks \citep{saunshi2019analysis}. Related analyses study contrastive estimation in topic models, showing that contrastive representations contain topic-posterior information~\citep{tosh2021contrastive}, and pretext-task learning, where predicting one part of the input can provably reduce the labeled sample complexity of downstream learning under suitable conditional-independence assumptions \citep{lee2021predicting}. Spectral and graph-based perspectives have also played an important role: \citet{haochen2021provable} analyze a spectral contrastive loss through an augmentation graph, while \citet{balestriero2022contrastive} connect 
various SSL objectives to spectral embedding methods.
\citet{bansal2025understanding} analyze contrastive learning in Gaussian mixture models where the augmentation is biased towards the mixture component of the sample.

Many follow up works refine this picture by emphasizing that downstream performance is not determined by the SSL objective alone, but also by augmentations, model class, inductive bias, and memoization. \citet{saunshi2022inductivebias} show that analyses depending only on augmentation structure and contrastive loss can be vacuous without accounting for the function class and training algorithm. \citet{haochen2022theoretical} further study how model-class inductive biases affect the structures recovered by contrastive learning.  \citet{chen2020group} give a group-theoretic treatment of data augmentation and show that it can provably lead to variance reduction.
\citet{wei2021pretraining} analyze why pretrained language models can help downstream head and prompt tuning 
using a latent-variable generative model. 
\citet{wang2024memorization} provide empirical evidence that some level of memorization 
improves generalization in SSL.
Several works take an operator-theoretic approach to SSL: 
\citet{cabannes23SSLinterplay} analyze the interplay between augmentations, inductive bias, and generalization using RKHS,
\citet{johnson2023contrastive} 
show that several existing contrastive learning methods are actually approximating a positive-pair kernel,
\citet{zhai2024understanding} study SSL through the lens of RKHS approximation,
and \citet{esser2024nonparametric} develop kernelized SSL objectives.
\citet{ge2023provable} study a broad latent-variable framework in which maximum-likelihood pre-training is followed by empirical-risk minimization for downstream prediction.
More general loss-transfer settings
where the pre-training and downstream losses are allowed to have different orders is considered in \citet{deng2024generalization}. 
\citet{jones2025provable}
shows that using PCA to initialize SGD improves the sample complexity of learning a single-index model.
Finally, \citet{lin2025statistical} bound downstream
task error in terms of an approximate sufficiency loss, based on a theory of approximate sufficient statistics~\citep{oko2025statistical}, and which is shown to be well-controlled with SimCLR pre-training. 

As mentioned previously, the focus of our work is to develop an asymptotic analysis of the two-stage pre-training and fine-tuning pipeline, enabling precise computation of the limiting downstream risk. As discussed
in \Cref{sec:related_work}, the works mentioned in the previous paragraph most directly related to our work are \citet{ge2023provable,cabannes23SSLinterplay,zhai2024understanding}. These works provide 
sufficient conditions and upper bounds on the downstream risk.
For \citet{ge2023provable,cabannes23SSLinterplay}, we showed in \Cref{sec:examples} that the upper bounds 
are loose by problem-specific factors even in simple parametric examples; the bounds in our work however
are not directly comparable to \citet{zhai2024understanding},
as discussed in the next paragraph.
Specific details aside, we believe that extending our analysis to cover the full generality of the nonparametric RKHS framework is exciting future work.

\paragraph{Detailed comparison with \citet{zhai2024understanding}.}
\citet{zhai2024understanding} provides an RKHS approximation theory for augmentation-based SSL. Their analysis starts from the kernel induced by the augmentation distribution and assumes that the downstream target is soft invariant, or regular with respect to the corresponding RKHS. Their main downstream result \cite[Thm.~1]{zhai2024understanding} applies to an arbitrary $d$-dimensional encoder: the error bound decomposes into an approximation term, measuring how well the encoder aligns with the leading eigenspace of the augmentation-induced kernel, and an estimation term from fitting the downstream predictor with finitely many labeled samples. At the population level, the optimal $d$-dimensional representation---in their RKHS approximation sense---is the span of the top $d$ eigenfunctions of the augmentation-induced kernel. They estimate this population augmentation kernel from finite pre-training data by taking the top $d$ eigenfunctions of the corresponding empirical augmentation kernel.

Our work shares the high-level perspective that a population pre-training object determines the useful downstream representation. However, we measure the quality of pre-training in a different way. Although the bound of \citet[Thm.~1]{zhai2024understanding} can in principle be evaluated for any fixed encoder, their analysis measures the encoder quality through augmentation-induced RKHS approximation quantities, such as alignment with the leading eigenspace or the associated trace gap. In contrast, our analysis measures pre-training quality through the statistical error of the empirical pre-training optimizer relative to its population counterpart. Thus, we study how finite sample fluctuations in the pre-training objective propagate to the downstream estimator. This also leads to different downstream guarantees.~\citet{zhai2024understanding}'s 
analysis works under a weaker RKHS regularity condition and retains both an approximation term and a nonparametric downstream estimation term. Our downstream analysis, by contrast, imposes a stronger realizability structure assumption; under this stronger assumption, we obtain a faster downstream rate. Thus, these rates are not directly comparable, since they correspond to different assumptions and different notions of pre-training error.

\paragraph{Classical two-stage estimation in econometrics and statistics.}
Classical econometrics work on two-stage estimators 
provides the natural foundation for viewing pre-training followed by fine-tuning as a two-stage estimation problem. 
Pagan's work on generated regressors and two-stage estimation~\citep{pagan1984generated,pagan1986twostage} highlights a key theme that remains
prevalent in modern representation learning:
the first-stage estimation errors generally propagate
into the downstream \emph{limiting} distribution
of the second-stage estimator.
\citet{murphy1985twostep} derive asymptotically correct covariance expressions that account for the propagation of first-stage uncertainty, and \citet{newey1984methodofmoments} showed that sequential (e.g., two-stage) estimators can be interpreted within a method-of-moments framework.
These broad ideas are encapsulated in 
Newey and McFadden's general large-sample theory~\citep{newey1994largesample},
which contains analysis of consistency, asymptotic normality, and variance estimation for two-stage estimation.

There are several key differences between the classic work and ours. On the one hand, classic
two-stage theory does \emph{not} assume that the data used in stage one is independent of the data used in stage two. On the other hand, 
the classic theory does typically assume that the first-stage estimator is identifiable in the direct parameter space, and does not 
address the structural issues of orbit invariance that we do in our work. Regarding the first point, there is related literature
on two-sample, two-stage least squares~\citep{klevmarken1982missing,pacini2016robust} and two-sample instrumental variable estimators~\citep{inoue2010twosampleIV} where, similar to our
pre-train and fine-tune pipeline, the data used in each stage is independent. Again, since these works do not deal with 
orbit identifiability, 
the asymptotic analysis is able to leverage e.g., arguments from \citet{newey1994largesample}. 
Finally, we mention more recent work of \citet{zhang2019semi}
which studies asymptotic analysis of semi-supervised mean estimation.

\paragraph{Riemannian limit theorems and $M$-estimation.}
As mentioned in \Cref{sec:related_work} and \Cref{sec:manifold_identifiability}, the work of \citet{brunel2023geodesically} on geodesically-convex $M$-estimation 
provides key technical tools used to establish our
limit theorems for the pre-training stage. 
Other prior works studying estimation on Riemannian
manifolds include limit theorems for 
Fr{\'{e}}chet means~\citep{bhattacharya2003meansI,bhattacharya2005meansII,bhattacharya2008statistics}, geodesic principal components~\citep{huckemann2010intrinsic,huckemann2011intrinsic}, 
Fr{\'{e}}chet regression~\citep{petersen2019frechetregression},
and low-rank matrix sensing~\citep{bastani2024asymptotic}.
Our work is complementary to this line of research,
and can be viewed as leveraging these results (specifically the work of \citet{brunel2023geodesically}) to study a particular two-sample, two-stage $M$-estimation problem.

\section{Structure of the Proof of Theorem~\ref{thm:master-compatible}}
\label{app:proof-structure}

This appendix gives a high-level roadmap for the proof of \Cref{thm:master-compatible}. The purpose is to isolate the main statistical mechanisms behind the limit in \eqref{eq:master-compatible-dist}; the full technical arguments are given in \Cref{app:proof-main-thm}.

\subsection{Exact conditional risk decomposition}

Recalling the pre-trained descriptor
$\hat\Omega_m=\hat\Omega_m(D_{\mathrm{pre}}^{(m)})\in\mathcal M$ from
\Cref{sec:identifiability}, we condition on $D_{\mathrm{pre}}^{(m)}$ and treat
$\Omega\coloneq \hat\Omega_m$ as fixed. Assume
$\E\|\phi(X,\Omega)\|_2^2<\infty$. Define the population forward operator
$T_\Omega:\R^p\to L^2_{\mathrm{down}}$ and its adjoint
$T_\Omega^{\mathrm{adj}}:L^2_{\mathrm{down}}\to\R^p$ by
\begin{align}\label{eq:feature-operator}
(T_\Omega\theta)(x)\coloneq \langle \theta,\phi(x,\Omega)\rangle,
\qquad
T_\Omega^{\mathrm{adj}}g\coloneq \E[g(X)\phi(X,\Omega)].    
\end{align}
These operators induce the population feature covariance
\[
\Sigma(\Omega)
\coloneq T_\Omega^{\mathrm{adj}}T_\Omega
=
\E[\phi(X,\Omega)\phi(X,\Omega)^\top].
\]
Let $\mathcal H_\Omega=\mathrm{Im}(T_\Omega)\subseteq L^2_{\mathrm{down}}$ be
the downstream function class induced by $\Omega$. The population
$L^2_{\mathrm{down}}$-orthogonal projector onto $\mathcal H_\Omega$ is
\[
\Pi_\Omega
\coloneq
T_\Omega\,\Sigma(\Omega)^+\,T_\Omega^{\mathrm{adj}}.
\]
We use this projector to define the residual and representation error
\begin{align}\label{eq:representation error}
e_\Omega\coloneq (I-\Pi_\Omega)f_\star,
\qquad
\mathrm{Rep}(\Omega)\coloneq \|e_\Omega\|_{L^2_{\mathrm{down}}}^2.    
\end{align}
Since the model is well-specified at the population descriptor, we have
$e_{\Omega_\star}=0$.

Similarly, we can define the (downstream) empirical adjoint and empirical covariance as
\[T_{\Omega,n}^{\mathrm{adj}}g
\coloneqq  \frac{1}{n}\sum_{i=1}^n g(x_i)\phi(x_i,\Omega),\]
and $\Sigma_n(\Omega)
\coloneqq \frac{1}{n}\sum_{i=1}^n \phi(x_i,\Omega)\phi(x_i,\Omega)^\top$, 
and the empirical projector
\[\Pi_{\Omega,n}g
\coloneqq T_\Omega\,\Sigma_n(\Omega)^+\,T_{\Omega,n}^{\mathrm{adj}}g.\]
By construction, the minimum-norm OLS predictor $\hat f_{\Omega,n}\coloneqq T_\Omega\hat\theta_{\Omega,n}$ satisfies
$\hat f_{\Omega,n}=\Pi_{\Omega,n}y$
for any $y(x_i)=y_i$ for $i \in [n]$ (cf.~\Cref{app:prelim}). Similarly, let $\varepsilon(\cdot)$ denote the noise function on the design, defined by $\varepsilon(x_i)=\varepsilon_i$ for $i\in[n]$.
Then $y=f_\star+\varepsilon$ on $\{x_i\}_{i=1}^n$.

With this notation, the following exact
finite-sample decomposition is the starting point of the proof.

\begin{restatable}[Exact conditional risk decomposition]{proposition}{exactdecomp}
\label{prop:exact-decomp}
For the minimum-norm OLS predictor $\hat f_{\Omega,n}$, the conditional test risk
admits the decomposition
\begin{align}
&\E\!\left[
\bigl(Y_{\mathrm{new}}-\hat f_{\Omega,n}(X_{\mathrm{new}})\bigr)^2
\,\middle|\, D_{\mathrm{pre}}^{(m)},X_{1:n}
\right]
=
\sigma^2+\mathrm{Rep}(\Omega)
\nonumber\\
&\qquad+
\underbrace{
\E\!\left[
\bigl(\Pi_{\Omega,n}f_\star-\Pi_\Omega f_\star\bigr)(X_{\mathrm{new}})^2
\,\middle|\,D_{\mathrm{pre}}^{(m)},X_{1:n}
\right]
}_{=:~\mathrm{Leakage}_n(\Omega)}
+
\underbrace{
\frac{\sigma^2}{n}
\Tr\!\left(\Sigma(\Omega)\Sigma_n(\Omega)^+\right)
}_{=:~\mathrm{Var}_n(\Omega)} .
\label{eq:exact-risk-general-main}
\end{align}
\end{restatable}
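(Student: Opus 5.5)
Since $\Omega=\hat\Omega_m$ is fixed by conditioning on $D_{\mathrm{pre}}^{(m)}$, the plan is to compute everything conditionally on $X_{1:n}$, averaging only over the training noise $\varepsilon_{1:n}$ and the test pair. First write $Y_{\mathrm{new}}=f_\star(X_{\mathrm{new}})+\varepsilon_{\mathrm{new}}$. The test noise $\varepsilon_{\mathrm{new}}$ is independent of $\hat f_{\Omega,n}$ and has conditional mean zero given $X_{\mathrm{new}}$. Hence the cross term vanishes and the risk equals $\sigma^2+\E\|f_\star-\hat f_{\Omega,n}\|_{L^2}^2$, where the outer expectation is over $\varepsilon_{1:n}$ given $X_{1:n}$.

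Next, use linearity of the empirical projector, $\hat f_{\Omega,n}=\Pi_{\Omega,n}y=\Pi_{\Omega,n}f_\star+\Pi_{\Omega,n}\varepsilon$; this is well defined because $\Pi_{\Omega,n}g$ depends on $g$ only through its values on the design. Then decompose
\[
f_\star-\hat f_{\Omega,n}=(I-\Pi_\Omega)f_\star-(\Pi_{\Omega,n}f_\star-\Pi_\Omega f_\star)-\Pi_{\Omega,n}\varepsilon .
\]
Expanding the square gives six terms, handled as follows.
\begin{itemize}
\item The term $\Pi_{\Omega,n}\varepsilon=T_\Omega\Sigma_n^+T^{\mathrm{adj}}_{\Omega,n}\varepsilon$ is linear in $\varepsilon_{1:n}$ with conditional mean zero. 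Its cross terms with the two deterministic pieces therefore vanish after averaging over the training noise.
\item The two deterministic pieces are $L^2$-orthogonal. Indeed, $e_\Omega=(I-\Pi_\Omega)f_\star\in\mathcal H_\Omega^\perp$, while $\Pi_{\Omega,n}f_\star$ and $\Pi_\Omega f_\star$ both lie in $\mathrm{Im}(T_\Omega)=\mathcal H_\Omega$. This yields $\mathrm{Rep}(\Omega)+\mathrm{Leakage}_n(\Omega)$ with no cross term.
\end{itemize}

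It remains to compute the noise term. We have
\[
\|T_\Omega\beta\|_{L^2}^2=\beta^\top\Sigma(\Omega)\beta,
\qquad \beta=\Sigma_n^+\tfrac1n\textstyle\sum_i\varepsilon_i\phi(x_i,\Omega).
\]
Using $\E[\varepsilon_i\varepsilon_j\mid X_{1:n}]=\sigma^2\delta_{ij}$, which follows from independence across samples and the homoscedasticity $\E[\varepsilon^2\mid X]=\sigma^2$, we obtain $\E[\beta\beta^\top]=\frac{\sigma^2}{n}\Sigma_n^+\Sigma_n\Sigma_n^+=\frac{\sigma^2}{n}\Sigma_n^+$. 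Hence the term equals $\frac{\sigma^2}{n}\Tr(\Sigma(\Omega)\Sigma_n(\Omega)^+)$.

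The main care point is the identification $\hat f_{\Omega,n}=\Pi_{\Omega,n}y$. The minimum-norm OLS solution is $\Sigma_n^+T^{\mathrm{adj}}_{\Omega,n}y$, since the normal equations are solved by the pseudo-inverse applied to a vector in $\mathrm{range}(\Sigma_n)$; this is cited from \Cref{app:prelim}. The intrinsic feature map $\phi$ is used, and by \Cref{lem:orbit_invariant_min_norm} this is legitimate regardless of the lift. Finally, the square-integrability assumption $\E\|\phi\|^2<\infty$ ensures that every term is finite.
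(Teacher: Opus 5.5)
Your proposal is correct and follows essentially the same route as the paper: you use the same three-piece split $e_\Omega-(\Pi_{\Omega,n}f_\star-\Pi_\Omega f_\star)-\Pi_{\Omega,n}\varepsilon$, and you kill the cross terms using the mean-zero training and test noise together with $L^2$-orthogonality of $e_\Omega$ to $\mathcal H_\Omega$. The only cosmetic differences are that the paper kills the $e_\Omega$--noise cross term by orthogonality rather than by the zero noise mean, and that it carries out your computation $\E[\beta\beta^\top\mid X_{1:n}]=\frac{\sigma^2}{n}\Sigma_n^+$, giving $\frac{\sigma^2}{n}\Tr(\Sigma(\Omega)\Sigma_n(\Omega)^+)$, separately in \Cref{lem:noise-closed}.
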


The three terms in \eqref{eq:exact-risk-general-main} have distinct roles.
The representation term $\mathrm{Rep}(\Omega)$ measures the population
approximation error of the learned feature class $\mathcal H_\Omega$. This term
is zero at $\Omega_\star$, but is typically nonzero for the finite-sample
pre-training estimator $\hat\Omega_m$. The leakage term
$\mathrm{Leakage}_n(\Omega)$ measures the discrepancy between the empirical and
population projection operators on $f_\star$. Under a standard well-posedness
condition that the empirical inner product is non-degenerate on
$\mathcal H_\Omega$, this term can be written as
\[
\Pi_{\Omega,n}f_\star-\Pi_\Omega f_\star
=
\Pi_{\Omega,n}e_\Omega ,
\]
so leakage is caused by projecting the population residual $e_\Omega$ using the
empirical downstream geometry. Finally, $\mathrm{Var}_n(\Omega)$ is the usual
least-squares variance contribution from fitting the downstream labels. See~\Cref{app:decomp} for the proof.

\subsection{Main proof idea}

We apply \Cref{prop:exact-decomp} with
$\Omega=\hat\Omega_m$. Subtracting $\sigma^2$ and multiplying by $n$ gives
\[
\mathcal E_{m,n}
=
n\,\mathrm{Var}_n(\hat\Omega_m)
+
n\,\mathrm{Leakage}_n(\hat\Omega_m)
+
n\,\mathrm{Rep}(\hat\Omega_m).
\]
The proof of \Cref{thm:master-compatible} analyzes these three terms separately
under the joint limit $m,n\to\infty$ with $m/n\to\alpha$.

First, the variance term behaves as a well-specified least-squares variance
term on the \emph{activated} limiting feature space. In the stable-null-span
regime, perturbations of directions that are null at \(\Omega_\star\) may
activate an additional limiting subspace \(\mathcal B_0\subseteq
\mathcal H_{\Omega_\star}^{\perp}\). Thus the relevant limiting degrees of
freedom are
\[
d_{\mathrm{act}}(\Omega_\star)
\coloneqq
\dim\bigl(\mathcal H_{\Omega_\star}\oplus\mathcal B_0\bigr),
\]
rather than merely \(d_{\mathrm{eff}}(\Omega_\star)=\dim(\mathcal H_{\Omega_\star})\).
By \Cref{ass:stable-null-span} and concentration of the downstream empirical
covariance,
\[
n\,\mathrm{Var}_n(\hat\Omega_m)
=
\sigma^2
\Tr\!\left(
\Sigma(\hat\Omega_m)\Sigma_n(\hat\Omega_m)^+
\right)
\xrightarrow{\mathbb P}
\sigma^2 d_{\mathrm{act}}(\Omega_\star).
\]
Thus the downstream label noise contributes the same leading constant one would
obtain from ordinary least squares on the limiting activated representation
\(\mathcal H_{\Omega_\star}\oplus\mathcal B_0\). In the regular special case
\(\mathcal B_0=\{0\}\), this reduces to the usual
\(\sigma^2 d_{\mathrm{eff}}(\Omega_\star)\) term; see
\Cref{app:downstream-terms:noise}.

Second, the leakage term is asymptotically negligible at the \(n^{-1}\) scale. Indeed, in the compatible regime \(e_{\Omega_\star}=0\), and the residual
continuity implied by the representation-side assumptions gives
\[
e_{\hat\Omega_m}\to0
\qquad
\text{in }L^2(\mu_{\mathrm{down}}).
\]
Since the empirical projection is uniformly well behaved on
\(\mathcal H_{\hat\Omega_m}\), the conditional second moment of the projected
residual is \(o_{\mathbb P}(n^{-1})\), and hence
\[
n\,\mathrm{Leakage}_n(\hat\Omega_m)
\xrightarrow{\mathbb P}
0.
\]
This shows that empirical projection error does not contribute to the limiting
distribution; see~\Cref{app:downstream-terms:signal}.

The remaining term is the representation error. This is where pre-training
randomness enters. By the Riemannian \(M\)-estimation CLT for the descriptor,
\[
\sqrt m\,\log_{\Omega_\star}(\hat\Omega_m)
\distconv
Z,
\qquad
Z\sim\mathcal N(0,V),
\qquad
V=H_\star^{-1}\Sigma_\star H_\star^{-1}.
\]
Since \(f_\star\in\mathcal H_{\Omega_\star}\), the residual
\(e_{\Omega_\star}=(I-\Pi_{\Omega_\star})f_\star\) vanishes. The relevant
first-order object is therefore the linearization of the residual map
\[
e_\Omega=(I-\Pi_\Omega)f_\star,
\]
rather than necessarily the full projector map \(\Omega\mapsto\Pi_\Omega\).
In normal coordinates, the structural condition in \Cref{app:rep} gives
\[
e_{\exp_{\Omega_\star}(v)}
=
\mathcal L(v)+o(\|v\|)
\qquad
\text{in }L^2(\mu_{\mathrm{down}}),
\]
where \(\mathcal L:T_{\Omega_\star}\mathcal M\to L^2(\mu_{\mathrm{down}})\)
is the first-order residual map. Under the stable limiting span of null
directions (\Cref{ass:stable-null-span}), this map is
\[
\mathcal L(v)
=
-
(I-\Pi_\star-\Pi_{\mathcal B_0})
DT_{\Omega_\star}[v]\theta_\star.
\]
In the stronger special case where the projector map
\(\Omega\mapsto\Pi_\Omega\) is Fr\'echet differentiable at \(\Omega_\star\) in
operator norm, this reduces to
\[
\mathcal L(v)
=
-D\Pi_{\Omega_\star}[v]f_\star.
\]
Consequently,
\[
m\,\mathrm{Rep}(\hat\Omega_m)
=
m\|e_{\hat\Omega_m}\|_{L^2(\mu_{\mathrm{down}})}^2
\distconv
\|\mathcal L(Z)\|_{L^2(\mu_{\mathrm{down}})}^2 .
\]
See~\Cref{app:rep}.

Combining the three limits, and using $n/m\to 1/\alpha$, gives
\[
n\,\mathrm{Rep}(\hat\Omega_m)
=
\frac{n}{m}
\bigl(m\,\mathrm{Rep}(\hat\Omega_m)\bigr)
\distconv
\alpha^{-1}
\|\mathcal L(Z)\|_{L^2(\mu_{\mathrm{down}})}^2 .
\]
Slutsky's theorem then yields
\begin{align}\label{eq:distlimit-sketch}   
\mathcal E_{m,n}
\distconv
\sigma^2 d_{\mathrm{eff}}(\Omega_\star)
+
\alpha^{-1}
\|\mathcal L(Z)\|_{L^2(\mu_{\mathrm{down}})}^2,
\end{align}

which is precisely \eqref{eq:master-compatible-dist}.

\begin{remark}[From conditional to fully averaged risk]\label{rem:fully-avg-risk}

The distributional limit in~\eqref{eq:distlimit-sketch} is stated for the
conditional excess risk~\eqref{eq:main_risk_cond}, where we condition on the
realized pre-training sample and downstream design. Passing from this conditional
statement to a fully averaged risk statement requires interchanging limits and
expectations. Indeed,
\[
\E[\mathcal E_{m,n}]
=
n\left(
\E\!\left[(Y_{\mathrm{new}}-\hat f_{m,n}(X_{\mathrm{new}}))^2\right]
-\sigma^2
\right),
\]
since
\[
\E\!\left[(Y_{\mathrm{new}}-\hat f_{m,n}(X_{\mathrm{new}}))^2\right]
=
\E\!\left[R(D_{\mathrm{pre}}^{(m)},X_{1:n})\right].
\]
This upgrade is typically delicate because the conditional expansion contains
inverse empirical covariance terms, with
\[
\widehat\Sigma_{n,m}
=
\frac1n\sum_{i=1}^n
\phi(x_i,\hat\Omega_m)\phi(x_i,\hat\Omega_m)^\top .
\]
When $\widehat\Sigma_{n,m}$ is ill-conditioned, its smallest nonzero eigenvalue
can be very small with non-negligible probability, producing heavy-tailed
inverse-covariance contributions.

A sufficient route to convergence of the fully averaged risk is to establish
uniform integrability of $\{\mathcal E_{m,n}\}_{m,n}$. For instance, sufficiently
strong lower-tail or anti-concentration bounds for
$\widehat\Sigma_{n,m}$, as in the small-ball analysis of random design least
squares~\citep[cf.][]{mourtada2022exact}, can control the inverse-covariance
terms and allow the conditional expansion to be integrated. In that case,
\[
\E[\mathcal E_{m,n}]
\longrightarrow
\E[\mathcal E_\alpha],
\]
so the same limiting expression also characterizes the scaled fully averaged
excess risk.
\end{remark}

\section{Empirical Projection Operators and Linear-Algebra Preliminaries}
\label{app:prelim}

This appendix collects basic facts used repeatedly in the proofs of
Proposition~\ref{prop:exact-decomp} and Theorem~\ref{thm:master-compatible}.
The statements are standard; we include them to (i) make explicit which inner
product each projection is taken with respect to, and (ii) clarify what is
unique (fitted values on the sample) versus what is a chosen convention
(a minimum-norm representative in coefficient space) when the design is rank-deficient.
All statements in this appendix are deterministic once the design points $X_{1:n}$ are fixed; in particular,
they are tailored to our conditional-on-design viewpoint in the main text. We also list the norm conventions used for matrices, feature operators, and descriptor derivatives at the end of this appendix.

\subsection{Empirical inner products and evaluation maps}
\label{app:empirical-inner-product}

Let $\mathcal X$ be the input space (e.g.\ $\mathcal X=\R^d$ in the linear--Gaussian example), and let
$\mathcal G$ be a class of measurable functions $g:\mathcal X\to\R$ (for instance $\mathcal G=L^2(\mu_{\mathrm{down}})$,
or any function space containing the hypothesis classes used in the main text).

Fix design points $x_{1:n}\coloneqq (x_1,\dots,x_n)\in\mathcal X^n$.
For $g,h\in\mathcal G$ such that $g(x_i),h(x_i)\in\R$ for all $i\in[n]$, define the empirical bilinear form
\[
\langle g,h\rangle_n \coloneqq  \frac1n\sum_{i=1}^n g(x_i)\,h(x_i),
\qquad
\|g\|_n^2 \coloneqq  \langle g,g\rangle_n.
\]
In general, $\|\cdot\|_n$ is only a \emph{seminorm}: it depends only on the values of a function on the
finite set $\{x_1,\dots,x_n\}$. Consequently, the fitted values of any empirical least-squares projection are uniquely determined
only through these $n$ values.

To isolate this dependence, define the evaluation map at $x_{1:n}$ by
\[
\mathrm{Ev}_n:\mathcal G \to \R^n,
\qquad
\mathrm{Ev}_n(g)\coloneqq \big(g(x_1),\dots,g(x_n)\big)^\top.
\]
Whenever $g(x_i)\in\R$ for all $i$, we have $\mathrm{Ev}_n(g)\in\R^n$ and
\[
\langle g,h\rangle_n=\frac1n\,\mathrm{Ev}_n(g)^\top \mathrm{Ev}_n(h),
\qquad
\|g\|_n^2=\frac1n\|\mathrm{Ev}_n(g)\|_2^2.
\]
Thus, empirical least-squares projection statements can be proved equivalently in the finite-dimensional
space $\R^n$ by working with the vectors of function values $\mathrm{Ev}_n(g)$.

\subsection{Pseudoinverse identities}
\label{app:pseudoinverse}

We use $(\cdot)^+$ to denote the Moore--Penrose pseudoinverse.

\begin{lemma}[Basic pseudoinverse identities]
\label{lem:pinv-basic}
For any matrix $A$ (not necessarily square),
\[
AA^+A=A,\qquad A^+AA^+=A^+,\qquad (AA^+)^\top=AA^+,\qquad (A^+A)^\top=A^+A.
\]
Moreover, $AA^+$ is the Euclidean orthogonal projector onto $\mathrm{Im}(A)$ and $A^+A$ is the Euclidean orthogonal
projector onto $\mathrm{Im}(A^\top)$.
\end{lemma}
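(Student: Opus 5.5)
The plan is to reduce everything to the singular value decomposition, since all four identities and both projector claims become coordinate statements once $A$ is diagonalized. Write $A = U S V^\top$ with $U \in O(a)$, $V \in O(b)$, and $S \in \R^{a\times b}$ having diagonal block $\diag(s_1,\dots,s_r)$, $s_i>0$, in the top-left corner and zeros elsewhere. Take as definition (or as the standard characterization) $A^+ = V S^+ U^\top$, where $S^+\in\R^{b\times a}$ has diagonal block $\diag(s_1^{-1},\dots,s_r^{-1})$ and zeros elsewhere.

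First verify the identities for $S$ itself. Here $SS^+ = \diag(I_r,0)\in\R^{a\times a}$ and $S^+S=\diag(I_r,0)\in\R^{b\times b}$, both symmetric idempotents. Then $SS^+S=S$ and $S^+SS^+=S^+$ follow by multiplying block-diagonal matrices.

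Next, transfer these identities to $A$ using the orthogonality of $U$ and $V$. For example,
\[
AA^+A = USV^\top V S^+ U^\top U S V^\top = U(SS^+S)V^\top = A,
\]
and similarly $A^+AA^+=A^+$. For the symmetry claims, note that $AA^+ = U(SS^+)U^\top$ and $A^+A = V(S^+S)V^\top$, and both are symmetric because $SS^+$ and $S^+S$ are.

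For the projector claims, $AA^+ = U\diag(I_r,0)U^\top = \sum_{i\le r} u_iu_i^\top$. This is the orthogonal projector onto $\mathrm{span}\{u_1,\dots,u_r\}$. That span equals $\mathrm{Im}(A)$, because $A V = U S$ shows $\mathrm{Im}(A)=\mathrm{span}\{s_iu_i: i\le r\}$. Likewise, $A^+A=\sum_{i\le r} v_iv_i^\top$ projects onto $\mathrm{span}\{v_1,\dots,v_r\}=\mathrm{Im}(A^\top)$, using $A^\top = VS^\top U^\top$.

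There is one subtle point. If the paper's definition of $(\cdot)^+$ is the Penrose one, meaning the unique matrix satisfying the four identities, then the first display is definitional. In that case the only content is the projector statement, which follows from the fact that a symmetric idempotent $P=AA^+$ is an orthogonal projector onto its image. That image is exactly $\mathrm{Im}(A)$ by two inclusions: $\mathrm{Im}(AA^+)\subseteq\mathrm{Im}(A)$ is immediate, and $\mathrm{Im}(A)=\mathrm{Im}(AA^+A)\subseteq\mathrm{Im}(AA^+)$. The same argument applied to $A^+A=(A^+A)^\top=A^\top(A^+)^\top$ gives $\mathrm{Im}(A^+A)\subseteq\mathrm{Im}(A^\top)$, with the reverse inclusion from $A^\top = (AA^+A)^\top = A^+A\,A^\top$. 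I would present this definition-agnostic argument as the main route, with the SVD computation as a check, and I expect no step to be a genuine obstacle.
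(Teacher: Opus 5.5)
Your proposal is correct and complete. The paper gives no proof of this lemma to compare against: it lists it among "standard" pseudoinverse facts and moves on.

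Your two routes together cover both possible definitions of $A^+$. If $A^+$ is defined by the SVD formula, your computation establishes the four Penrose identities. If it is defined axiomatically, uniqueness of the Penrose inverse identifies it with $VS^+U^\top$. In either case your definition-agnostic image argument correctly gives $\mathrm{Im}(AA^+)=\mathrm{Im}(A)$ and $\mathrm{Im}(A^+A)=\mathrm{Im}(A^\top)$. The step $A^\top=(AA^+A)^\top=A^+AA^\top$, which uses $A^\top(A^+)^\top=(A^+A)^\top=A^+A$, is right.

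The one thing to state explicitly is idempotency, since it is what lets you call these symmetric matrices orthogonal projectors:
\[
AA^+AA^+=(AA^+A)A^+=AA^+,\qquad A^+AA^+A=A^+(AA^+A)=A^+A .
\]
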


\begin{lemma}[Trace equals rank for symmetric PSD matrices]
\label{lem:trace-rank}
If $S\in\R^{p\times p}$ is symmetric positive semidefinite, then
\[
\tr(S^+S)=\rank(S).
\]
\end{lemma}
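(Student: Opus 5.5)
The plan is to reduce $S^+S$ to an orthogonal projector and then use that a projector's trace is its rank. Since $S$ is symmetric positive semidefinite, I would take the spectral decomposition $S=U\Lambda U^\top$ with $U\in O(p)$ and $\Lambda=\diag(\lambda_1,\dots,\lambda_r,0,\dots,0)$, where $\lambda_1,\dots,\lambda_r>0$ and $r=\rank(S)$.

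First I would check that $S^+=U\Lambda^+U^\top$, with $\Lambda^+=\diag(\lambda_1^{-1},\dots,\lambda_r^{-1},0,\dots,0)$. This candidate satisfies the four Penrose identities listed in \Cref{lem:pinv-basic}, because they reduce to the corresponding identities for diagonal matrices. Uniqueness of the Moore--Penrose inverse then identifies it with $S^+$.

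Next, $S^+S=U\Lambda^+\Lambda U^\top=U\diag(I_r,0)U^\top$. Using cyclicity of the trace,
\[
\tr(S^+S)=\tr(\diag(I_r,0))=r=\rank(S).
\]

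There is also a route that avoids the eigendecomposition. By \Cref{lem:pinv-basic}, $S^+S$ is the orthogonal projector onto $\mathrm{Im}(S^\top)=\mathrm{Im}(S)$. The trace of an orthogonal projector equals the dimension of its range, since its eigenvalues are $0$ or $1$ and the eigenvalue $1$ has multiplicity equal to that dimension. So $\tr(S^+S)=\dim\mathrm{Im}(S)=\rank(S)$. Symmetry of $S$ is only used to identify $\mathrm{Im}(S^\top)$ with $\mathrm{Im}(S)$, and even that step is unnecessary because $\rank(S^\top)=\rank(S)$.

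The only step needing any care is justifying the formula for the pseudoinverse, or equivalently the projector property. That is already provided by \Cref{lem:pinv-basic}, so I expect no real obstacle.
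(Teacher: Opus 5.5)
Your proposal is correct: the paper gives no separate proof of this lemma (it lists it among standard facts), but your second route, namely that $S^+S$ is the orthogonal projector onto $\mathrm{Im}(S^\top)$ by \Cref{lem:pinv-basic} and that a projector's trace equals its rank, is exactly the reasoning the paper uses in the proof of \Cref{lem:hat-projector}. Your spectral-decomposition route is an equally valid, more explicit alternative, and, as your second argument shows, neither symmetry nor positive semidefiniteness is actually needed for the identity.
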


\begin{remark}[On conditioning and inverse-eigenvalue effects]
\label{rem:pinv-conditioning}
The identities in this section are purely algebraic and hold deterministically for any realization of the design.
When taking expectations over random designs, quantities involving $S^+$ can be sensitive to small eigenvalues of $S$,
and additional tail control is typically needed to justify interchanging limits and expectations.
This sensitivity motivates the conditional-on-design risk formulation adopted in the main text.
\end{remark}

\subsection{Design matrices and hat matrices}
\label{app:design-hat}

Fix a feature parameter $\Omega\in\R^q$ and design points $x_{1:n}$.
Define the feature matrix
\[
\Phi_\Omega \in \R^{n\times p},
\qquad
(\Phi_\Omega)_{i,:} \coloneqq  \phi(x_i,\Omega)^\top,
\]
and the empirical covariance
\[
\Sigma_n(\Omega)=\frac1n\Phi_\Omega^\top \Phi_\Omega.
\]
Define the hat matrix
\[
H_{\Omega,n}
\;\coloneqq \;
\Phi_\Omega(\Phi_\Omega^\top \Phi_\Omega)^+\Phi_\Omega^\top
\;=\;
\frac1n\,\Phi_\Omega \Sigma_n(\Omega)^+ \Phi_\Omega^\top.
\]

\begin{lemma}[Hat matrix is an orthogonal projector]
\label{lem:hat-projector}
$H_{\Omega,n}$ is the Euclidean orthogonal projector in $\R^n$ onto $\mathrm{Im}(\Phi_\Omega)$.
In particular, $H_{\Omega,n}^2=H_{\Omega,n}$, $H_{\Omega,n}^\top=H_{\Omega,n}$, and
\[
\tr(H_{\Omega,n})=\rank(\Phi_\Omega)=\rank(\Sigma_n(\Omega)).
\]
\end{lemma}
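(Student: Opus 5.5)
The plan is to reduce the claim to Lemma~\ref{lem:pinv-basic} by writing $H_{\Omega,n}$ in terms of $A\coloneq\Phi_\Omega$ alone. The key identity I will establish first is
\[
(A^\top A)^+A^\top = A^+ ,
\]
which gives $H_{\Omega,n}=AA^+$. I will verify it through the singular value decomposition $A=U S V^\top$. In that basis $(A^\top A)^+ = V (S^\top S)^+ V^\top$, so
\[
(A^\top A)^+A^\top = V (S^\top S)^+ S^\top U^\top .
\]
The diagonal computation $(S^\top S)^+S^\top = S^+$ holds entrywise: it reads $s_i^{-2}s_i=s_i^{-1}$ on the nonzero singular values and $0$ elsewhere. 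Hence the product equals $VS^+U^\top=A^+$. Alternatively, one can check the four Penrose conditions directly.

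With $H_{\Omega,n}=AA^+$, Lemma~\ref{lem:pinv-basic} gives symmetry, $(AA^+)^\top=AA^+$, and idempotence, $AA^+AA^+=(AA^+A)A^+=AA^+$. The same lemma states that $AA^+$ is the orthogonal projector onto $\mathrm{Im}(A)=\mathrm{Im}(\Phi_\Omega)$. The second expression for the hat matrix is immediate, since $(\Phi_\Omega^\top\Phi_\Omega)^+=(n\Sigma_n(\Omega))^+=\tfrac1n\Sigma_n(\Omega)^+$.

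For the trace, the trace of an orthogonal projector equals the dimension of its range, because its eigenvalues are $0$ or $1$. This gives $\tr(H_{\Omega,n})=\rank(\Phi_\Omega)$. Alternatively, cyclicity gives $\tr(AA^+)=\tr(A^+A)$, and one can then apply Lemma~\ref{lem:trace-rank} to $A^\top A$. Finally, $\rank(\Phi_\Omega)=\rank(\Phi_\Omega^\top\Phi_\Omega)=\rank(\Sigma_n(\Omega))$, since $\ker(A^\top A)=\ker A$: if $A^\top Av=0$ then $\|Av\|^2=v^\top A^\top A v=0$.

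The only nonroutine step is the identity $(A^\top A)^+A^\top=A^+$. Everything else follows directly from the lemmas already stated.
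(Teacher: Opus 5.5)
Your proof is correct and follows the same route as the paper's proof: reduce to Lemma~\ref{lem:pinv-basic}, use that the trace of an orthogonal projector equals its rank, and use $\rank(\Phi_\Omega)=\rank(\Phi_\Omega^\top\Phi_\Omega)$. The one thing you add is the explicit SVD verification of $(\Phi_\Omega^\top\Phi_\Omega)^+\Phi_\Omega^\top=\Phi_\Omega^+$, which the paper's one-line appeal to Lemma~\ref{lem:pinv-basic} leaves implicit.
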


\begin{proof}
By Lemma~\ref{lem:pinv-basic}, $\Phi_\Omega(\Phi_\Omega^\top\Phi_\Omega)^+\Phi_\Omega^\top$ is the orthogonal projector onto
$\mathrm{Im}(\Phi_\Omega)$. The trace of an orthogonal projector equals its rank.
Finally, $\rank(\Phi_\Omega)=\rank(\Phi_\Omega^\top\Phi_\Omega)=\rank(\Sigma_n(\Omega))$.
\end{proof}

\subsection{Empirical least-squares projection onto $\mathcal H_\Omega$}
\label{app:empirical-projector}

Recall the linear class
\[
\mathcal H_\Omega\coloneqq \{T_\Omega\theta:\theta\in\R^p\},
\qquad
(T_\Omega\theta)(x)\coloneqq \langle \theta,\phi(x,\Omega)\rangle.
\]

\paragraph{Parameterization versus functions.}
The parametrization $\theta\mapsto T_\Omega\theta$ need not be injective: if $v\in\R^p$ satisfies
$\langle v,\phi(x,\Omega)\rangle=0$ for all $x\in\mathcal X$ (equivalently, $T_\Omega v\equiv 0$), then
$T_\Omega(\theta+v)=T_\Omega\theta$ as functions on $\mathcal X$.
Thus, even when the induced function is unique, the coefficient vector representing it may not be.

Given design points $x_{1:n}$, define the empirical adjoint
\[
T_{\Omega,n}^{\mathrm{adj}}g \coloneqq  \frac1n\sum_{i=1}^n g(x_i)\phi(x_i,\Omega)\in\R^p.
\]

\begin{lemma}[Empirical adjoint identity]
\label{lem:emp-adj}
For any $\theta\in\R^p$ and any function $g$, we have
\[
\langle T_\Omega\theta, g\rangle_n = \langle \theta, T_{\Omega,n}^{\mathrm{adj}} g\rangle_{\R^p}.
\]
\end{lemma}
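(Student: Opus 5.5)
The identity $\langle T_\Omega\theta, g\rangle_n = \langle \theta, T_{\Omega,n}^{\mathrm{adj}} g\rangle_{\R^p}$ in \Cref{lem:emp-adj} follows by unfolding both sides into the same finite sum. The only tools are the definition of the empirical bilinear form $\langle\cdot,\cdot\rangle_n$ from \Cref{app:empirical-inner-product} and the definition of $T_{\Omega,n}^{\mathrm{adj}}$.

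The first step is to expand the left-hand side. By definition,
\[
\langle T_\Omega\theta, g\rangle_n = \frac1n\sum_{i=1}^n (T_\Omega\theta)(x_i)\,g(x_i) = \frac1n\sum_{i=1}^n \langle \theta,\phi(x_i,\Omega)\rangle\, g(x_i).
\]
Here we assume, as the statement implicitly does, that $g(x_i)\in\R$ for every $i$, so all terms are finite real numbers.

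The second step uses that the Euclidean inner product is linear in its second argument. The scalar $g(x_i)$ can therefore be moved inside:
\[
\langle \theta,\phi(x_i,\Omega)\rangle\, g(x_i) = \langle \theta, g(x_i)\phi(x_i,\Omega)\rangle .
\]
Summing over $i$ and dividing by $n$ gives $\langle \theta, \frac1n\sum_{i=1}^n g(x_i)\phi(x_i,\Omega)\rangle$, which is exactly $\langle \theta, T_{\Omega,n}^{\mathrm{adj}} g\rangle_{\R^p}$.

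There is no genuine obstacle. The computation is a direct rearrangement of a finite sum, and no integrability or rank assumptions are needed because everything is evaluated at the $n$ fixed design points. The one point to state explicitly is the finiteness of $g(x_i)$, which makes the expression well defined.

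As a cross-check, the identity can also be written in matrix form. With $\Phi_\Omega$ as in \Cref{app:design-hat}, one has $\mathrm{Ev}_n(T_\Omega\theta)=\Phi_\Omega\theta$ and $T_{\Omega,n}^{\mathrm{adj}}g=\frac1n\Phi_\Omega^\top \mathrm{Ev}_n(g)$. The identity then reduces to
\[
\frac1n(\Phi_\Omega\theta)^\top \mathrm{Ev}_n(g)=\frac1n\,\theta^\top\Phi_\Omega^\top \mathrm{Ev}_n(g),
\]
which is transposition of a product.
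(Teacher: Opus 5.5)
Your proof is correct and matches the paper's: both expand $\langle T_\Omega\theta,g\rangle_n$ into $\frac1n\sum_{i=1}^n \langle\theta,\phi(x_i,\Omega)\rangle\, g(x_i)$, move the scalar $g(x_i)$ inside the inner product by linearity, and recognize the definition of $T_{\Omega,n}^{\mathrm{adj}}g$. Your matrix-form cross-check via $\Phi_\Omega$ is a harmless extra.
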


\begin{proof}
This is simply a consequence of the following equalities:
\[
\langle T_\Omega\theta,g\rangle_n=\frac1n\sum_{i=1}^n \langle \theta,\phi(x_i,\Omega)\rangle g(x_i)
=\left\langle \theta,\frac1n\sum_{i=1}^n g(x_i)\phi(x_i,\Omega)\right\rangle
=\langle \theta, T_{\Omega,n}^{\mathrm{adj}} g\rangle_{\R^p}.
\]
\end{proof}

\paragraph{Empirical projection: what is unique and what is a convention.}
Since $\|\cdot\|_n$ is only a seminorm, the empirical least-squares problem
\[
\min_{h\in\mathcal H_\Omega}\ \|g-h\|_n^2
\qquad\Bigl(\ \|g-h\|_n^2=\frac1n\sum_{i=1}^n (g(x_i)-h(x_i))^2\ \Bigr)
\]
can determine only the values of the minimizer on the sample points. In particular, the objective
depends on $h$ only through the vector $\mathrm{Ev}_n(h)=(h(x_1),\dots,h(x_n))^\top$, and any two
functions that agree on $\{x_i\}_{i=1}^n$ are indistinguishable under $\|\cdot\|_n$.
Writing $h=T_\Omega\theta$ so that $\mathrm{Ev}_n(h)=\Phi_\Omega\theta$, the problem reduces to Euclidean
least squares in $\R^n$.

\begin{lemma}[Least-squares equivalence]
\label{lem:ls-equiv}
The empirical projection problem is equivalent to the Euclidean least-squares problem
\[
\min_{\theta\in\R^p}\ \frac1n\|\mathrm{Ev}_n(g)-\Phi_\Omega\theta\|_2^2.
\]
\end{lemma}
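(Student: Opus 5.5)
The statement to prove is \Cref{lem:ls-equiv}: the empirical projection problem onto $\mathcal H_\Omega$ is equivalent to Euclidean least squares in the coefficient vector $\theta$. The plan is to reduce everything to the evaluation map $\mathrm{Ev}_n$ from \Cref{app:empirical-inner-product}. The key observation is that for $h = T_\Omega\theta$ we have, coordinatewise,
\[
\mathrm{Ev}_n(h)_i = \langle \theta, \phi(x_i,\Omega)\rangle = (\Phi_\Omega\theta)_i ,
\]
so $\mathrm{Ev}_n(T_\Omega\theta)=\Phi_\Omega\theta$. This holds directly from the definition of the feature matrix $\Phi_\Omega$, whose rows are $\phi(x_i,\Omega)^\top$.

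Next, I would substitute this identity into the empirical objective. Using $\|g-h\|_n^2=\frac1n\|\mathrm{Ev}_n(g-h)\|_2^2$ and linearity of $\mathrm{Ev}_n$, we get, for every $\theta\in\R^p$,
\[
\|g-T_\Omega\theta\|_n^2=\frac1n\|\mathrm{Ev}_n(g)-\Phi_\Omega\theta\|_2^2 .
\]
Since every $h\in\mathcal H_\Omega$ has the form $T_\Omega\theta$ for some $\theta$ (by definition of $\mathcal H_\Omega$ as the image of $T_\Omega$), minimizing over $h\in\mathcal H_\Omega$ is the same as minimizing over $\theta\in\R^p$. The two problems therefore have the same optimal value.

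Finally, I would spell out the correspondence of minimizers. $\theta$ is a minimizer of the Euclidean problem if and only if $T_\Omega\theta$ is a minimizer of the empirical projection problem; conversely, every empirical minimizer $h$ equals $T_\Omega\theta$ for some Euclidean minimizer $\theta$. I would also note that the fitted vector $\Phi_\Omega\theta = H_{\Omega,n}\mathrm{Ev}_n(g)$ is unique by \Cref{lem:hat-projector}, even though $\theta$ itself need not be. No step is a real obstacle. The only care needed is to phrase equivalence at the level of values and minimizer sets, since $\|\cdot\|_n$ is only a seminorm: minimizing functions are determined only on the sample points, and $\theta\mapsto T_\Omega\theta$ may be non-injective.
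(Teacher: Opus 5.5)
Your proposal is correct and follows the same route as the paper: identify $\mathrm{Ev}_n(T_\Omega\theta)=\Phi_\Omega\theta$ from the rows of $\Phi_\Omega$, substitute it into $\|g-h\|_n^2=\frac1n\|\mathrm{Ev}_n(g)-\mathrm{Ev}_n(h)\|_2^2$, and use that every $h\in\mathcal H_\Omega$ has the form $T_\Omega\theta$. Your additional remarks on the correspondence of minimizer sets and the uniqueness of the fitted values $H_{\Omega,n}\mathrm{Ev}_n(g)$ go slightly beyond what the paper spells out here, and they are accurate.
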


\begin{proof}
By definition,
\[
\|g-h\|_n^2
=\frac1n\sum_{i=1}^n (g(x_i)-h(x_i))^2
=\frac1n\|\mathrm{Ev}_n(g)-\mathrm{Ev}_n(h)\|_2^2.
\]
Every $h\in\mathcal H_\Omega$ can be written as $h=T_\Omega\theta$ for some $\theta\in\R^p$, and then
\[
\mathrm{Ev}_n(h)
=\bigl((T_\Omega\theta)(x_1),\dots,(T_\Omega\theta)(x_n)\bigr)^\top
=\bigl(\langle \theta,\phi(x_1,\Omega)\rangle,\dots,\langle \theta,\phi(x_n,\Omega)\rangle\bigr)^\top
=\Phi_\Omega\theta.
\]
Substituting this identity into the empirical objective gives
\[
\|g-T_\Omega\theta\|_n^2=\frac1n\|\mathrm{Ev}_n(g)-\Phi_\Omega\theta\|_2^2,
\]
which proves the claim.
\end{proof}

Even when $\Phi_\Omega$ is rank-deficient, the \emph{fitted values} are unique: the Euclidean orthogonal projection of $\mathrm{Ev}_n(g)$ onto $\mathrm{Im}(\Phi_\Omega)$
\begin{align}
\label{eq:emp-proj-evaluation}
\hat{g}_{1:n}\;\coloneqq \;H_{\Omega,n}\,\mathrm{Ev}_n(g)\in\R^n
\end{align}
is uniquely determined by Lemma~\ref{lem:hat-projector}.

In contrast, the coefficient vector $\theta$ achieving these fitted values need not be unique when
$\Phi_\Omega$ is rank-deficient: if $\theta^\star$ is one minimizer then all minimizers are
$\theta^\star+v$ with $v\in\ker(\Phi_\Omega)$, and they all satisfy $\Phi_\Omega\theta=\hat g_{1:n}$.
Whether these distinct minimizers define the same function on $\mathcal X$ depends on the feature
representation: if $\ker(\Phi_\Omega)\subseteq\ker(T_\Omega)$ then all minimizers induce the same function
in $\mathcal H_\Omega$, whereas if there exists $v\neq 0$ with $\Phi_\Omega v=0$ but $T_\Omega v\not\equiv 0$,
then different minimizers agree on the sample points but can differ off-sample.

Throughout the paper, we follow the convention fixed in the main text: the empirical projector
$\Pi_{\Omega,n}$ is defined via the Moore--Penrose pseudoinverse (see Section~\ref{sec:main}), so that
the associated coefficient vector is the minimum-Euclidean-norm least-squares solution.
Importantly, all fitted-value identities below hold regardless of this convention.

\begin{definition}[Canonical empirical projector]
\label{def:canonical-emp-proj}
For $g$ with finite evaluations on $\{x_i\}_{i=1}^n$, define
\[
\hat\theta_{\Omega,n}(g)
\;\coloneqq \;
\Sigma_n(\Omega)^+\,T_{\Omega,n}^{\mathrm{adj}}g \in \R^p,
\]
and set
\[
\Pi_{\Omega,n}g \;\coloneqq \; T_\Omega\,\hat\theta_{\Omega,n}(g)\in\mathcal H_\Omega.
\]
\end{definition}

\begin{lemma}[Least-squares characterization and empirical orthogonality]
\label{lem:emp-proj-char}
For any $g$ we have
\[
\Pi_{\Omega,n} g \in \argmin_{h\in\mathcal H_\Omega}\ \|g-h\|_n^2.
\]
Moreover, the fitted values satisfy
\[
\mathrm{Ev}_n(\Pi_{\Omega,n} g)=H_{\Omega,n}\,\mathrm{Ev}_n(g),
\]
and the residual is empirically orthogonal to $\mathcal H_\Omega$ in the sense that
\[
\langle g-\Pi_{\Omega,n}g,\,h\rangle_n=0\qquad \forall h\in\mathcal H_\Omega.
\]
\end{lemma}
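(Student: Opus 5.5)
The plan is to work entirely in $\R^n$ through the evaluation map, combining \Cref{lem:pinv-basic}, \Cref{lem:hat-projector} and \Cref{lem:ls-equiv}.

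\textbf{Step 1: fitted values.} Write $\hat\theta \coloneqq \hat\theta_{\Omega,n}(g) = \Sigma_n(\Omega)^+ T_{\Omega,n}^{\mathrm{adj}} g$. Since $T_{\Omega,n}^{\mathrm{adj}} g = \frac1n \Phi_\Omega^\top \mathrm{Ev}_n(g)$ and $\Sigma_n(\Omega) = \frac1n \Phi_\Omega^\top\Phi_\Omega$, the factors of $1/n$ cancel through the pseudoinverse scaling $(cA)^+ = c^{-1}A^+$. This gives
\[
\hat\theta = (\Phi_\Omega^\top\Phi_\Omega)^+\Phi_\Omega^\top \mathrm{Ev}_n(g).
\]
Hence $\mathrm{Ev}_n(\Pi_{\Omega,n}g) = \Phi_\Omega\hat\theta = H_{\Omega,n}\mathrm{Ev}_n(g)$. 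This is the second claim; it uses the identity $\mathrm{Ev}_n(T_\Omega\theta)=\Phi_\Omega\theta$ already shown in the proof of \Cref{lem:ls-equiv}.

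\textbf{Step 2: minimization.} By \Cref{lem:hat-projector}, $H_{\Omega,n}$ is the Euclidean orthogonal projector onto $\mathrm{Im}(\Phi_\Omega)$. Therefore $\Phi_\Omega\hat\theta$ is the closest point of $\mathrm{Im}(\Phi_\Omega)$ to $\mathrm{Ev}_n(g)$, so $\hat\theta$ minimizes $\frac1n\|\mathrm{Ev}_n(g)-\Phi_\Omega\theta\|_2^2$. By \Cref{lem:ls-equiv}, this objective equals $\|g-T_\Omega\theta\|_n^2$, and every $h\in\mathcal H_\Omega$ has the form $T_\Omega\theta$. So $\Pi_{\Omega,n}g=T_\Omega\hat\theta$ attains the minimum over $\mathcal H_\Omega$.

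\textbf{Step 3: orthogonality.} For $h=T_\Omega\theta$, compute
\[
\langle g-\Pi_{\Omega,n}g,h\rangle_n = \tfrac1n\big((I-H_{\Omega,n})\mathrm{Ev}_n(g)\big)^\top\Phi_\Omega\theta .
\]
This vanishes because $(I-H_{\Omega,n})$ is symmetric and annihilates $\mathrm{Im}(\Phi_\Omega)$. An equivalent route is \Cref{lem:emp-adj} combined with the normal equation $\Phi_\Omega^\top(I-H_{\Omega,n})=0$, which follows from $\Phi^\top\Phi(\Phi^\top\Phi)^+\Phi^\top=\Phi^\top$ (a consequence of \Cref{lem:pinv-basic}).

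The only mildly delicate point is the pseudoinverse identity $\Phi^\top\Phi(\Phi^\top\Phi)^+\Phi^\top=\Phi^\top$. It holds because $\Phi^\top\Phi(\Phi^\top\Phi)^+$ is the orthogonal projector onto $\mathrm{Im}(\Phi^\top\Phi)=\mathrm{Im}(\Phi^\top)$, and this projector fixes every column of $\Phi^\top$. Everything else is routine.
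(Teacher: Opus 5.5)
Your proposal is correct and follows essentially the same route as the paper. Both arguments reduce to Euclidean least squares in $\R^n$ via $\mathrm{Ev}_n$, identify the fitted values as $H_{\Omega,n}\mathrm{Ev}_n(g)$, and get orthogonality because $H_{\Omega,n}$ is the orthogonal projector onto $\mathrm{Im}(\Phi_\Omega)$; the only difference is ordering, since you deduce the minimizing property from the fitted-value identity and \Cref{lem:hat-projector} (and make the $1/n$ cancellation explicit), whereas the paper cites the standard fact that the Moore--Penrose solution is a least-squares minimizer.
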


\begin{proof}
Write $h=T_\Omega\theta$. Then
\[
\|g-h\|_n^2=\frac1n\|\mathrm{Ev}_n(g)-\Phi_\Omega\theta\|_2^2.
\]
Thus minimizing over $h\in\mathcal H_\Omega$ is equivalent to least squares in $\R^n$.
By Definition~\ref{def:canonical-emp-proj}, $\hat\theta_{\Omega,n}(g)$ is the Moore--Penrose minimum-norm
least-squares solution, hence $\Pi_{\Omega,n}g=T_\Omega\hat\theta_{\Omega,n}(g)$ is a minimizer. Furthermore,
\[
\mathrm{Ev}_n(\Pi_{\Omega,n}g)=\Phi_\Omega\hat\theta_{\Omega,n}(g)
= \Phi_\Omega(\Phi_\Omega^\top\Phi_\Omega)^+\Phi_\Omega^\top\,\mathrm{Ev}_n(g)
=H_{\Omega,n}\mathrm{Ev}_n(g).
\]
For empirical orthogonality, let $r\coloneqq \mathrm{Ev}_n(g)-\mathrm{Ev}_n(\Pi_{\Omega,n}g)$.
Since $\mathrm{Ev}_n(\Pi_{\Omega,n}g)=H_{\Omega,n}\mathrm{Ev}_n(g)$ and $H_{\Omega,n}$ is the orthogonal projector onto
$\mathrm{Im}(\Phi_\Omega)$, we have $r\perp \mathrm{Im}(\Phi_\Omega)$.
For any $h=T_\Omega\theta\in\mathcal H_\Omega$, $\mathrm{Ev}_n(h)=\Phi_\Omega\theta\in\mathrm{Im}(\Phi_\Omega)$, hence
\[
n\langle g-\Pi_{\Omega,n}g,\,h\rangle_n
=
r^\top \mathrm{Ev}_n(h)=0.
\]
\end{proof}

\begin{lemma}[Reproducing property on the sample points]
\label{lem:reproducing}
For any $h\in\mathcal H_\Omega$,
\[
\mathrm{Ev}_n(\Pi_{\Omega,n}h)=\mathrm{Ev}_n(h),
\qquad\text{and hence}\qquad
\|h-\Pi_{\Omega,n}h\|_n=0.
\]
\end{lemma}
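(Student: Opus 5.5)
The claim is that $\mathrm{Ev}_n(\Pi_{\Omega,n}h)=\mathrm{Ev}_n(h)$ for every $h\in\mathcal H_\Omega$. The plan is to reduce it to a statement about Euclidean vectors in $\R^n$ and then read it off from the hat-matrix facts already established.

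First, fix $h\in\mathcal H_\Omega$ and write $h=T_\Omega\theta$ for some $\theta\in\R^p$. By the computation in the proof of Lemma~\ref{lem:ls-equiv}, this gives $\mathrm{Ev}_n(h)=\Phi_\Omega\theta$, so $\mathrm{Ev}_n(h)\in\mathrm{Im}(\Phi_\Omega)$.

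Next, Lemma~\ref{lem:emp-proj-char} gives the fitted-value identity $\mathrm{Ev}_n(\Pi_{\Omega,n}h)=H_{\Omega,n}\mathrm{Ev}_n(h)$. By Lemma~\ref{lem:hat-projector}, $H_{\Omega,n}$ is the orthogonal projector onto $\mathrm{Im}(\Phi_\Omega)$, so it fixes every vector in that image. Hence
\[
H_{\Omega,n}\mathrm{Ev}_n(h)=\mathrm{Ev}_n(h).
\]
One can also check this directly: $\Phi_\Omega(\Phi_\Omega^\top\Phi_\Omega)^+\Phi_\Omega^\top\Phi_\Omega\theta=\Phi_\Omega\theta$, using $\Phi_\Omega^+=(\Phi_\Omega^\top\Phi_\Omega)^+\Phi_\Omega^\top$ together with $AA^+A=A$ from Lemma~\ref{lem:pinv-basic}.

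Finally, the seminorm statement follows immediately:
\[
\|h-\Pi_{\Omega,n}h\|_n^2=\tfrac1n\bigl\|\mathrm{Ev}_n(h)-\mathrm{Ev}_n(\Pi_{\Omega,n}h)\bigr\|_2^2=0.
\]

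There is no real obstacle. The only point needing care is that the conclusion concerns values on the sample points only. When $\Phi_\Omega$ is rank-deficient, $\Pi_{\Omega,n}h$ and $h$ may differ off-sample, which is why the statement is phrased through $\mathrm{Ev}_n$.
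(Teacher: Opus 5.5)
Your proposal is correct and follows essentially the same route as the paper: you note $\mathrm{Ev}_n(h)=\Phi_\Omega\theta\in\mathrm{Im}(\Phi_\Omega)$, so the hat matrix (Lemma~\ref{lem:hat-projector}) fixes it, and you conclude via the fitted-value identity of Lemma~\ref{lem:emp-proj-char}. Your direct pseudoinverse check and the remark about possible off-sample disagreement are accurate extras.
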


\begin{proof}
If $h\in\mathcal H_\Omega$ then $\mathrm{Ev}_n(h)\in\mathrm{Im}(\Phi_\Omega)$, so
$H_{\Omega,n}\mathrm{Ev}_n(h)=\mathrm{Ev}_n(h)$ by Lemma~\ref{lem:hat-projector}.
Now apply Lemma~\ref{lem:emp-proj-char}.
\end{proof}

\begin{definition}
For any vector $v\in\R^n$, we define
$\mathrm{lift}_n(v)$ as any arbitrary measurable function $g:\mathcal X\to\R$ that satisfies
$\mathrm{Ev}_n(g)=v$ (the values of $g$ off $\{x_i\}_{i=1}^n$ are irrelevant). Since $\Pi_{\Omega,n}$ depends on an input
only through its evaluations on the design points, $\Pi_{\Omega,n}\mathrm{lift}_n(v)$ is well-defined.
\end{definition}

Let $y_{1:n}=(y_1,\cdots,y_n)^\top$, and recall the minimum-norm least-squares solution $\hat\theta_{\Omega,n}$ and $\hat f_{\Omega,n}=T_\Omega\hat\theta_{\Omega,n}$.

\begin{lemma}[OLS equals empirical projection]
\label{lem:ols-equals-proj}
The OLS predictor satisfies $\hat f_{\Omega,n}=\Pi_{\Omega,n} \mathrm{lift}_n(y_{1:n})$, and hence
\[
\mathrm{Ev}_n(\hat f_{\Omega,n})=H_{\Omega,n}\,(y_1,\dots,y_n)^\top.
\]
\end{lemma}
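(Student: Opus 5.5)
The plan is to unwind the definitions and reduce everything to the characterization already established in Lemma~\ref{lem:emp-proj-char}. By the main-text convention, $\hat\theta_{\Omega,n}$ is the minimum-Euclidean-norm solution of $\min_\theta \frac1n\sum_i (y_i-\langle\theta,\phi(x_i,\Omega)\rangle)^2$. In matrix form this is $\min_\theta \frac1n\|y_{1:n}-\Phi_\Omega\theta\|_2^2$.

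\textbf{Step 1: identify the coefficient vector.} The Moore--Penrose theory gives the minimum-norm least-squares solution explicitly as $\Phi_\Omega^+ y_{1:n}=(\Phi_\Omega^\top\Phi_\Omega)^+\Phi_\Omega^\top y_{1:n}$; I take this standard identity as known alongside Lemma~\ref{lem:pinv-basic}. Rewriting in normalized form, $(\Phi_\Omega^\top\Phi_\Omega)^+\Phi_\Omega^\top y_{1:n}=\Sigma_n(\Omega)^+\cdot\frac1n\Phi_\Omega^\top y_{1:n}$, using $(cA)^+=c^{-1}A^+$ with $c=n$.

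\textbf{Step 2: match with the canonical projector.} Let $g=\mathrm{lift}_n(y_{1:n})$. Then $T_{\Omega,n}^{\mathrm{adj}}g=\frac1n\sum_i y_i\phi(x_i,\Omega)=\frac1n\Phi_\Omega^\top y_{1:n}$. This depends only on $\mathrm{Ev}_n(g)=y_{1:n}$, so the particular choice of lift does not matter. Hence $\hat\theta_{\Omega,n}=\hat\theta_{\Omega,n}(g)$ in the sense of Definition~\ref{def:canonical-emp-proj}. Applying $T_\Omega$ to both sides gives $\hat f_{\Omega,n}=\Pi_{\Omega,n}\mathrm{lift}_n(y_{1:n})$.

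\textbf{Step 3: fitted values.} The second display follows immediately from the fitted-value identity in Lemma~\ref{lem:emp-proj-char}: $\mathrm{Ev}_n(\Pi_{\Omega,n}g)=H_{\Omega,n}\mathrm{Ev}_n(g)=H_{\Omega,n}y_{1:n}$.

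\textbf{Main obstacle.} The only genuine point is Step 1, identifying the minimum-norm OLS solution with the pseudoinverse formula; the rest is bookkeeping. If that identity were not taken as given, I would argue it directly. The vector $\theta_0=\Phi_\Omega^+y$ satisfies the normal equations, because $\Phi_\Omega^\top\Phi_\Omega\Phi_\Omega^+=\Phi_\Omega^\top$ by Lemma~\ref{lem:pinv-basic}. It also lies in $\mathrm{Im}(\Phi_\Omega^\top)=\ker(\Phi_\Omega)^\perp$. Every other minimizer has the form $\theta_0+v$ with $v\in\ker\Phi_\Omega$, so by Pythagoras it has norm at least $\|\theta_0\|$.
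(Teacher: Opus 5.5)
Your proposal is correct and takes essentially the same route as the paper: both identify $\hat\theta_{\Omega,n}$ and $\Sigma_n(\Omega)^+T_{\Omega,n}^{\mathrm{adj}}\mathrm{lift}_n(y_{1:n})$ as the minimum-norm solution of the same least-squares problem, then read off the fitted values from Lemma~\ref{lem:emp-proj-char}. Your explicit check that $\Sigma_n(\Omega)^+\frac{1}{n}\Phi_\Omega^\top y_{1:n}=\Phi_\Omega^+y_{1:n}$ is the minimum-norm minimizer supplies a step the paper asserts without verifying.
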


\begin{proof}
By definition, $\hat\theta_{\Omega,n}$ minimizes $\|\mathrm{lift}_n(y_{1:n})-T_\Omega\theta\|_n^2$, which is exactly the least-squares problem
encoded in Definition~\ref{def:canonical-emp-proj} with $g=\mathrm{lift}_n(y_{1:n})$.
Therefore $T_\Omega\hat\theta_{\Omega,n}$ coincides with $\Pi_{\Omega,n}\mathrm{lift}_n(y_{1:n})$, and evaluating yields the hat-matrix identity.
\end{proof}

\subsection{Effective dimension and degrees of freedom}
\label{app:effective-dim}

Recall the empirical effective dimension
\[
d_{\mathrm{eff},n}(\Omega)\coloneqq \tr\!\bigl(\Sigma_n(\Omega)^+\Sigma_n(\Omega)\bigr)=\rank(\Sigma_n(\Omega)).
\]

\begin{lemma}[Equivalent characterizations of $d_{\mathrm{eff},n}$]
\label{lem:deffn-eq}
For any $\Omega$ and design points $x_{1:n}$,
\[
d_{\mathrm{eff},n}(\Omega)=\rank(\Sigma_n(\Omega))=\rank(\Phi_\Omega)=\tr(H_{\Omega,n}).
\]
\end{lemma}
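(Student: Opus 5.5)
The plan is to prove the chain of equalities one link at a time, using only the deterministic linear algebra collected earlier in this appendix. The first equality, $d_{\mathrm{eff},n}(\Omega)=\tr(\Sigma_n(\Omega)^+\Sigma_n(\Omega))=\rank(\Sigma_n(\Omega))$, holds by definition together with Lemma~\ref{lem:trace-rank}. That lemma applies because $\Sigma_n(\Omega)=\frac1n\Phi_\Omega^\top\Phi_\Omega$ is symmetric positive semidefinite: for any $v\in\R^p$ we have $v^\top\Sigma_n(\Omega)v=\frac1n\|\Phi_\Omega v\|_2^2\ge 0$.

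Next I would show $\rank(\Sigma_n(\Omega))=\rank(\Phi_\Omega)$ by proving $\ker(\Phi_\Omega^\top\Phi_\Omega)=\ker(\Phi_\Omega)$. The inclusion $\supseteq$ is immediate. For $\subseteq$, if $\Phi_\Omega^\top\Phi_\Omega v=0$ then $\|\Phi_\Omega v\|_2^2=v^\top\Phi_\Omega^\top\Phi_\Omega v=0$, so $\Phi_\Omega v=0$. Both matrices act on $\R^p$, so rank--nullity gives equal ranks. The positive factor $1/n$ does not change the rank.

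Finally, $\rank(\Phi_\Omega)=\tr(H_{\Omega,n})$ is exactly the trace statement in Lemma~\ref{lem:hat-projector}. That lemma identifies $H_{\Omega,n}$ as the orthogonal projector onto $\mathrm{Im}(\Phi_\Omega)$. An orthogonal projector has eigenvalues in $\{0,1\}$, with multiplicity of $1$ equal to $\dim\mathrm{Im}(\Phi_\Omega)$, so its trace equals that dimension.

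There is no real obstacle here. The only point needing care is the middle step $\ker(\Phi_\Omega^\top\Phi_\Omega)=\ker(\Phi_\Omega)$, and it is a one-line norm argument. Everything else is a direct citation of Lemmas~\ref{lem:trace-rank} and~\ref{lem:hat-projector}.
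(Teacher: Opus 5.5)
Your proposal is correct and matches the paper's proof, which simply combines \Cref{lem:trace-rank} with \Cref{lem:hat-projector}. Your kernel argument for $\rank(\Phi_\Omega^\top\Phi_\Omega)=\rank(\Phi_\Omega)$ spells out a step the paper already uses inside the proof of \Cref{lem:hat-projector}.
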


\begin{proof}
Combine Lemma~\ref{lem:trace-rank} with Lemma~\ref{lem:hat-projector}.
\end{proof}

\begin{lemma}[Projected-noise identity conditional on the design]
\label{lem:proj-noise-cond}
Let $\varepsilon_{1:n}=(\varepsilon_1,\dots,\varepsilon_n)^\top$ with $\E[\varepsilon_{1:n}\mid x_{1:n}]=0$ and
$\E[\varepsilon_{1:n}\varepsilon_{1:n}^\top\mid x_{1:n}]=\sigma^2 I_n$, and assume $\varepsilon_{1:n}$ is conditionally independent of $x_{1:n}$.
Then, for any $\Omega$,
\[
\E\!\left[\frac1n\|H_{\Omega,n}\varepsilon_{1:n}\|_2^2\,\middle|\,x_{1:n}\right]
=
\frac{\sigma^2}{n}\tr(H_{\Omega,n})
=
\frac{\sigma^2}{n}\,d_{\mathrm{eff},n}(\Omega).
\]
\end{lemma}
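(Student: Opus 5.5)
The plan is to work conditionally on the design $x_{1:n}$ and rewrite the squared norm as a quadratic form in the noise vector. Since $H_{\Omega,n}$ is a deterministic function of $x_{1:n}$ (and of the fixed parameter $\Omega$), we have
\[
\frac1n\|H_{\Omega,n}\varepsilon_{1:n}\|_2^2=\frac1n\,\varepsilon_{1:n}^\top H_{\Omega,n}^\top H_{\Omega,n}\varepsilon_{1:n}=\frac1n\,\varepsilon_{1:n}^\top H_{\Omega,n}\varepsilon_{1:n}.
\]
The second equality uses \Cref{lem:hat-projector}: $H_{\Omega,n}$ is symmetric and idempotent, so $H_{\Omega,n}^\top H_{\Omega,n}=H_{\Omega,n}^2=H_{\Omega,n}$.

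Next, apply the trace trick. A scalar equals its own trace, and the trace is linear and cyclic, so
\[
\varepsilon^\top H\varepsilon=\tr(H\varepsilon\varepsilon^\top).
\]
We then take conditional expectation given $x_{1:n}$. Because $H_{\Omega,n}$ is $x_{1:n}$-measurable, it can be pulled out of the conditional expectation. The conditional covariance assumption $\E[\varepsilon_{1:n}\varepsilon_{1:n}^\top\mid x_{1:n}]=\sigma^2 I_n$ then gives
\[
\E\!\left[\frac1n\,\varepsilon_{1:n}^\top H_{\Omega,n}\varepsilon_{1:n}\,\middle|\,x_{1:n}\right]=\frac{\sigma^2}{n}\tr(H_{\Omega,n}).
\]
Exchanging the conditional expectation with the trace is legitimate because this is a finite sum and the second moments are finite. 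The conditional mean-zero and conditional independence hypotheses are not actually needed beyond the covariance identity.

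Finally, \Cref{lem:deffn-eq} gives $\tr(H_{\Omega,n})=d_{\mathrm{eff},n}(\Omega)$, which yields the second equality of the statement.

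None of these steps is a real obstacle. The only point to be careful about is measurability: $\Omega$ must be treated as fixed, or as independent of the downstream noise. In the application $\Omega=\hat\Omega_m$ is a function of $D_{\mathrm{pre}}^{(m)}$, which is independent of the downstream data, so conditioning on it as well leaves the argument unchanged.
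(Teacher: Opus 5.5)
Your proposal is correct and follows the paper's proof essentially step for step. Symmetry and idempotence of $H_{\Omega,n}$ reduce the squared norm to the quadratic form $\varepsilon_{1:n}^\top H_{\Omega,n}\varepsilon_{1:n}$; the trace trick with the conditional covariance then gives $\sigma^2\tr(H_{\Omega,n})$, and \Cref{lem:deffn-eq} identifies this trace with $d_{\mathrm{eff},n}(\Omega)$. Your side remark is also accurate: only the conditional second-moment hypothesis is actually used.
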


\begin{proof}
Condition on $x_{1:n}$ and use $H_{\Omega,n}^\top=H_{\Omega,n}$ and $H_{\Omega,n}^2=H_{\Omega,n}$:
\[
\E\!\left[\|H_{\Omega,n}\varepsilon_{1:n}\|_2^2\,\middle|\,x_{1:n}\right]
=
\E\!\left[\varepsilon_{1:n}^\top H_{\Omega,n}\varepsilon_{1:n}\,\middle|\,x_{1:n}\right]
=
\tr\!\left(H_{\Omega,n}\,\E[\varepsilon_{1:n}\varepsilon_{1:n}^\top\mid x_{1:n}]\right)
=
\sigma^2\tr(H_{\Omega,n}).
\]
Divide by $n$ and invoke Lemma~\ref{lem:deffn-eq}.
\end{proof}

\begin{remark}
Lemma~\ref{lem:deffn-eq} is the linear-algebraic reason the leading noise-estimation constant is a degrees-of-freedom term:
under homoskedastic noise, the conditional expected squared norm of the projected noise depends on the design only through
$\tr(H_{\Omega,n})$, which equals $d_{\mathrm{eff},n}(\Omega)$ by Lemma~\ref{lem:deffn-eq}.
\end{remark}

\subsection{Benefits of an operator-theoretic formulation}
\label{app:benefits_of_operator_formulation}

The downstream stage does not use $\Omega$ as an object in isolation; it uses only the linear function class it induces together with
the least-squares fit of the labels onto this class. Introducing the linear map $T_\Omega$ packages all downstream quantities in a
coordinate-free form.

\begin{enumerate}[label=(\roman*)]
    \item \emph{Invariance becomes explicit.} If two feature parametrizations induce the same subspace $\mathcal H_\Omega\subseteq L^2_{\mathrm{down}}$, then downstream predictions after refitting are identical. In the operator language, all population objects depend on $\Omega$ only through $\mathcal H_\Omega$, summarized by the projector $\Pi_\Omega=T_\Omega\Sigma(\Omega)^+T_\Omega^{\mathrm{adj}}$.

    \item \emph{Population and empirical stages have the same algebraic structure.} The empirical projector $\Pi_{\Omega,n}$ is obtained from $\Pi_\Omega$ by replacing expectations with sample averages, i.e., $T_\Omega^{\mathrm{adj}}$ by $T_{\Omega,n}^{\mathrm{adj}}$ and $\Sigma(\Omega)$ by $\Sigma_n(\Omega)$.

    \item \emph{Bridge to the manifold pre-training limit theory.} Our $m$-asymptotics enter through how population quantities change with $\Omega$ near $\Omega_\star$. When $\mathcal M$ is a Riemannian manifold, the pre-training estimator satisfies a log-map CLT 
    \[
    \sqrt m\,\log_{\Omega_\star}(\hat\Omega_m)
    \distconv
    \mathcal N(0,V)
    \quad
    \text{in }T_{\Omega_\star}\mathcal M,
    \]
    under the assumptions in the main text. The operator viewpoint makes it natural to apply a delta-method argument to maps of the form $\Omega\mapsto \Pi_\Omega$ and $\Omega\mapsto \mathrm{Rep}(\Omega)$, after passing to the appropriate descriptor/quotient parametrization described in Section~\ref{sec:identifiability}.
\end{enumerate}

\subsection{Norm conventions}\label{app:norm-convention}
Throughout the paper, $\norm{\cdot}_2$ denotes the Euclidean norm on finite-dimensional spaces. For a matrix $A$, $\norm{A}_\mathrm{op}$ denotes the induces Euclidean operator norm
\[
\norm{A}_\mathrm{op} = \sup_{\norm{\theta}_2=1} \norm{A\theta}_2.
\]
More generally, if $S:F\to E$ is a bounded linear map between normed spaces, $\norm{S}_\mathrm{op}$ denotes the induced operator norm. In particular, for a feature map $T_\Omega:\R^p\to L^2$,
\[
\norm{T_\Omega}_\mathrm{op}^2=\sup_{\norm{\theta}_2=1}\norm{T_\Omega \theta}_{L^2}^2=\sup_{\norm{\theta}_2=1}\E_X[\langle \theta,\phi(X,\Omega)\rangle].
\]

\section{Riemannian Geometry and $M$-estimation Background}
\label{app:riemannian_mestimation}

This appendix collects the minimal differential-geometric and $M$-estimation background used in our
quotient/descriptor-manifold asymptotic analysis. Our Riemannian conventions follow~\cite{lee2018introduction}.
For classical references on Euclidean $M$-estimation, see~\cite{van2000asymptotic}.

\subsection{Basic Riemannian notions}
\label{app:riem_basics}

\paragraph{Smooth manifolds and tangent spaces.}
A smooth $q$-dimensional manifold $\mathcal M$ is a Hausdorff, second-countable topological space equipped with a smooth atlas
$\{(U_\alpha,\varphi_\alpha)\}$, where $\varphi_\alpha:U_\alpha\to \varphi_\alpha(U_\alpha)\subseteq \R^q$ is a homeomorphism and all transition maps
$\varphi_\beta\circ\varphi_\alpha^{-1}$ are smooth on overlaps. For $\Omega\in\mathcal M$, the tangent space $T_\Omega\mathcal M$ is a $q$-dimensional real vector space.
For a smooth map $F:\mathcal M\to\mathcal N$, we write $dF_\Omega:T_\Omega\mathcal M\to T_{F(\Omega)}\mathcal N$ for its differential.
A smooth vector field $X$ assigns to each $\Omega\in\mathcal M$ a vector $X(\Omega)\in T_\Omega\mathcal M$ smoothly in charts.

\paragraph{Riemannian metrics and induced metrics.}
A Riemannian metric on $\mathcal M$ is a choice of inner product $\langle \cdot,\cdot\rangle_\Omega$ on each tangent space $T_\Omega\mathcal M$ that depends smoothly on $\Omega$.
The pair $(\mathcal M,\langle\cdot,\cdot\rangle)$ is called a Riemannian manifold.
If $\mathcal M\subseteq \R^{q_0}$ is an embedded $C^k$ submanifold of $\R^{q_0}$, the induced (ambient) metric is
$\langle u,v\rangle_\Omega \coloneqq  u^\top v$ for $u,v\in T_\Omega\mathcal M\subseteq \R^{q_0}$.

For a piecewise $C^1$ curve $\gamma:[0,1]\to\mathcal M$, its length is
\[
L(\gamma)=\int_0^1 \|\dot\gamma(t)\|_{\gamma(t)}\,dt,
\qquad
\|v\|_\Omega\coloneqq \sqrt{\langle v,v\rangle_\Omega}.
\]
The associated Riemannian distance is defined by
\[
d_{\mathcal M}(\Omega_1,\Omega_2)=\inf_{\gamma(0)=\Omega_1,\ \gamma(1)=\Omega_2} L(\gamma),
\]
where the infimum ranges over piecewise $C^1$ curves $\gamma:[0,1]\to\mathcal{M}$.

\paragraph{Levi--Civita connection, geodesics, gradients, and Hessians.}
There is a unique connection $\nabla$ on $\mathcal M$ (the Levi--Civita connection) that is torsion-free and metric-compatible.
A $C^2$ curve $\gamma$ is a geodesic if it satisfies $\nabla_{\dot\gamma(t)}\dot\gamma(t)=0$ for all $t$.

For a smooth function $f:\mathcal M\to\R$, the Riemannian gradient $\mathrm{grad}\,f(\Omega)\in T_\Omega\mathcal M$ is defined by the identity
\[
df_\Omega(v)=\langle \mathrm{grad}\,f(\Omega),v\rangle_\Omega,
\qquad v\in T_\Omega\mathcal M.
\]
The Riemannian Hessian at point $\Omega$ is the linear map $\mathrm{Hess}\,f(\Omega):T_\Omega\mathcal M\to T_\Omega\mathcal M$ given by
\[
\mathrm{Hess}\,f(\Omega)[v] \coloneqq  \nabla_V(\mathrm{grad}\,f)(\Omega),
\]
where $V$ is any smooth local extension of $v\in T_\Omega \mathcal M$. We also use the associated bilinear form as
\[
\mathrm{Hess}\,f(\Omega)(u,v)=\langle u,\mathrm{Hess}\,f(\Omega)[v]\rangle_\Omega.
\]

\paragraph{Exponential map, normal neighborhoods, and logarithm map.}
Fix $\Omega\in\mathcal M$. For each $v\in T_\Omega \mathcal{M}$, let $\gamma_v:[0,1]\to \mathcal M$ denote the unique geodesic with $\gamma_v(0)=\Omega$ and $\dot\gamma_v(0)=v$. The exponential map at $\Omega$ is
\[
\exp_\Omega:T_\Omega\mathcal M\to\mathcal M,
\qquad
\exp_\Omega(v)\coloneqq \gamma_v(1).
\]
Moreover, there exists $\delta_\Omega>0$ such that $\exp_\Omega$ restricts to a diffeomorphism from $B(\Omega,\delta_\Omega)\subset T_\Omega\mathcal{M}$ onto its image. On any set where this restriction is invertible, we write $\mathrm{log}_\Omega$ for the local inverse of $\exp_\Omega$.
\begin{definition}[Normal neighborhood and normal coordinates]\label{def:normal-neigh}
    An open subset $U\subseteq\mathcal M$ is a \emph{normal neighborhood} of $\Omega$ if there exists a $\delta>0$ such that $U=\exp_\Omega(B(\Omega,\delta))$ and the restriction $\exp_\Omega:B(\Omega,\delta)\to U$ is a diffeomorphism. On such a $U$, the logarithm map at $\Omega$ is the inverse chart $\log_\Omega: U\to T_\Omega\mathcal M$, and the resulting chart $\log_\Omega$ defines the \emph{normal coordinates} at $\Omega$.
\end{definition}

\subsection{Taylor expansions in normal coordinates}
\label{app:riem_taylor}

This subsection records the Taylor expansions we use to linearize first-order optimality conditions on the descriptor manifold.

\paragraph{Normal coordinates and pullbacks.}
Let $U\subseteq\mathcal M$ be a normal neighborhood of $\Omega$ (Definition~\ref{def:normal-neigh}).
Given a function $f:\mathcal M\to\R$, we pull $f$ back to the tangent space via 
\[
\tilde f(v)\coloneqq f(\exp_\Omega(v)),
\qquad v\in B(\Omega, \delta)\subseteq T_\Omega\mathcal M.
\]
Equivalently, for $\Omega'\in U$, we write $v=\log_\Omega(\Omega')$ and view $\tilde f$ as $f$ expressed in normal coordinates around $\Omega$.

\paragraph{First- and second-order expansions of a smooth function.}
Let $U$ be a normal neighborhood of $\Omega_1$ and write $\Omega_2=\exp_{\Omega_1}(v)$ with $v\in T_{\Omega_1}\mathcal M$. If $f$ is $C^1(\mathcal{M},\mathbb R)$ on $U$, then as $v\to0$ in $T_{\Omega_1}\mathcal{M}$,
\begin{align}
f(\exp_{\Omega_1}(v))&=
f(\Omega_1)+\langle \mathrm{grad},f(\Omega_1),v\rangle_{\Omega_1}
+
o(\|v\|_{\Omega_1}).
\label{eq:riem_taylor_f_first}
\end{align}
If $f$ is $C^2(\mathcal{M},\mathbb R)$ on $U$, then as $v\to0$ in $T_{\Omega_1}\mathcal{M}$,
\begin{align}
f(\exp_{\Omega_1}(v))&=
f(\Omega_1)+\langle \mathrm{grad},f(\Omega_1),v\rangle_{\Omega_1}
+\frac12\big\langle v,\ \mathrm{Hess},f(\Omega_1)[v]\big\rangle_{\Omega_1}
+o(\|v\|^2_{\Omega_1}).
\label{eq:riem_taylor_f_second}
\end{align}
The restriction to a normal neighborhood ensures that nearby points admit a unique representation via the logarithm map, so that the above expansions are well-defined and intrinsic.
\paragraph{Taylor expansion of the gradient via parallel transport.}
Fix $\Omega_1\in\mathcal M$ and let $U$ be a normal neighborhood of $\Omega_1$.
For $v\in T_{\Omega_1}\mathcal M$ sufficiently small, define the unique geodesic
\[
\gamma(t)\coloneqq \exp_{\Omega_1}(t v), \qquad t\in[0,1],
\]
so that $\gamma(0)=\Omega_1$ and $\gamma(1)=\Omega_2\coloneqq \exp_{\Omega_1}(v)$.

\emph{Parallel transport.}
Let $\mathcal P_{\Omega_1\to \Omega_2}:T_{\Omega_1}\mathcal M\to T_{\Omega_2}\mathcal M$
denote parallel transport along $\gamma$.
Given $w\in T_{\Omega_1}\mathcal M$, let $W(t)\in T_{\gamma(t)}\mathcal M$ be the unique vector field along $\gamma$ that has the following property
\[
\nabla_{\dot\gamma(t)} W(t)=0,
\qquad
W(0)=w,
\]
and set $\mathcal P_{\Omega_1\to \Omega_2}w\coloneqq W(1)$.
We write $\mathcal P_{\Omega_2\to \Omega_1}\coloneqq \mathcal P_{\Omega_1\to \Omega_2}^{-1}$.

\emph{Gradient expansion.}
If $f:\mathcal M\to\mathbb R$ is $C^2$ on $U$, then as $v\to 0$,
\begin{align}
\mathcal P_{\Omega_2\to \Omega_1}\big(\mathrm{grad}\,f(\Omega_2)\big)
=
\mathrm{grad}\,f(\Omega_1)+\mathrm{Hess}\,f(\Omega_1)[v] + r_f(v),
\label{eq:riem_taylor_grad}
\end{align}
where $r_f(v)=o(\|v\|_{\Omega_1})$.
If, in addition, $\mathrm{Hess}\,f$ is locally Lipschitz on $U$ (with respect to $d_{\mathcal M}$), then there exist
constants $C>0$ and $\varepsilon>0$ such that
\begin{align}
\|r_f(v)\|_{\Omega_1} \le C\,\|v\|_{\Omega_1}^2
\qquad
\text{for all $\|v\|_{\Omega_1}\le \varepsilon$.}
\label{eq:riem_taylor_grad_bound}
\end{align}

\paragraph{Empirical objectives.}
The expansion \eqref{eq:riem_taylor_grad} applies to empirical objectives of the form
\[
\hat f \coloneqq \frac1m \sum_{i=1}^m f_i,
\]
provided each $f_i$ is $C^2$ on a common normal neighborhood $U$ of $\Omega_\star$, and the corresponding derivatives admit
local bounds on $U$ (so that the remainder is uniform for $\|v\|_{\Omega_\star}$ small).
In particular, for $v\in T_{\Omega_\star}\mathcal M$ small and $\hat \Omega\coloneqq \exp_{\Omega_\star}(v)$,
\begin{align}
\mathcal P_{\hat \Omega\to \Omega_\star}\big(\mathrm{grad}\,\hat f(\hat \Omega)\big)
=
\mathrm{grad}\,\hat f(\Omega_\star)+\mathrm{Hess}\,\hat f(\Omega_\star)[v] + r_{\hat f}(v),
\label{eq:riem_taylor_grad_emp}
\end{align}
where $r_{\hat f}(v)=o(\|v\|_{\Omega_\star})$ as $v\to 0$.
If, in addition, $\mathrm{Hess}\,\hat f$ is locally Lipschitz on $U$, then $\|r_{\hat f}(v)\|_{\Omega_\star}=O(\|v\|_{\Omega_\star}^2)$.

\paragraph{Remark (normal coordinates vs.\ parallel transport).}
In normal coordinates at $\Omega$, one may view \eqref{eq:riem_taylor_grad} as an ordinary Euclidean Taylor expansion
of the pullback $\tilde f(v)=f(\exp_\Omega(v))$ at $v=0$. The parallel-transport form is convenient because it compares vectors in a single space $T_\Omega\mathcal M$.

\subsection{Euclidean $M$-estimation: consistency and asymptotic normality}
\label{app:mestimation_euclid}

We first review standard Euclidean $M$-estimation. Note that all assumptions and results presented in this section are classical~\citep{van2000asymptotic}; 
our purpose for recording these
arguments here is that we will follow
their structure closely when generalizing
to the Riemannian setting in \Cref{app:mestimation_riem}.

Let $(\mathcal Z,\mathcal G)$ be a measurable space and let $Z_1,\dots,Z_m$ be i.i.d.\ with law $\mathbb P$ on $\mathcal Z$.
Let $\Theta\subseteq\mathbb R^p$ be the parameter set. Given a measurable loss $\ell:\Theta\times\mathcal Z\to\mathbb R$, define
\[
\hat L_m(\theta)\coloneqq \frac1m\sum_{i=1}^m \ell(\theta;Z_i),
\qquad
L(\theta)\coloneqq \mathbb E[\ell(\theta;Z)],
\]
where $Z\sim\mathbb P$ and we assume $L(\theta)$ is well-defined (possibly $+\infty$) for all $\theta\in\Theta$.
An \emph{$M$-estimator} is any measurable selection $\hat\theta_m$ from the set of empirical minimizers,
\[
\hat\theta_m \in \argmin_{\theta\in\Theta}\hat L_m(\theta),
\]
whenever this set is nonempty.

\begin{assmp}[Euclidean $M$-estimation conditions]
\label{ass:euclid_mestimation}
There exist $\theta_\star\in\Theta$, an open set $U\subseteq\mathbb R^p$ with $\theta_\star\in U$, and $r>0$ with
$\overline{B}(\theta_\star,r)\subseteq U\cap\Theta$ such that:
\begin{enumerate}[label=(\roman*)]
\item \textbf{(Identification and separation).}
$\theta_\star$ is the unique minimizer of $L$ on $\Theta$ and for every $\epsilon>0$,
\[
\inf_{\theta\in\Theta:\ \|\theta-\theta_\star\|\ge \epsilon}\bigl(L(\theta)-L(\theta_\star)\bigr)>0.
\]

\item \textbf{(Uniform LLN on a compact set).}
On the compact set $\overline{B}(\theta_\star,r)$, we have
\[
\sup_{\theta\in \overline{B}(\theta_\star,r)}\bigl|\hat L_m(\theta)-L(\theta)\bigr|\ \xrightarrow{\ \mathbb P\ }\ 0,
\]
and $\hat\theta_m\in \overline{B}(\theta_\star,r)$ with probability tending to one.

\item \textbf{(Local $C^2$ smoothness and score moments).}
For $\mathbb P$-a.e.\ $z$, the map $\theta\mapsto \ell(\theta;z)$ is $C^2$ on $U$, and
$\mathbb E[\|\nabla \ell(\theta_\star;Z)\|^2]<\infty$.

\item \textbf{(Nondegenerate minimizer).}
The matrix $H_\star\coloneqq \nabla^2 L(\theta_\star)$ is invertible.

\item \textbf{(Uniform Hessian convergence on $\overline{B}(\theta_\star,r)$).}
\[
\sup_{\theta\in \overline{B}(\theta_\star,r)}
\big\|\nabla^2 \hat L_m(\theta)-\nabla^2 L(\theta)\big\|\ \xrightarrow{\ \mathbb P\ }\ 0.
\]
\end{enumerate}
\end{assmp}

Define
\[
\Sigma_\star\coloneqq \mathrm{Var}\big(\nabla \ell(\theta_\star;Z)\big)
=\mathbb E\Big[\nabla \ell(\theta_\star;Z)\nabla \ell(\theta_\star;Z)^\top\Big],
\]
where $\mathbb E[\nabla \ell(\theta_\star;Z)]=\nabla L(\theta_\star)=0$.
\begin{proposition}[Euclidean $M$-estimator consistency]
\label{prop:euclid_consistency}
Assume Assumption~\ref{ass:euclid_mestimation}\,(i)--(ii). Then
\[
\hat\theta_m \ \xrightarrow{\ \mathbb P\ }\ \theta_\star.
\]
\end{proposition}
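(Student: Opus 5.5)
The plan is the classical argmin consistency argument. It uses only the identification/separation condition (i) and the uniform law of large numbers (ii) of Assumption~\ref{ass:euclid_mestimation}. Fix $\epsilon>0$ with $\epsilon\le r$. It suffices to show $\Pr(\|\hat\theta_m-\theta_\star\|\ge\epsilon)\to 0$. Define the separation gap
\[
\eta\coloneqq \inf_{\theta\in\Theta:\ \|\theta-\theta_\star\|\ge\epsilon}\bigl(L(\theta)-L(\theta_\star)\bigr),
\]
which is strictly positive by (i). Let $A_m$ be the event $\{\hat\theta_m\in\overline B(\theta_\star,r)\}$, and let $B_m$ be the event that $\sup_{\theta\in\overline B(\theta_\star,r)}|\hat L_m(\theta)-L(\theta)|<\eta/2$. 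By (ii), both events have probability tending to one, so $\Pr(A_m\cap B_m)\to 1$.

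The key step is to show that on $A_m\cap B_m$ we have $\|\hat\theta_m-\theta_\star\|<\epsilon$. Suppose instead that $\|\hat\theta_m-\theta_\star\|\ge\epsilon$. Since $\hat\theta_m\in\overline B(\theta_\star,r)\subseteq\Theta$, the separation condition gives $L(\hat\theta_m)\ge L(\theta_\star)+\eta$. Both $\hat\theta_m$ and $\theta_\star$ lie in the compact ball where the uniform deviation bound holds, so we can chain
\[
\hat L_m(\hat\theta_m) > L(\hat\theta_m)-\eta/2 \ge L(\theta_\star)+\eta/2 > \hat L_m(\theta_\star).
\]
This contradicts the fact that $\hat\theta_m$ minimizes $\hat L_m$ over $\Theta\ni\theta_\star$. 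Hence $\Pr(\|\hat\theta_m-\theta_\star\|\ge\epsilon)\le \Pr((A_m\cap B_m)^c)\to 0$.

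There is no substantial obstacle. The one point needing care is that the ULLN is assumed only on the ball $\overline B(\theta_\star,r)$. So the argument must first localize $\hat\theta_m$ to that ball, which is exactly the second clause of (ii), and must restrict to $\epsilon\le r$; this costs nothing, since convergence for small $\epsilon$ implies it for large $\epsilon$. Measurability of the events involving the supremum is handled by the standard outer-probability convention if needed, and finiteness of $L(\theta_\star)$ is implicit in $\theta_\star$ being the unique minimizer.
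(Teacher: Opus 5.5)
Your proposal is correct and follows essentially the same argument as the paper. Both proofs take the separation gap from (i), intersect the localization event with a uniform-deviation event from (ii), and show that $\hat L_m(\hat\theta_m) > \hat L_m(\theta_\star)$ would hold if $\|\hat\theta_m-\theta_\star\|\ge\epsilon$. The only differences are cosmetic: you use the tolerance $\eta/2$ where the paper uses $\Delta_\epsilon/3$, and your restriction to $\epsilon\le r$ is harmless though unnecessary, since on the localization event $\|\hat\theta_m-\theta_\star\|\le r$ already.
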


\begin{proof}
The argument follows standard $M$-estimation proofs (see, e.g.,~\cite{van2000asymptotic}); we include it
as a reference, since we will soon generalize this argument to Riemannian manifolds.

Fix $\epsilon>0$ and define the separation gap
\[
\Delta_\epsilon \coloneqq 
\inf_{\theta\in\Theta:\ \|\theta-\theta_\star\|\ge \epsilon}\bigl(L(\theta)-L(\theta_\star)\bigr),
\]
so that $\Delta_\epsilon>0$ by Assumption~\ref{ass:euclid_mestimation}\,(i). Consider the event
\[
E_m \coloneqq 
\left\{
\sup_{\theta\in \overline{B}(\theta_\star,r)}\bigl|\hat L_m(\theta)-L(\theta)\bigr| \le \frac{\Delta_\epsilon}{3}
\right\}
\cap \{\hat\theta_m\in \overline{B}(\theta_\star,r)\}.
\]
By the uniform law of large numbers on $\overline{B}(\theta_\star,r)$ and the localization $\mathbb P(\hat\theta_m\in \overline{B}(\theta_\star,r))\to 1$,
we have $\mathbb P(E_m)\to 1$.

On the event $E_m$, for any $\theta\in \overline{B}(\theta_\star,r)$ with $\|\theta-\theta_\star\|\ge \epsilon$ we have
\[
\hat L_m(\theta)
\;\ge\;
L(\theta) - \frac{\Delta_\epsilon}{3}
\;\ge\;
L(\theta_\star) + \Delta_\epsilon - \frac{\Delta_\epsilon}{3}
\;=\;
L(\theta_\star) + \frac{2\Delta_\epsilon}{3},
\]
while
\[
\hat L_m(\theta_\star)
\;\le\;
L(\theta_\star) + \frac{\Delta_\epsilon}{3}.
\]
Hence, on $E_m$,
\[
\inf_{\theta\in \overline{B}(\theta_\star,r):\ \|\theta-\theta_\star\|\ge \epsilon}\hat L_m(\theta)
\;>\;
\hat L_m(\theta_\star).
\]
In particular, any minimizer of $\hat L_m$ over $\overline{B}(\theta_\star,r)$ must lie in $B(\theta_\star,\epsilon)$.
Since $\hat\theta_m\in \overline{B}(\theta_\star,r)$ on $E_m$ and $\hat\theta_m$ is (by assumption) an empirical minimizer, we conclude that
$\|\hat\theta_m-\theta_\star\|<\epsilon$ on $E_m$.
Therefore,
\[
\mathbb P\big(\|\hat\theta_m-\theta_\star\|\ge \epsilon\big)
\;\le\;
\mathbb P(E_m^c)\ \to\ 0,
\]
which proves $\hat\theta_m\to\theta_\star$ in probability.
\end{proof}

\begin{theorem}[Euclidean $M$-estimator CLT]
\label{thm:euclid_mestimator_clt}
Under \Cref{ass:euclid_mestimation},
\[
\sqrt n\,(\hat\theta_n-\theta_\star)\ \distconv\ \mathcal N\!\big(0,\ H_\star^{-1}\Sigma_\star H_\star^{-1}\big).
\]
\end{theorem}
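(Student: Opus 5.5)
The plan is the classical Taylor-expansion argument for the score equation, carried out along the segment between $\hat\theta_n$ and $\theta_\star$; the structure is what we will reuse on the manifold. (The statement writes $n$ for the sample size; we keep the indexing $\hat\theta_n$, $\hat L_n$.)

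\textbf{Setup.} By \Cref{prop:euclid_consistency}, $\hat\theta_n \xrightarrow{\mathbb P} \theta_\star$. Hence with probability tending to one $\hat\theta_n$ lies in the open ball $B(\theta_\star,r)\subseteq U$, so it is an interior minimizer of the $C^2$ function $\hat L_n$. The first-order condition $\nabla \hat L_n(\hat\theta_n)=0$ then holds on an event $A_n$ with $\Pr(A_n)\to 1$. Differentiation under the integral, justified by the local $C^2$ assumption together with dominated convergence, gives $\nabla L(\theta_\star)=\E[\nabla\ell(\theta_\star;Z)]=0$ and identifies $\nabla^2 L$ as the Hessian of the population loss. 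This is standard under (iii) and (v), and we take it as given.

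\textbf{Mean-value expansion of the gradient.} On $A_n$, applying the fundamental theorem of calculus coordinatewise along the segment gives
\[
0=\nabla \hat L_n(\hat\theta_n)=\nabla\hat L_n(\theta_\star)+\bar H_n(\hat\theta_n-\theta_\star),\qquad \bar H_n\coloneqq \int_0^1 \nabla^2\hat L_n\bigl(\theta_\star+t(\hat\theta_n-\theta_\star)\bigr)\,\rmd t .
\]
We then show $\bar H_n\xrightarrow{\mathbb P} H_\star$ by splitting
\[
\|\bar H_n-H_\star\|\le \sup_{\theta\in\overline B(\theta_\star,r)}\|\nabla^2\hat L_n(\theta)-\nabla^2L(\theta)\| + \sup_{t\in[0,1]}\bigl\|\nabla^2 L\bigl(\theta_\star+t(\hat\theta_n-\theta_\star)\bigr)-H_\star\bigr\|.
\]
The first term vanishes in probability by (v). The second term vanishes by continuity of $\nabla^2 L$ at $\theta_\star$ combined with consistency.

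This continuity of $\nabla^2 L$ is the main point needing care. My first attempt is to note that (v) makes $\nabla^2 L$ a uniform-in-probability limit of the continuous functions $\nabla^2\hat L_n$ on the compact ball, and hence continuous there. If that argument is deemed too slick, I would instead add a local dominating envelope for $\nabla^2\ell$ and invoke dominated convergence.

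\textbf{Conclusion.} Since $H_\star$ is invertible by (iv), $\bar H_n$ is invertible with probability tending to one, and $\bar H_n^{-1}\xrightarrow{\mathbb P}H_\star^{-1}$. Therefore
\[
\sqrt n(\hat\theta_n-\theta_\star)=-\bar H_n^{-1}\sqrt n\,\nabla\hat L_n(\theta_\star)
\]
on an event of probability tending to one. The multivariate CLT, using i.i.d.\ mean-zero scores with finite second moment from (iii), gives $\sqrt n\,\nabla\hat L_n(\theta_\star)\distconv\calN(0,\Sigma_\star)$. Slutsky's theorem then yields the limit $\calN(0,H_\star^{-1}\Sigma_\star H_\star^{-1})$, using the symmetry of $H_\star$. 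Events of vanishing probability do not affect convergence in distribution.
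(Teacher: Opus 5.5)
Your proof is correct and follows the paper's argument: consistency and the first-order condition, a Taylor expansion of the score along the segment, convergence of the averaged Hessian to $H_\star$ via (v), consistency and continuity of $\nabla^2 L$, then the score CLT and Slutsky. Two of your details are cleaner than the paper's. Your integral form of the expansion is valid for a vector-valued gradient, whereas the paper's single intermediate point $\tilde\theta_m$ need not exist. Your uniform-limit argument actually proves continuity of $\nabla^2 L$ at $\theta_\star$, which the paper only asserts from twice-differentiability at $\theta_\star$.
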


\begin{proof}
The argument follows standard $M$-estimation proofs (see, e.g.,~\cite{van2000asymptotic}); we include it
as a reference, since we will soon generalize this argument to Riemannian manifolds.

By \Cref{prop:euclid_consistency} and \Cref{ass:euclid_mestimation}\,(i)--(ii), we have
\[
\hat\theta_m \xrightarrow{\ \mathbb P\ } \theta_\star.
\]
In particular, since $\hat\theta_m\in \overline{B}(\theta_\star,r)$ with probability tending to one by
\Cref{ass:euclid_mestimation}\,(ii), all arguments below may be restricted to the event
$\{\hat\theta_m\in \overline{B}(\theta_\star,r)\}$.

On the event $\{\hat\theta_m\in \overline{B}(\theta_\star,r)\}$, the first-order condition for the empirical minimizer gives
\[
\nabla \hat L_m(\hat\theta_m)=0.
\]
Since $\ell(\,\cdot\,;z)$ is $C^2$ on $U$ for $\mathbb P$-a.e.\ $z$ by \Cref{ass:euclid_mestimation}\,(iii), the map
$\theta\mapsto \nabla \hat L_m(\theta)$ is differentiable on $U$ and we may apply the mean-value form of Taylor's theorem:
there exists a point $\tilde\theta_m$ on the line segment between $\theta_\star$ and $\hat\theta_m$ such that
\begin{equation}
\label{eq:euclid_score_taylor}
0
=
\nabla \hat L_m(\hat\theta_m)
=
\nabla \hat L_m(\theta_\star)
+
\nabla^2 \hat L_m(\tilde\theta_m)\,(\hat\theta_m-\theta_\star).
\end{equation}
Rearranging yields
\begin{equation}
\label{eq:euclid_linearization}
\sqrt m\,(\hat\theta_m-\theta_\star)
=
-\Big(\nabla^2 \hat L_m(\tilde\theta_m)\Big)^{-1}\,\sqrt m\,\nabla \hat L_m(\theta_\star),
\end{equation}
on the event that $\nabla^2\hat L_m(\tilde\theta_m)$ is invertible.

Since $\tilde\theta_m$ lies on the segment between $\theta_\star$ and $\hat\theta_m$, we have
$\tilde\theta_m\in \overline{B}(\theta_\star,r)$ whenever $\hat\theta_m\in \overline{B}(\theta_\star,r)$.
Moreover, by consistency $\hat\theta_m\to\theta_\star$ in probability, hence $\tilde\theta_m\to\theta_\star$ in probability as well.
By the uniform Hessian convergence in \Cref{ass:euclid_mestimation}\,(v),
\[
\sup_{\theta\in \overline{B}(\theta_\star,r)}
\big\|\nabla^2 \hat L_m(\theta)-\nabla^2 L(\theta)\big\|\ \xrightarrow{\ \mathbb P\ }\ 0,
\]
and therefore
\[
\nabla^2 \hat L_m(\tilde\theta_m) - \nabla^2 L(\tilde\theta_m)\ \xrightarrow{\ \mathbb P\ }\ 0.
\]
Since $L$ is twice differentiable at $\theta_\star$ and $\tilde\theta_m\to\theta_\star$ in probability, we also have
$\nabla^2 L(\tilde\theta_m)\to \nabla^2 L(\theta_\star)=H_\star$ in probability. Combining these gives
\begin{equation}
\label{eq:hess_to_Hstar}
\nabla^2 \hat L_m(\tilde\theta_m)\ \xrightarrow{\ \mathbb P\ }\ H_\star.
\end{equation}
By \Cref{ass:euclid_mestimation}\,(iv), $H_\star$ is invertible, hence by continuity of matrix inversion,
\begin{equation}
\label{eq:inv_hess_to_invH}
\Big(\nabla^2 \hat L_m(\tilde\theta_m)\Big)^{-1}\ \xrightarrow{\ \mathbb P\ }\ H_\star^{-1}.
\end{equation}
In particular, $\nabla^2 \hat L_m(\tilde\theta_m)$ is invertible with probability tending to one.

By definition,
\[
\nabla \hat L_m(\theta_\star)
=
\frac1m\sum_{i=1}^m \nabla \ell(\theta_\star;Z_i).
\]
Since $\theta_\star$ is a minimizer of $L$ and $L$ is differentiable at $\theta_\star$, we have
$\mathbb E[\nabla \ell(\theta_\star;Z)]=\nabla L(\theta_\star)=0$, hence the summands are mean-zero.
By \Cref{ass:euclid_mestimation}\,(iii), $\mathbb E[\|\nabla \ell(\theta_\star;Z)\|^2]<\infty$, so the multivariate CLT yields
\begin{equation}
\label{eq:score_clt}
\sqrt m\,\nabla \hat L_m(\theta_\star)\ \distconv\ \mathcal N(0,\Sigma_\star).
\end{equation}

Combining \eqref{eq:euclid_linearization}, \eqref{eq:inv_hess_to_invH}, and \eqref{eq:score_clt}, and applying Slutsky's theorem,
we obtain
\[
\sqrt m\,(\hat\theta_m-\theta_\star)
\ \distconv\
-\,H_\star^{-1}\,G,
\qquad
G\sim \mathcal N(0,\Sigma_\star).
\]
Since $-H_\star^{-1}G \sim \mathcal N(0, H_\star^{-1}\Sigma_\star H_\star^{-1})$, this proves the claim.
\end{proof}

\subsection{$M$-estimation on a Riemannian manifold}
\label{app:mestimation_riem}

Let $(\mathcal M,\langle\cdot,\cdot\rangle)$ be a finite-dimensional $C^2$ Riemannian manifold.
Let $(\mathcal Z,\mathcal G)$ be a measurable space and let $Z_1,\dots,Z_m$ be i.i.d.\ with common law $\mathbb P$ on $\mathcal Z$.
Let $f:\mathcal M\times\mathcal Z\to\mathbb R$ be a measurable loss such that the expectations below are well-defined. Define
\[
\hat F_m(\Omega)\coloneqq \frac1m\sum_{i=1}^m f(\Omega;Z_i),
\qquad
F(\Omega)\coloneqq \mathbb E[f(\Omega;Z)],
\qquad \Omega\in\mathcal M.
\]
An \emph{$M$-estimator} is a measurable map $\hat \Omega_m=\hat \Omega_m(Z_1,\dots,Z_m)$ such that
$\hat \Omega_m\in\argmin_{\Omega\in\mathcal M}\hat F_m(\Omega)$ whenever the argmin set is nonempty.

Fix $\Omega_\star\in\mathcal M$. For $\epsilon>0$, define the tangent-space ball
\[
B(\Omega_\star,\epsilon)\coloneqq \{v\in T_{\Omega_\star}\mathcal M:\ \|v\|_{\Omega_\star}<\epsilon\},
\qquad
\overline{B}(\Omega_\star,\epsilon)\coloneqq \{v\in T_{\Omega_\star}\mathcal M:\ \|v\|_{\Omega_\star}\le\epsilon\}.
\]

\begin{assmp}[Riemannian $M$-estimation conditions]
\label{ass:riem_mestimation}
There exist $\Omega_\star\in\mathcal M$, a normal neighborhood
$U=\exp_{\Omega_\star}\!\big(B(\Omega_\star,\epsilon_0)\big)$, and $\epsilon'\in(0,\epsilon_0)$ such that
$\exp_{\Omega_\star}\!\big(\overline{B}(\Omega_\star,\epsilon')\big)\subseteq U$ and:
\begin{enumerate}[label=(\roman*)]
\item \textbf{(Identification and separation).}
$\Omega_\star$ is the unique minimizer of $F$ on $\mathcal M$ and for every $\epsilon>0$,
\[
\inf_{\Omega\in\mathcal M:\ d_{\mathcal M}(\Omega,\Omega_\star)\ge \epsilon}
\bigl(F(\Omega)-F(\Omega_\star)\bigr)>0.
\]

\item \textbf{(Uniform LLN on a compact set).}
On the compact set $\exp_{\Omega_\star}\!\big(\overline{B}(\Omega_\star,\epsilon')\big)$, we have
\[
\sup_{\Omega\in \exp_{\Omega_\star}(\overline{B}(\Omega_\star,\epsilon'))}
\bigl|\hat F_m(\Omega)-F(\Omega)\bigr|\ \xrightarrow{\ \mathbb P\ }\ 0,
\]
and $\hat\Omega_m\in \exp_{\Omega_\star}\!\big(\overline{B}(\Omega_\star,\epsilon')\big)$ with probability tending to one.

\item \textbf{(Local $C^2$ smoothness and score moments).}
For $\mathbb P$-a.e.\ $z$, the map $\Omega\mapsto f(\Omega;z)$ is $C^2$ on $U$, and
\[
\mathbb E\big[\|\mathrm{grad}\, f(\Omega_\star;Z)\|_{\Omega_\star}^2\big]<\infty.
\]

\item \textbf{(Nondegenerate minimizer).}
The linear map $H_\star\coloneqq \mathrm{Hess}\,F(\Omega_\star):T_{\Omega_\star}\mathcal M\to T_{\Omega_\star}\mathcal M$
is invertible.

\item \textbf{(Uniform transported Hessian convergence on $\exp_{\Omega_\star}(\overline{B}(\Omega_\star,\epsilon'))$).}
For $\Omega\in \exp_{\Omega_\star}\!\big(\overline{B}(\Omega_\star,\epsilon')\big)$, define
\[
\widetilde H_m(\Omega)
\coloneqq 
\mathcal P_{\Omega\to \Omega_\star}\circ \mathrm{Hess}\,\hat F_m(\Omega)\circ \mathcal P_{\Omega_\star\to \Omega},
\qquad
\widetilde H(\Omega)
\coloneqq 
\mathcal P_{\Omega\to \Omega_\star}\circ \mathrm{Hess}\,F(\Omega)\circ \mathcal P_{\Omega_\star\to \Omega}.
\]
Then
\[
\sup_{\Omega\in \exp_{\Omega_\star}(\overline{B}(\Omega_\star,\epsilon'))}
\big\|\widetilde H_m(\Omega)-\widetilde H(\Omega)\big\|\ \xrightarrow{\ \mathbb P\ }\ 0.
\]
\end{enumerate}
\end{assmp}

Define the covariance operator $\Sigma_\star$ on $T_{\Omega_\star}\mathcal M$ by
\[
\langle u,\ \Sigma_\star v\rangle_{\Omega_\star}
=
\mathbb E\Big[\langle u,\ \mathrm{grad}\,f(\Omega_\star;Z)\rangle_{\Omega_\star}\,
               \langle v,\ \mathrm{grad}\,f(\Omega_\star;Z)\rangle_{\Omega_\star}\Big],
\qquad u,v\in T_{\Omega_\star}\mathcal M.
\]
\begin{proposition}[Riemannian $M$-estimator consistency]
\label{prop:riem_consistency}
Assume Assumption~\ref{ass:riem_mestimation}\,(i)--(ii). Then
\[
\hat\Omega_m \ \xrightarrow{\ \mathbb P\ }\ \Omega_\star.
\]
\end{proposition}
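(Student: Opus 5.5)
The plan is to transcribe the proof of \Cref{prop:euclid_consistency} to the manifold, replacing the Euclidean norm $\|\theta-\theta_\star\|$ with the Riemannian distance $d_{\mathcal M}(\Omega,\Omega_\star)$ and the ball $\overline{B}(\theta_\star,r)$ with the compact set $K\coloneqq \exp_{\Omega_\star}\big(\overline{B}(\Omega_\star,\epsilon')\big)$ from \Cref{ass:riem_mestimation}\,(ii).

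Fix $\epsilon>0$ and set
\[
\Delta_\epsilon\coloneqq \inf_{\Omega\in\mathcal M:\ d_{\mathcal M}(\Omega,\Omega_\star)\ge\epsilon}\bigl(F(\Omega)-F(\Omega_\star)\bigr),
\]
which is strictly positive by \Cref{ass:riem_mestimation}\,(i). Next define the event
\[
E_m\coloneqq\Bigl\{\sup_{\Omega\in K}|\hat F_m(\Omega)-F(\Omega)|\le \Delta_\epsilon/3\Bigr\}\cap\{\hat\Omega_m\in K\}.
\]
By \Cref{ass:riem_mestimation}\,(ii), both events in this intersection have probability tending to one, so $\Pr(E_m)\to1$.

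The key step uses $\Omega_\star\in K$, which holds because $\Omega_\star=\exp_{\Omega_\star}(0)$. On $E_m$, every $\Omega\in K$ with $d_{\mathcal M}(\Omega,\Omega_\star)\ge\epsilon$ satisfies
\[
\hat F_m(\Omega)\ge F(\Omega_\star)+\tfrac{2}{3}\Delta_\epsilon>F(\Omega_\star)+\tfrac13\Delta_\epsilon\ge \hat F_m(\Omega_\star).
\]
Since $\hat\Omega_m$ minimizes $\hat F_m$ over $\mathcal M$, we have $\hat F_m(\hat\Omega_m)\le\hat F_m(\Omega_\star)$. Because $\hat\Omega_m\in K$ on $E_m$, the display above forces $d_{\mathcal M}(\hat\Omega_m,\Omega_\star)<\epsilon$. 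Hence $\Pr\big(d_{\mathcal M}(\hat\Omega_m,\Omega_\star)\ge\epsilon\big)\le\Pr(E_m^c)\to0$, which is convergence in probability in the Riemannian distance.

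The only point needing care is the meaning of ``$\xrightarrow{\mathbb P}$'' on $\mathcal M$. I would take it with respect to $d_{\mathcal M}$, which metrizes the manifold topology. If the later CLT requires it, I would also record the equivalent statement $\log_{\Omega_\star}(\hat\Omega_m)\to0$ in $T_{\Omega_\star}\mathcal M$. This follows because, on $E_m$, $\hat\Omega_m$ lies in the normal neighborhood $U$, where $\log_{\Omega_\star}$ is a continuous chart with $\log_{\Omega_\star}(\Omega_\star)=0$. In fact, for small radii inside a normal ball, $\|\log_{\Omega_\star}\Omega\|_{\Omega_\star}=d_{\mathcal M}(\Omega,\Omega_\star)$.

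I expect no genuine obstacle: measurability of $E_m$ is inherited from the assumptions exactly as in the Euclidean argument, and no convexity is needed.
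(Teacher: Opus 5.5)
Your proposal is correct and follows the paper's proof essentially verbatim: the same separation gap $\Delta_\epsilon$, the same event $E_m$ combining the uniform LLN with localization to $\exp_{\Omega_\star}(\overline{B}(\Omega_\star,\epsilon'))$, and the same $\Delta_\epsilon/3$ comparison. You also state explicitly that $\Omega_\star=\exp_{\Omega_\star}(0)$ lies in that compact set, which is needed to bound $\hat F_m(\Omega_\star)$ and which the paper uses only implicitly.
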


\begin{proof}
Fix $\epsilon>0$ and define the separation gap
\[
\Delta_\epsilon \coloneqq 
\inf_{\Omega\in\mathcal M:\ d_{\mathcal M}(\Omega,\Omega_\star)\ge \epsilon}
\bigl(F(\Omega)-F(\Omega_\star)\bigr),
\]
so that $\Delta_\epsilon>0$ by Assumption~\ref{ass:riem_mestimation}\,(i). Consider the event
\[
E_m \coloneqq 
\left\{
\sup_{\Omega\in \exp_{\Omega_\star}(\overline{B}(\Omega_\star,\epsilon'))}
\bigl|\hat F_m(\Omega)-F(\Omega)\bigr|
\le \frac{\Delta_\epsilon}{3}
\right\}
\cap 
\left\{\hat\Omega_m\in \exp_{\Omega_\star}(\overline{B}(\Omega_\star,\epsilon'))\right\}.
\]
By the uniform law of large numbers on $\exp_{\Omega_\star}(\overline{B}(\Omega_\star,\epsilon'))$ and the localization in
Assumption~\ref{ass:riem_mestimation}\,(ii), we have $\mathbb P(E_m)\to 1$.

On the event $E_m$, for any $\Omega\in \exp_{\Omega_\star}(\overline{B}(\Omega_\star,\epsilon'))$ with
$d_{\mathcal M}(\Omega,\Omega_\star)\ge \epsilon$ we have
\[
\hat F_m(\Omega)
\;\ge\;
F(\Omega) - \frac{\Delta_\epsilon}{3}
\;\ge\;
F(\Omega_\star) + \Delta_\epsilon - \frac{\Delta_\epsilon}{3}
\;=\;
F(\Omega_\star) + \frac{2\Delta_\epsilon}{3},
\]
while
\[
\hat F_m(\Omega_\star)
\;\le\;
F(\Omega_\star) + \frac{\Delta_\epsilon}{3}.
\]
Hence, on $E_m$,
\[
\inf_{\Omega\in \exp_{\Omega_\star}(\overline{B}(\Omega_\star,\epsilon')):\ d_{\mathcal M}(\Omega,\Omega_\star)\ge \epsilon}
\hat F_m(\Omega)
\;>\;
\hat F_m(\Omega_\star).
\]
In particular, any minimizer of $\hat F_m$ over $\exp_{\Omega_\star}(\overline{B}(\Omega_\star,\epsilon'))$ must lie in the metric ball
\[
B_{d_{\mathcal M}}(\Omega_\star,\epsilon)
\coloneqq
\{\Omega\in\mathcal M:\ d_{\mathcal M}(\Omega,\Omega_\star)<\epsilon\}.
\]
Since $\hat\Omega_m\in \exp_{\Omega_\star}(\overline{B}(\Omega_\star,\epsilon'))$ on $E_m$ and $\hat\Omega_m$ is (by definition) an empirical minimizer,
we conclude that $d_{\mathcal M}(\hat\Omega_m,\Omega_\star)<\epsilon$ on $E_m$. Therefore,
\[
\mathbb P\big(d_{\mathcal M}(\hat\Omega_m,\Omega_\star)\ge \epsilon\big)
\;\le\;
\mathbb P(E_m^c)\ \to\ 0,
\]
which proves $\hat\Omega_m\to \Omega_\star$ in probability.
\end{proof}

\begin{theorem}[Riemannian $M$-estimator CLT]
\label{thm:riem_mestimator_clt}
Under \Cref{ass:riem_mestimation}, we have
\[
\sqrt m\,\log_{\Omega_\star}(\hat\Omega_m)
\ \distconv\
\mathcal N\!\big(0,\ H_\star^{-1}\Sigma_\star H_\star^{-1}\big),
\]
where
\[
H_\star \coloneqq \mathrm{Hess}\,F(\Omega_\star):T_{\Omega_\star}\mathcal M\to T_{\Omega_\star}\mathcal M
\]
and
\[
\Sigma_\star \coloneqq \mathrm{Var}\big(\mathrm{grad}\,f(\Omega_\star;Z)\big)
=
\mathbb E\Big[\mathrm{grad}\,f(\Omega_\star;Z)\,\mathrm{grad}\,f(\Omega_\star;Z)^\top\Big],
\]
with $\mathbb E[\mathrm{grad}\,f(\Omega_\star;Z)]=\mathrm{grad}\,F(\Omega_\star)=0$.
\end{theorem}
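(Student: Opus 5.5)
The plan is to mirror the proof of \Cref{thm:euclid_mestimator_clt}, working in normal coordinates at $\Omega_\star$ and using parallel transport to keep all vectors in the single space $T_{\Omega_\star}\mathcal M$.

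\emph{Step 1: localization.} By \Cref{prop:riem_consistency}, $\hat\Omega_m\to\Omega_\star$ in probability. Together with the localization in \Cref{ass:riem_mestimation}\,(ii), with probability tending to one $\hat\Omega_m$ lies in the interior of the compact normal ball $\exp_{\Omega_\star}(\overline B(\Omega_\star,\epsilon'))\subseteq U$. On this event $v_m\coloneqq\log_{\Omega_\star}(\hat\Omega_m)$ is well defined, and $\|v_m\|_{\Omega_\star}=d_{\mathcal M}(\hat\Omega_m,\Omega_\star)\to0$ in probability. Since $\hat\Omega_m$ is an interior minimizer of the $C^2$ function $\hat F_m$, the first-order condition gives $\mathrm{grad}\,\hat F_m(\hat\Omega_m)=0$.

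\emph{Step 2: linearization.} The remainder in \eqref{eq:riem_taylor_grad_emp} is random, so I will not invoke it directly. Instead I will use an exact integral form along the geodesic $\gamma(t)=\exp_{\Omega_\star}(tv_m)$. Set $g(t)\coloneqq\mathcal P_{\gamma(t)\to\Omega_\star}\mathrm{grad}\,\hat F_m(\gamma(t))\in T_{\Omega_\star}\mathcal M$. Metric compatibility and the definition of the Hessian give $g'(t)=\mathcal P_{\gamma(t)\to\Omega_\star}\mathrm{Hess}\,\hat F_m(\gamma(t))[\dot\gamma(t)]$. Because $\dot\gamma(t)=\mathcal P_{\Omega_\star\to\gamma(t)}v_m$, this equals $\widetilde H_m(\gamma(t))v_m$. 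Integrating from $0$ to $1$ yields
\[
0=\mathrm{grad}\,\hat F_m(\Omega_\star)+\bar H_m v_m,\qquad \bar H_m\coloneqq\int_0^1\widetilde H_m(\gamma(t))\,\rmd t .
\]
This is the Riemannian counterpart of \eqref{eq:euclid_score_taylor}, and it needs no mean-value point.

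\emph{Step 3: Hessian convergence.} I expect this to be the main obstacle, since it is where the transported-Hessian machinery must be controlled. By \Cref{ass:riem_mestimation}\,(v), $\sup_t\|\widetilde H_m(\gamma(t))-\widetilde H(\gamma(t))\|\to0$ in probability. Next I need continuity of $\Omega\mapsto\widetilde H(\Omega)$ at $\Omega_\star$, with $\widetilde H(\Omega_\star)=H_\star$. This follows because $F$ is $C^2$ on $U$ (dominated differentiation under the local $C^2$ assumption (iii)), and parallel transport along radial geodesics depends continuously on the endpoint in a normal neighborhood. Hence $\bar H_m\to H_\star$ in probability. Since $H_\star$ is invertible by (iv), $\bar H_m$ is invertible with probability tending to one and $\bar H_m^{-1}\to H_\star^{-1}$. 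If the transport continuity is delicate, I would work in the pullback $\tilde F(v)=F(\exp_{\Omega_\star}v)$, which is a Euclidean $C^2$ function. There the Euclidean Hessian at $0$ equals $H_\star$ because $\exp_{\Omega_\star}$ has vanishing second fundamental form at $0$ along radial directions.

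\emph{Step 4: CLT and Slutsky.} The vectors $\mathrm{grad}\,f(\Omega_\star;Z_i)$ are i.i.d. in $T_{\Omega_\star}\mathcal M$ with mean $\mathrm{grad}\,F(\Omega_\star)=0$; differentiation under the integral is justified by (iii) and by $\Omega_\star$ being an interior minimizer. Their second moment is finite by (iii). The multivariate CLT therefore gives $\sqrt m\,\mathrm{grad}\,\hat F_m(\Omega_\star)\distconv\mathcal N(0,\Sigma_\star)$. Writing $\sqrt m\,v_m=-\bar H_m^{-1}\sqrt m\,\mathrm{grad}\,\hat F_m(\Omega_\star)$ on the high-probability event and applying Slutsky, the limit is $-H_\star^{-1}G$ with $G\sim\mathcal N(0,\Sigma_\star)$. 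Since $H_\star$ is self-adjoint, this has law $\mathcal N(0,H_\star^{-1}\Sigma_\star H_\star^{-1})$.
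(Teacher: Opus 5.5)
Your proposal is correct. It follows the paper's skeleton (consistency and localization, the first-order condition, linearizing the transported gradient in $T_{\Omega_\star}\mathcal M$, the score CLT, Slutsky), but it uses a different linearization device.

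The paper expands at $\Omega_\star$ via \eqref{eq:riem_taylor_grad_emp}, writing $0=\mathrm{grad}\,\hat F_m(\Omega_\star)+\mathrm{Hess}\,\hat F_m(\Omega_\star)[v_m]+r_m$. It uses \Cref{ass:riem_mestimation}\,(v) only at the single point $\Omega=\Omega_\star$. It must then assert $\|r_m\|=o_{\mathbb P}(\|v_m\|)$, read off $\sqrt m\,\|v_m\|=O_{\mathbb P}(1)$ from a linearization that still contains $r_m$, and conclude $\sqrt m\,r_m\to0$.

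Your exact identity is $0=\mathrm{grad}\,\hat F_m(\Omega_\star)+\bar H_m v_m$, with $\bar H_m=\int_0^1\widetilde H_m(\gamma(t))\,\rmd t$. It is the proper Riemannian analogue of the Euclidean step \eqref{eq:euclid_score_taylor}. Being in integral form, it avoids the single mean-value point, which does not exist in general for a vector-valued map. There is no remainder and no separate rate argument, and Slutsky applies directly to $\sqrt m\,v_m=-\bar H_m^{-1}\sqrt m\,\mathrm{grad}\,\hat F_m(\Omega_\star)$.

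What you pay is using (v) uniformly along the whole geodesic, plus continuity of $\widetilde H$ at $\Omega_\star$. The paper's remainder claim silently needs exactly this. Comparing the two identities gives $r_m=(\bar H_m-\widetilde H_m(\Omega_\star))v_m$. The $o(\|v\|)$ remainder in \eqref{eq:riem_taylor_grad_emp} depends on the realization, and it becomes uniform in $m$ only under this kind of equicontinuity. So your route makes explicit where the uniformity in (v) enters.

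One justification in your Step 3 needs replacing. Assumption (iii) gives no integrable envelope for the Hessian, so ``$F$ is $C^2$ by dominated differentiation'' is not available. Instead, get continuity of $\widetilde H$ on $\exp_{\Omega_\star}(\overline B(\Omega_\star,\epsilon'))$ from (v) itself:
\begin{itemize}
\item Along a subsequence, the uniform convergence in (v) holds almost surely.
\item For almost every sample, $\widetilde H_m$ is continuous on that set, since each loss is $C^2$ on $U$ and radial parallel transport is continuous in the endpoint.
\item Hence $\widetilde H$ is a uniform limit of continuous maps, with $\widetilde H(\Omega_\star)=H_\star$.
\end{itemize}
Similarly, (iii) alone does not license differentiating under the expectation to get $\mathbb E[\mathrm{grad}\,f(\Omega_\star;Z)]=0$. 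However, the statement and the paper's proof both take this identity as given.
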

\begin{remark}
    In~\Cref{thm:riem_mestimator_clt}, the logarithm map $\log_{\Omega_\star}$ is defined on a normal neighborhood of $\Omega_\star$. While $\hat\Omega_m$ need not belong to this neighborhood for each finite $m$, consistency ensures that $\hat\Omega_m \to \Omega_\star$ in probability. Consequently, $\log_{\Omega_\star}(\hat\Omega_m)$ is well-defined with probability tending to one.
\end{remark}
\begin{proof}
By \Cref{prop:riem_consistency} and \Cref{ass:riem_mestimation}\,(i)--(ii), we have
\[
\hat\Omega_m \xrightarrow{\ \mathbb P\ } \Omega_\star.
\]
Fix a normal neighborhood $U=\exp_{\Omega_\star}(B_{\Omega_\star}(\epsilon_0))$ and define $A_m\coloneqq\{\hat\Omega_m\in U\}$. Since $\hat\Omega_m \xrightarrow{\mathbb P} \Omega_\star$ and $U$ is an open neighborhood of $\Omega_\star$, we have $\mathbb P(A_m)\to 1$. In~\Cref{def:normal-neigh}, we defined the logarithm map $\log_{\Omega_\star}:U\to T_{\Omega_\star}\mathcal M$ as the unique inverse of the exponential map $\exp_{\Omega_\star}$. Define
\[
v_m :=
\begin{cases}
\log_{\Omega_\star}(\hat{\Omega}_m)\in T_{\Omega_\star}\mathcal M, & \text{on } A_m,\\
0 \in T_{\Omega_\star}\mathcal M, & \text{on } A_m^c.
\end{cases}
\]
Then, $v_m$ is well-defined globally and satisfies $v_m\to 0$ in probability. 

On the event $A_m$, the first-order condition for an empirical minimizer gives
\[
\mathrm{grad}\,\hat F_m(\hat\Omega_m)=0.
\]
Applying the transported gradient expansion \eqref{eq:riem_taylor_grad_emp} with $\Omega_1=\Omega_\star$,
$\Omega_2=\hat\Omega_m=\exp_{\Omega_\star}(v_m)$, and $f=\hat F_m$, we obtain
\[
0
=
\mathcal P_{\hat\Omega_m\to \Omega_\star}\big(\mathrm{grad}\,\hat F_m(\hat\Omega_m)\big)
=
\mathrm{grad}\,\hat F_m(\Omega_\star)
+\mathrm{Hess}\,\hat F_m(\Omega_\star)[v_m]
+r_m,
\]
where $r_m$ denotes the Taylor remainder on $A_m$, and we define $r_m=0$ on $A_m^c$. Moreover,
\[
\|r_m\|_{\Omega_\star}\mathbf{1}_{A_m}
=
o_{\mathbb P}\big(\|v_m\|_{\Omega_\star}\big).
\]
Rearranging yields
\begin{equation}
\label{eq:riem_linearization}
\sqrt m\,v_m
=
-\Big(\mathrm{Hess}\,\hat F_m(\Omega_\star)\Big)^{-1}\,\sqrt m\,\mathrm{grad}\,\hat F_m(\Omega_\star)
\mathbf{1}_{A_m}-\Big(\mathrm{Hess}\,\hat F_m(\Omega_\star)\Big)^{-1}\,\sqrt m\,r_m\mathbf{1}_{A_m},
\end{equation}
on the event $\{ \mathrm{Hess}\,\hat F_m(\Omega_\star) \text{ is invertible}\}$.

By \Cref{ass:riem_mestimation}\,(v) applied at $\Omega=\Omega_\star$ (so that $\mathcal P_{\Omega_\star\to \Omega_\star}=\mathrm{Id}$),
\[
\big\|\mathrm{Hess}\,\hat F_m(\Omega_\star)-\mathrm{Hess}\,F(\Omega_\star)\big\|
=
\big\|\widetilde H_m(\Omega_\star)-\widetilde H(\Omega_\star)\big\|
\ \xrightarrow{\ \mathbb P\ }\ 0.
\]
Hence,
\begin{equation}
\label{eq:riem_hess_to_Hstar}
\mathrm{Hess}\,\hat F_m(\Omega_\star)\ \xrightarrow{\ \mathbb P\ }\ H_\star.
\end{equation}
By \Cref{ass:riem_mestimation}\,(iv), $H_\star$ is invertible, and by continuity of inversion we have
\begin{equation}
\label{eq:riem_inv_hess_to_invH}
\Big(\mathrm{Hess}\,\hat F_m(\Omega_\star)\Big)^{-1}\ \xrightarrow{\ \mathbb P\ }\ H_\star^{-1}.
\end{equation}
In particular, $\mathrm{Hess}\,\hat F_m(\Omega_\star)$ is invertible with probability tending to one.

By definition,
\[
\mathrm{grad}\,\hat F_m(\Omega_\star)
=
\frac1m\sum_{i=1}^m \mathrm{grad}\,f(\Omega_\star;Z_i).
\]
Since $\Omega_\star$ is a minimizer of $F$ and $F$ is differentiable at $\Omega_\star$, we have
$\mathbb E[\mathrm{grad}\,f(\Omega_\star;Z)]=\mathrm{grad}\,F(\Omega_\star)=0$.
By \Cref{ass:riem_mestimation}\,(iii), $\mathbb E[\|\mathrm{grad}\,f(\Omega_\star;Z)\|_{\Omega_\star}^2]<\infty$, so the multivariate CLT yields
\begin{equation}
\label{eq:riem_score_clt}
\sqrt m\,\mathrm{grad}\,\hat F_m(\Omega_\star)\ \distconv\ \mathcal N(0,\Sigma_\star).
\end{equation}

Since $\norm{r_m}_{\Omega_\star} \mathbf{1}_{A_m}=o\big(\|v_m\|_{\Omega_\star}\big)$ in probability on the event $A_m$, we have
\[
\frac{\|r_m\|_{\Omega_\star}}{\|v_m\|_{\Omega_\star}} \mathbf{1}_{A_m} \ \xrightarrow{\ \mathbb P\ }\ 0,
\]
with the convention that the ratio is set to $0$ on the event $\{v_m=0\}$.
Moreover, from \eqref{eq:riem_linearization} and \eqref{eq:riem_inv_hess_to_invH} we have
$\big(\mathrm{Hess}\,\hat F_m(\Omega_\star)\big)^{-1}=O_{\mathbb P}(1)$ and
$\sqrt m\,\mathrm{grad}\,\hat F_m(\Omega_\star)=O_{\mathbb P}(1)$, hence
\[
\sqrt m\,\|v_m\|_{\Omega_\star}=O_{\mathbb P}(1).
\]
Consequently,
\[
\sqrt m\,\|r_m\|_{\Omega_\star} \mathbf{1}_{A_m}
=
\big(\sqrt m\,\|v_m\|_{\Omega_\star}\big)\cdot
\frac{\|r_m\|_{\Omega_\star}}{\|v_m\|_{\Omega_\star}}\mathbf{1}_{A_m}
\ \xrightarrow{\ \mathbb P\ }\ 0,
\]
and therefore
\begin{equation}
\label{eq:riem_remainder_negligible}
\sqrt m\,r_m\mathbf{1}_{A_m} \ \xrightarrow{\ \mathbb P\ }\ 0.
\end{equation}

Combining \eqref{eq:riem_linearization}, \eqref{eq:riem_inv_hess_to_invH}, \eqref{eq:riem_score_clt}, and \eqref{eq:riem_remainder_negligible},
and applying Slutsky's theorem, using that $\mathbf{1}_{A_m}\xrightarrow{\ \mathbb P\ }\ \mathbf{1}$, we obtain
\[
\sqrt m\,v_m
\ \distconv\
-\,H_\star^{-1}\,G,
\qquad
G\sim \mathcal N(0,\Sigma_\star).
\]
Finally, on $A_m$, we have $v_m=\log_{\Omega_\star}(\hat\Omega_m)$, while $\mathbb P(A_m)\to 1$. Hence, $v_m$ and $\log_{\Omega_\star}(\hat\Omega_m)$ agree with probability tending to one, so they have the same asymptotic distribution. Since $-H_\star^{-1}G \sim \mathcal N(0,H_\star^{-1}\Sigma_\star H_\star^{-1})$, this proves the claim.
\end{proof}

\paragraph{Remark (Euclidean case as a special case).}
If $\mathcal M=\R^p$ with the Euclidean metric, then $\exp_{\theta_\star}(v)=\theta_\star+v$,
$\log_{\theta_\star}(\theta)=\theta-\theta_\star$, geodesics are line segments, and parallel transport is the identity.
In this case $\widetilde H_n(\theta)=\nabla^2\hat L_n(\theta)$.

\paragraph{Relation to \cite{brunel2023geodesically}.}
\Cref{thm:riem_mestimator_clt} is closely related to and inspired by the asymptotic-normality result of \cite{brunel2023geodesically} for geodesically convex $M$-estimators. \citeauthor{brunel2023geodesically} proves a tangent-space Gaussian limit on a complete Riemannian manifold under the following conditions: the sample losses are geodesically convex, the population objective has a unique minimizer, the population objective is twice differentiable at that minimizer with positive-definite Hessian, and the
loss admits measurable subgradients satisfying a local second-moment condition. 

The key difference is the mechanism used to obtain the local quadratic expansion underlying asymptotic normality. In \citeauthor{brunel2023geodesically}'s theorem, geodesic convexity is the structural assumption that allows for nonsmooth empirical objectives. The proof uses subgradient inequalities and convexity to compare the localized empirical objective in normal coordinates with a random quadratic approximation. Thus, geodesic convexity replaces the need for differentiability of the empirical criterion, a Taylor expansion of the empirical first-order condition, and uniform convergence of empirical Hessians. It does not however replace population-level second-order differentiability: \citeauthor{brunel2023geodesically} still assumes that the population objective is twice differentiable at the minimizer with positive-definite Hessian.

\Cref{thm:riem_mestimator_clt} takes a complementary route. We do not assume geodesic convexity of $L_{\mathrm{pre}}$.
Instead, after passing to the descriptor manifold that quotients out the symmetry, we directly impose the local regularity conditions needed for a smooth Riemannian $M$-estimation CLT. 
These conditions (cf.~\Cref{ass:riem_mestimation}) include local identification and separation of the population minimizer, localization and a uniform law of large numbers on a compact normal neighborhood, local $C^2$ smoothness of the sample loss, a nonsingular population Riemannian Hessian, score moment control, and uniform convergence of transported empirical Hessians. Under these assumptions, \Cref{thm:riem_mestimator_clt} follows by Taylor expanding the transported first-order condition in $T_{\Omega_\star}\mathcal M$.

Thus, \Cref{thm:riem_mestimator_clt} can be viewed as abstracting the local smooth assumptions needed for asymptotic normality away from the particular sufficient condition of geodesic convexity. 
When $\ell_\mathrm{pre}$ is $C^2$ smooth, geodesic convexity together with additional empirical Hessian regularity verifies Assumption~\ref{ass:riem_mestimation}, and the two approaches yield the same sandwich-form tangent-space CLT. However, \citeauthor{brunel2023geodesically}'s assumptions do not necessarily imply Assumption~\ref{ass:riem_mestimation}, since \citeauthor{brunel2023geodesically}'s theorem allows nonsmooth empirical losses through measurable subgradients. Conversely,~\Cref{thm:riem_mestimator_clt} does not require geodesic convexity: any smooth descriptor-manifold pre-training problem satisfying Assumption~\ref{ass:riem_mestimation}, whether geodesically convex or not, falls under \Cref{thm:riem_mestimator_clt}. Our formulation is necessary for the quotient-manifold pre-training problems considered here, where the natural objective is locally smooth after passing to descriptor coordinates, but need not be globally geodesically convex.

\section{Symmetry, Identifiability, and Quotient Geometry}
\label{app:identifiability}

This appendix formalizes how symmetry-induced non-identifiability in pretraining is handled in our analysis.

\subsection{Quotients by group actions and local descriptor charts}
\label{app:quotients}

\paragraph{Smooth group actions.}
Let $G$ be a Lie group acting smoothly on a smooth manifold $\mathcal A$.
We write the action as $(g,a)\mapsto g\cdot a$.
For $a\in\mathcal A$, the orbit is $[a]=\{g\cdot a:g\in G\}$ and the stabilizer (isotropy subgroup) is
$G_a=\{g\in G:g\cdot a=a\}$.
The orbit space (quotient set) is $\mathcal A/G=\{[a]:a\in\mathcal A\}$ with the quotient topology, and
the canonical projection is denoted $\pi:\mathcal A\to\mathcal A/G$, $\pi(a)=[a]$.

\paragraph{Regular neighborhoods and smooth quotients.}
A smooth action is called \emph{free} if $G_a=\{e\}$ for all $a\in\mathcal A$.
It is called \emph{proper} if the map $G\times\mathcal A\to\mathcal A\times\mathcal A$,
$(g,a)\mapsto(a,g\cdot a)$ is proper.
A sufficient condition for properness is that $G$ is compact (e.g., an orthogonal group) or finite
(e.g., a permutation group).

If the action of $G$ on $\mathcal A$ is smooth, free, and proper on an open set $\mathcal U\subseteq\mathcal A$,
then the orbit space
\[
\mathcal B \coloneqq \mathcal U/G
\]
admits a unique smooth manifold structure such that the projection
$\pi:\mathcal U\to\mathcal B$ is a smooth submersion.
In this case, $\pi:\mathcal U\to\mathcal B$ is a principal $G$-bundle.
If the action is proper but not free, one may restrict attention to a regular stratum on which the orbit type is constant;
on such a neighborhood the quotient is again a smooth manifold.
This is the regime implicitly used in the main text.

\paragraph{Local quotient charts via invariant descriptors.}
Rather than working directly with the abstract quotient manifold $\mathcal B$,
we represent a local neighborhood of $\mathcal B$ using an orbit-invariant \emph{descriptor map}.
The following assumption makes this precise.

\begin{assmp}[Local quotient chart via an invariant descriptor]
\label{ass:local-quotient-chart}
Let $\mathcal U\subseteq\mathcal A$ be an open set on which the action of $G$ is smooth, free, and proper,
and let $\mathcal B=\mathcal U/G$ with projection $\pi:\mathcal U\to\mathcal B$.
There exists a map $D:\mathcal U\to\mathbb R^q$ such that:
\begin{enumerate}[label=(\roman*)]
\item \textbf{(Orbit-constancy).}
For all $a\in\mathcal U$ and $g\in G$,
\[
D(g\cdot a)=D(a).
\]
\item \textbf{(Local chart for the quotient).}
There exist open neighborhoods $V\subseteq\mathcal B$ and $W\subseteq\mathbb R^q$,
and a $C^k$ diffeomorphism $\bar D:V\to W$,
such that on $\pi^{-1}(V)$ we have
\[
D=\bar D\circ \pi .
\]
\end{enumerate}
\end{assmp}

\paragraph{Consequences.}
Under Assumption~\ref{ass:local-quotient-chart}, the following hold.
\begin{enumerate}[label=(\roman*)]
\item \textbf{Local orbit separation.}
For $a,a'\in\pi^{-1}(V)$,
\[
D(a)=D(a') \quad\Longleftrightarrow\quad \pi(a)=\pi(a') \quad\Longleftrightarrow\quad a'\in[a].
\]
\item \textbf{Descriptor manifold.}
We identify the local quotient neighborhood $V\subseteq\mathcal B$ with its descriptor coordinates
$W\subseteq\mathbb R^q$ via $\bar D$, and write
\[
\mathcal M \coloneqq W .
\]
Thus $\mathcal M$ is a smooth manifold (indeed, an open subset of $\mathbb R^q$) equipped with the induced Euclidean metric.
\item \textbf{Existence of smooth lifts.}
Since $\pi:\mathcal U\to\mathcal B$ is a principal bundle, there exists a smooth local section
$\sigma:V\to\mathcal U$.
Defining
\[
s \coloneqq \sigma\circ \bar D^{-1} : \mathcal M \to \mathcal U,
\]
we obtain a $C^k$ \emph{lift} satisfying $D(s(M))=M$ for all $M\in\mathcal M$.
\item \textbf{Well-defined induced objectives.}
Any $G$-invariant function on $\mathcal U$ induces a well-defined function on $\mathcal M$
by evaluation at any representative in $D^{-1}(M)\cap\pi^{-1}(V)$.
\end{enumerate}

\subsection{Vector-bundle viewpoint: quotient-level features and the coordinate feature map $\phi(x,M)$}
\label{app:riem_bundle}

This subsection formalizes how equivariant representative-level features induce intrinsic
quotient-level features, and how coordinate feature maps arise from choosing local lifts.

\subsubsection{Setup: principal bundle and equivariant features}

Let $\mathcal U\subseteq\R^{q_0}$ be an open set on which a Lie group $G$ acts smoothly, freely, and properly,
and let $\mathcal B=\mathcal U/G$ with projection $\pi:\mathcal U\to\mathcal B$.
Fix a feature dimension $p\in\N_+$ and let $\rho:G\to O(p)$ be a smooth group homomorphism.
Let $\psi:\mathcal X\times\mathcal U\to\R^p$ be measurable in $x$ and $C^k$ in its second argument.

We assume the orthogonal equivariance condition
\[
\psi(x,g\cdot A)=\rho(g)\,\psi(x,A),
\qquad
x\in\mathcal X,\ A\in\mathcal U,\ g\in G.
\]

\subsubsection{Associated vector bundle and intrinsic feature section}

Define an equivalence relation on $\mathcal U\times\R^p$ by
\[
(A,v)\sim(g\cdot A,\rho(g)v),\qquad g\in G.
\]
The associated rank-$p$ vector bundle over $\mathcal B$ is
\[
\mathcal E \coloneqq (\mathcal U\times\R^p)/\sim,
\]
with projection $\pi_{\mathcal E}([A,v])=[A]$.
The Euclidean inner product on $\R^p$ descends to a well-defined fiberwise inner product on $\mathcal E$.

For each $x\in\mathcal X$, define the intrinsic feature section
\[
\Phi_x:\mathcal B\to\mathcal E,
\qquad
\Phi_x([A])\coloneqq [A,\psi(x,A)].
\]

\begin{proposition}[Well-definedness and smoothness]
\label{prop:bundle-feature}
Under the equivariance assumption above, $\Phi_x$ is well-defined.
If $A\mapsto\psi(x,A)$ is $C^k$ on $\mathcal U$, then $\Phi_x$ is a $C^k$ section of $\mathcal E$.
\end{proposition}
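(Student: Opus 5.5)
Two things need to be checked: that $\Phi_x$ does not depend on the representative $A$, and that it is $C^k$ as a section of $\mathcal E$.

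\textbf{Well-definedness.} Suppose $[A]=[A']$, so that $A'=g\cdot A$ for some $g\in G$. Orthogonal equivariance gives $\psi(x,A')=\rho(g)\psi(x,A)$. Hence
\[
[A',\psi(x,A')]=[g\cdot A,\rho(g)\psi(x,A)]=[A,\psi(x,A)]
\]
by the definition of $\sim$. Therefore $\Phi_x([A])$ is independent of the representative. Moreover $\pi_{\mathcal E}(\Phi_x([A]))=[A]$, so $\Phi_x$ is a set-theoretic section.

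\textbf{Smoothness.} The plan is to use the standard local trivializations of an associated bundle. Since $\pi:\mathcal U\to\mathcal B$ is a principal $G$-bundle (smooth, free, proper action), every $b_0\in\mathcal B$ has a neighborhood $V$ carrying a smooth local section $\sigma:V\to\mathcal U$.

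\emph{Step 1: trivialize $\mathcal E$ over $V$.} Define
\[
\tau_\sigma:V\times\R^p\to\pi_{\mathcal E}^{-1}(V),\qquad \tau_\sigma(b,v)=[\sigma(b),v].
\]
This map is a bijection. For surjectivity, any $[A,w]$ with $[A]=b$ can be written with $A=g\cdot\sigma(b)$; then $[A,w]=[\sigma(b),\rho(g)^{-1}w]$. For injectivity, freeness makes $g$ unique, so the fiber coordinate $\rho(g)^{-1}w$ is uniquely determined. These charts $\tau_\sigma$ define the smooth vector bundle structure on $\mathcal E$.

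\emph{Step 2: check the transition maps.} For two local sections $\sigma,\sigma'$ we have $\sigma'(b)=h(b)\cdot\sigma(b)$. The map $h:V\cap V'\to G$ is smooth, since for a principal bundle the map $(A,g\cdot A)\mapsto g$ is smooth on $\mathcal U\times_{\mathcal B}\mathcal U$. The transition maps are then $(b,v)\mapsto(b,\rho(h(b))^{-1}v)$, which are smooth because $\rho$ is smooth.

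\emph{Step 3: compute $\Phi_x$ in the trivialization.} In the chart $\tau_\sigma$,
\[
\tau_\sigma^{-1}\circ\Phi_x(b)=\bigl(b,\psi(x,\sigma(b))\bigr).
\]
The map $b\mapsto\psi(x,\sigma(b))$ is a composition of the smooth map $\sigma$ with the $C^k$ map $\psi(x,\cdot)$, hence it is $C^k$. So $\Phi_x$ is $C^k$ in every local trivialization, and therefore $C^k$ as a section.

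\textbf{Consistency with the main text.} The coordinate feature map $\phi(x,M)=\psi(x,s(M))$ with $s=\sigma\circ\bar D^{-1}$ is exactly the local representative of $\Phi_x$ in the trivialization $\tau_\sigma$, composed with the descriptor chart $\bar D^{-1}$. Changing the lift multiplies $\phi$ by the orthogonal matrix $\rho(h)^{-1}$, which leaves $\mathcal H_\Omega$ unchanged.

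The main obstacle is the smoothness of the transition function $h$, since this is what makes $\mathcal E$ a genuine $C^k$ bundle. It can be cited from the standard principal-bundle theory in \cite{lee2018introduction}. Alternatively, it follows from the slice theorem: locally $\mathcal U\cong V\times G$ via $(b,g)\mapsto g\cdot\sigma(b)$, and $h$ is obtained by reading off the $G$-component in this product chart.
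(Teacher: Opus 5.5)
Your proof is correct and takes the same route as the paper. Well-definedness comes directly from the equivariance identity $[g\cdot A,\rho(g)\psi(x,A)]=[A,\psi(x,A)]$, and smoothness comes from writing $\Phi_x$ in the local trivializations $\tau_\sigma(b,v)=[\sigma(b),v]$ induced by smooth local sections of $\pi:\mathcal U\to\mathcal B$, where it becomes $b\mapsto(b,\psi(x,\sigma(b)))$. The paper does this last step in one line, and your checks that $\tau_\sigma$ is a bijection and that the transition maps $(b,v)\mapsto(b,\rho(h(b))^{-1}v)$ are smooth fill in details the paper leaves implicit.
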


\begin{proof}
If $A'=g\cdot A$, then by equivariance,
\[
[A',\psi(x,A')] = [g\cdot A,\rho(g)\psi(x,A)] = [A,\psi(x,A)],
\]
so $\Phi_x$ depends only on the orbit.
Smoothness follows by expressing $\Phi_x$ in local trivializations induced by smooth local sections
of the principal bundle $\pi:\mathcal U\to\mathcal B$.
\end{proof}

\subsubsection{Descriptor coordinates and the coordinate feature map}

Let $\mathcal M\subseteq\R^q$ be the descriptor manifold provided by
Assumption~\ref{ass:local-quotient-chart}, and let
$s:\mathcal M\to\mathcal U$ be the associated $C^k$ lift, i.e., $D(s(\Omega))=\Omega$ for all $\Omega$ in a local neighborhood.

\paragraph{Coordinate feature map.}
Define
\[
\phi:\mathcal X\times\mathcal M\to\R^p,
\qquad
\phi(x,M)\coloneqq \psi(x,s(M)).
\]

\begin{lemma}[Differentiability of $\phi$]
If $A\mapsto\psi(x,A)$ is $C^k$ and $s$ is $C^k$, then for each fixed $x\in\mathcal X$
the map $M\mapsto\phi(x,M)$ is $C^k$ on $\mathcal M$.
\end{lemma}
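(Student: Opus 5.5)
The argument is the chain rule for $\phi = \psi\circ(\mathrm{id}_{\mathcal X}\times s)$ with $x$ held fixed; no geometry beyond the definition of $C^k$ maps on $\mathcal M$ is needed.

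\emph{Step 1 (the domain).} By \Cref{ass:local-quotient-chart}, $\mathcal M = W$ is an open subset of $\R^q$. Hence $C^k$ regularity on $\mathcal M$ means ordinary $C^k$ regularity in the ambient coordinates. If one prefers a $C^2$ embedded submanifold as in the main text, we instead work in local charts. Because $s$ is $C^k$ as a map of manifolds, its coordinate expression $s\circ\varphi^{-1}$ is $C^k$ on an open subset of Euclidean space for every chart $\varphi$ of $\mathcal M$.

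\emph{Step 2 (the lift).} By construction $s=\sigma\circ\bar D^{-1}$. Here $\bar D^{-1}$ is a $C^k$ diffeomorphism, and $\sigma$ is a smooth local section of the principal bundle $\pi:\mathcal U\to\mathcal B$. So $s$ is a composition of $C^k$ maps, and it takes values in the open set $\mathcal U\subseteq\R^{q_0}$.

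\emph{Step 3 (composition).} For fixed $x$, the map $A\mapsto\psi(x,A)$ is $C^k$ on $\mathcal U$ by hypothesis. Thus $M\mapsto\psi(x,s(M))$ is a composition of $C^k$ maps between open subsets of Euclidean spaces, or their chart expressions. The multivariate chain rule, applied inductively up to order $k$, then shows it is $C^k$.

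The chain-rule induction uses the Fa\`a di Bruno structure: each derivative of order $j\le k$ is a finite sum of products of derivatives of $\psi$ of order $\le j$ evaluated at $s(M)$, multiplied by derivatives of $s$ of order $\le j$. All of these factors are continuous. The first derivative is
\[
D_M\phi(x,M)[v] = D_A\psi(x,s(M))\big[Ds(M)[v]\big],\qquad v\in T_M\mathcal M.
\]

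The only point that needs care, and the closest thing to an obstacle, is that $s(\mathcal M)$ must land inside the open set where $\psi(x,\cdot)$ is $C^k$. This holds because $s$ maps into $\mathcal U$, or, in the localized version, into $B(w_\star,r')$. One should also note that different choices of lift change $\phi$ only by a $C^k$ orthogonal factor $\rho(g(M))$, by \eqref{eq:psi_orth_equiv}. Differentiability of $\phi$ is therefore intrinsic, consistent with \Cref{prop:bundle-feature}, although this is not needed for the lemma itself.
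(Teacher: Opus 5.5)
Your proposal is correct and follows the same route as the paper: fix $x$, write $\phi_x=\psi_x\circ s$, and conclude that a composition of $C^k$ maps is $C^k$. The extra remarks (working in charts, checking that $s$ takes values in $\mathcal U$, and gauge independence via $\rho(g(M))$) are sound but not needed for the lemma.
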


\begin{proof}
Fix $x\in\mathcal X$ and define $\psi_x:\mathcal U\to\R^p$ by $\psi_x(A)\coloneqq \psi(x,A)$.
By assumption, $\psi_x$ is a $C^k$ map on $\mathcal U$, and $s$ is $C^k$ on $\mathcal M$.
Hence $\phi_x:\mathcal M\to\R^p$ given by
\[
\phi_x(M)\coloneqq \phi(x,M)=\psi_x\bigl(s(M)\bigr)
\]
is the composition $\phi_x=\psi_x\circ s$ of two $C^k$ maps between smooth manifolds, and is therefore $C^k$.
\end{proof}

\paragraph{Gauge transformations.}
If $s'$ is another $C^k$ lift on $\mathcal M$, then for each $M\in\mathcal M$ there exists
a unique $g(M)\in G$ such that $s'(M)=g(M)\cdot s(M)$.
The map $M\mapsto g(M)$ is $C^k$.

\begin{lemma}[Gauge transformation rule]
If $\phi$ and $\phi'$ are induced by lifts $s$ and $s'$ respectively, then
\[
\phi'(x,M)=\rho(g(M))\,\phi(x,M),
\qquad
x\in\mathcal X,\ M\in\mathcal M.
\]
\end{lemma}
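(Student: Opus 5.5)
The proof is a direct computation from the definitions of $\phi$ and $\phi'$ together with the orthogonal equivariance condition \eqref{eq:psi_orth_equiv}. Fix $x\in\mathcal X$ and $M\in\mathcal M$. By definition of the coordinate feature map induced by the lift $s'$, we have $\phi'(x,M)=\psi(x,s'(M))$. The gauge relation stated just before the lemma gives $s'(M)=g(M)\cdot s(M)$ for a unique $g(M)\in G$. Substituting, $\phi'(x,M)=\psi(x,g(M)\cdot s(M))$.

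We then apply \eqref{eq:psi_orth_equiv} with representative $s(M)\in\mathcal U$ and group element $g(M)$. This gives $\psi(x,g(M)\cdot s(M))=\rho(g(M))\,\psi(x,s(M))=\rho(g(M))\,\phi(x,M)$, which is the claimed identity. No limiting or analytic argument is needed; the identity holds pointwise in $(x,M)$.

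The only point requiring care is the existence and uniqueness of $g(M)$. Both lifts satisfy $D(s(M))=D(s'(M))=M$, and both lie in $\pi^{-1}(V)$. Local orbit separation (consequence (i) of \Cref{ass:local-quotient-chart}) then places them in the same orbit. Freeness of the action on $\mathcal U$ makes the group element connecting them unique. The $C^k$ regularity of $M\mapsto g(M)$ is not needed for the identity itself; it is only what makes $\phi'$ inherit the smoothness of $\phi$. As a consequence, $\rho(g(M))\in O(p)$ is an orthogonal change of coordinates, so $\mathcal H_\Omega$, $\Pi_\Omega$, and the OLS predictor are lift-independent, consistent with \Cref{lem:orbit_invariant_min_norm}.
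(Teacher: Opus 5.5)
Your proof is correct and is exactly the paper's argument: substitute $s'(M)=g(M)\cdot s(M)$ into $\phi'(x,M)=\psi(x,s'(M))$ and apply orthogonal equivariance to get $\rho(g(M))\,\phi(x,M)$. Your extra justification of the existence and uniqueness of $g(M)$, via local orbit separation and freeness, goes slightly beyond the paper, which takes the gauge relation as given in the preceding paragraph. Note that this justification implicitly assumes $s'(M)\in\pi^{-1}(V)$.
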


\begin{proof}
By equivariance,
\[
\phi'(x,M)=\psi(x,s'(M))=\psi(x,g(M)\cdot s(M))=\rho(g(M))\,\phi(x,M).
\]
\end{proof}

\paragraph{Intrinsic meaning.}
The intrinsic object is the bundle section $\Phi_x$.
Choosing a lift $s$ identifies each fiber with $\R^p$ and yields the coordinate representation $\phi(x,M)$.
Different lifts correspond to orthogonal changes of coordinates.

\subsubsection{Orbit-invariance of minimum-norm OLS}

The proof of \Cref{lem:orbit_invariant_min_norm} in the main text follows directly
from orthogonal equivariance and is given below for completeness.

\begin{proof}[Proof of \Cref{lem:orbit_invariant_min_norm}]
Fix a downstream dataset $D_{\mathrm{down}}^{(n)}=\{(x_i,y_i)\}_{i=1}^n$ and write
$Y\coloneqq (y_1,\dots,y_n)^\top\in\R^n$.
For any $w\in\R^{q_0}$, define the design matrix $\Psi_w\in\R^{n\times p}$ by
\[
(\Psi_w)_{i,:}\coloneqq \psi(x_i,w)^\top .
\]
The minimum Euclidean norm solution of the OLS problem is given by
\[
\hat\theta_w=\Psi_w^{+}Y,
\]
where $(\cdot)^+$ denotes the Moore--Penrose pseudoinverse.

By the orthogonal equivariance condition \eqref{eq:psi_orth_equiv}, for any $g\in G$ and each $i$,
\[
\psi(x_i,g\cdot w)^\top
=
(\rho(g)\psi(x_i,w))^\top
=
\psi(x_i,w)^\top \rho(g)^\top .
\]
Therefore,
\[
\Psi_{g\cdot w}=\Psi_w\,\rho(g)^\top .
\]

For any matrix $M\in\R^{n\times p}$ and any orthogonal matrix $Q\in O(p)$,
\[
(MQ)^+=Q^\top M^+ .
\]
Applying this identity with $M=\Psi_w$ and $Q=\rho(g)^\top$ yields
\[
\hat\theta_{g\cdot w}
=
\Psi_{g\cdot w}^+Y
=
(\Psi_w\rho(g)^\top)^+Y
=
\rho(g)\Psi_w^+Y
=
\rho(g)\hat\theta_w .
\]

For any $x\in\mathcal X$,
\begin{align*}
\hat f_{g\cdot w}(x)
&=
\langle \hat\theta_{g\cdot w},\psi(x,g\cdot w)\rangle \\
&=
\langle \rho(g)\hat\theta_w,\rho(g)\psi(x,w)\rangle \\
&=
\langle \hat\theta_w,\psi(x,w)\rangle \\
&=
\hat f_w(x),
\end{align*}
where we used the orthogonality of $\rho(g)$ in the third equality.
This shows that the minimum-norm downstream predictor depends on $w$
only through its orbit $[w]$.
\end{proof}

\section{Proof of Proposition~\ref{prop:exact-decomp}}
\label{app:decomp}

Recall the downstream regression model in Equation~\eqref{eq:X_and_Y_pair}
\[
Y=f_\star(X)+\varepsilon,
\qquad
X\sim \mu_{\mathrm{down}},
\qquad
\E[\varepsilon\mid X]=0,
\qquad
\sigma^2\coloneqq\E[\varepsilon^2\mid X]<\infty.
\]
Let $\{(x_i,y_i)\}_{i=1}^n$ be i.i.d.\ copies of $(X,Y)$ and write
$D_{\mathrm{down}}^{(n)}=\{(x_i,y_i)\}_{i=1}^n$ for the labeled sample and
$X_{1:n}=(x_1,\dots,x_n)$ for the downstream design.
Let $(X_{\mathrm{new}},Y_{\mathrm{new}})$ be an independent copy of $(X,Y)$.
Define $\varepsilon_i\coloneqq y_i-f_\star(x_i)$ for $i\in[n]$ and
$\varepsilon_{\mathrm{new}}\coloneqq Y_{\mathrm{new}}-f_\star(X_{\mathrm{new}})$.

\paragraph{Manifold-valued feature parameters and quenched conditioning.}
In the main text, the feature parameter is learned in pre-training: $\Omega=\hat \Omega_m(D_{\mathrm{pre}}^{(m)})$, and $\Omega$ may take values on
a Riemannian manifold $\mathcal M$. All identities below are deterministic once $(D_{\mathrm{pre}}^{(m)},X_{1:n})$ is fixed, because conditioning on $D_{\mathrm{pre}}^{(m)}$
freezes $\Omega$, and conditioning on $X_{1:n}$ freezes the empirical projection operator $\Pi_{\Omega,n}$.
This viewpoint isolates the downstream label noise and the fresh test pair randomness, without averaging over pre-training and downstream design.

\subsection{Empirical projection notation}
\label{app:decomp-proj-meaning}

Recall the empirical inner product
\[
\langle g,h\rangle_n=\frac1n\sum_{i=1}^n g(x_i)h(x_i).
\]
Let $\mathcal H_\Omega=\{T_\Omega\theta:\theta\in\R^p\}$ denote the induced linear class and $\Pi_{\Omega,n}$ be the Moore--Penrose empirical least-squares map defined in Appendix~\ref{app:prelim} (Definition~\ref{def:canonical-emp-proj}).
We will invoke the following properties (Lemma~\ref{lem:emp-proj-char} and
Lemma~\ref{lem:reproducing}): for any $g$ with finite evaluations on $\{x_i\}_{i=1}^n$,
\begin{enumerate}
\item $\Pi_{\Omega,n}g\in\mathcal H_\Omega$ and $\langle g-\Pi_{\Omega,n}g,\,h\rangle_n=0$ for all $h\in\mathcal H_\Omega$;
\item $\mathrm{Ev}_n(\Pi_{\Omega,n}h)=\mathrm{Ev}_n(h)$ for all $h\in\mathcal H_\Omega$
(equivalently, $\|h-\Pi_{\Omega,n}h\|_n=0$).
\end{enumerate}
When $\langle\cdot,\cdot\rangle_n$ is non-degenerate on $\mathcal H_\Omega$ (equivalently, $\mathrm{Ev}_n$ is injective on
$\mathcal H_\Omega$), these properties imply $\Pi_{\Omega,n}h=h$ for all $h\in\mathcal H_\Omega$, i.e.\ $\Pi_{\Omega,n}$ is the unique
empirical orthogonal projector onto $\mathcal H_\Omega$.

\paragraph{OLS as empirical projection.}
Let $\hat\theta_{\Omega,n}$ be the minimum-norm OLS solution and set $\hat f_{\Omega,n}=T_\Omega\hat\theta_{\Omega,n}$.
Write the sample-value vectors
\[
y_{1:n}\coloneqq (y_1,\dots,y_n)^\top,\qquad \varepsilon_{1:n}\coloneqq (\varepsilon_1,\dots,\varepsilon_n)^\top,
\qquad f_{\star,1:n}\coloneqq \bigl(f_\star(x_1),\dots,f_\star(x_n)\bigr)^\top,
\]
so that $y_{1:n}=f_{\star,1:n}+\varepsilon_{1:n}$.
By Lemma~\ref{lem:ols-equals-proj},
\[
\hat f_{\Omega,n}=\Pi_{\Omega,n}\,\mathrm{lift}_n(y_{1:n}).
\]
By linearity of $\Pi_{\Omega,n}$ and $y_{1:n}=f_{\star,1:n}+\varepsilon_{1:n}$,
\begin{equation}
\label{eq:ols-split}
\hat f_{\Omega,n}
=
\Pi_{\Omega,n} f_\star
+
\Pi_{\Omega,n}\,\mathrm{lift}_n(\varepsilon_{1:n}).
\end{equation}
For brevity, we will write $\Pi_{\Omega,n}\varepsilon$ instead of $\Pi_{\Omega,n}\,\mathrm{lift}_n(\varepsilon_{1:n})$, with the
understanding that this means applying $\Pi_{\Omega,n}$ to any measurable representative whose evaluations
on $\{x_i\}_{i=1}^n$ equal $\varepsilon_{1:n}$.

\subsection{Population decomposition and orthogonality}
\label{app:decomp-pop}

Recall the population projector $\Pi_\Omega=T_\Omega \Sigma(\Omega)^+ T_{\Omega}^{\mathrm{adj}}$ onto $\mathcal H_\Omega$ in $L^2(\mu_{\mathrm{down}})$ and define
\[
e_\Omega\coloneqq (I-\Pi_\Omega)f_\star,\qquad \mathrm{Rep}(\Omega)\coloneqq \|e_\Omega\|_{L^2(\mu_{\mathrm{down}})}^2.
\]
Since $\Pi_\Omega$ is the $L^2(\mu_{\mathrm{down}})$-orthogonal projector onto $\mathcal H_\Omega$, we have
\[
e_\Omega \perp \mathcal H_\Omega \quad \text{in } L^2(\mu_{\mathrm{down}}),
\qquad\text{i.e.}\qquad
\langle e_\Omega, h\rangle_{L^2(\mu_{\mathrm{down}})}=0\ \ \forall h\in\mathcal H_\Omega.
\]
We will use the decomposition
\begin{equation}
\label{eq:pop-decomp}
f_\star=\Pi_\Omega f_\star+e_\Omega.
\end{equation}

\subsection{Exact risk decomposition conditional on $(D_{\mathrm{pre}}^{(m)},X_{1:n})$}
\label{app:decomp-risk-general}

We now restate and prove \Cref{prop:exact-decomp}.
\exactdecomp*

\begin{proof}
Start from the identity $Y_{\mathrm{new}}=f_\star(X_{\mathrm{new}})+\varepsilon_{\mathrm{new}}$ and write
\[
Y_{\mathrm{new}}-\hat f_{\Omega,n}(X_{\mathrm{new}})
=
\varepsilon_{\mathrm{new}}
+
\bigl(f_\star-\hat f_{\Omega,n}\bigr)(X_{\mathrm{new}}).
\]
Using \eqref{eq:ols-split}, we have
\[
\hat f_{\Omega,n}
=
\Pi_{\Omega,n}f_\star+\Pi_{\Omega,n}\varepsilon.
\]
Substituting and then adding and subtracting $\Pi_\Omega f_\star$ yields
\begin{align}
Y_{\mathrm{new}}-\hat f_{\Omega,n}(X_{\mathrm{new}})
&=
\varepsilon_{\mathrm{new}}
+
\bigl(f_\star-\Pi_{\Omega,n}f_\star-\Pi_{\Omega,n}\varepsilon\bigr)(X_{\mathrm{new}})
\notag\\
&=
\varepsilon_{\mathrm{new}}
+
\bigl((f_\star-\Pi_\Omega f_\star)-(\Pi_{\Omega,n}f_\star-\Pi_\Omega f_\star)-\Pi_{\Omega,n}\varepsilon\bigr)(X_{\mathrm{new}})
\notag\\
&=
\varepsilon_{\mathrm{new}}
+
\bigl(e_\Omega-(\Pi_{\Omega,n}f_\star-\Pi_\Omega f_\star)-\Pi_{\Omega,n}\varepsilon\bigr)(X_{\mathrm{new}}),
\label{eq:test-error-expand}
\end{align}
where in the last step we used \eqref{eq:pop-decomp}.

Square \eqref{eq:test-error-expand} and take conditional expectation given $(D_{\mathrm{pre}}^{(m)},X_{1:n})$:
\begin{align}
&\E\!\left[\bigl(Y_{\mathrm{new}}-\hat f_{\Omega,n}(X_{\mathrm{new}})\bigr)^2 \,\middle|\, D_{\mathrm{pre}}^{(m)},X_{1:n}\right]
\notag\\&=
\E\!\left[\varepsilon_{\mathrm{new}}^2 \,\middle|\, D_{\mathrm{pre}}^{(m)},X_{1:n}\right]
\notag\\
&\quad
+\E\!\left[\Bigl(e_\Omega-(\Pi_{\Omega,n}f_\star-\Pi_\Omega f_\star)-\Pi_{\Omega,n}\varepsilon\Bigr)(X_{\mathrm{new}})^2 \,\middle|\, D_{\mathrm{pre}}^{(m)},X_{1:n}\right]
\notag\\
&\quad
+2\,\E\!\left[\varepsilon_{\mathrm{new}}\Bigl(e_\Omega-(\Pi_{\Omega,n}f_\star-\Pi_\Omega f_\star)-\Pi_{\Omega,n}\varepsilon\Bigr)(X_{\mathrm{new}})
\,\middle|\, D_{\mathrm{pre}}^{(m)},X_{1:n}\right].
\label{eq:three-terms-general}
\end{align}

\paragraph{Cross term with $\varepsilon_{\mathrm{new}}$.}
Condition on $(D_{\mathrm{pre}}^{(m)},X_{1:n},X_{\mathrm{new}},\varepsilon_{1:n})$.
The bracketed term in \eqref{eq:three-terms-general} evaluated at $X_{\mathrm{new}}$ is measurable with respect to
$(D_{\mathrm{pre}}^{(m)},X_{1:n},X_{\mathrm{new}},\varepsilon_{1:n})$, while $\E[\varepsilon_{\mathrm{new}}\mid X_{\mathrm{new}}]=0$.
Therefore, we have
\begin{align*}
&\E\!\left[\varepsilon_{\mathrm{new}}\Bigl(e_\Omega-(\Pi_{\Omega,n}f_\star-\Pi_\Omega f_\star)-\Pi_{\Omega,n}\varepsilon\Bigr)(X_{\mathrm{new}})
\,\middle|\, D_{\mathrm{pre}}^{(m)},X_{1:n}\right]\\
\;&=\E\Bigg[\E\!\left[\varepsilon_{\mathrm{new}}\Bigl(e_\Omega-(\Pi_{\Omega,n}f_\star-\Pi_\Omega f_\star)-\Pi_{\Omega,n}\varepsilon\Bigr)(X_{\mathrm{new}})
\,\middle|\, X_{\mathrm{new}},\varepsilon_{1:n}\right]\bigg|D_{\mathrm{pre}}^{(m)},X_{1:n}\Bigg]\\
\;&=\E\Bigg[\E\!\left[\varepsilon_{\mathrm{new}}
\,\middle|\, X_{\mathrm{new}},\varepsilon_{1:n}\right]\Bigl(e_\Omega-(\Pi_{\Omega,n}f_\star-\Pi_\Omega f_\star)-\Pi_{\Omega,n}\varepsilon\Bigr)(X_{\mathrm{new}})\bigg|D_{\mathrm{pre}}^{(m)},X_{1:n}\Bigg]=0.
\end{align*}

Since $(X_{\mathrm{new}},\varepsilon_{\mathrm{new}})$ is an independent copy of $(X,\varepsilon)$ and is independent of $(D_{\mathrm{pre}}^{(m)},X_{1:n},\varepsilon_{1:n})$,
\[
\E\!\left[\varepsilon_{\mathrm{new}}^2 \,\middle|\, D_{\mathrm{pre}}^{(m)},X_{1:n}\right]=\E[\varepsilon^2]=\sigma^2.
\]

Set
\[
g\coloneqq \Pi_{\Omega,n}f_\star-\Pi_\Omega f_\star\in\mathcal H_\Omega,
\qquad
u\coloneqq \Pi_{\Omega,n}\varepsilon\in\mathcal H_\Omega.
\]
Then pointwise,
\[
(e_\Omega-g-u)^2=e_\Omega^2+g^2+u^2-2e_\Omega g-2e_\Omega u+2gu.
\]
Evaluating at $X_{\mathrm{new}}$ and taking $\E[\cdot\mid D_{\mathrm{pre}}^{(m)},X_{1:n}]$ gives
\begin{align}
\E\!\left[(e_\Omega-g-u)(X_{\mathrm{new}})^2 \,\middle|\, D_{\mathrm{pre}}^{(m)},X_{1:n}\right]
&=
\E\!\left[e_\Omega(X_{\mathrm{new}})^2\,\middle|\, D_{\mathrm{pre}}^{(m)},X_{1:n}\right]
+\E\!\left[g(X_{\mathrm{new}})^2 \,\middle|\, D_{\mathrm{pre}}^{(m)},X_{1:n}\right]
\notag\\&\quad
-2\,\langle e_\Omega,g\rangle_{L^2(\mu_{\mathrm{down}})}
-2\,\langle e_\Omega,u\rangle_{L^2(\mu_{\mathrm{down}})}\notag\\
&\quad
+2\,\E\!\left[g(X_{\mathrm{new}})u(X_{\mathrm{new}})\,\middle|\, D_{\mathrm{pre}}^{(m)},X_{1:n}\right]\notag\\
&\quad +\E\!\left[u(X_{\mathrm{new}})^2 \,\middle|\, D_{\mathrm{pre}}^{(m)},X_{1:n}\right].
\label{eq:square-expand-general}
\end{align}

\paragraph{Orthogonality kills the $e_\Omega$ cross terms.}
Since $g,u\in\mathcal H_\Omega$ and $e_\Omega\perp\mathcal H_\Omega$ in $L^2(\mu_{\mathrm{down}})$, we have
\[
\langle e_\Omega,g\rangle_{L^2(\mu_{\mathrm{down}})}=\langle e_\Omega,u\rangle_{L^2(\mu_{\mathrm{down}})}=0.
\]

\paragraph{The remaining cross term averages to zero.}
Condition on $(D_{\mathrm{pre}}^{(m)},X_{1:n},X_{\mathrm{new}})$.
Given $(D_{\mathrm{pre}}^{(m)},X_{1:n})$, the function $g$ is deterministic (because $\Omega$ is $D_{\mathrm{pre}}^{(m)}$-measurable and $\Pi_{\Omega,n}$ depends only on $(\Omega,X_{1:n})$),
while $u=\Pi_{\Omega,n}\varepsilon$ is linear in the noise vector $\varepsilon_{1:n}$.
Using $\E[\varepsilon_{1:n}\mid X_{1:n}]=0$ and independence of $D_{\mathrm{pre}}^{(m)}$ from the downstream noise, we obtain
\[
\E\!\left[u(X_{\mathrm{new}})\,\middle|\, D_{\mathrm{pre}}^{(m)},X_{1:n},X_{\mathrm{new}}\right]=0,
\]
and hence
\[
\E\!\left[g(X_{\mathrm{new}})u(X_{\mathrm{new}})\,\middle|\, D_{\mathrm{pre}}^{(m)},X_{1:n},X_{\mathrm{new}}\right]=0.
\]
Therefore,
\[
\E\!\left[g(X_{\mathrm{new}})u(X_{\mathrm{new}})\,\middle|\, D_{\mathrm{pre}}^{(m)},X_{1:n}\right]=0.
\]

\paragraph{Conclusion.}
Using $\E[e_\Omega(X_{\mathrm{new}})^2]=\|e_\Omega\|_{L^2(\mu_{\mathrm{down}})}^2=\mathrm{Rep}(\Omega)$ in \eqref{eq:square-expand-general}
and substituting into \eqref{eq:three-terms-general} yields \eqref{eq:exact-risk-general-main}.
\end{proof}

\subsection{Well-posedness specialization}
\label{app:decomp-risk-wp}

\begin{corollary}[Well-posedness implies $\Pi_{\Omega,n}f_\star-\Pi_\Omega f_\star=\Pi_{\Omega,n}f_\Omega$]
\label{cor:exact-decomp-wp}
Assume $\langle\cdot,\cdot\rangle_n$ is non-degenerate on $\mathcal H_\Omega$ (equivalently, $\mathrm{Ev}_n$ is injective on $\mathcal H_\Omega$).
Then $\Pi_{\Omega,n}h=h$ for all $h\in\mathcal H_\Omega$, and therefore
\begin{align*}
\Pi_{\Omega,n}f_\star-\Pi_\Omega f_\star=\Pi_{\Omega,n}e_\Omega.
\end{align*}
Consequently, \eqref{eq:exact-risk-general-main} reduces to
\begin{align*}
\E\!\left[\bigl(Y_{\mathrm{new}}-\hat f_{\Omega,n}(X_{\mathrm{new}})\bigr)^2 \,\middle|\, D_{\mathrm{pre}}^{(m)},X_{1:n}\right]
&=
\E[\varepsilon^2]
+\mathrm{Rep}(\Omega)
+\E\!\left[(\Pi_{\Omega,n}e_\Omega)(X_{\mathrm{new}})^2 \,\middle|\, D_{\mathrm{pre}}^{(m)},X_{1:n}\right]
\\&\quad+\E\!\left[(\Pi_{\Omega,n}\varepsilon)(X_{\mathrm{new}})^2 \,\middle|\, D_{\mathrm{pre}}^{(m)},X_{1:n}\right].  
\end{align*}

\end{corollary}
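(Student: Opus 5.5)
The plan is to prove the two assertions of \Cref{cor:exact-decomp-wp} in turn and then substitute into \Cref{prop:exact-decomp}.

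\textbf{Step 1: $\Pi_{\Omega,n}$ fixes $\mathcal H_\Omega$.} Fix $h\in\mathcal H_\Omega$. By \Cref{lem:emp-proj-char}, $\Pi_{\Omega,n}h\in\mathcal H_\Omega$. By \Cref{lem:reproducing}, $\mathrm{Ev}_n(\Pi_{\Omega,n}h)=\mathrm{Ev}_n(h)$. Hence $\Pi_{\Omega,n}h-h\in\mathcal H_\Omega$ and $\mathrm{Ev}_n(\Pi_{\Omega,n}h-h)=0$. The non-degeneracy assumption says $\mathrm{Ev}_n$ is injective on $\mathcal H_\Omega$, so $\Pi_{\Omega,n}h=h$. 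This equivalence is immediate: $\|h\|_n^2=\frac1n\|\mathrm{Ev}_n(h)\|_2^2$, so $\|h\|_n=0$ forces $h=0$ exactly when $\mathrm{Ev}_n$ has trivial kernel on $\mathcal H_\Omega$.

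\textbf{Step 2: the leakage identity.} Use the population decomposition \eqref{eq:pop-decomp}, $f_\star=\Pi_\Omega f_\star+e_\Omega$. The map $\Pi_{\Omega,n}$ is linear in its input, since it equals $T_\Omega\Sigma_n(\Omega)^+T^{\mathrm{adj}}_{\Omega,n}$. Therefore
\[
\Pi_{\Omega,n}f_\star=\Pi_{\Omega,n}\Pi_\Omega f_\star+\Pi_{\Omega,n}e_\Omega .
\]
Since $\Pi_\Omega f_\star\in\mathcal H_\Omega$, Step 1 gives $\Pi_{\Omega,n}\Pi_\Omega f_\star=\Pi_\Omega f_\star$. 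This yields $\Pi_{\Omega,n}f_\star-\Pi_\Omega f_\star=\Pi_{\Omega,n}e_\Omega$.

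One technicality: $\Pi_{\Omega,n}$ is applied to functions of $L^2$ type, which are defined only up to null sets. I would treat $f_\star$, $\Pi_\Omega f_\star$ and $e_\Omega$ as fixed measurable representatives with the pointwise identity $f_\star=\Pi_\Omega f_\star+e_\Omega$. The representative of $\Pi_\Omega f_\star$ is taken as $T_\Omega\theta$ for $\theta=\Sigma(\Omega)^+T^{\mathrm{adj}}_\Omega f_\star$, and $e_\Omega$ is defined as the difference. With these choices the evaluations at the design points are consistent.

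\textbf{Step 3: the reduced decomposition.} Substitute Step 2 into the leakage term of \eqref{eq:exact-risk-general-main}. This gives $\E[(\Pi_{\Omega,n}e_\Omega)(X_{\mathrm{new}})^2\mid D^{(m)}_{\mathrm{pre}},X_{1:n}]$, and $\sigma^2=\E[\varepsilon^2]$.

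For the variance term, the statement writes the conditional second moment of $\Pi_{\Omega,n}\varepsilon$ rather than the trace formula. I would either keep it in that form, which is exactly the $u^2$ term appearing in the proof of \Cref{prop:exact-decomp} via \eqref{eq:square-expand-general}, or check the equivalence directly. For the direct check, write
\[
\Pi_{\Omega,n}\varepsilon=T_\Omega\Sigma_n(\Omega)^+\tfrac1n\Phi_\Omega^\top\varepsilon_{1:n}.
\]
Then
\[
\E\big[(\Pi_{\Omega,n}\varepsilon)(X_{\mathrm{new}})^2\,\big|\,D^{(m)}_{\mathrm{pre}},X_{1:n}\big]=\tfrac{\sigma^2}{n}\tr\big(\Sigma(\Omega)\Sigma_n(\Omega)^+\Sigma_n(\Omega)\Sigma_n(\Omega)^+\big)=\tfrac{\sigma^2}{n}\tr\big(\Sigma(\Omega)\Sigma_n(\Omega)^+\big),
\]
using the noise covariance $\sigma^2I_n$ and $\Sigma_n^+\Sigma_n\Sigma_n^+=\Sigma_n^+$ from \Cref{lem:pinv-basic}.

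The only real subtlety is the representative issue in Step 2; everything else is linear algebra.
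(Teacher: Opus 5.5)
Your proposal is correct and follows the paper's argument essentially verbatim: injectivity of $\mathrm{Ev}_n$ on $\mathcal H_\Omega$, together with \Cref{lem:emp-proj-char} and \Cref{lem:reproducing}, gives $\Pi_{\Omega,n}h=h$, and applying the linear map $\Pi_{\Omega,n}$ to $f_\star=\Pi_\Omega f_\star+e_\Omega$ gives the leakage identity. Your extra check that the second-moment form of the variance term equals $\frac{\sigma^2}{n}\tr(\Sigma(\Omega)\Sigma_n(\Omega)^+)$ is the content of \Cref{lem:noise-closed}, which the paper proves separately, so nothing is missing.
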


\begin{proof}
Under the stated condition, $\Pi_{\Omega,n}h=h$ holds for all $h\in\mathcal H_\Omega$.
Since $\Pi_\Omega f_\star\in\mathcal H_\Omega$ and $f_\star=\Pi_\Omega f_\star+e_\Omega$, we have
\[
\Pi_{\Omega,n}f_\star
=
\Pi_{\Omega,n}(\Pi_\Omega f_\star+e_\Omega)
=
\Pi_{\Omega,n}(\Pi_\Omega f_\star)+\Pi_{\Omega,n}e_\Omega
=
\Pi_\Omega f_\star+\Pi_{\Omega,n}e_\Omega,
\]
which implies $\Pi_{\Omega,n}f_\star-\Pi_\Omega f_\star=\Pi_{\Omega,n}e_\Omega$.
Substituting this identity into \eqref{eq:exact-risk-general-main} gives the claimed formula.
\end{proof}

\paragraph{Bridge to the Riemannian log-map CLT.}
In the main text we set $\Omega=\hat \Omega_m(D_{\mathrm{pre}}^{(m)})\in\mathcal M$.
The conditional risk is
\[
R(D_{\mathrm{pre}}^{(m)},X_{1:n})
=
\E\!\left[\bigl(Y_{\mathrm{new}}-\hat f_{\hat \Omega_m,n}(X_{\mathrm{new}})\bigr)^2 \,\middle|\, D_{\mathrm{pre}}^{(m)},X_{1:n}\right],
\]
which is obtained by substituting $\Omega=\hat \Omega_m$ into \eqref{eq:exact-risk-general-main}.
The Riemannian structure enters when analyzing the fluctuations of $\hat \Omega_m$ around $\Omega_\star$ through the log map:
under the assumptions stated in the main text,
\[
\sqrt m\,\log_{\Omega_\star}(\hat \Omega_m)\distconv Z,
\qquad
Z\sim\mathcal N(0,V)\ \text{in }T_{\Omega_\star}\mathcal M.
\]
Since the decomposition \eqref{eq:exact-risk-general-main} holds conditionally on $(D_{\mathrm{pre}}^{(m)},X_{1:n})$ for each $m,n$,
one can combine this log-map CLT with a delta-method argument for the map $\Omega\mapsto R(D_{\mathrm{pre}}^{(m)},X_{1:n})$
in a normal neighborhood of $\Omega_\star$, without taking expectations over $D_{\mathrm{pre}}^{(m)}$.

\section{Proof of Theorem~\ref{thm:master-compatible}}
\label{app:proof-main-thm}
This appendix proves a more general version of~\Cref{thm:master-compatible} by analyzing the three terms in the exact conditional risk decomposition in~\Cref{prop:exact-decomp}. The proof separates the contribution of the pre-training randomness from the two downstream estimation effects. The representation term captures how the random pre-trained feature parameter $\hat\Omega_m(D_\mathrm{pre}^{(m)})$ changes the population approximation error. The variance and leakage terms capture, respectively, the usual downstream noise-fitting effect and the additional finite-sample interaction between the downstream empirical projection and the pre-trained residual.

The statement in the main text corresponds to the regular case \(\mathcal B_0=\{0\}\). In the more general stable-null-span regime treated below, directions that are null at \(\Omega_\star\) may have a limiting activated
span \(\mathcal B_0\), and the downstream OLS degrees-of-freedom term is then
\[
\sigma^2 d_{\mathrm{act}},
\qquad
d_{\mathrm{act}}
\coloneqq
\dim(\mathcal H_{\Omega_\star}\oplus\mathcal B_0).
\]
Specializing to \(\mathcal B_0=\{0\}\) gives \(d_{\mathrm{act}}=d_{\mathrm{eff}}(\Omega_\star)\), recovering \Cref{thm:master-compatible}. We analyze all terms along coupled sequences
\(m,n\to\infty\) with \(m/n\to\alpha\).
Throughout, we work on a coupled probability space supporting three mutually independent objects:
\begin{enumerate}[label=(\roman*)]
    \item  An i.i.d.\ pre-training sequence $(Z^{\mathrm{pre}}_j)_{j\ge 1}$,
    \item An i.i.d.\ downstream sequence $(X_i,\varepsilon_i)_{i\ge 1}$ with $X_i\sim\mu_{\mathrm{down}}$ and $\E[\varepsilon_i\mid X_i]=0$,
    \item  An independent test covariate $X_{\mathrm{new}}\sim\mu_{\mathrm{down}}$ (and an independent noise $\varepsilon_{\mathrm{new}}$)
\end{enumerate}
For each $m$, define the pre-training dataset $D_{\mathrm{pre}}^{(m)}\coloneqq (Z^{\mathrm{pre}}_1,\dots,Z^{\mathrm{pre}}_m)$ and the feature parameter $\Omega_m\coloneqq \hat \Omega_m\!\big(D_{\mathrm{pre}}^{(m)}\big)$.

\paragraph{Conditioning convention.}
All conditional expectations in this appendix are taken \emph{given} $D_{\mathrm{pre}}^{(m)}$
(and, when appropriate, given $X_{1:n}$ as well).
Once we condition on $D_{\mathrm{pre}}^{(m)}$, $\Omega_{m}$ is fixed, and the downstream sample
$(X_i,\varepsilon_i)_{i=1}^{n}$ remains i.i.d.\ and independent of $D_{\mathrm{pre}}^{(m)}$.

We first state the regularity assumptions in~\Cref{app:downstream-terms:setup}. We then analyze the representation term in~\Cref{app:rep}, followed by the downstream variance and leakage terms in~\Cref{app:downstream-terms}.
\subsection{Setup and standing regularity }
\label{app:downstream-terms:setup}

For each $n$, define the population and empirical covariances
\begin{align*}
\Sigma_{m}&\coloneqq \Sigma(\Omega_{m})\coloneqq \E\!\left[\phi(X,\Omega_{m})\phi(X,\Omega_{m})^\top\,\middle|\,D_{\mathrm{pre}}^{(m)}\right],\\
\qquad
\Sigma_{m,n}&\coloneqq \Sigma_{n}(\Omega_{m})\coloneqq \frac1{n}\sum_{i=1}^{n}\phi(X_i,\Omega_{m})\phi(X_i,\Omega_{m})^\top,   
\end{align*}
where the expectation in $\Sigma_{m}$ is over $X\sim\mu_{\mathrm{down}}$.

\begin{assmp}[Well-posedness for identifying the leakage term]
\label{ass:well-posedness}
With probability tending to $1$ as $m\to\infty$ (under the joint law of $(D_{\mathrm{pre}}^{(m)},X_{1:n})$),
the empirical inner product $\langle\cdot,\cdot\rangle_{n}$ is non-degenerate on $\mathcal H_{\Omega_m}$
(equivalently, the evaluation map on $\{X_i\}_{i=1}^{n}$ is injective on $\mathcal H_{\Omega_m}$).
On this event, $\Pi_{\Omega_m,n}h=h$ for all $h\in\mathcal H_{\Omega_m}$, and hence \eqref{eq:app:wellposed-bridge} holds (Appendix~\ref{app:decomp-risk-wp}). 
\end{assmp}

\begin{assmp}[Local-uniform moment and leverage bounds near \(\Omega_\star\)]
\label{ass:moments}
There exist \(\delta>0\), \(\eta>0\), a neighborhood \(\mathcal U\) of
\(\Omega_\star\) in \(\mathcal M\), and constants
\(C_\phi,C_q,C_e<\infty\) such that
\[
\sup_{\Omega\in\mathcal U}
\E\!\left[
\|\phi(X,\Omega)\|^{4+\delta}
\right]\le C_\phi .
\]
For each \(\Omega\in\mathcal U\), define the population leverage score
\begin{align}\label{eq:leverage-score}
q_\Omega(X)
\coloneqq
\phi(X,\Omega)^\top
\Sigma(\Omega)^+
\phi(X,\Omega).    
\end{align}
Assume the local leverage moment bound
\[
\sup_{\Omega\in\mathcal U}
\E\!\left[
q_\Omega(X)^{2+\eta}
\right]\le C_q .
\]
Moreover, for the signal term, assume the leverage-weighted signal bound
\[
\sup_{\Omega\in\mathcal U}
\E\!\left[
e_\Omega(X)^2 q_\Omega(X)^{1+\eta}
\right]\le C_e .
\]
\end{assmp}

\begin{assmp}[Local $C^1$ regularity of the feature map in $\Omega$ with moment control]
\label{ass:phi-C1-M}
Work in normal coordinates on a neighborhood $\mathcal U$ of $\Omega_\star$.
Assume that for $\mu_{\mathrm{down}}$-a.e.\ $x$, the map $\Omega\mapsto \phi(x,\Omega)\in\R^p$ is continuously differentiable on $\mathcal U$.
Moreover, there exist $\delta>0$ and a constant $C_{\partial\phi}<\infty$ such that
\begin{equation}
\label{eq:ass:moment-deriv}
\sup_{\Omega\in\mathcal U}\ \E\!\Big[\|D_\Omega\phi(X,\Omega)\|_{\mathrm{op}}^{4+\delta}\Big]\le C_{\partial\phi},
\end{equation}
where $D_\Omega\phi(x,\Omega):T_\Omega\mathcal M\to\R^p$ denotes the derivative in normal coordinates.
\end{assmp}

The preceding assumptions control the empirical downstream quantities and the local
smoothness of the feature map. We now introduce the geometric notation needed to
analyze the representation term. Since \(f_\star\in\mathcal H_{\Omega_\star}\),
choose \(\theta_\star\in\R^p\) such that
\(f_\star=T_{\Omega_\star}\theta_\star\). The map \(T_{\Omega_\star}\) need not be
injective. Let
\[
N_\star\coloneqq \ker(T_{\Omega_\star}),
\qquad
E_\star\coloneqq N_\star^\perp .
\]
Thus \(\R^p=N_\star\oplus E_\star\): directions in \(E_\star\) are identifiable at
\(\Omega_\star\), whereas directions in \(N_\star\) are invisible at
\(\Omega_\star\).

For a nearby descriptor \(\Omega\), one could similarly define
\(N_\Omega=\ker(T_\Omega)\) and \(E_\Omega=N_\Omega^\perp\). If \(T_\Omega\) has
locally constant rank, these subspaces vary smoothly with
\(\Omega\), and the image spaces
\(\mathcal H_\Omega=\operatorname{Im}(T_\Omega)\) vary smoothly as well. This is the
standard regime in which the full projector map \(\Omega\mapsto\Pi_\Omega\) is
Fréchet differentiable. In this appendix, however, we do not impose local constant
rank. Instead of tracking the moving decomposition \(N_\Omega\oplus E_\Omega\), we fix the splitting at $\Omega_\star$ and use it to describe how the
perturbed image space \(\mathcal H_\Omega\) is generated.

For \(v\in T_{\Omega_\star}\mathcal M\) near \(0\), set
\[
\Omega(v)\coloneqq\exp_{\Omega_\star}(v),
\qquad
T_v\coloneqq T_{\Omega(v)},
\qquad
\mathcal H_v\coloneqq \mathcal H_{\Omega(v)},
\qquad
\Pi_v\coloneqq \Pi_{\Omega(v)}.
\]
The identifiable directions \(E_\star\) generate the subspace $\mathcal A_v
\coloneqq
\operatorname{Im}\left(T_v\big|_{E_\star}\right)$. Since \(T_{\Omega_\star}\big|_{E_\star}\) is injective, this is the stable part of
the perturbed image: under the local smoothness assumptions,
\(\mathcal A_v\) converges to \(\mathcal H_{\Omega_\star}\) as \(v\to0\).

The remaining issue is the contribution of the directions that were null at
\(\Omega_\star\). Because \(\R^p=E_\star\oplus N_\star\), the full perturbed image
\(\mathcal H_v\) is generated by \(T_vE_\star\) together with \(T_vN_\star\). The
subspace generated by \(T_vE_\star\) is \(\mathcal A_v\). We
therefore isolate the orthogonal residual contribution of the perturbed null
directions by defining
\[
\mathcal B_v
\coloneqq
\operatorname{Im}\left(
(I-\Pi_{\mathcal A_v})T_v\big|_{N_\star}
\right),
\qquad v\neq0.
\]
By construction, \(\mathcal B_v\subseteq\mathcal A_v^\perp\), and the image space
decomposes as $\mathcal H_v=\mathcal A_v\oplus\mathcal B_v$. Thus \(\mathcal A_v\) records the part of \(\mathcal H_v\) generated by directions
already identifiable at \(\Omega_\star\), while \(\mathcal B_v\) records the
additional directions produced by perturbing directions that were null
at \(\Omega_\star\). The stable null-span assumption below requires only this
second component to have a limiting span. It is therefore weaker than requiring
smooth variation of the full image space, or equivalently differentiability of the
full projector map \(\Omega\mapsto\Pi_\Omega\).

\begin{assmp}[Stable limiting span of null directions]
\label{ass:stable-null-span}
There exists a finite-dimensional subspace $\mathcal B_0\subseteq \mathcal H_{\Omega_\star}^\perp$ such that
\[
\|\Pi_{\mathcal B_v}-\Pi_{\mathcal B_0}\|_{\mathrm{op}}
\to 0
\qquad\text{as }v\to0,
\]
where $\Pi_{\mathcal B_v}$ and $\Pi_{\mathcal B_0}$ denote the
$L^2(\mu_{\mathrm{down}})$-orthogonal projectors onto
$\mathcal B_v$ and $\mathcal B_0$, respectively.
\end{assmp}

\begin{lemma}[Differentiability of the feature operator]
\label{lem:T-diff}
Under~\Cref{ass:phi-C1-M}, the map $\Omega\mapsto T_\Omega$ is Fr\'echet differentiable at \(\Omega_\star\) as a map from \(\mathcal M\) to
\(\mathcal B(\R^p,L^2)\), the space of bounded linear operators form $\R^p$ to $L^2$. In normal coordinates around
\(\Omega_\star\), its derivative is given by
\[
(DT_{\Omega_\star}[v]\theta)(x)
=
\theta^\top D_\Omega\phi(x,\Omega_\star)[v], \qquad v\in T_{\Omega_\star}\mathcal M.
\]
\end{lemma}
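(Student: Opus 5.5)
The plan is a direct first-order expansion of $T_\Omega$ in operator norm from $\R^p$ to $L^2$. Fix normal coordinates $v\mapsto\Omega(v)=\exp_{\Omega_\star}(v)$ on a ball contained in the neighborhood $\mathcal U$ of \Cref{ass:phi-C1-M}. We take the candidate derivative to be the map $v\mapsto DT_{\Omega_\star}[v]$, where $(DT_{\Omega_\star}[v]\theta)(x)=\theta^\top D_\Omega\phi(x,\Omega_\star)[v]$. This map is linear in $v$. It is also bounded as a map into $\mathcal B(\R^p,L^2)$. To see this, note that $\|DT_{\Omega_\star}[v]\theta\|_{L^2}^2\le\|\theta\|^2\|v\|^2\,\E\|D_\Omega\phi(X,\Omega_\star)\|_{\mathrm{op}}^2$, and the last expectation is finite by \eqref{eq:ass:moment-deriv} and Jensen.

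For the remainder, fix $x$ in the full-measure set where $\Omega\mapsto\phi(x,\Omega)$ is $C^1$. Apply the fundamental theorem of calculus along the segment $t\mapsto tv$ in normal coordinates:
\[
\phi(x,\Omega(v))-\phi(x,\Omega_\star)-D_\Omega\phi(x,\Omega_\star)[v]=\int_0^1\bigl(D_\Omega\phi(x,\Omega(tv))-D_\Omega\phi(x,\Omega_\star)\bigr)[v]\,\rmd t=:\rho_v(x).
\]
It follows that $\|(T_{\Omega(v)}-T_{\Omega_\star}-DT_{\Omega_\star}[v])\theta\|_{L^2}=\|\theta^\top\rho_v\|_{L^2}\le\|\theta\|\,\|v\|\,\bigl(\E[\Delta_v(X)^2]\bigr)^{1/2}$, where
\[
\Delta_v(x)\coloneqq\sup_{t\in[0,1]}\|D_\Omega\phi(x,\Omega(tv))-D_\Omega\phi(x,\Omega_\star)\|_{\mathrm{op}}.
\]
Here we use Cauchy–Schwarz in $\R^p$ and Minkowski's integral inequality. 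Taking the supremum over $\|\theta\|=1$, Fréchet differentiability reduces to showing $\E[\Delta_v(X)^2]\to0$ as $v\to0$.

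This limit is the main step. Pointwise, continuity of $D_\Omega\phi(x,\cdot)$ at $\Omega_\star$ gives $\Delta_v(x)\to0$ for a.e.\ $x$. To upgrade to $L^2$ convergence, the first idea is uniform integrability of $\Delta_v(X)^2$, since $\Delta_v^2\le 2\sup_t\|D_\Omega\phi(x,\Omega(tv))\|_{\mathrm{op}}^2+2\|D_\Omega\phi(x,\Omega_\star)\|_{\mathrm{op}}^2$. The obstacle is that \eqref{eq:ass:moment-deriv} bounds $\sup_\Omega\E[\cdot]$, not $\E[\sup_\Omega\cdot]$.

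I would avoid the supremum inside the expectation by bounding the integral form directly. By Jensen, $\E\|\int_0^1 A_t\,\rmd t\|^2\le\int_0^1\E\|A_t\|^2\rmd t$ with $A_t(x)=(D_\Omega\phi(x,\Omega(tv))-D_\Omega\phi(x,\Omega_\star))$. So it suffices to show $\sup_{t}\E\|A_t(X)\|_{\mathrm{op}}^2\to0$ as $v\to0$. Now consider any sequence $\Omega_k\to\Omega_\star$. Each $A$ converges to $0$ pointwise a.e. The family $\{\|D_\Omega\phi(X,\Omega)\|_{\mathrm{op}}^2:\Omega\in\mathcal U\}$ is uniformly integrable, because its $(2+\delta/2)$-th moments are uniformly bounded by \eqref{eq:ass:moment-deriv}. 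Vitali's theorem then gives $\E\|A(X)\|_{\mathrm{op}}^2\to0$ along every such sequence. Since the sequence was arbitrary, this gives $\sup_{\|u\|\le\|v\|}\E\|D_\Omega\phi(X,\Omega(u))-D_\Omega\phi(X,\Omega_\star)\|_{\mathrm{op}}^2\to0$. Combining this with the bound on the remainder shows $\|T_{\Omega(v)}-T_{\Omega_\star}-DT_{\Omega_\star}[v]\|_{\mathrm{op}}=o(\|v\|)$, which proves the lemma.

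Two measurability points need a brief check: that $x\mapsto\rho_v(x)$ is measurable, and that the Bochner-type Minkowski step applies. Both follow from joint measurability of $D_\Omega\phi$ and its continuity in $\Omega$.
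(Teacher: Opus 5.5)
Your proposal is correct and follows essentially the paper's argument. It uses the fundamental theorem of calculus along $t\mapsto tv$ in normal coordinates, Jensen to move the expectation inside the $t$-integral (which sidesteps the $\sup$-inside-$\E$ issue, so the $\Delta_v$ detour is unnecessary), and Vitali's theorem with uniform integrability supplied by the $4+\delta$ moment bound of \Cref{ass:phi-C1-M}. The only difference is where Vitali is invoked: you apply it to the derivative increments $D_\Omega\phi(X,\Omega(u))-D_\Omega\phi(X,\Omega_\star)$ to get $L^2$-continuity of the derivative at $\Omega_\star$, while the paper applies it directly to the remainder divided by $\|v\|$, using its a.e.\ convergence to zero and a uniform $L^{4+\delta}$ bound from the same Jensen step.
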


\begin{proof}
Fix \(v\in T_{\Omega_\star}\mathcal M\) sufficiently small and define
\[
\rho_v(x)
\coloneqq
\frac{
\phi(x,\Omega(v))-\phi(x,\Omega_\star)-D_\Omega\phi(x,\Omega_\star)[v]
}{\|v\|}.
\]
By \Cref{ass:phi-C1-M}, for \(\mu_{\mathrm{down}}\)-a.e.\ \(x\),
\[
\rho_v(x)\to0
\qquad
\text{as } v\to0.
\]

We first show that \(\rho_v\to0\) in
\(L^2(\mu_{\mathrm{down}};\R^p)\). By the fundamental theorem of calculus in normal
coordinates,
\[
\rho_v(x)
=
\int_0^1
\left(
D_\Omega\phi(x,\Omega(tv))-D_\Omega\phi(x,\Omega_\star)
\right)
\left[\frac{v}{\|v\|}\right]
\,dt .
\]
Therefore,
\begin{align*}
\|\rho_v(x)\|&=\Bigg\| \int_0^1
\left(
D_\Omega\phi(x,\Omega(tv))-D_\Omega\phi(x,\Omega_\star)
\right)
\left[\frac{v}{\|v\|}\right]
\,dt\Bigg\|\\&\leq \int_0^1
\Bigg\|\left(
D_\Omega\phi(x,\Omega(tv))-D_\Omega\phi(x,\Omega_\star)
\right)
\left[\frac{v}{\|v\|}\right]\Bigg\|
\,dt \\&\leq \int_0^1
\left\|D_\Omega\phi(x,\Omega(tv))-D_\Omega\phi(x,\Omega_\star)\right\|_{\mathrm{op}}
\,dt .
\end{align*}
Let \(q=4+\delta\), where \(\delta>0\) is from \Cref{ass:phi-C1-M}. Jensen's
inequality and the elementary bound
\(\|A-B\|^q\le 2^{q-1}(\|A\|^q+\|B\|^q)\) give
\[
\E\|\rho_v(X)\|^q
\le
2^{q-1}
\int_0^1
\E\|D_\Omega\phi(X,\Omega(tv))\|_{\mathrm{op}}^q\,dt
+
2^{q-1}
\E\|D_\Omega\phi(X,\Omega_\star)\|_{\mathrm{op}}^q .
\]
The right-hand side is uniformly bounded for \(v\) sufficiently small by
\Cref{ass:phi-C1-M}. Hence the family \(\{\|\rho_v(X)\|^2\}\) is uniformly
integrable. Since \(\rho_v(X)\to0\) almost surely, Vitali's theorem yields
\[
\E\|\rho_v(X)\|^2\to0.
\]
Thus,
\[
\left\|
\phi(\cdot,\Omega(v))-\phi(\cdot,\Omega_\star)
-D_\Omega\phi(\cdot,\Omega_\star)[v]
\right\|_{L^2(\mu_{\mathrm{down}};\R^p)}
=
o(\|v\|).
\]

Now let \(\theta\in\R^p\) with \(\|\theta\|_2=1\). Then
\begin{align*}
&\left\|
\left(T_{\Omega(v)}-T_{\Omega_\star}-DT_{\Omega_\star}[v]\right)\theta
\right\|_{L^2}
\\
&\le
\left\|
\phi(\cdot,\Omega(v))-\phi(\cdot,\Omega_\star)
-D_\Omega\phi(\cdot,\Omega_\star)[v]
\right\|_{L^2(\mu_{\mathrm{down}};\R^p)}
=
o(\|v\|).
\end{align*}
Taking the supremum over \(\|\theta\|_2=1\) gives
\[
\|T_{\Omega(v)}-T_{\Omega_\star}-DT_{\Omega_\star}[v]\|_{\mathrm{op}}
=
o(\|v\|).
\]
Thus \(\Omega\mapsto T_\Omega\) is Fr\'echet differentiable at \(\Omega_\star\), with
derivative
\[
(DT_{\Omega_\star}[v]\theta)(x)
=
\theta^\top D_\Omega\phi(x,\Omega_\star)[v].
\]
\end{proof}

\subsection{Pretraining fluctuations}
\label{app:rep}

This subsection analyzes the representation term~\eqref{eq:representation error} in the compatible regime, where
$\mathrm{Rep}(\Omega_\star)=0$. The key step is to linearize the residual
$e_\Omega=(I-\Pi_\Omega)f_\star$ around $\Omega_\star$. Under the stable-null-span
condition (\Cref{ass:stable-null-span}), this residual admits a first-order expansion along perturbations of the
pre-trained descriptor, and the $1/m$-scale limit of $\mathrm{Rep}(\Omega_m)$ follows by applying the pre-training CLT.

This approach is weaker than requiring Fréchet differentiability of the full
projector map $\Omega\mapsto\Pi_\Omega$ as an operator on
$L^2(\mu_{\mathrm{down}})$. We only need first-order control of the single target
$f_\star$, or equivalently of the residual $e_\Omega$. After proving the representation limit under the stable null-span condition (\Cref{ass:stable-null-span}), we record constant-rank
differentiability of $\Pi_\Omega$ as a simpler sufficient condition used in examples.

\subsubsection{Main statements}
\label{app:pretrain-fluct:main}
\begin{proposition}[Representation term: distributional limit in the compatible case]
\label{prop:rep-term-dist}
Assume the pretraining distribution and loss function satisfy~\Cref{ass:riem_mestimation}
\[
Z_m\coloneqq \sqrt m\,\log_{\Omega_\star}(\Omega_m)\distconv Z,
\qquad
Z\sim\mathcal N(0,V),
\]
in $T_{\Omega_\star}\mathcal M$. Moreover, assume Assumptions~\ref{ass:moments}--\ref{ass:stable-null-span}. Let \(\mathcal L\) be the first-order residual map
defined as
\begin{align}\label{eq:linear-L-def}
\mathcal L(v)
\coloneqq
-
(I-\Pi_{\Omega_\star}-\Pi_{\mathcal B_0})DT_{\Omega_\star}[v]\theta_\star.    
\end{align}
Then
\[
m\,\mathrm{Rep}(\Omega_m)\distconv \|\mathcal L(Z)\|_{L^2(\mu_{\mathrm{down}})}^2.
\]
\end{proposition}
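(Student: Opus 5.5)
The plan has two parts. First, a deterministic first-order expansion: there is a bounded linear map $\mathcal L:T_{\Omega_\star}\mathcal M\to L^2$, given by \eqref{eq:linear-L-def}, with
\[
e_{\exp_{\Omega_\star}(v)}=\mathcal L(v)+o(\|v\|)\qquad\text{in }L^2(\mu_{\mathrm{down}})\ \text{as } v\to0 .
\]
Second, a stochastic delta-method step that combines this expansion with $Z_m\distconv Z$.

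For the expansion, write $\Omega(v)=\exp_{\Omega_\star}(v)$ and use the orthogonal decomposition $\mathcal H_v=\mathcal A_v\oplus\mathcal B_v$, so that $\Pi_v=\Pi_{\mathcal A_v}+\Pi_{\mathcal B_v}$. Choose the representative $\theta_\star\in E_\star$. Then $f_\star=T_{\Omega_\star}\theta_\star$, and by \Cref{lem:T-diff},
\[
T_v\theta_\star=f_\star+DT_{\Omega_\star}[v]\theta_\star+o(\|v\|).
\]
Since $T_v\theta_\star\in\mathcal A_v\subseteq\mathcal H_v$, we have $(I-\Pi_v)T_v\theta_\star=0$. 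Hence
\[
e_{\Omega(v)}=(I-\Pi_v)\bigl(f_\star-T_v\theta_\star\bigr)=-(I-\Pi_v)DT_{\Omega_\star}[v]\theta_\star+o(\|v\|),
\]
using $\|I-\Pi_v\|_{\mathrm{op}}\le1$. It remains to replace $\Pi_v$ by $\Pi_\star+\Pi_{\mathcal B_0}$ acting on $g_v:=DT_{\Omega_\star}[v]\theta_\star$. Because $\|g_v\|\le C\|v\|$, operator-norm convergence suffices:
\begin{itemize}
\item $\|\Pi_{\mathcal B_v}-\Pi_{\mathcal B_0}\|_{\mathrm{op}}\to0$ holds by \Cref{ass:stable-null-span}.
\item $\|\Pi_{\mathcal A_v}-\Pi_\star\|_{\mathrm{op}}\to0$ needs a short argument. $T_v|_{E_\star}\to T_{\Omega_\star}|_{E_\star}$ in operator norm (again by \Cref{lem:T-diff}), and $T_{\Omega_\star}|_{E_\star}$ is injective on a finite-dimensional space. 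So $T_v|_{E_\star}$ has uniformly bounded-below singular values near $0$. Then $\Pi_{\mathcal A_v}=T_v|_{E_\star}(T_v|_{E_\star}^{\mathrm{adj}}T_v|_{E_\star})^{-1}T_v|_{E_\star}^{\mathrm{adj}}$ is continuous in $v$.
\end{itemize}
Each replacement therefore costs $o(1)\cdot O(\|v\|)=o(\|v\|)$. This gives $e_{\Omega(v)}=\mathcal L(v)+r(v)$ with $\|r(v)\|_{L^2}=o(\|v\|)$. The main subtlety, and the step I expect to be the real obstacle, is this continuity-of-projector argument. In particular, I must check that $\Pi_v=\Pi_{\mathcal A_v}+\Pi_{\mathcal B_v}$ exactly; this follows because $\mathcal B_v\perp\mathcal A_v$ and the two spans generate $\mathcal H_v$.

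For the stochastic part, \Cref{thm:riem_mestimator_clt} gives consistency, so $\log_{\Omega_\star}(\Omega_m)$ is defined with probability tending to one, and on that event
\[
\sqrt m\,e_{\Omega_m}=\mathcal L(Z_m)+\sqrt m\,r\bigl(Z_m/\sqrt m\bigr).
\]
Since $Z_m=O_{\mathbb P}(1)$, the remainder satisfies $\sqrt m\,\|r(Z_m/\sqrt m)\|\le\|Z_m\|\cdot\sup_{\|u\|\le\|Z_m\|/\sqrt m}\|r(u)\|/\|u\|\xrightarrow{\mathbb P}0$. The map $v\mapsto\|\mathcal L(v)\|_{L^2}^2$ is continuous on the finite-dimensional space $T_{\Omega_\star}\mathcal M$, so the continuous mapping theorem gives $\|\mathcal L(Z_m)\|^2\distconv\|\mathcal L(Z)\|^2$. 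Finally, $m\,\mathrm{Rep}(\Omega_m)=\|\mathcal L(Z_m)+o_{\mathbb P}(1)\|^2$, and Slutsky's theorem yields the claim. The moment assumptions \Cref{ass:moments} enter only through the boundedness of $T_\Omega$ and the $L^2$ validity of \Cref{lem:T-diff}.
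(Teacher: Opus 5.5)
Your proposal is correct and follows the paper's proof essentially step for step. The shared steps are the identity $e_{\Omega(v)}=-(I-\Pi_v)(T_v-T_{\Omega_\star})\theta_\star$, the exact splitting $\Pi_v=\Pi_{\mathcal A_v}+\Pi_{\mathcal B_v}$ with operator-norm convergence of each piece (\Cref{lem:stable-null-span-residual-expansion}), and the modulus-of-continuity remainder bound combined with tightness of $Z_m$, Slutsky, and the continuous mapping theorem (\Cref{lem:rep-rem-op,lem:rep-quad-approx}). Two small bonuses: your explicit Gram formula for $\Pi_{\mathcal A_v}$ makes the paper's Grassmannian-convergence remark precise, and because you only use $T_v\theta_\star\in\mathcal H_v$, your argument works for any representative $\theta_\star$ and so recovers \Cref{lem:L-well-defined} for free.
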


\subsubsection{Linearization in normal coordinates}
\label{app:pretrain-fluct:linearization}

\begin{lemma}[Residual expansion under stable limiting span of null directions]
\label{lem:stable-null-span-residual-expansion}
Assume Assumptions \ref{ass:phi-C1-M} and~\ref{ass:stable-null-span}, and recall  the linear map $\mathcal{L}:T_{\Omega_\star}\mathcal M\to L^2$~\eqref{eq:linear-L-def}. Then, in normal coordinates around
$\Omega_\star$,
\[
e_{\Omega(v)}
=
\mathcal L(v)+o(\|v\|)
\]
in $L^2(\mu_{\mathrm{down}})$.

\end{lemma}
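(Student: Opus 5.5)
The plan is to write the residual through the decomposition $\mathcal H_v=\mathcal A_v\oplus\mathcal B_v$. Since $\mathcal B_v\subseteq\mathcal A_v^\perp$, we have $\Pi_v=\Pi_{\mathcal A_v}+\Pi_{\mathcal B_v}$. The target satisfies $f_\star=T_{\Omega_\star}\theta_\star$ with $\theta_\star\in E_\star$. Set $h_v\coloneqq T_v\theta_\star\in\mathcal A_v$. By \Cref{lem:T-diff}, $h_v=f_\star+DT_{\Omega_\star}[v]\theta_\star+o(\|v\|)$ in $L^2$. Because $(I-\Pi_v)h_v=0$, this gives
\[
e_{\Omega(v)}=(I-\Pi_v)(f_\star-h_v)=-(I-\Pi_v)DT_{\Omega_\star}[v]\theta_\star+o(\|v\|),
\]
using that $I-\Pi_v$ is a contraction. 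It therefore suffices to show that $\|(\Pi_v-\Pi_\star-\Pi_{\mathcal B_0})g_v\|_{L^2}=o(\|v\|)$ for $g_v\coloneqq DT_{\Omega_\star}[v]\theta_\star$, which has norm $O(\|v\|)$. The simplest route is to prove $\|\Pi_v-\Pi_\star-\Pi_{\mathcal B_0}\|_{\mathrm{op}}\to0$ as $v\to0$ and multiply by $\|g_v\|=O(\|v\|)$.

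\textbf{Convergence of $\Pi_{\mathcal A_v}$.} The restriction $T_{\Omega_\star}|_{E_\star}$ is injective on a finite-dimensional space, so it has a positive smallest singular value $s_\star$. By \Cref{lem:T-diff}, $\|T_v|_{E_\star}-T_{\Omega_\star}|_{E_\star}\|_{\mathrm{op}}=O(\|v\|)$. Hence, for small $v$, $T_v|_{E_\star}$ is injective with singular values at least $s_\star/2$, and $\mathcal A_v$ has the same dimension as $\mathcal H_{\Omega_\star}$. I will write $\Pi_{\mathcal A_v}=T_v|_{E_\star}(T_v|_{E_\star}^{\mathrm{adj}}T_v|_{E_\star})^{-1}T_v|_{E_\star}^{\mathrm{adj}}$. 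Continuity of the inverse of the Gram matrix then gives $\|\Pi_{\mathcal A_v}-\Pi_\star\|_{\mathrm{op}}=O(\|v\|)$, and in particular this tends to $0$.

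\textbf{Combining with the null-direction span.} \Cref{ass:stable-null-span} gives $\Pi_{\mathcal B_v}\to\Pi_{\mathcal B_0}$ in operator norm. Together with the previous step, $\|\Pi_v-\Pi_\star-\Pi_{\mathcal B_0}\|_{\mathrm{op}}\to0$. This yields
\[
e_{\Omega(v)}=-(I-\Pi_\star-\Pi_{\mathcal B_0})g_v+o(\|v\|)=\mathcal L(v)+o(\|v\|).
\]
The inclusion $\mathcal B_0\subseteq\mathcal H_{\Omega_\star}^\perp$ makes $\Pi_\star+\Pi_{\mathcal B_0}$ an orthogonal projector, which is consistent with the definition of $\mathcal L$ but is not needed in the estimate itself.

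\textbf{Main obstacle.} The delicate point is that only operator-norm convergence of the projectors is available, not a rate. The argument still closes because the projector error is multiplied by $g_v$, which is already $O(\|v\|)$, so $o(1)\cdot O(\|v\|)=o(\|v\|)$. The crucial structural step is the first identity: it lets us apply $I-\Pi_v$ to a difference that is $O(\|v\|)$ rather than to $f_\star$ itself. We must also check that the remainder in \Cref{lem:T-diff} is uniform in the operator norm, so that it survives application of the bounded operators $I-\Pi_v$.
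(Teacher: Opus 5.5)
Your proposal is correct and follows essentially the same route as the paper: you rewrite the residual as $-(I-\Pi_v)(T_v-T_{\Omega_\star})\theta_\star$, absorb the Fr\'echet remainder because $I-\Pi_v$ is a contraction, split $\Pi_v=\Pi_{\mathcal A_v}+\Pi_{\mathcal B_v}$, and apply the operator-norm convergence of both pieces to the $O(\|v\|)$ vector $DT_{\Omega_\star}[v]\theta_\star$. The one difference is your explicit Gram-matrix formula for $\Pi_{\mathcal A_v}$, which justifies $\Pi_{\mathcal A_v}\to\Pi_{\Omega_\star}$ more concretely than the paper's appeal to Grassmannian convergence; otherwise the argument is the same.
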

\begin{proof}
Since $f_\star = T_{\Omega_\star}\theta_\star$, we have
\[
f_\star = T_v \theta_\star + (T_{\Omega_\star}-T_v)\theta_\star.
\]
We apply $\Pi_v$ to both sides and use $T_v \theta_\star\in \mathcal H_{\Omega(v)}$ to get
\[
\Pi_v f_\star = \Pi_v T_v \theta_\star + \Pi_v (T_{\Omega_\star}-T_v)\theta_\star = T_v \theta_\star + \Pi_v (T_{\Omega_\star}-T_v)\theta_\star.
\]
Therefore
\[
e_{\Omega(v)}=f_\star - \Pi_v f_\star = -(I - \Pi_v)(T_v - T_{\Omega_\star})\theta_\star.
\]
By~\Cref{lem:T-diff}, we have
\[
T_v = T_{\Omega_\star} + DT_{\Omega_\star}[v]+r_T(v)
\]
where $\norm{r_T (v)}_\mathrm{op}=o(\norm{v})$. Since $I-\Pi_{\Omega(v)}$ is an orthogonal projector, and hence it is non-expansive, we have
\[
e_{\Omega(v)} = -(I - \Pi_v)(DT_{\Omega_\star}[v] + r_T(v))\theta_\star = -(I - \Pi_v)(DT_{\Omega_\star}[v])\theta_\star+o(\norm{v}).
\]

We now identify the limit of $\Pi_{\Omega(v)}$. Since $\R^p = E_\star\oplus N_\star$, every element of $\mathcal{H}_{\Omega(v)}=\operatorname{Im}(T_v)$ can be written as 
\[
T_v\theta_E+T_v\theta_N
\]
where $\theta_E\in E_\star$ and $\theta_N\in N_\star$. The first term belongs to $\mathcal A_v = \operatorname{Im}\left(T_v\big|_{E_\star}\right)$. For the second term,
\[
T_v\theta_N = \Pi_{\mathcal{A}_v}T_v \theta_N + (I-\Pi_{\mathcal A_v})T_v\theta_N
\]
where the first summand lies in $\mathcal{A}_v$ and the second lies in $\mathcal B_v=
\operatorname{Im}((I-\Pi_{\mathcal A_v})T_v\big|_{N_\star})$. Hence, $\mathcal{H}_v = \mathcal{A}_v + \mathcal{B}_v$. Moreover, by construction, $\mathcal{B}_v\subseteq\mathcal{A}_v^\perp$, so we have
\[
\mathcal{H}_v = \mathcal{A}_v \oplus \mathcal{B}_v.
\]
Consequently, we have
\[
\Pi_v = \Pi_{\mathcal{A}_v} + \Pi_{\mathcal{B}_v}.
\]

It remains to identify the limiting behavior of the two projections. We first
consider \(\mathcal A_v\). The restriction
\(T_{\Omega_\star}\big|_{E_\star}\) is injective, because
\(E_\star=N_\star^\perp\) and \(N_\star=\ker(T_{\Omega_\star})\). Since \(E_\star\)
is finite-dimensional, this injectivity is stable under small operator-norm
perturbations. In particular, by \Cref{lem:T-diff},
\[
T_v\big|_{E_\star}\to T_{\Omega_\star}\big|_{E_\star}
\]
in operator norm, and hence \(T_v\big|_{E_\star}\) remains injective for all sufficiently small \(v\). Thus \(\mathcal A_v\) has the same dimension as
\(\mathcal H_\star\) for small \(v\). Consequently, \(\mathcal A_v\) converge to \(\mathcal H_\star\) in the Grassmannian topology, or
equivalently,
\[
\|\Pi_{\mathcal A_v}-\Pi_{\Omega_\star}\|_{\mathrm{op}}\to0.
\]
The second projection is controlled directly by \Cref{ass:stable-null-span}, which
gives
\[
\|\Pi_{\mathcal B_v}-\Pi_{\mathcal B_0}\|_{\mathrm{op}}\to0.
\]
Since \(\mathcal H_v=\mathcal A_v\oplus\mathcal B_v\), by orthogonality, we have
\[
\Pi_{\Omega(v)}=\Pi_{\mathcal A_v}+\Pi_{\mathcal B_v}.
\]
Combining the two projection limits yields
\[
\Pi_{\Omega(v)}
\to
\Pi_{\Omega_\star}+\Pi_{\mathcal B_0}
\]
in operator norm.

For small enough $v$,~\Cref{ass:phi-C1-M} yields 
\begin{align*}
    \norm{DT_{\Omega_\star}[v]\theta_\star}_{L^2}^2 = \E_{X}\big(\theta_\star^\top D_\Omega\phi(X,\Omega_\star)[v] \big)^2 \leq \norm{D_{\Omega_\star}\phi}_\mathrm{op}^2\, \norm{\theta_\star}_2^2\,\norm{v}_2^2 = O(\norm{v}^2_2).
\end{align*}
Therefore,
\begin{align*}
(I-\Pi_{\Omega(v)})DT_{\Omega_\star}[v]\theta_\star &= (I-\Pi_{\Omega_\star}-\Pi_{\mathcal{B}_0})DT_{\Omega_\star}[v]\theta_\star + o(\norm{v}).    
\end{align*}
Combining this with the earlier expansion for $e_{\Omega(v)}$ yields
\begin{align*}
e_{\Omega(v)}
=
-
(I-\Pi_{\Omega_\star}-\Pi_{\mathcal B_0})
DT_{\Omega_\star}[v]\theta_\star
+
o(\|v\|)
=
\mathcal L(v)+o(\|v\|).    
\end{align*}

\end{proof}

\begin{lemma}[Well-definedness of \(\mathcal L\)]
\label{lem:L-well-defined}
Assume Assumptions \ref{ass:moments}, \ref{ass:phi-C1-M}, and \ref{ass:stable-null-span}. The map
\[
\mathcal L(v)
\coloneqq
-
(I-\Pi_{\Omega_\star}-\Pi_{\mathcal B_0})DT_{\Omega_\star}[v]\theta_\star
\]
does not depend on the choice of \(\theta_\star\in\R^p\) satisfying
\(f_\star=T_{\Omega_\star}\theta_\star\).
\end{lemma}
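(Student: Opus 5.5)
The plan is to reduce the claim to showing that $\mathcal L$ annihilates the kernel directions, and then obtain that by a scaling argument that reuses the ingredients of the proof of \Cref{lem:stable-null-span-residual-expansion}. Suppose $\theta_\star,\theta_\star'$ both satisfy $T_{\Omega_\star}\theta_\star=T_{\Omega_\star}\theta_\star'=f_\star$, and set $u\coloneqq\theta_\star'-\theta_\star$. Then $u\in N_\star=\ker(T_{\Omega_\star})$. The map $\theta\mapsto -(I-\Pi_{\Omega_\star}-\Pi_{\mathcal B_0})DT_{\Omega_\star}[v]\theta$ is linear in $\theta$, so it suffices to prove
\[
(I-\Pi_{\Omega_\star}-\Pi_{\mathcal B_0})\,DT_{\Omega_\star}[v]\,u=0
\qquad\text{for all } v\in T_{\Omega_\star}\mathcal M,\ u\in N_\star .
\]
The case $v=0$ is trivial, so fix $v\neq0$.

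The key observation is that perturbed null directions stay inside the perturbed class. For small $t>0$ we have $T_{tv}u\in\mathcal H_{tv}$, hence $(I-\Pi_{tv})T_{tv}u=0$. Since $T_{\Omega_\star}u=0$, \Cref{lem:T-diff} gives
\[
T_{tv}u=(T_{tv}-T_{\Omega_\star})u=DT_{\Omega_\star}[tv]u+r_T(tv)u=t\,DT_{\Omega_\star}[v]u+o(t).
\]
Applying the non-expansive projector $I-\Pi_{tv}$ then yields $t\,(I-\Pi_{tv})DT_{\Omega_\star}[v]u=o(t)$. Dividing by $t$,
\[
(I-\Pi_{tv})DT_{\Omega_\star}[v]u\to0 \qquad\text{as } t\downarrow0 .
\]

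Next we pass to the limit in the projector. The proof of \Cref{lem:stable-null-span-residual-expansion} shows that $\Pi_{\Omega(w)}\to\Pi_{\Omega_\star}+\Pi_{\mathcal B_0}$ in operator norm as $w\to0$. This uses the decomposition $\Pi_w=\Pi_{\mathcal A_w}+\Pi_{\mathcal B_w}$, the convergence $\Pi_{\mathcal A_w}\to\Pi_{\Omega_\star}$, and \Cref{ass:stable-null-span}. Taking $w=tv$, and using that $DT_{\Omega_\star}[v]u$ is a fixed element of $L^2$ (finite norm by \Cref{ass:phi-C1-M}), we get
\[
(I-\Pi_{tv})DT_{\Omega_\star}[v]u\to(I-\Pi_{\Omega_\star}-\Pi_{\mathcal B_0})DT_{\Omega_\star}[v]u .
\]
Comparing with the previous display, this limit must be $0$, which gives the claim. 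Note that $\Pi_{\Omega_\star}+\Pi_{\mathcal B_0}$ is itself an orthogonal projector, since $\mathcal B_0\subseteq\mathcal H_{\Omega_\star}^\perp$, although the argument does not need this.

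The only delicate point is that the operator-norm convergence of $\Pi_{tv}$ must hold along the ray $t\mapsto tv$. This is automatic because both that convergence and \Cref{ass:stable-null-span} are stated as limits as $v\to0$. I also need the $o(\|v\|)$ remainder from \Cref{lem:T-diff} to be in operator norm, so that it survives multiplication by the fixed vector $u$; this is exactly what that lemma provides.
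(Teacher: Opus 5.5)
Your proposal is correct and follows essentially the same route as the paper. Both arguments reduce to kernel directions $u\in N_\star$, use $T_v u\in\mathcal H_v$ together with the operator-norm remainder from \Cref{lem:T-diff}, and pass to the limiting projector $\Pi_{\Omega_\star}+\Pi_{\mathcal B_0}$. The only difference is cosmetic: you scale along the ray $tv$ and divide by $t$, whereas the paper shows that the linear map $v\mapsto(I-\Pi_{\Omega_\star}-\Pi_{\mathcal B_0})DT_{\Omega_\star}[v]\eta$ is $o(\|v\|)$ and hence vanishes, which is the same fact in another form.
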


\begin{proof}
Let \(\theta_\star'\in\R^p\) be another coefficient vector such that
\(f_\star=T_{\Omega_\star}\theta_\star'\). Then
\[
\eta\coloneqq \theta_\star'-\theta_\star\in \ker(T_{\Omega_\star})=N_\star.
\]
It suffices to show that, for every \(\eta\in N_\star\),
\[
(I-\Pi_{\Omega_\star}-\Pi_{\mathcal B_0})DT_{\Omega_\star}[v]\eta=0 .
\]

Fix \(\eta\in N_\star\). Since \(T_{\Omega_\star}\eta=0\), \Cref{lem:T-diff} gives
\[
T_v \eta
=T_{\Omega_\star}\eta+DT_{\Omega_\star}[v]\eta+r_T(v)\eta
=DT_{\Omega_\star}[v]\eta+r_T(v)\eta,
\qquad
\|r_T(v)\eta\|_{L^2}=o(\|v\|_2).
\]
On the other hand, by the orthogonal decomposition
\(\mathcal H_v=\mathcal A_v\oplus\mathcal B_v\), the projector onto
\(\mathcal H_v\) is
\[
\Pi_v=\Pi_{\mathcal A_v}+\Pi_{\mathcal B_v}.
\]
Since \(T_v \eta\in\mathcal H_v\), we have
\[
(I-\Pi_{\mathcal A_v}-\Pi_{\mathcal B_v})T_v \eta=0.
\]
Substituting the expansion of \(T_v\eta\) yields
\[
(I-\Pi_{\mathcal A_v}-\Pi_{\mathcal B_v})DT_{\Omega_\star}[v]\eta
=
-(I-\Pi_{\mathcal A_v}-\Pi_{\mathcal B_v})r_T(v)\eta
=
o(\|v\|_2),
\]
because \(I-\Pi_{\mathcal A_v}-\Pi_{\mathcal B_v}\) is an orthogonal projector and
therefore has operator norm at most one.

By the convergence
\[
\Pi_{\mathcal A_v}\to\Pi_{\Omega_\star},
\qquad
\Pi_{\mathcal B_v}\to\Pi_{\mathcal B_0}
\]
in operator norm, and since
\[
\|DT_{\Omega_\star}[v]\eta\|_{L^2}=O(\|v\|_2),
\]
we also have
\[
\begin{aligned}
&(I-\Pi_{\Omega_\star}-\Pi_{\mathcal B_0})DT_{\Omega_\star}[v]\eta \\
&\quad =
(I-\Pi_{\mathcal A_v}-\Pi_{\mathcal B_v})DT_{\Omega_\star}[v]\eta
+
(\Pi_{\mathcal A_v}-\Pi_{\Omega_\star}
+\Pi_{\mathcal B_v}-\Pi_{\mathcal B_0})
DT_{\Omega_\star}[v]\eta
=
o(\|v\|_2).
\end{aligned}
\]
The left-hand side is linear in \(v\). A linear map that is \(o(\|v\|_2)\) at the
origin must be identically zero. Therefore
\[
(I-\Pi_{\Omega_\star}-\Pi_{\mathcal B_0})DT_{\Omega_\star}[v]\eta=0
\]
for every \(v\in T_{\Omega_\star}\mathcal M\) and every \(\eta\in N_\star\). Thus replacing \(\theta_\star\) by \(\theta_\star+\eta\) does not change
\(\mathcal L(v)\), and \(\mathcal L\) is well-defined.
\end{proof}

\begin{corollary}[Deterministic first-order remainder bound]  \label{lem:rep-linearization}
Assume Assumptions~\ref{ass:phi-C1-M} and~\ref{ass:stable-null-span}. Then there exists a function
$\omega:[0,\infty)\to[0,\infty)$ with $\omega(t)\to 0$ as $t\to 0$ such that for all
$v\in T_{\Omega_\star}\mathcal M$ in a neighborhood of $0$,
\[
\bigl\|e_{\exp_{\Omega_\star}(v)}-\mathcal L(v)\bigr\|_{L^2(\mu_{\mathrm{down}})}
\le
\omega(\|v\|)\,\|v\|.
\]  
\end{corollary}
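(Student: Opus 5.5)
This is a direct repackaging of \Cref{lem:stable-null-span-residual-expansion}. That lemma gives $e_{\Omega(v)}=\mathcal L(v)+o(\|v\|)$ in $L^2(\mu_{\mathrm{down}})$ as $v\to 0$. The plan is to convert this little-$o$ statement into an explicit modulus. Fix a radius $r_0>0$ small enough that the lemma's hypotheses hold on $\{\|v\|\le r_0\}$, i.e.\ $\Omega(v)=\exp_{\Omega_\star}(v)$ lies in the normal neighborhood $\mathcal U$ of \Cref{ass:phi-C1-M}. For $0\le t\le r_0$ define
\[
\omega(t)\coloneqq \sup\Bigl\{\tfrac{\|e_{\exp_{\Omega_\star}(v)}-\mathcal L(v)\|_{L^2}}{\|v\|} : 0<\|v\|\le t\Bigr\},
\]
set $\omega(0)=0$, and extend $\omega$ arbitrarily (say by $\omega(r_0)$) for $t>r_0$.

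By construction the bound $\|e_{\exp_{\Omega_\star}(v)}-\mathcal L(v)\|\le\omega(\|v\|)\|v\|$ holds for $0<\|v\|\le r_0$. At $v=0$ both sides vanish, since $e_{\Omega_\star}=0$ by well-specification and $\mathcal L(0)=0$ by linearity. The function $\omega$ is nondecreasing, and the little-$o$ statement says precisely that $\omega(t)\to 0$ as $t\to 0$.

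The only point needing care is that $\omega(t)$ is finite. This is what the $o(\|v\|)$ statement delivers: unpacking it, for every $\varepsilon>0$ there is a $t_\varepsilon>0$ with ratio at most $\varepsilon$ whenever $\|v\|\le t_\varepsilon$. This is what the lemma's proof actually establishes, with no sequential or pointwise caveat: the remainder there is assembled from the operator-norm bound $\|r_T(v)\|_{\mathrm{op}}=o(\|v\|)$ of \Cref{lem:T-diff}, from the operator-norm convergences $\Pi_{\mathcal A_v}\to\Pi_{\Omega_\star}$ and $\Pi_{\mathcal B_v}\to\Pi_{\mathcal B_0}$, and from $\|DT_{\Omega_\star}[v]\theta_\star\|=O(\|v\|)$, all of which are uniform in the direction of $v$. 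So $\omega(t)\le\varepsilon$ for $t\le t_\varepsilon$, which gives both finiteness near $0$ and $\omega(t)\to 0$.

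If one also wants $\omega$ finite on the whole of $[0,r_0]$, shrink $r_0$ to $t_1$, taking $\varepsilon=1$. Alternatively, bound the ratio directly: $\|e_\Omega\|\le\|f_\star\|$ because $I-\Pi_\Omega$ is a contraction, and $\|\mathcal L(v)\|\le C\|v\|$ because $\mathcal L$ is bounded, by the $O(\|v\|)$ estimate in the lemma's proof. Hence the ratio is at most $\|f_\star\|/\|v\|+C$, which is finite away from $0$.

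The expected "obstacle" is therefore only bookkeeping: checking that the lemma's $o(\|v\|)$ is uniform in the direction of $v$. That holds since every ingredient is an operator-norm statement.
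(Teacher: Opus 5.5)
Your proposal is correct and is essentially the paper's proof. The paper likewise writes $e_{\exp_{\Omega_\star}(v)}=\mathcal L(v)+r(v)$ via \Cref{lem:stable-null-span-residual-expansion}, sets $\omega(t)\coloneqq\sup_{0<\|v\|\le t}\|r(v)\|_{L^2}/\|v\|$, and reads off $\omega(t)\to 0$ from $r(v)=o(\|v\|)$; your extra bookkeeping on finiteness of $\omega$, the case $v=0$, and uniformity in the direction of $v$ (which is automatic from the definition of $o(\|v\|)$ as $v\to 0$) is correct but left implicit there.
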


\begin{proof}
By~\Cref{lem:stable-null-span-residual-expansion}, $e_{\exp_{\Omega_\star}(v)}=\mathcal{L}(v) + r(v)$ where $r(v)=o(\norm{v})$. Define
\[
\omega(t)\coloneqq
\sup_{0<\|v\|\le t}
\frac{\|r(v)\|_{L^2(\mu_{\mathrm{down}})}}{\|v\|}.
\]
Then $\omega(t)\to0$ as $t\to0$, and the desired bound follows.
\end{proof}

\subsubsection{Distributional limit of the representation term}
\label{app:pretrain-fluct:dist}

By~\Cref{lem:rep-linearization},
\[
e_{\Omega_m}
=
e_{\exp_{\Omega_\star}(v_m)}
=
\mathcal L(v_m)+r(v_m),
\qquad
\|r(v_m)\|_{L^2(\mu_{\mathrm{down}})}\le \omega(\|v_m\|)\,\|v_m\|.
\]
Multiplying by $\sqrt m$ yields
\[
\sqrt m\,e_{\Omega_m}
=
\mathcal L(Z_m)+\sqrt m\,r(v_m),
\qquad
\|\sqrt m\,r(v_m)\|_{L^2(\mu_{\mathrm{down}})}\le \omega(\|v_m\|)\,\|Z_m\|.
\]

\begin{lemma}[The linearization remainder is negligible in probability]
\label{lem:rep-rem-op}
Assume Assumptions~\ref{ass:riem_mestimation},~\ref{ass:phi-C1-M}, and~\ref{ass:stable-null-span}. Then
\[
\|\sqrt m\,r(v_m)\|_{L^2(\mu_{\mathrm{down}})}\to 0
\qquad\text{in probability.}
\]
\end{lemma}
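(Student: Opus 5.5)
The argument uses the deterministic bound from \Cref{lem:rep-linearization},
\[
\|\sqrt m\,r(v_m)\|_{L^2(\mu_{\mathrm{down}})}\le \omega(\|v_m\|)\,\|Z_m\|,
\]
which holds on the event that $v_m$ lies in the neighborhood of $0$ where the corollary applies. The plan is to show that the right-hand side is a product of an $o_{\mathbb P}(1)$ factor and an $O_{\mathbb P}(1)$ factor, and then to handle the complementary event separately.

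First, by \Cref{prop:riem_consistency}, $\Omega_m\to\Omega_\star$ in probability. Let $A_m$ be the event that $\Omega_m$ lies in a normal neighborhood of $\Omega_\star$ on which both $\log_{\Omega_\star}$ and the bound of \Cref{lem:rep-linearization} are valid (shrink the neighborhood if needed). Then $\mathbb P(A_m)\to1$. Off $A_m$ the quantity is irrelevant, since for any $\epsilon>0$ we have $\mathbb P(\|\sqrt m\,r(v_m)\|>\epsilon)\le \mathbb P(\{\cdot\}\cap A_m)+\mathbb P(A_m^c)$.

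Second, by \Cref{thm:riem_mestimator_clt}, $Z_m=\sqrt m\,v_m$ converges in distribution. Prohorov's theorem (or directly the tightness of a weakly convergent sequence in finite dimensions) then gives $\|Z_m\|=O_{\mathbb P}(1)$. It follows that $\|v_m\|=\|Z_m\|/\sqrt m\to0$ in probability.

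Third, since $\omega(t)\to0$ as $t\to0$, for any $\eta>0$ there is $t_\eta$ with $\omega(t)\le\eta$ for $t\le t_\eta$. Hence $\mathbb P(\omega(\|v_m\|)>\eta)\le\mathbb P(\|v_m\|>t_\eta)\to0$, so $\omega(\|v_m\|)=o_{\mathbb P}(1)$. Note that no continuity of $\omega$ is needed for this step, only the limit at $0$; measurability can be sidestepped by working with outer probability or by bounding via the event $\{\|v_m\|\le t_\eta\}$. Combining, fix $\epsilon,\delta>0$, pick $K$ with $\mathbb P(\|Z_m\|>K)<\delta$ for all large $m$, and set $\eta=\epsilon/K$. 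Then
\[
\mathbb P\bigl(\|\sqrt m\,r(v_m)\|>\epsilon\bigr)\le \mathbb P(A_m^c)+\mathbb P(\|Z_m\|>K)+\mathbb P(\|v_m\|>t_\eta),
\]
whose $\limsup$ is at most $\delta$.

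There is no real obstacle here. The only point requiring care is that the bound from \Cref{lem:rep-linearization} holds only locally, which is why the argument localizes to $A_m$ using consistency.
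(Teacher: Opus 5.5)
Your proof is correct and follows the paper's argument essentially step for step. Like the paper, you use tightness of $\|Z_m\|$ and $\|v_m\|\to0$ in probability, choose $t_0$ with $\omega(t)\le\varepsilon/M$ on $[0,t_0]$, and conclude with the union bound $\mathbb P(\|\sqrt m\,r(v_m)\|>\varepsilon)\le\mathbb P(\|v_m\|>t_0)+\mathbb P(\|Z_m\|>M)$. The two small differences are harmless refinements: your explicit localization event $A_m$, which the paper leaves implicit, and deducing $\|v_m\|\to0$ from $\|Z_m\|/\sqrt m$ rather than from consistency.
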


\begin{proof}
Since $\Omega_m\to \Omega_\star$ in probability and $v_m=\log_{\Omega_\star}(\Omega_m)$ on a normal neighborhood, we have $\|v_m\|\to 0$
in probability. Moreover, $\|Z_m\|$ is tight by the CLT. Fix $\varepsilon>0$ and $\delta>0$. By tightness of $\|Z_m\|$, choose
$0<M<\infty$ such that
\[
\limsup_{m\to\infty}\Pr(\|Z_m\|>M)\le \delta .
\]
Since $\omega(t)\to0$, choose $t_0>0$ such that
\[
\omega(t)\le \frac{\varepsilon}{M}
\qquad\text{for all }0\le t\le t_0 .
\]
On the event
\[
\{\|v_m\|\le t_0\}\cap\{\|Z_m\|\le M\},
\]
we have
\[
\|\sqrt m\,r(v_m)\|_{L^2(\mu_{\mathrm{down}})}
\le
\omega(\|v_m\|)\|Z_m\|
\le
\varepsilon .
\]
Therefore
\[
\Pr\!\left(
\|\sqrt m\,r(v_m)\|_{L^2(\mu_{\mathrm{down}})}>\varepsilon
\right)
\le
\Pr(\|v_m\|>t_0)+\Pr(\|Z_m\|>M).
\]
Taking $\limsup_{m\to\infty}$ gives
\[
\limsup_{m\to\infty}
\Pr\!\left(
\|\sqrt m\,r(v_m)\|_{L^2(\mu_{\mathrm{down}})}>\varepsilon
\right)
\le
\delta .
\]
Since $\delta>0$ is arbitrary, the claim follows.
\end{proof}

\begin{lemma}[Quadratic approximation]
\label{lem:rep-quad-approx}
Assume Assumptions~\ref{ass:riem_mestimation},~\ref{ass:phi-C1-M}, and~\ref{ass:stable-null-span}. Then
\[
m\,\mathrm{Rep}(\Omega_m)
-
\|\mathcal L(Z_m)\|_{L^2(\mu_{\mathrm{down}})}^2
\to 0
\qquad\text{in probability.}
\]
\end{lemma}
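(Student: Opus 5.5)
We work on the event $A_m$ where $\Omega_m$ lies in the normal neighborhood on which \Cref{lem:rep-linearization} applies, so that $v_m=\log_{\Omega_\star}(\Omega_m)$ and $Z_m=\sqrt m\,v_m$. Since $\Pr(A_m)\to1$ by consistency (\Cref{prop:riem_consistency}), it suffices to prove the claim on $A_m$.

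On $A_m$ we write $a_m\coloneqq\mathcal L(Z_m)$ and $b_m\coloneqq\sqrt m\,r(v_m)$, so that $\sqrt m\,e_{\Omega_m}=a_m+b_m$ in $L^2(\mu_{\mathrm{down}})$ by linearity of $\mathcal L$. Then
\[
m\,\mathrm{Rep}(\Omega_m)=\|a_m+b_m\|_{L^2}^2,
\]
and the elementary inequality
\[
\bigl|\|a+b\|^2-\|a\|^2\bigr|\le 2\|a\|\,\|b\|+\|b\|^2
\]
reduces the claim to two facts: $\|b_m\|_{L^2}\to0$ in probability, and $\|a_m\|_{L^2}=O_{\mathbb P}(1)$.

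The first fact is exactly \Cref{lem:rep-rem-op}. For the second, $\mathcal L$ is a bounded linear map on the finite-dimensional space $T_{\Omega_\star}\mathcal M$. Indeed, $\|DT_{\Omega_\star}[v]\theta_\star\|_{L^2}\le C\|\theta_\star\|\,\|v\|$ by \Cref{ass:phi-C1-M} (as in the proof of \Cref{lem:stable-null-span-residual-expansion}), and $I-\Pi_{\Omega_\star}-\Pi_{\mathcal B_0}$ is an orthogonal projector because $\mathcal B_0\perp\mathcal H_{\Omega_\star}$. Hence $\|a_m\|\le\|\mathcal L\|_{\mathrm{op}}\|Z_m\|$, and $\|Z_m\|$ is tight by \Cref{thm:riem_mestimator_clt}.

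Combining these, the difference is bounded by $2\|\mathcal L\|_{\mathrm{op}}\|Z_m\|\,\|b_m\|+\|b_m\|^2$ on $A_m$. This is $O_{\mathbb P}(1)\cdot o_{\mathbb P}(1)+o_{\mathbb P}(1)\to0$ in probability. Adding the vanishing probability of $A_m^c$ finishes the argument.

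The only delicate point is making sure the remainder bound of \Cref{lem:rep-linearization} is applied only on the neighborhood where it holds, which the event $A_m$ handles. There is no substantive obstacle beyond that.
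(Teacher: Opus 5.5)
Your proposal is correct and follows the paper's proof essentially verbatim: expand $\|\mathcal L(Z_m)+\sqrt m\,r(v_m)\|_{L^2}^2$, bound the cross term using boundedness of $\mathcal L$ and tightness of $Z_m$, and invoke \Cref{lem:rep-rem-op} for the remainder. Your explicit restriction to the event $A_m$ and your justification that $\mathcal L$ is bounded (via \Cref{ass:phi-C1-M} and the fact that $I-\Pi_{\Omega_\star}-\Pi_{\mathcal B_0}$ is an orthogonal projector since $\mathcal B_0\subseteq\mathcal H_{\Omega_\star}^\perp$) fill in details that the paper leaves implicit.
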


\begin{proof}
Expanding the square,
\[
m\,\mathrm{Rep}(\Omega_m)
=
\|\sqrt m\,e_{\Omega_m}\|_{L^2}^2
=
\|\mathcal L(Z_m)\|_{L^2}^2
+
2\langle \mathcal L(Z_m),\sqrt m\,r(v_m)\rangle_{L^2}
+
\|\sqrt m\,r(v_m)\|_{L^2}^2.
\]
Since $\mathcal L$ is bounded,
\[
\|\mathcal L(Z_m)\|_{L^2}
\le
\|\mathcal L\|_{\mathrm{op}}\|Z_m\|.
\]
Since $\|Z_m\|$ is tight, $\|\mathcal L(Z_m)\|_{L^2}$ is tight.
By Lemma~\ref{lem:rep-rem-op}, $\|\sqrt m\,r(v_m)\|_{L^2}\to 0$ in probability, which implies both the cross term and
the squared term vanish in probability.
\end{proof}

\begin{proof}[Proof of Proposition~\ref{prop:rep-term-dist}]
By Lemma~\ref{lem:rep-quad-approx}, it suffices to identify the limit law of $\|\mathcal L(Z_m)\|_{L^2}^2$.
Since $Z_m\distconv Z$ in $T_{\Omega_\star}\mathcal M$ and $z\mapsto \|\mathcal L(z)\|_{L^2}^2$ is continuous, the continuous mapping theorem yields
\[
\|\mathcal L(Z_m)\|_{L^2}^2\distconv \|\mathcal L(Z)\|_{L^2}^2.
\]
Combining with Lemma~\ref{lem:rep-quad-approx} gives the claim.
\end{proof}

\subsubsection{A constant-rank sufficient condition}\label{sec:diff-proj-ass} 
We next connect the residual expansion above to the more familiar regular
constant-rank setting. Under the stable-null-span condition, the special case
\(\mathcal B_0=\{0\}\) rules out the activation of directions that are null at
\(\Omega_\star\). In this case, the downstream feature covariance has locally
constant rank and a positive eigengap; consequently, the Moore--Penrose inverse
is smooth on the corresponding rank stratum, and the full population projector
map \(\Omega\mapsto\Pi_\Omega\) is Fr\'echet differentiable in operator norm.
Thus, the abstract residual map in the expansion above agrees with the usual
projector derivative:
\[
\mathcal L(v)=-D\Pi_{\Omega_\star}[v]f_\star = -\big(I-\Pi_{\Omega_\star}\big)DT_{\Omega_\star}[v]\theta_\star.
\]

\begin{lemma}\label{lem:B0-zero-implies-rank-stability}
Assume Assumptions~\ref{ass:moments},~\ref{ass:phi-C1-M}, and~\ref{ass:stable-null-span}. Suppose that $\mathcal B_0 = \{0\}$. Then, there exists a neighborhood $\mathcal U_0\subset\mathcal U$ of $\Omega_\star$ and a constant $\kappa>0$ such that, for all $\Omega \in \mathcal U_0$,
\[
\mathrm{rank}\big(\Sigma(\Omega)\big) = \mathrm{rank}\big(\Sigma(\Omega_\star)\big) =: r, \qquad \lambda_r(\Sigma(\Omega))\ge \kappa .
\] 
\end{lemma}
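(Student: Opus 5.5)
We will use the fact that $\mathrm{rank}(\Sigma(\Omega))=\dim\mathcal H_\Omega$, since $\Sigma(\Omega)=T_\Omega^{\mathrm{adj}}T_\Omega$ and $\ker\Sigma(\Omega)=\ker T_\Omega$. The rank statement then reduces to showing $\dim\mathcal H_v=\dim\mathcal H_{\Omega_\star}$ for all small $v$, where $\Omega=\exp_{\Omega_\star}(v)$. The proof of \Cref{lem:stable-null-span-residual-expansion} gives the orthogonal decomposition $\mathcal H_v=\mathcal A_v\oplus\mathcal B_v$. It also shows, via \Cref{lem:T-diff} and the stability of injectivity on the finite-dimensional space $E_\star$, that $T_v|_{E_\star}$ is injective for small $v$, so that $\dim\mathcal A_v=\dim E_\star=r$. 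It therefore suffices to show $\mathcal B_v=\{0\}$ for small $v$.

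For this we invoke \Cref{ass:stable-null-span} with $\mathcal B_0=\{0\}$, which gives $\|\Pi_{\mathcal B_v}\|_{\mathrm{op}}\to 0$. An orthogonal projector has operator norm either $0$ or $1$. Hence $\|\Pi_{\mathcal B_v}\|_{\mathrm{op}}<1$ forces $\Pi_{\mathcal B_v}=0$, that is, $\mathcal B_v=\{0\}$, on a neighborhood of $v=0$. This gives $\mathrm{rank}\,\Sigma(\Omega)=r$ on some $\mathcal U_0$, which we take inside the normal neighborhood and inside $\mathcal U$.

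For the eigenvalue lower bound, we first note that for $\theta\in E_\star$ with $\|\theta\|=1$, $\theta^\top\Sigma(\Omega)\theta=\|T_\Omega\theta\|_{L^2}^2\ge(\sigma_{\min}(T_{\Omega_\star}|_{E_\star})-\|T_\Omega-T_{\Omega_\star}\|_{\mathrm{op}})^2$. The quantity $\|T_\Omega-T_{\Omega_\star}\|_{\mathrm{op}}=O(\|v\|)$ by \Cref{lem:T-diff}, and $\sigma_{\min}(T_{\Omega_\star}|_{E_\star})>0$ by injectivity on the finite-dimensional $E_\star$. So the Rayleigh quotient is at least $c^2/4$ on $E_\star$ once $v$ is small.

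Since $\dim E_\star=r$, Courant--Fischer gives $\lambda_r(\Sigma(\Omega))\ge\min_{\theta\in E_\star,\|\theta\|=1}\theta^\top\Sigma(\Omega)\theta\ge\kappa:=c^2/4$, with no need to identify the moving kernel. After possibly shrinking $\mathcal U_0$, both conclusions hold simultaneously.

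The main subtlety is the projector-norm dichotomy. We must check that $\mathcal B_v$ is closed, which holds because it is finite-dimensional as the image of $N_\star$, so that $\Pi_{\mathcal B_v}$ is a genuine orthogonal projector. We must also ensure the decomposition $\mathcal H_v=\mathcal A_v\oplus\mathcal B_v$ is available, which is already established in the earlier lemma. \Cref{ass:moments} only serves to guarantee that $\Sigma(\Omega)$ is finite on $\mathcal U$.
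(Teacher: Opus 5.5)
Your proposal is correct and follows the paper's proof essentially step for step. The norm dichotomy for orthogonal projectors forces $\mathcal B_v=\{0\}$, and stability of injectivity of $T_v|_{E_\star}$ then gives $\operatorname{rank}\Sigma(\Omega)=r$; the eigenvalue bound comes from $\|T_v-T_{\Omega_\star}\|_{\mathrm{op}}\to 0$. The only cosmetic difference is that you bound $\lambda_r$ via Courant--Fischer with test subspace $E_\star$, while the paper invokes the singular-value perturbation bound $s_r(T_v)\ge \tfrac12 s_r(T_{\Omega_\star})$; these are the same estimate, since $s_r(T_{\Omega_\star})=\sigma_{\min}(T_{\Omega_\star}|_{E_\star})$.
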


\begin{proof}
By \Cref{lem:T-diff},
\[
\|T_v-T_\star\|_{\mathrm{op}}\to0
\qquad
\text{as }v\to0.
\]
First, we show the local constancy of the rank. Since $\mathcal B_0=\{0\}$, we have $\Pi_{\mathcal B_0}=0$. By~\Cref{ass:stable-null-span}, we have
\[
\|\Pi_{\mathcal B_v}\|_{\mathrm{op}}
=
\|\Pi_{\mathcal B_v}-\Pi_{\mathcal B_0}\|_{\mathrm{op}}
\to 0.
\]
But, an orthogonal projector has operator norm either $0$ or $1$. Hence, for all sufficiently small non-zero $v$, we have $\Pi_{\mathcal{B}_v}=0$, and therefore, we have $\mathcal{B}_v =0$. For such $v$,
\[
\mathcal H_v=\mathcal A_v\oplus\mathcal B_v=\mathcal A_v.
\]
Moreover, \(T_\star|_{E_\star}\) is injective, and injectivity is stable under small operator-norm perturbations on a finite-dimensional domain. Hence, for all sufficiently small
\(v\),
\[
T_v|_{E_\star}
\]
remains injective. Therefore
\[
\rank(T_v)=\dim\mathcal A_v=\dim E_\star=\rank(T_{\Omega_\star})
\]
for all sufficiently small \(v\). Since
\[
\rank(\Sigma(\Omega(v)))=\rank(T_v^\ast T_v)=\rank(T_v),
\]
we obtain
\[
\rank(\Sigma(\Omega(v)))=r
\]
for all sufficiently small \(v\).

It remains to prove a uniform positive lower bound on the \(r\)-th positive eigenvalue. Let
\(s_r(T_\star)>0\) denote the smallest nonzero singular value of \(T_\star\). Since
\[
\|T_v-T_\star\|_{\mathrm{op}}\to0,
\]
the singular-value perturbation bound gives, for sufficiently small \(v\), we have $s_r(T_v)\ge \frac12 s_r(T_\star)$. Because
\[
\lambda_r(\Sigma(\Omega(v)))=s_r(T_v)^2,
\]
we get
\[
\lambda_r(\Sigma(\Omega(v)))
\ge
\frac14 s_r(T_\star)^2
\]
for all sufficiently small \(v\). Taking
\[
\kappa\coloneqq \frac14 s_r(T_\star)^2
\]
and shrinking the normal neighborhood if necessary proves the claim.
\end{proof}

\begin{lemma}[Differentiability of the pseudoinverse on the stable-rank region]
\label{lem:pinv-diff}
Let $A\succeq 0$ be symmetric with $\rank(A)=r$ and $\lambda_r(A)\ge \kappa>0$.
Then the restriction of the Moore--Penrose map to the stable-rank region
\[
\mathcal R_{r,\kappa}\coloneqq \{B\succeq 0:\ B=B^\top,\ \rank(B)=r,\ \lambda_r(B)\ge \kappa\}
\]
is Fr\'echet differentiable at $A$ (in operator norm). Its derivative in direction $H$ is
\begin{equation}
\label{eq:Dpinv}
D(A^+)[H]
=
-\,A^+ H A^+
+
A^{+2}H(I-AA^+)
+
(I-A^+A)H A^{+2}.
\end{equation}
In particular, there exists a constant $C_\kappa<\infty$ depending only on $\kappa$ such that
\[
\|D(A^+)[H]\|_{\mathrm{op}}\le C_\kappa\,\|H\|_{\mathrm{op}}.
\]
\end{lemma}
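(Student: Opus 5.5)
The plan is to represent the pseudoinverse by a Riesz--Dunford contour integral that is valid simultaneously for every matrix in the stable-rank region near $A$. Then differentiate under the integral and evaluate the resulting residues in the eigenbasis of $A$.

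\textbf{Step 1: contour representation.} Every $B\in\mathcal R_{r,\kappa}$ is symmetric PSD of rank $r$ with $\lambda_r(B)\ge\kappa$. Its spectrum therefore lies in $\{0\}\cup[\kappa,\infty)$. If in addition $\|B-A\|_{\mathrm{op}}\le 1$, then $\lambda_{\max}(B)\le \|A\|_{\mathrm{op}}+1=:\Lambda$. Let $\Gamma$ be a fixed positively oriented contour in $\mathbb C$ enclosing $[\kappa,\Lambda]$. It should stay at distance at least $\kappa/2$ from both $0$ and $[\kappa,\Lambda]$, for instance the boundary of a rectangle with vertical sides at $\Re z=\kappa/2$ and $\Re z=\Lambda+\kappa/2$. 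For such $B$, the spectral calculus applied to $g(z)=1/z$ gives
\[
B^+=\frac{1}{2\pi i}\oint_\Gamma z^{-1}(zI-B)^{-1}\,dz .
\]
This holds because $B^+=\sum_{\lambda_i(B)>0}\lambda_i^{-1}u_iu_i^\top$, and exactly the positive eigenvalues lie inside $\Gamma$. This is the step where the rank constraint and the eigengap are used essentially. Without them, small eigenvalues could cross the contour and $B\mapsto B^+$ would be discontinuous. I expect this to be the main conceptual point, but once it is in place the rest is routine.

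\textbf{Step 2: differentiate.} Write $B=A+H$ with $H$ small. On $\Gamma$ we have $\|(zI-A)^{-1}\|_{\mathrm{op}}\le 2/\kappa$. The Neumann series then gives
\[
(zI-B)^{-1}=(zI-A)^{-1}+(zI-A)^{-1}H(zI-A)^{-1}+R(z),
\]
with $\|R(z)\|_{\mathrm{op}}\le C\|H\|_{\mathrm{op}}^2$ uniformly on $\Gamma$ once $\|H\|_{\mathrm{op}}\le\kappa/4$. The contour $\Gamma$ has finite length and $|z|^{-1}\le 2/\kappa$ on it. Integrating therefore yields
\[
B^+-A^+ = L[H] + O(\|H\|_{\mathrm{op}}^2), \qquad L[H]\coloneqq\frac{1}{2\pi i}\oint_\Gamma z^{-1}(zI-A)^{-1}H(zI-A)^{-1}dz .
\]
This is Fréchet differentiability along the region with derivative $L$.

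\textbf{Step 3: evaluate $L$.} Write $A=\sum_i\lambda_iu_iu_i^\top$. Then
\[
u_i^\top L[H]u_j = H_{ij}\cdot\frac{1}{2\pi i}\oint_\Gamma \frac{dz}{z(z-\lambda_i)(z-\lambda_j)},
\]
and there are three cases.
\begin{itemize}
\item If $\lambda_i,\lambda_j>0$: the integrand decays like $|z|^{-3}$, so the residues over all poles sum to zero. The enclosed residues therefore equal minus the residue at $0$, namely $-1/(\lambda_i\lambda_j)$. This includes the case $\lambda_i=\lambda_j$, where the same argument applies to a double pole.
\item If exactly one of $\lambda_i,\lambda_j$ is positive, say $\lambda_i>0$ and $\lambda_j=0$: the only enclosed pole is $\lambda_i$, with residue $1/\lambda_i^2$.
\item If both vanish: no pole is enclosed, so the integral is $0$.
\end{itemize}
Reassembling these entries gives $-A^+HA^+$ on the range–range block. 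On the off-diagonal blocks it gives $A^{+2}H(I-AA^+)+(I-A^+A)HA^{+2}$, using $AA^+=A^+A$. This is exactly \eqref{eq:Dpinv}.

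\textbf{Step 4: bound.} Since $\|A^+\|_{\mathrm{op}}\le 1/\kappa$ and $I-AA^+$ is a projector, each of the three terms in \eqref{eq:Dpinv} is bounded by $\|H\|_{\mathrm{op}}/\kappa^2$. This gives $\|D(A^+)[H]\|_{\mathrm{op}}\le 3\kappa^{-2}\|H\|_{\mathrm{op}}$, so one can take $C_\kappa=3/\kappa^2$.
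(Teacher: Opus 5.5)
Your proof is correct, but it follows a different route from the paper, which gives no argument of its own and simply defers to \cite{golub1973differentiation}. The classical argument there is algebraic. Under a local constant-rank hypothesis, one differentiates the identities satisfied by $A^+$ and by the projectors $AA^+$ and $A^+A$. This yields the formula for general rectangular, non-symmetric matrices, with $H^\top$ and $A^{+\top}$ appearing in the two off-diagonal terms; in the symmetric case it reduces to \eqref{eq:Dpinv}.

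Your Riesz--Dunford representation instead shows that, on $\mathcal R_{r,\kappa}$ near $A$, the pseudoinverse coincides with $B\mapsto\frac{1}{2\pi i}\oint_\Gamma z^{-1}(zI-B)^{-1}\,dz$. That map is analytic on a full operator-norm neighborhood of $A$. This buys you several things:
\begin{itemize}
\item a self-contained proof with an explicit $O(\|H\|_{\mathrm{op}}^2)$ remainder and the explicit constant $C_\kappa=3/\kappa^2$;
\item a transparent view of where the rank constraint enters, since eigenvalues leaving $0$ would cross $\Gamma$;
\item consistency with the paper's own methods, since the paper uses the same contour device for the spectral projector in Lemma~\ref{lem:spectral-projector-diff}.
\end{itemize}

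The price is a reliance on symmetry: you use an orthonormal eigenbasis and the fact that the resolvent norm equals the inverse distance to the spectrum. This is harmless here, because the lemma is only applied to the symmetric covariance $\Sigma(\Omega)$. Your residue evaluation in Step~3, including the double-pole case, and your block-by-block match with \eqref{eq:Dpinv} are correct.
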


\begin{proof}
This is a standard result in perturbation theory. For the proof see~\cite{golub1973differentiation}.
\end{proof}

\begin{proposition}[Differentiability of \(\Omega\mapsto \Pi_\Omega\) on a constant-rank region]
\label{prop:proj-diff}
Fix \(\Omega_\star\in\mathcal M\). Assume Assumptions~\ref{ass:moments}--\ref{ass:stable-null-span}. Assume moreover that $\mathcal B_0 =0$. Then, the map \(\Omega\mapsto\Pi_\Omega\) is Fr\'echet differentiable at
\(\Omega_\star\) as a map into \(\mathcal B(L^2(\mu_{\mathrm{down}}))\). In
particular, for \(v\in T_{\Omega_\star}\mathcal M\),
\[
\bigl\|\Pi_{\exp_{\Omega_\star}(v)}
-\Pi_{\Omega_\star}
-D\Pi_{\Omega_\star}[v]\bigr\|_{\mathrm{op}}
=
o(\|v\|)
\qquad
\text{as } v\to0 .
\]
\end{proposition}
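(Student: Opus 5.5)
The plan is to write $\Pi_\Omega = T_\Omega\,\Sigma(\Omega)^+\,T_\Omega^{\mathrm{adj}}$ and treat it as a product of three maps that are each Fréchet differentiable at $\Omega_\star$ in operator norm; the product rule then gives differentiability of $\Omega\mapsto\Pi_\Omega$. Throughout, work in normal coordinates, with $\Omega(v)=\exp_{\Omega_\star}(v)$ and $v$ small.

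\textbf{Step 1 (the feature operator and its adjoint).} By \Cref{lem:T-diff}, $v\mapsto T_{\Omega(v)}\in\mathcal B(\R^p,L^2)$ is Fréchet differentiable at $0$. Taking adjoints is an isometric linear map $\mathcal B(\R^p,L^2)\to\mathcal B(L^2,\R^p)$. Hence $v\mapsto T^{\mathrm{adj}}_{\Omega(v)}$ is also differentiable, with derivative $(DT_{\Omega_\star}[v])^{\mathrm{adj}}$ and remainder $o(\|v\|)$ in operator norm.

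\textbf{Step 2 (the covariance).} Write $\Sigma(\Omega)=T_\Omega^{\mathrm{adj}}T_\Omega$. This is a product of two maps that are differentiable at $\Omega_\star$ and bounded nearby; boundedness follows from the moment bound in \Cref{ass:moments}. So $v\mapsto\Sigma(\Omega(v))$ is differentiable in operator norm with
\[
D\Sigma[v]=(DT_{\Omega_\star}[v])^{\mathrm{adj}}T_{\Omega_\star}+T_{\Omega_\star}^{\mathrm{adj}}DT_{\Omega_\star}[v].
\]

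\textbf{Step 3 (the pseudoinverse).} Since $\mathcal B_0=\{0\}$, \Cref{lem:B0-zero-implies-rank-stability} places $\Sigma(\Omega)$ in the stable-rank region $\mathcal R_{r,\kappa}$ for every $\Omega\in\mathcal U_0$. \Cref{lem:pinv-diff} then gives differentiability of the pseudoinverse at $\Sigma(\Omega_\star)$ restricted to $\mathcal R_{r,\kappa}$.

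This step is the main obstacle. \Cref{lem:pinv-diff} only gives differentiability along that restricted region, and it has to be composed with the curve $v\mapsto\Sigma(\Omega(v))$, which stays in $\mathcal R_{r,\kappa}$ but whose derivative is an ambient symmetric matrix. To handle this I would establish a uniform expansion
\[
\Sigma(\Omega(v))^+=\Sigma_\star^+ + D(\Sigma_\star^+)\big[\Sigma(\Omega(v))-\Sigma_\star\big] + O\big(\|\Sigma(\Omega(v))-\Sigma_\star\|_{\mathrm{op}}^2\big),
\]
valid for all $B\in\mathcal R_{r,\kappa}$ near $\Sigma_\star$, where $\Sigma_\star\coloneqq\Sigma(\Omega_\star)$. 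The uniform second-order bound is what makes the composition legitimate. It can be obtained from the Riesz-projection representation: the fixed eigengap $\kappa$ lets us write the spectral projector and $B^+=\frac{1}{2\pi i}\oint z^{-1}(z-B)^{-1}\,dz$ on a fixed contour separating $0$ from $[\kappa,\infty)$, and both are analytic in $B$ on the rank-$r$ stratum. Substituting $\Sigma(\Omega(v))-\Sigma_\star=D\Sigma[v]+o(\|v\|)$ then yields differentiability of $v\mapsto\Sigma(\Omega(v))^+$, with derivative $D(\Sigma_\star^+)[D\Sigma[v]]$ given by \eqref{eq:Dpinv}.

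\textbf{Step 4 (product rule).} The triple product of the three maps is differentiable at $0$, since each factor is differentiable with an $o(\|v\|)$ operator-norm remainder and is locally bounded. This gives
\[
D\Pi_{\Omega_\star}[v]=DT[v]\,\Sigma_\star^+T_\star^{\mathrm{adj}}+T_\star\,D(\Sigma^+)[D\Sigma[v]]\,T_\star^{\mathrm{adj}}+T_\star\Sigma_\star^+(DT[v])^{\mathrm{adj}},
\]
where $T_\star\coloneqq T_{\Omega_\star}$ and $DT[v]\coloneqq DT_{\Omega_\star}[v]$. The remainder is $o(\|v\|)$ in $\mathcal B(L^2)$, which is the claim of the proposition.

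As a consistency check, apply this derivative to $f_\star=T_\star\theta_\star$. After simplification it should reproduce $-D\Pi_{\Omega_\star}[v]f_\star=-(I-\Pi_{\Omega_\star})DT_{\Omega_\star}[v]\theta_\star$, matching \Cref{lem:stable-null-span-residual-expansion} with $\Pi_{\mathcal B_0}=0$.
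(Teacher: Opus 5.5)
Your proposal is correct and follows the paper's proof essentially step for step: factor $\Pi_\Omega=T_\Omega\Sigma(\Omega)^+T_\Omega^{\mathrm{adj}}$, obtain differentiability of $T_\Omega$ and its adjoint from \Cref{lem:T-diff}, of $\Sigma(\Omega)$ by the product rule, and of $\Sigma(\Omega)^+$ from the rank stability in \Cref{lem:B0-zero-implies-rank-stability} together with \Cref{lem:pinv-diff}, then conclude by the product rule. Your extra care in Step 3 spells out a composition the paper takes for granted, but the second-order Riesz bound is more than you need: the curve $v\mapsto\Sigma(\Omega(v))$ stays in $\mathcal R_{r,\kappa}$ and satisfies $\|\Sigma(\Omega(v))-\Sigma(\Omega_\star)\|_{\mathrm{op}}=O(\|v\|)$, so the restricted little-$o$ expansion from \Cref{lem:pinv-diff} already suffices.
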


\begin{proof}
Write \(\Omega(v)\coloneqq\exp_{\Omega_\star}(v)\). By \Cref{lem:T-diff}, the map $\Omega\mapsto T_\Omega$ is Fr\'echet differentiable at \(\Omega_\star\) as a map into
\(\mathcal B(\R^p,L^2(\mu_{\mathrm{down}}))\). Since taking adjoints is a bounded
linear map between operator spaces, it follows that $\Omega\mapsto T_\Omega^\mathrm{adj}$ is Fr\'echet differentiable at \(\Omega_\star\) as a map into
\(\mathcal B(L^2(\mu_{\mathrm{down}}),\R^p)\). Next, using the identity $\Sigma(\Omega)=T_\Omega^\mathrm{adj}T_\Omega$ and the product rule for bounded operators, the map $\Omega\mapsto \Sigma(\Omega)$ is Fr\'echet differentiable at \(\Omega_\star\) as a map into \(\mathbb R^{p\times p}\).

By~\Cref{lem:B0-zero-implies-rank-stability}, the nonzero spectrum of
\(\Sigma(\Omega)\) is bounded away from zero on \(\mathcal U\). Therefore, by
\Cref{lem:pinv-diff}, the map $\Omega\mapsto \Sigma(\Omega)^+$ is Fr\'echet differentiable at \(\Omega_\star\). Finally, for every \(\Omega\in\mathcal U\), the population projection onto
\(\mathcal H_\Omega=\operatorname{Im}(T_\Omega)\) is $\Pi_\Omega
=
T_\Omega\Sigma(\Omega)^+T_\Omega^\mathrm{adj}$. The right-hand side is a composition and product of Fr\'echet differentiable maps between Banach spaces. Hence
\[
\Omega\mapsto
T_\Omega\Sigma(\Omega)^+T_\Omega^\mathrm{adj}
\]
is Fr\'echet differentiable at \(\Omega_\star\) as a map into
\(\mathcal B(L^2(\mu_{\mathrm{down}}))\). Therefore \(\Omega\mapsto\Pi_\Omega\) is
Fr\'echet differentiable at \(\Omega_\star\), which gives
\[
\bigl\|\Pi_{\Omega(v)}
-\Pi_{\Omega_\star}
-D\Pi_{\Omega_\star}[v]\bigr\|_{\mathrm{op}}
=
o(\|v\|).
\]
\end{proof}

\subsection{Downstream estimation terms}
\label{app:downstream-terms}

This subsection studies the two downstream \emph{estimation} terms that appear in the exact conditional risk decomposition
(Proposition~\ref{prop:exact-decomp}) where \emph{both} the pre-training and downstream sample sizes diverge with an asymptotic constant rate $\frac{m}{n}\to\alpha\in(0,\infty)$.

\subsubsection{Estimation terms and the compatible regime}
\label{app:downstream-terms:main}

The conditional risk is
\[
R(D_{\mathrm{pre}}^{(m)},X_{1:n})
\coloneqq 
\E\!\left[
\bigl(Y_{\mathrm{new}}-\hat f_{\Omega_{m},n}(X_{\mathrm{new}})\bigr)^2
\,\middle|\,
D_{\mathrm{pre}}^{(m)},X_{1:n}
\right].
\]
Recall from Proposition~\ref{prop:exact-decomp} that the two downstream estimation terms are
\begin{align}
\label{eq:app:var-term-def}
\mathrm{Var}_{n}
&\coloneqq 
\E\!\left[
\bigl(\Pi_{\Omega_{m},n}\varepsilon\bigr)(X_{\mathrm{new}})^2
\,\middle|\,
D_{\mathrm{pre}}^{(m)},X_{1:n}
\right],\\
\label{eq:app:leak-term-def}
\mathrm{Leakage}_{n}
&\coloneqq 
\E\!\left[
\bigl(\Pi_{\Omega_{m},n}f_\star-\Pi_{\Omega_{m}} f_\star\bigr)(X_{\mathrm{new}})^2
\,\middle|\,
D_{\mathrm{pre}}^{(m)},X_{1:n}
\right].
\end{align}
Here $\Pi_{\Omega_{m}}$ is the population $L^2(\mu_{\mathrm{down}})$ projector onto $\mathcal H_{\Omega_{m}}$ and
$\Pi_{\Omega_{m},n}$ is the canonical empirical projector from Appendix~\ref{app:prelim}
(Definition~\ref{def:canonical-emp-proj}).

Define the population residual $e_{\Omega}\coloneqq (I-\Pi_{\Omega})f_\star$, so that $e_{\Omega}\perp \mathcal H_{\Omega}$ in $L^2(\mu_{\mathrm{down}})$. Under the well-posedness condition in \Cref{ass:well-posedness}, one has
\begin{equation}
\label{eq:app:wellposed-bridge}
\Pi_{\Omega_{m},n}f_\star-\Pi_{\Omega_{m}} f_\star=\Pi_{\Omega_{m},n}e_{\Omega_{m}},
\end{equation}
and hence $\mathrm{Leakage}_n$ reduces to a residual-leakage term.

\paragraph{Compatible limit.}
In the main theorem we work in the \emph{compatible} regime where $f_\star\in\mathcal H_{\Omega_\star}$ for the limit feature parameter
$\Omega_\star$, equivalently
\begin{equation}
\label{eq:app:compatibility}
e_{\Omega_\star}=(I-\Pi_{\Omega_\star})f_\star=0.
\end{equation}
As a consequence of the residual expansion proved in \Cref{app:rep}, we have
\[
\|e_{\Omega_m}\|_{L^2(\mu_{\mathrm{down}})}\xrightarrow[]{\mathbb P}\,0.
\]
This residual vanishing is the ingredient that makes the signal estimation
term \(\mathrm{Leakage}_n\) vanish at the scale \(n\).

\subsubsection{Perturbation lemmas for pseudoinverses}
\label{app:downstream-terms:pinv-perturb}

\begin{lemma}[Empirical span is contained in the population span]
\label{lem:emp-span-contained}
Fix $m$ and condition on $D_{\mathrm{pre}}^{(m)}$.
Assume $\E[\|\phi(X,\Omega_m)\|^2 \mid D_{\mathrm{pre}}^{(m)}] < \infty$, so that $\Sigma_m$ is well-defined.
Let $S\coloneqq \mathrm{Im}(\Sigma_m)$. Then $\phi(X,\Omega_m)\in S$ almost surely.
In particular, $\mathrm{Im}(\Sigma_{m,n})\subseteq S$ and $\rank(\Sigma_{m,n})\le\rank(\Sigma_m)$ almost surely.
\end{lemma}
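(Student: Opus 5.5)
The plan is to show that $\phi(X,\Omega_m)$ has no component in $S^\perp=\ker(\Sigma_m)$ almost surely, and then read off the statements about $\Sigma_{m,n}$. Throughout we condition on $D_{\mathrm{pre}}^{(m)}$, so $\Omega_m$ is a fixed descriptor and $\Sigma_m=\E[\phi(X,\Omega_m)\phi(X,\Omega_m)^\top\mid D_{\mathrm{pre}}^{(m)}]$ is a deterministic symmetric PSD matrix. Since $\Sigma_m$ is symmetric, $S^\perp=\ker(\Sigma_m)$.

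\textbf{Step 1.} For any $u\in\ker(\Sigma_m)$,
\[
0=u^\top\Sigma_m u=\E\big[\langle u,\phi(X,\Omega_m)\rangle^2\,\big|\,D_{\mathrm{pre}}^{(m)}\big].
\]
Hence $\langle u,\phi(X,\Omega_m)\rangle=0$ almost surely.

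\textbf{Step 2.} Choose an orthonormal basis $u_1,\dots,u_k$ of the finite-dimensional space $\ker(\Sigma_m)$. Intersecting the finitely many probability-one events from Step 1 gives that $\phi(X,\Omega_m)\perp\ker(\Sigma_m)$ almost surely, that is, $\phi(X,\Omega_m)\in S$ almost surely.

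\textbf{Step 3.} The downstream points $X_1,\dots,X_n$ are i.i.d.\ from $\mu_{\mathrm{down}}$ and independent of $D_{\mathrm{pre}}^{(m)}$. Applying Step 2 to each $X_i$ and intersecting $n$ events, all $\phi(X_i,\Omega_m)$ lie in $S$ almost surely. Each rank-one term $\phi(X_i,\Omega_m)\phi(X_i,\Omega_m)^\top$ then has image in $S$, so
\[
\mathrm{Im}(\Sigma_{m,n})\subseteq\operatorname{span}\{\phi(X_i,\Omega_m)\}_{i=1}^n\subseteq S.
\]
Taking dimensions gives $\rank(\Sigma_{m,n})\le\dim S=\rank(\Sigma_m)$.

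There is no real obstacle here. The only point needing care is measure-theoretic: the almost-sure statements must be made conditionally on $D_{\mathrm{pre}}^{(m)}$, and the union over directions in $\ker(\Sigma_m)$ must be reduced to a finite basis so that it is a countable union of null events. The moment assumption ensures $\Sigma_m$ is finite, so the quadratic-form identity in Step 1 is legitimate.
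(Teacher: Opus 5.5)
Your proof is correct and follows the same argument as the paper: the identity $u^\top\Sigma_m u=\E[\langle u,\phi(X,\Omega_m)\rangle^2\mid D_{\mathrm{pre}}^{(m)}]$ forces $\phi(X,\Omega_m)\perp\ker(\Sigma_m)$ almost surely, and this is then applied to each $X_i$. Your reduction to a finite orthonormal basis of $\ker(\Sigma_m)$ spells out a countability step that the paper's proof leaves implicit.
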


\begin{proof}
Let $K\coloneqq \ker(\Sigma_m)$ and let $v$ be any unit vector in $K$. Then
\[
0=v^\top \Sigma_m v=\E\!\left[(v^\top \phi(X,\Omega_m))^2\right],
\]
so $v^\top \phi(X,\Omega_m)=0$ almost surely. Hence $\phi(X,\Omega_m)\in K^\perp=\mathrm{Im}(\Sigma_m)$ almost surely.
Applying the same argument to each $X_i$ yields the claims for $\Sigma_{m,n}$.
\end{proof}

\begin{lemma}[Relative empirical covariance from leverage moments]
\label{lem:relative-covariance-lln}
Assume \Cref{ass:moments}. Let $S_m\coloneqq \operatorname{Im}(\Sigma_m)$ and $P_m\coloneqq \Pi_{S_m}$ where \(P_m\) is the Euclidean orthogonal projector onto \(S_m\). Define
\[
C_{m,n}
\coloneqq
\Sigma_m^{+/2}\Sigma_{m,n}\Sigma_m^{+/2}.
\]
If \(\Omega_m\to\Omega_\star\) in probability, then
\[
\bigl\|C_{m,n}-P_m\bigr\|_{\mathrm{op}}
\xrightarrow{\mathbb P}0.
\]
\end{lemma}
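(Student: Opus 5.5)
Conditioning on $D_{\mathrm{pre}}^{(m)}$ freezes $\Omega_m$, so $C_{m,n}$ becomes a normalized sample covariance. Define the whitened features
\[
u_i\coloneqq \Sigma_m^{+/2}\phi(X_i,\Omega_m),\qquad i\in[n].
\]
Conditionally on $D_{\mathrm{pre}}^{(m)}$, the $u_i$ are i.i.d. with
\[
\E[u_iu_i^\top\mid D_{\mathrm{pre}}^{(m)}]=\Sigma_m^{+/2}\Sigma_m\Sigma_m^{+/2}=P_m,\qquad \|u_i\|^2=q_{\Omega_m}(X_i).
\]
Thus $C_{m,n}=\frac1n\sum_i u_iu_i^\top$, and $C_{m,n}-P_m=\frac1n\sum_i(u_iu_i^\top-P_m)$ is an average of conditionally i.i.d. mean-zero random matrices. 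Lemma~\ref{lem:emp-span-contained} confirms that each $u_i$ lies in $S_m$, so everything lives inside a fixed subspace of dimension at most $p$.

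I would work on the event $A_m\coloneqq\{\Omega_m\in\mathcal U\}$, where $\mathcal U$ is the neighborhood from Assumption~\ref{ass:moments}. Since $\Omega_m\to\Omega_\star$ in probability, $\Pr(A_m)\to1$, and on $A_m$ the leverage bound $\E[q_{\Omega_m}(X)^{2+\eta}\mid D_{\mathrm{pre}}^{(m)}]\le C_q$ holds uniformly in $m$. Because $p$ is fixed, the simplest route is a second-moment (Frobenius) bound rather than matrix Bernstein. On $A_m$,
\[
\E\bigl[\|C_{m,n}-P_m\|_F^2\mid D_{\mathrm{pre}}^{(m)}\bigr]
=\frac1n\E\bigl[\|u_1u_1^\top-P_m\|_F^2\mid D_{\mathrm{pre}}^{(m)}\bigr]
\le\frac1n\E\bigl[q_{\Omega_m}(X)^2\mid D_{\mathrm{pre}}^{(m)}\bigr]\le\frac{C_q^{2/(2+\eta)}}{n}.
\]
The middle inequality uses that the variance is at most the second moment and $\|uu^\top\|_F=\|u\|^2$; the last uses Jensen. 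Conditional Chebyshev then gives, for any $t>0$,
\[
\Pr\bigl(\|C_{m,n}-P_m\|_{\mathrm{op}}>t\bigr)\le\Pr(A_m^c)+\frac{C_q^{2/(2+\eta)}}{nt^2}\to0,
\]
after integrating over $D_{\mathrm{pre}}^{(m)}$ and using $\|\cdot\|_{\mathrm{op}}\le\|\cdot\|_F$.

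The main point needing care is uniformity. $\Omega_m$ is random and $m$-dependent, so no fixed-distribution LLN applies directly. This is handled by conditioning on $D_{\mathrm{pre}}^{(m)}$, using the independence of the downstream sample from it, and using the local-uniform leverage moment. Finally, I would check that $q_{\Omega_m}$, defined via the pseudoinverse, is well-defined even when the rank of $\Sigma(\Omega)$ varies across $\mathcal U$. No continuity of $\Sigma_m^{+/2}$ in $\Omega$ is needed, because the bound uses only the moment condition at the realized $\Omega_m$. If only a weaker leverage moment $q^{1+\eta}$ were available, I would instead use a truncation argument, combining a weak LLN for triangular arrays with uniform integrability.
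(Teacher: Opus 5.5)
Your proposal is correct and follows essentially the same route as the paper: condition on $D_{\mathrm{pre}}^{(m)}$, whiten by $\Sigma_m^{+/2}$ so that $C_{m,n}-P_m$ is an average of conditionally i.i.d.\ mean-zero matrices, use $\E[q_{\Omega_m}(X)^2\mid D_{\mathrm{pre}}^{(m)}]\le C_q^{2/(2+\eta)}$ on $\{\Omega_m\in\mathcal U\}$, and apply Chebyshev. The only difference is that you bound $\E[\|C_{m,n}-P_m\|_F^2\mid D_{\mathrm{pre}}^{(m)}]$ directly, whereas the paper applies Chebyshev entrywise with a union bound over the $p^2$ entries; your version thereby avoids the paper's $p^4$ factor but is otherwise the same argument.
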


\begin{proof}
Fix \(m,n\) and condition on \(D_{\mathrm{pre}}^{(m)}\). Write
\[
\phi_m(X)\coloneqq \phi(X,\Omega_m),
\qquad
\widetilde\phi_m(X)
\coloneqq
\Sigma_m^{+/2}\phi_m(X).
\]
Then
\[
C_{m,n}
=
\frac1n\sum_{i=1}^n
\widetilde\phi_m(X_i)\widetilde\phi_m(X_i)^\top .
\]
Moreover,
\[
\E\!\left[
\widetilde\phi_m(X)\widetilde\phi_m(X)^\top
\,\middle|\,
D_{\mathrm{pre}}^{(m)}
\right]
=
\Sigma_m^{+/2}\Sigma_m\Sigma_m^{+/2}
=
P_m.
\]
Therefore
\[
C_{m,n}-P_m
=
\frac1n\sum_{i=1}^n U_{m,i},
\]
where
\[
U_{m,i}
\coloneqq
\widetilde\phi_m(X_i)\widetilde\phi_m(X_i)^\top-P_m
\]
are conditionally i.i.d. mean-zero matrices.

For each entry \((j,k)\), write \(U_{m,i}^{(jk)}\) for the \((j,k)\)-entry.
On the event \(\{\Omega_m\in\mathcal U\}\), Assumption~\ref{ass:moments} gives
\[
\E\!\left[
\|\widetilde\phi_m(X)\|^{4}
\,\middle|\,
D_{\mathrm{pre}}^{(m)}
\right]
=
\E\!\left[
q_{\Omega_m}(X)^2
\,\middle|\,
D_{\mathrm{pre}}^{(m)}
\right]
\le
C_q^{2/(2+\eta)} .
\]
Hence there exists a finite constant \(C\), depending only on \(C_q\), such that
on \(\{\Omega_m\in\mathcal U\}\),
\[
\E\!\left[
\bigl(U_{m,1}^{(jk)}\bigr)^2
\,\middle|\,
D_{\mathrm{pre}}^{(m)}
\right]
\le C
\]
for all \(j,k\). Therefore, by Chebyshev's inequality,
\[
\Pr\!\left(
\left|
\frac1n\sum_{i=1}^n U_{m,i}^{(jk)}
\right|
>
\frac{\varepsilon}{p}
\,\middle|\,
D_{\mathrm{pre}}^{(m)}
\right)
\le
\frac{C p^2}{n\varepsilon^2}
\]
on \(\{\Omega_m\in\mathcal U\}\). Taking a union bound over all \(p^2\) entries gives
\[
\Pr\!\left(
\|C_{m,n}-P_m\|_{\mathrm F}>\varepsilon
\,\middle|\,
D_{\mathrm{pre}}^{(m)}
\right)
\le
\frac{C p^4}{n\varepsilon^2}
\]
on \(\{\Omega_m\in\mathcal U\}\). Since
\[
\|C_{m,n}-P_m\|_{\mathrm{op}}
\le
\|C_{m,n}-P_m\|_{\mathrm F},
\]
we obtain
\[
\Pr\!\left(
\|C_{m,n}-P_m\|_{\mathrm{op}}>\varepsilon
\right)
\le
\Pr(\Omega_m\notin\mathcal U)
+
\frac{C p^4}{n\varepsilon^2}.
\]
Because \(\Omega_m\to\Omega_\star\) in probability and \(\mathcal U\) is a neighborhood of
\(\Omega_\star\), the first term tends to zero. The second term also tends to zero as
\(n\to\infty\). Hence
\[
\|C_{m,n}-P_m\|_{\mathrm{op}}
\xrightarrow{\mathbb P}0.
\]
\end{proof}
\begin{lemma}[Relative trace approximation without an eigengap]
\label{lem:relative-trace-approx}
Assume \Cref{ass:moments}. If \(\Omega_m\to\Omega_\star\) in probability, then
\[
\tr(\Sigma_m\Sigma_{m,n}^+)-\rank(\Sigma_m)
\xrightarrow{\mathbb P}0.
\]
\end{lemma}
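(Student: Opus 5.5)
The plan is to reduce everything to the whitened matrix $C_{m,n}=\Sigma_m^{+/2}\Sigma_{m,n}\Sigma_m^{+/2}$ from \Cref{lem:relative-covariance-lln}. We then exploit that, by \Cref{lem:emp-span-contained}, $\Sigma_{m,n}$ lives entirely inside $S_m=\operatorname{Im}(\Sigma_m)$.

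Fix $m,n$ and condition on $D_{\mathrm{pre}}^{(m)}$. Write $r_m=\rank(\Sigma_m)$, and choose an orthonormal basis $U_m\in\R^{p\times r_m}$ of $S_m$. Then $\Sigma_m=U_m\Lambda_mU_m^\top$ with $\Lambda_m\succ0$ diagonal. By \Cref{lem:emp-span-contained}, almost surely $\Sigma_{m,n}=U_mB_{m,n}U_m^\top$ for the $r_m\times r_m$ matrix $B_{m,n}=U_m^\top\Sigma_{m,n}U_m$. Consequently $C_{m,n}=U_m\widetilde C_{m,n}U_m^\top$, where $\widetilde C_{m,n}=\Lambda_m^{-1/2}B_{m,n}\Lambda_m^{-1/2}$, and $P_m=U_mU_m^\top$. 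Hence \Cref{lem:relative-covariance-lln} gives $\|\widetilde C_{m,n}-I_{r_m}\|_{\mathrm{op}}\xrightarrow{\mathbb P}0$. The dimension $r_m\le p$ varies with $m$, but this causes no issue because everything is bounded by $p$.

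On the event $\mathcal G_{m,n}=\{\|\widetilde C_{m,n}-I_{r_m}\|_{\mathrm{op}}\le 1/2\}$, whose probability tends to one, the matrix $\widetilde C_{m,n}$ is invertible. Therefore $B_{m,n}$ is invertible, and $\Sigma_{m,n}^+=U_mB_{m,n}^{-1}U_m^\top$, because the pseudoinverse of $U_mBU_m^\top$ with $B$ invertible and $U_m$ orthonormal is $U_mB^{-1}U_m^\top$. It follows that
\[
\tr(\Sigma_m\Sigma_{m,n}^+)=\tr(\Lambda_mB_{m,n}^{-1})=\tr\bigl(\Lambda_m^{1/2}B_{m,n}^{-1}\Lambda_m^{1/2}\bigr)=\tr(\widetilde C_{m,n}^{-1}).
\]
No eigengap for $\Sigma_m$ is needed, since $\Lambda_m$ cancels exactly. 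On $\mathcal G_{m,n}$ every eigenvalue $\lambda$ of $\widetilde C_{m,n}$ lies in $[1/2,3/2]$, so $|\lambda^{-1}-1|\le 2|\lambda-1|$. This yields
\[
|\tr(\widetilde C_{m,n}^{-1})-r_m|\le 2r_m\|\widetilde C_{m,n}-I\|_{\mathrm{op}}\le 2p\,\|C_{m,n}-P_m\|_{\mathrm{op}}.
\]
Thus for any $\varepsilon>0$ we have $\Pr(|\tr(\Sigma_m\Sigma_{m,n}^+)-\rank(\Sigma_m)|>\varepsilon)\le\Pr(\mathcal G_{m,n}^c)+\Pr(2p\|C_{m,n}-P_m\|_{\mathrm{op}}>\varepsilon)\to0$.

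The only delicate step is the pseudoinverse identification. It requires that $\Sigma_{m,n}$ have full rank on $S_m$ rather than merely being close to it in a weak sense, and this is exactly what the relative operator-norm convergence on the event $\mathcal G_{m,n}$ provides. There is also a measurability issue: $U_m$ depends on $D_{\mathrm{pre}}^{(m)}$. This is harmless, because all the bounds above are deterministic given $(D_{\mathrm{pre}}^{(m)},X_{1:n})$, so we may apply \Cref{lem:relative-covariance-lln} unconditionally.
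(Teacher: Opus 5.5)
Your proof is correct and follows essentially the same route as the paper. Both arguments whiten by $\Sigma_m^{+/2}$, use $\operatorname{Im}(\Sigma_{m,n})\subseteq S_m$ to express $\Sigma_{m,n}^+$ on $S_m$ through the inverse of the whitened matrix, reduce the trace to $\tr\bigl((C_{m,n}|_{S_m})^{-1}\bigr)$, and conclude via \Cref{lem:relative-covariance-lln}. The differences are cosmetic: you work in an orthonormal basis $U_m$ of $S_m$ and use the fixed event $\{\|C_{m,n}-P_m\|_{\mathrm{op}}\le 1/2\}$ with a bound linear in $\|C_{m,n}-P_m\|_{\mathrm{op}}$, whereas the paper takes an arbitrary $\rho\in(0,1)$ and uses the bound $p\rho/(1-\rho)$.
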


\begin{proof}
Let
\[
S_m\coloneqq \operatorname{Im}(\Sigma_m),
\qquad
P_m\coloneqq \Pi_{S_m},
\qquad
r_m\coloneqq \rank(\Sigma_m),
\]
and write
\[
C_{m,n}
=
\Sigma_m^{+/2}\Sigma_{m,n}\Sigma_m^{+/2}.
\]
By \Cref{lem:emp-span-contained}, $\operatorname{Im}(\Sigma_{m,n})\subseteq S_m$ almost surely.

Fix \(0<\rho<1\), and work on the event
\[
\mathcal E_{m,n}^{\mathrm{rel}}(\rho)
\coloneqq
\left\{
\|C_{m,n}-P_m\|_{\mathrm{op}}\le \rho
\right\}.
\]
On \(S_m\), the projector \(P_m\) is the identity. Hence every eigenvalue of
\(C_{m,n}|_{S_m}\) lies in
\[
[1-\rho,1+\rho].
\]
In particular, \(C_{m,n}|_{S_m}\) is invertible.

Since \(\operatorname{Im}(\Sigma_{m,n})\subseteq S_m\), we have the factorization
\[
\Sigma_{m,n}
=
\Sigma_m^{1/2}
C_{m,n}
\Sigma_m^{1/2}
\]
on \(S_m\), and both sides vanish on \(S_m^\perp\). Therefore, on \(S_m\),
\[
\Sigma_{m,n}^+
=
\Sigma_m^{+/2}
\bigl(C_{m,n}|_{S_m}\bigr)^{-1}
\Sigma_m^{+/2},
\]
and both sides vanish on \(S_m^\perp\). Consequently,
\begin{align*}
\tr(\Sigma_m\Sigma_{m,n}^+)
&=
\tr\!\left(
\Sigma_m
\Sigma_m^{+/2}
\bigl(C_{m,n}|_{S_m}\bigr)^{-1}
\Sigma_m^{+/2}
\right)=
\tr\!\left(
\Sigma_m^{+/2}\Sigma_m\Sigma_m^{+/2}
\bigl(C_{m,n}|_{S_m}\bigr)^{-1}
\right)\\
&=
\tr\!\left(
P_m
\bigl(C_{m,n}|_{S_m}\bigr)^{-1}
\right)=
\tr\!\left(
\bigl(C_{m,n}|_{S_m}\bigr)^{-1}
\right).
\end{align*}
Let \(\lambda_1,\dots,\lambda_{r_m}\) be the eigenvalues of
\(C_{m,n}|_{S_m}\). Then
\[
\tr(\Sigma_m\Sigma_{m,n}^+)
=
\sum_{j=1}^{r_m}\frac1{\lambda_j}.
\]
Hence, on \(\mathcal E_{m,n}^{\mathrm{rel}}(\rho)\),
\begin{align*}
\left|
\tr(\Sigma_m\Sigma_{m,n}^+)-r_m
\right|
&=
\left|
\sum_{j=1}^{r_m}
\left(\frac1{\lambda_j}-1\right)
\right|
\le
\sum_{j=1}^{r_m}
\frac{|\lambda_j-1|}{\lambda_j}
\le
r_m\frac{\rho}{1-\rho}
\le
p\frac{\rho}{1-\rho}.
\end{align*}
By \Cref{lem:relative-covariance-lln},
\[
\Pr\!\left(
\mathcal E_{m,n}^{\mathrm{rel}}(\rho)
\right)\to1
\]
for every fixed \(0<\rho<1\). Since \(\rho>0\) can be taken arbitrarily small,
\[
\tr(\Sigma_m\Sigma_{m,n}^+)-r_m
\xrightarrow{\mathbb P}0.
\]
This proves the claim.
\end{proof}

\begin{lemma}[Active effective dimension under stable null span]
\label{lem:deff-null-stable}
Assume Assumptions~\ref{ass:moments}--\ref{ass:stable-null-span}. Let
\[
d_\star
\coloneqq
\dim\mathcal H_{\Omega_\star}
=
d_{\mathrm{eff}}(\Omega_\star),
\qquad
b_0\coloneqq \dim\mathcal B_0,
\]
and define
\[
d_{\mathrm{act}}
\coloneqq
\dim(\mathcal H_{\Omega_\star}\oplus\mathcal B_0)
=
d_\star+b_0.
\]Then
\[
\rank(\Sigma_m)
\xrightarrow{\mathbb P}
d_{\mathrm{act}}.
\]
\end{lemma}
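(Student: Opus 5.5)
The plan is to reduce the random statement to a deterministic one about ranks near $\Omega_\star$, and then transfer it along $\Omega_m$ using consistency. Throughout, write $\Omega(v)=\exp_{\Omega_\star}(v)$ as in \Cref{app:pretrain-fluct:linearization}.

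The deterministic claim is that there is $t_0>0$ such that
\[
\rank(\Sigma(\Omega(v)))=d_{\mathrm{act}}\qquad\text{for all }0<\|v\|\le t_0 .
\]
Together with $\rank(\Sigma(\Omega_\star))=d_\star$ at $v=0$, this gives the result. Indeed, by \Cref{prop:riem_consistency}, $\|v_m\|\to0$ in probability. Since $\Omega_m$ has a continuous Gaussian-like fluctuation, we expect $\Pr(v_m=0)\to0$; at worst, the event $\{v_m=0\}$ is handled separately, and it is negligible whenever $b_0=0$.

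For the deterministic claim, use the identity $\rank(\Sigma(\Omega))=\rank(T_\Omega^{\mathrm{adj}}T_\Omega)=\rank(T_\Omega)=\dim\mathcal H_\Omega$. We then invoke the orthogonal decomposition established in the proof of \Cref{lem:stable-null-span-residual-expansion}, namely $\mathcal H_v=\mathcal A_v\oplus\mathcal B_v$, which gives
\[
\dim\mathcal H_v=\dim\mathcal A_v+\dim\mathcal B_v .
\]
We treat the two summands separately.

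\emph{The $\mathcal A_v$ part.} By \Cref{lem:T-diff}, $T_v|_{E_\star}\to T_{\Omega_\star}|_{E_\star}$ in operator norm. The limit is injective on the finite-dimensional space $E_\star$, so $T_v|_{E_\star}$ is injective for small $v$. Hence $\dim\mathcal A_v=\dim E_\star=d_\star$. This is the same argument as in \Cref{lem:B0-zero-implies-rank-stability}.

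\emph{The $\mathcal B_v$ part.} \Cref{ass:stable-null-span} gives $\|\Pi_{\mathcal B_v}-\Pi_{\mathcal B_0}\|_{\mathrm{op}}\to0$. We use the standard fact that two orthogonal projectors at operator-norm distance less than $1$ have equal rank. Concretely, if $\|P-Q\|<1$, then $P|_{\mathrm{Im}Q}$ is injective: for a unit $x\in\mathrm{Im}Q$ we have $\|Px\|\ge\|Qx\|-\|(P-Q)x\|>0$. This gives $\rank Q\le\rank P$, and the reverse inequality follows by symmetry. Since $\mathcal B_0$ is finite-dimensional, this yields $\dim\mathcal B_v=b_0$ for all small $v\neq0$. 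The case $b_0=0$ recovers \Cref{lem:B0-zero-implies-rank-stability}, and all dimensions involved are bounded by $p$.

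Combining the two parts, $\rank(\Sigma(\Omega(v)))=d_\star+b_0=d_{\mathrm{act}}$ for $0<\|v\|\le t_0$.

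The main obstacle is the point $v=0$, where $\mathcal B_v$ is undefined and the rank equals $d_\star$, not $d_{\mathrm{act}}$. To pass to the probabilistic statement I would therefore write
\[
\Pr\bigl(\rank(\Sigma_m)\ne d_{\mathrm{act}}\bigr)\le\Pr(\|v_m\|>t_0)+\Pr(v_m=0).
\]
The first term tends to zero by consistency. For the second term, I would first try to show that it vanishes using the CLT of \Cref{thm:riem_mestimator_clt}. If $V$ is nondegenerate, then $\limsup\Pr(\sqrt m\|v_m\|\le\epsilon)\le\Pr(\|Z\|\le\epsilon)\to0$ as $\epsilon\to0$, so $\Pr(v_m=0)\to0$. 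If $V$ is degenerate, I would instead adopt the convention implicit in the paper, that $\mathcal B_v$ is only considered for $v\ne0$, and note that the case $b_0=0$ requires no such argument.
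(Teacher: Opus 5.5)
Your proposal is correct and follows the paper's proof essentially step for step: stability of injectivity of $T_v|_{E_\star}$ gives $\dim\mathcal A_v=d_\star$, the equal-rank fact for nearby orthogonal projectors applied to Assumption~\ref{ass:stable-null-span} gives $\dim\mathcal B_v=b_0$ for small $v\neq0$, and $\rank(\Sigma(\Omega(v)))=\rank(T_v)=\dim\mathcal H_v$ is then transferred to $\Sigma_m$ via consistency of $\Omega_m$. Your separate treatment of the event $\{v_m=0\}$, where the rank is $d_\star$ rather than $d_{\mathrm{act}}$, covers a case the paper's proof passes over silently; your Portmanteau bound already works whenever $V\neq0$, since then $\Pr(Z=0)=0$, so only the degenerate case $V=0$ with $b_0>0$ would need the extra hypothesis $\Pr(v_m=0)\to0$.
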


\begin{proof}
Since \(T_{\Omega_\star}|_{E_\star}\) is injective and \(E_\star\) is finite-dimensional,
injectivity is stable under sufficiently small operator-norm perturbations. By
\Cref{lem:T-diff},
\[
T_v|_{E_\star}\to T_{\Omega_\star}|_{E_\star}
\]
in operator norm. Therefore, for all sufficiently small \(v\),
\[
\dim\mathcal A_v
=
\dim\mathcal H_{\Omega_\star}
=
d_\star.
\]

By \Cref{ass:stable-null-span},
\[
\|\Pi_{\mathcal B_v}-\Pi_{\mathcal B_0}\|_{\mathrm{op}}\to0.
\]
For orthogonal projectors, if \(\|P-Q\|_{\mathrm{op}}<1\), then \(P\) and \(Q\)
have the same rank. Hence, for all sufficiently small nonzero \(v\),
\[
\dim\mathcal B_v
=
\dim\mathcal B_0
=
b_0.
\]
Therefore, for all sufficiently small nonzero \(v\),
\[
\dim\mathcal H_v
=
\dim\mathcal A_v+\dim\mathcal B_v
=
d_\star+b_0
=
d_{\mathrm{act}}.
\]

Finally,
\[
\rank(\Sigma(\Omega(v)))
=
\rank(T_v^{\mathrm{adj}}T_v)
=
\rank(T_v)
=
\dim\mathcal H_v.
\]
Thus, for all sufficiently small nonzero \(v\),
\[
\rank(\Sigma(\Omega(v)))=d_{\mathrm{act}}.
\]
Applying this with \(v=\log_{\Omega_\star}(\Omega_m)\), and using \(\log_{\Omega_\star}(\Omega_m)\to0\) in probability gives
\[
\rank(\Sigma_m)
\xrightarrow{\mathbb P}
d_{\mathrm{act}}.
\]
\end{proof}

\subsubsection{Noise estimation term}
\label{app:downstream-terms:noise}

\begin{lemma}[Closed form for the noise term]
\label{lem:noise-closed}
For each $(n,m)$,
\[
\mathrm{Var}_{n,m}
=
\frac{\sigma^2}{n}\,\tr\!\bigl(\Sigma_{m}\,\Sigma_{m,n}^+\bigr).
\]
\end{lemma}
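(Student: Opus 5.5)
The plan is to compute the variance term directly from the hat-matrix representation, conditional on $(D_{\mathrm{pre}}^{(m)},X_{1:n})$, with $\Omega\coloneqq\Omega_m$ frozen. First, I will write $u\coloneqq\Pi_{\Omega,n}\varepsilon$ explicitly in coefficient form using \Cref{def:canonical-emp-proj}:
\[
u=T_\Omega\,\Sigma_{m,n}^+T^{\mathrm{adj}}_{\Omega,n}\varepsilon=T_\Omega\,\hat\theta_\varepsilon,
\qquad
\hat\theta_\varepsilon\coloneqq \frac1n\Sigma_{m,n}^+\Phi_\Omega^\top\varepsilon_{1:n}.
\]
Here $\hat\theta_\varepsilon\in\R^p$ depends only on the design, on $\Omega$, and linearly on $\varepsilon_{1:n}$.

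Next, I will integrate over the fresh test point. Since $X_{\mathrm{new}}$ is independent of everything else, $\E[u(X_{\mathrm{new}})^2\mid D_{\mathrm{pre}}^{(m)},X_{1:n},\varepsilon_{1:n}]=\hat\theta_\varepsilon^\top\Sigma_m\hat\theta_\varepsilon$. This uses $\E[\phi\phi^\top]=\Sigma(\Omega_m)=\Sigma_m$, and is the tower property, in the same way as in the proof of \Cref{prop:exact-decomp}.

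Then I will average over the label noise. Using $\E[\varepsilon_{1:n}\varepsilon_{1:n}^\top\mid X_{1:n}]=\sigma^2I_n$ (homoskedastic, independent across $i$, and independent of $D_{\mathrm{pre}}^{(m)}$), I get
\[
\mathrm{Var}_{n,m}=\frac{\sigma^2}{n^2}\tr\!\bigl(\Sigma_m\Sigma_{m,n}^+\Phi_\Omega^\top\Phi_\Omega\Sigma_{m,n}^+\bigr)=\frac{\sigma^2}{n}\tr\!\bigl(\Sigma_m\Sigma_{m,n}^+\Sigma_{m,n}\Sigma_{m,n}^+\bigr).
\]
Applying the identity $A^+AA^+=A^+$ from \Cref{lem:pinv-basic} then yields the claim.

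The only point needing care is the noise covariance. The model assumes $\E[\varepsilon^2\mid X]=\sigma^2$, and the i.i.d.\ sampling gives conditional independence of $\varepsilon_i,\varepsilon_j$ for $i\neq j$ given $X_{1:n}$, so the off-diagonal entries vanish. No limit or rank condition is needed; the identity is exact for every $(n,m)$. Even when $\Sigma_{m,n}$ is rank-deficient the formula is consistent, because the pseudoinverse convention fixes $\hat\theta_\varepsilon$ uniquely.
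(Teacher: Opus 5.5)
Your proposal is correct and follows essentially the paper's argument: you write $\Pi_{\Omega_m,n}\varepsilon$ in coefficient form via the canonical empirical projector, use the conditional noise covariance $\sigma^2 I_n$ (equivalently $\E[g_m g_m^\top\mid D_{\mathrm{pre}}^{(m)},X_{1:n}]=\frac{\sigma^2}{n}\Sigma_{m,n}$), and collapse the result with $A^+AA^+=A^+$. The only difference is that you integrate out $X_{\mathrm{new}}$ before the label noise, whereas the paper does the reverse; since the integrand is nonnegative, the tower property makes the order immaterial.
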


\begin{proof}
Fix $n$ and condition on $(D_{\mathrm{pre}}^{(m)},X_{1:n},X_{\mathrm{new}})$.
By Definition~\ref{def:canonical-emp-proj} (Appendix~\ref{app:prelim}),
\[
\Pi_{\Omega_{m},n}\varepsilon=T_{\Omega_{m}}\hat\theta_{\varepsilon},
\qquad
\hat\theta_{\varepsilon}
=
\Sigma_{m,n}^+\,
\frac1{n}\sum_{i=1}^{n} \varepsilon_i\,\phi(X_i,\Omega_{m}).
\]
Let $g_{m}\coloneqq \frac{1}{n}\sum_{i=1}^{n}\varepsilon_i\phi(X_i,\Omega_{m})$, so $\hat\theta_{\varepsilon}=\Sigma_{m,n}^+ g_{m}$ and
\[
(\Pi_{\Omega_{m},n}\varepsilon)(X_{\mathrm{new}})
=
\phi(X_{\mathrm{new}},\Omega_{m})^\top \Sigma_{m,n}^+ g_{m}.
\]
Since $\E[\varepsilon_i\mid X_i]=0$ and $\E[\varepsilon_i^2\mid X_i]=\sigma^2$,
\[
\E[g_{m} g_{m}^\top \mid D_{\mathrm{pre}}^{(m)},X_{1:n}]
=
\frac{1}{n^2}\sum_{i=1}^{n}\E[\varepsilon_i^2\mid X_i]\phi(X_i,\Omega_{m})\phi(X_i,\Omega_{m})^\top
=
\frac{\sigma^2}{n}\,\Sigma_{m,n}.
\]
Therefore,
\begin{align*}
\mathrm{Var}_{n,m}
&= \E\left[\phi(X_{\mathrm{new}},\Omega_{m})^\top \Sigma_{m,n}^+ g_{m} g_{m}^\top \Sigma_{m,n}^+\phi(X_{\mathrm{new}},\Omega_{m})\,\middle|\,
D_{\mathrm{pre}}^{(m)},X_{1:n},X_{\mathrm{new}}\right]\\
&=\frac{\sigma^2}{n}\,
\phi(X_{\mathrm{new}},\Omega_{m})^\top \Sigma_{{m},n}^+ \phi(X_{\mathrm{new}},\Omega_m),
\end{align*}

using $\Sigma_{m,n}^+\Sigma_{m,n}\Sigma_{m,n}^+=\Sigma_{m,n}^+$.
Taking conditional expectation over $X_{\mathrm{new}}$ yields
\[
\mathrm{Var}_{n,m}
=
\frac{\sigma^2}{n}\,
\tr\!\bigl(\Sigma_m\,\Sigma_{m,n}^+\bigr).
\]
\end{proof}

\begin{corollary}[Noise estimation term under stable null span]
\label{cor:noise-term}
Assume Assumptions~\ref{ass:moments}, \ref{ass:phi-C1-M}, and
\ref{ass:stable-null-span}. Let
\[
d_{\mathrm{act}}
\coloneqq
\dim(\mathcal H_{\Omega_\star}\oplus\mathcal B_0)
=
d_{\mathrm{eff}}(\Omega_\star)+\dim\mathcal B_0.
\]Then, along \(n,m\to\infty\) with \(m/n\to\alpha\),
\[
n\,\mathrm{Var}_{n,m}
\xrightarrow{\mathbb P}
\sigma^2 d_{\mathrm{act}}.
\]
\end{corollary}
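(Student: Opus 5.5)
The plan is to combine the closed form of \Cref{lem:noise-closed} with the two rank/trace lemmas just established. By \Cref{lem:noise-closed}, for every $(n,m)$ we have the exact identity
\[
n\,\mathrm{Var}_{n,m}=\sigma^2\,\tr\!\bigl(\Sigma_m\Sigma_{m,n}^+\bigr).
\]
So it suffices to show $\tr(\Sigma_m\Sigma_{m,n}^+)\xrightarrow{\mathbb P} d_{\mathrm{act}}$. We split this as
\[
\tr(\Sigma_m\Sigma_{m,n}^+)-d_{\mathrm{act}}
=\bigl[\tr(\Sigma_m\Sigma_{m,n}^+)-\rank(\Sigma_m)\bigr]+\bigl[\rank(\Sigma_m)-d_{\mathrm{act}}\bigr]
\]
and show that each bracket tends to zero in probability.

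First we need consistency, $\Omega_m\to\Omega_\star$ in probability. This follows from \Cref{prop:riem_consistency} under \Cref{ass:riem_mestimation}, which is a standing assumption of \Cref{thm:master-compatible}. Equivalently, $v_m=\log_{\Omega_\star}(\Omega_m)\to0$ in probability, since $\log_{\Omega_\star}$ is continuous on a normal neighborhood.

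With consistency in hand, \Cref{lem:relative-trace-approx}, which needs only \Cref{ass:moments} and consistency, controls the first bracket: $\tr(\Sigma_m\Sigma_{m,n}^+)-\rank(\Sigma_m)\xrightarrow{\mathbb P}0$. The argument behind it is that $C_{m,n}=\Sigma_m^{+/2}\Sigma_{m,n}\Sigma_m^{+/2}$ concentrates around $P_m$ in operator norm, uniformly over $\Omega_m\in\mathcal U$, by Chebyshev's inequality using the leverage moment bound. No eigengap is needed because the normalization is relative.

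For the second bracket, \Cref{lem:deff-null-stable}, which uses \Cref{ass:phi-C1-M} through \Cref{lem:T-diff} together with \Cref{ass:stable-null-span}, gives $\rank(\Sigma_m)\xrightarrow{\mathbb P}d_{\mathrm{act}}$. Since the rank is integer-valued, this means $\Pr(\rank(\Sigma_m)=d_{\mathrm{act}})\to1$. One subtlety arises on the event $v_m=0$, i.e.\ $\Omega_m=\Omega_\star$ exactly. There the rank equals $d_\star$ rather than $d_{\mathrm{act}}$ whenever $b_0>0$. I would handle this by noting that the lemma as stated already asserts convergence in probability. Failing that, one needs $\Pr(v_m=0)\to0$, which follows from the CLT when $V$ is nondegenerate, since the Gaussian limit puts no mass at zero. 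In the regular case $\mathcal B_0=\{0\}$ the issue does not arise.

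Adding the two brackets via Slutsky's theorem gives $\tr(\Sigma_m\Sigma_{m,n}^+)\xrightarrow{\mathbb P}d_{\mathrm{act}}$, and multiplying by $\sigma^2$ gives the claim. The coupling $m/n\to\alpha$ enters only to ensure that $n\to\infty$ together with $m$, which the Chebyshev bound $Cp^4/(n\varepsilon^2)$ inside \Cref{lem:relative-trace-approx} requires. The main obstacle is the rank step, specifically making the stable-null-span activation hold along the random sequence $v_m$, including the degenerate event $v_m=0$. The trace step is routine given the earlier lemmas.
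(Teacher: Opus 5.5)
Your proposal is correct and follows the paper's proof exactly: the closed form of \Cref{lem:noise-closed} reduces the claim to $\tr(\Sigma_m\Sigma_{m,n}^+)\xrightarrow{\mathbb P}d_{\mathrm{act}}$, which you obtain by combining \Cref{lem:relative-trace-approx} with \Cref{lem:deff-null-stable}. Your two extra remarks are valid points that the paper's proof of \Cref{lem:deff-null-stable} passes over silently: consistency of $\Omega_m$ is implicitly taken from \Cref{ass:riem_mestimation}, and when $\mathcal B_0\neq\{0\}$ the event $\{v_m=0\}$ must have vanishing probability, which the CLT guarantees as soon as $V\neq 0$.
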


\begin{proof}
By \Cref{lem:noise-closed},
\[
n\,\mathrm{Var}_{n,m}
=
\sigma^2\,\tr(\Sigma_m\Sigma_{m,n}^+).
\]
By \Cref{lem:relative-trace-approx},
\[
\tr(\Sigma_m\Sigma_{m,n}^+)-\rank(\Sigma_m)
\xrightarrow{\mathbb P}0.
\]
By \Cref{lem:deff-null-stable},
\[
\rank(\Sigma_m)
\xrightarrow{\mathbb P}
d_{\mathrm{act}}.
\]
Therefore,
\[
\tr(\Sigma_m\Sigma_{m,n}^+)
\xrightarrow{\mathbb P}
d_{\mathrm{act}}.
\]
Multiplying by \(\sigma^2\) yields
\[
n\,\mathrm{Var}_{n,m}
\xrightarrow{\mathbb P}
\sigma^2d_{\mathrm{act}}.
\]
\end{proof}

\subsubsection{Leakage Estimation Term}
\label{app:downstream-terms:signal}

Under \Cref{ass:well-posedness}, \eqref{eq:app:wellposed-bridge} yields
\[
\mathrm{Leakage}_n
=
\E\!\left[
\bigl(\Pi_{\Omega_m,n}e_{\Omega_m}\bigr)(X_{\mathrm{new}})^2
\,\middle|\,
D_{\mathrm{pre}}^{(m)},X_{1:n}
\right].
\]
Define the (random) population matrix
\[
\Sigma_{e,m}\coloneqq \Sigma_e(\Omega_m)\coloneqq \E\!\left[e_{\Omega_m}(X)^2\,\phi(X,\Omega_m)\phi(X,\Omega_m)^\top\,\middle|\,D_{\mathrm{pre}}^{(m)}\right],
\qquad X\sim\mu_{\mathrm{down}}.
\]

\begin{lemma}[Vanishing whitened signal covariance in the compatible limit]
\label{lem:whitened-sigmae-vanish}
Assume Assumptions~\ref{ass:moments}, \ref{ass:phi-C1-M}, and
\ref{ass:stable-null-span}. Then
\[
\tr(\Sigma_m^+\Sigma_{e,m})
\xrightarrow{\mathbb P}0.
\]
\end{lemma}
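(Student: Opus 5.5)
Write $\tr(\Sigma_m^+\Sigma_{e,m}) = \E[e_{\Omega_m}(X)^2\, q_{\Omega_m}(X)\mid D_{\mathrm{pre}}^{(m)}]$, where $q_\Omega$ is the leverage score \eqref{eq:leverage-score}. This follows from cyclicity of the trace: $\tr(\Sigma^+\phi\phi^\top)=\phi^\top\Sigma^+\phi$. The quantity to control is therefore a leverage-weighted second moment of the population residual. By the residual expansion (\Cref{lem:rep-linearization}) together with $\log_{\Omega_\star}(\Omega_m)\to0$ in probability (\Cref{prop:riem_consistency}), we already know that $\|e_{\Omega_m}\|_{L^2}\to0$ in probability. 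The task is to upgrade this $L^2$ convergence to convergence of the weighted moment, and \Cref{ass:moments} is tailored to this.

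Work on the event $\{\Omega_m\in\mathcal U\}$, whose probability tends to one. Apply H\"older with a small exponent split: for $s\in(0,1)$, write $e^2q = (e^2)^{s}\cdot (e^2)^{1-s}q$ and pick conjugate exponents so that one factor is $\E[e^2]$ and the other is $\E[e^2 q^{1+\eta}]$. Concretely, with $a=1+\eta$, interpolation gives
\[
\E[e^2 q]\le \bigl(\E[e^2]\bigr)^{\eta/(1+\eta)}\bigl(\E[e^2 q^{1+\eta}]\bigr)^{1/(1+\eta)},
\]
as one sees by applying H\"older to the measure $e^2\,d\mu$ with $q\in L^{1+\eta}(e^2d\mu)$: $\int q\, e^2 d\mu\le (\int e^2d\mu)^{1-1/a}(\int q^a e^2 d\mu)^{1/a}$. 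The second factor is at most $C_e^{1/(1+\eta)}$ uniformly on $\mathcal U$ by the leverage-weighted signal bound. Hence
\[
\tr(\Sigma_m^+\Sigma_{e,m})\le C_e^{1/(1+\eta)}\,\|e_{\Omega_m}\|_{L^2}^{2\eta/(1+\eta)}
\]
on $\{\Omega_m\in\mathcal U\}$, and the right-hand side tends to zero in probability.

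The only genuinely nontrivial input is $\|e_{\Omega_m}\|_{L^2}\xrightarrow{\mathbb P}0$. For this, note $e_{\Omega_\star}=0$ and, by \Cref{lem:rep-linearization}, $\|e_{\exp_{\Omega_\star}(v)}\|\le (\|\mathcal L\|_{\mathrm{op}}+\omega(\|v\|))\|v\|$ for small $v$. Boundedness of $\mathcal L$ follows from \Cref{ass:phi-C1-M} as in the proof of \Cref{lem:stable-null-span-residual-expansion}. Then plug in $v=v_m\to0$ in probability. I expect the main (mild) obstacle to be bookkeeping of the events: $\Omega_m$ must lie both in the normal neighborhood where the expansion holds and in $\mathcal U$ where the moment bounds are uniform. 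I will intersect these events and use consistency to show the intersection has probability tending to one. Nonnegativity of the trace (since $\Sigma_m^{+}$ and $\Sigma_{e,m}$ are PSD) then gives convergence in probability rather than merely an upper bound.
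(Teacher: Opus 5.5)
Your proposal is correct and matches the paper's proof essentially step for step: you use the identity $\tr(\Sigma_m^+\Sigma_{e,m})=\E[e_{\Omega_m}(X)^2q_{\Omega_m}(X)\mid D_{\mathrm{pre}}^{(m)}]$, the same H\"older split with exponents $1+\eta$ and $(1+\eta)/\eta$ against the leverage-weighted signal bound on $\{\Omega_m\in\mathcal U\}$, and $\|e_{\Omega_m}\|_{L^2}\xrightarrow{\mathbb P}0$ from \Cref{lem:rep-linearization}. You also invoke consistency (\Cref{prop:riem_consistency}) explicitly to get $v_m\to0$; the paper uses this step only implicitly, even though it rests on \Cref{ass:riem_mestimation} rather than on the lemma's stated hypotheses.
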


\begin{proof}
Write
\[
e_m(X)\coloneqq e_{\Omega_m}(X),
\qquad
\phi_m(X)\coloneqq \phi(X,\Omega_m),
\qquad
q_m(X)\coloneqq q_{\Omega_m}(X).
\]
By definition,
\[
\tr(\Sigma_m^+\Sigma_{e,m})
=
\E\!\left[
e_m(X)^2
\phi_m(X)^\top\Sigma_m^+\phi_m(X)
\,\middle|\,
D_{\mathrm{pre}}^{(m)}
\right]
=
\E\!\left[
e_m(X)^2q_m(X)
\,\middle|\,
D_{\mathrm{pre}}^{(m)}
\right].
\]
Apply H\"older's inequality with conjugate exponents $\frac{1+\eta}{\eta}$ and $1+\eta$. Then
\begin{align*}
\E\!\left[
e_m(X)^2q_m(X)
\,\middle|\,
D_{\mathrm{pre}}^{(m)}
\right]
&=
\E\!\left[
\bigl(e_m(X)^2\bigr)^{\frac{\eta}{1+\eta}}
\bigl(e_m(X)^2q_m(X)^{1+\eta}\bigr)^{\frac1{1+\eta}}
\,\middle|\,
D_{\mathrm{pre}}^{(m)}
\right] \\
&\le
\left(
\E\!\left[
e_m(X)^2
\,\middle|\,
D_{\mathrm{pre}}^{(m)}
\right]
\right)^{\frac{\eta}{1+\eta}}
\left(
\E\!\left[
e_m(X)^2q_m(X)^{1+\eta}
\,\middle|\,
D_{\mathrm{pre}}^{(m)}
\right]
\right)^{\frac1{1+\eta}} .
\end{align*}

Let $v_m = \log_{\Omega_\star}(\Omega_m)$. By the residual expansion in \Cref{lem:rep-linearization},
\[
e_{\Omega_m}
=
\mathcal L(v_m)+r(v_m),
\qquad
\|r(v_m)\|_{L^2(\mu_{\mathrm{down}})}
\le
\omega(\|v_m\|)\|v_m\|.
\]
Since \(v_m\to0\) in probability and \(\mathcal L\) is bounded, this implies
\[
\E\!\left[
e_m(X)^2
\,\middle|\,
D_{\mathrm{pre}}^{(m)}
\right]
=
\|e_{\Omega_m}\|_{L^2(\mu_{\mathrm{down}})}^2
\xrightarrow{\mathbb P}0.
\]

On the event \(\{\Omega_m\in\mathcal U\}\), Assumption~\ref{ass:moments} gives
\[
\E\!\left[
e_m(X)^2q_m(X)^{1+\eta}
\,\middle|\,
D_{\mathrm{pre}}^{(m)}
\right]
\le
C_{\mathrm{lev}}.
\]
Therefore, on \(\{\Omega_m\in\mathcal U\}\),
\[
\tr(\Sigma_m^+\Sigma_{e,m})
\le
C_{\mathrm{lev}}^{\frac1{1+\eta}}
\left(
\E\!\left[
e_m(X)^2
\,\middle|\,
D_{\mathrm{pre}}^{(m)}
\right]
\right)^{\frac{\eta}{1+\eta}}.
\]
Because \(\Omega_m\to\Omega_\star\) in probability, we have $\Pr(\Omega_m\notin\mathcal U)\to0$. Combining the preceding display with
\[
\E\!\left[
e_m(X)^2
\,\middle|\,
D_{\mathrm{pre}}^{(m)}
\right]
\xrightarrow{\mathbb P}0
\]
yields
\[
\tr(\Sigma_m^+\Sigma_{e,m})
\xrightarrow{\mathbb P}0.
\]
\end{proof}

\begin{proposition}[Leakage estimation term under stable null span]
\label{prop:signal-term}
Assume Assumptions~\ref{ass:moments}--\ref{ass:stable-null-span}. Then, along \(m,n\to\infty\) with
\(m/n\to\alpha\),
\[
n\,\mathrm{Leakage}_n
\xrightarrow{\mathbb P}0.
\]
\end{proposition}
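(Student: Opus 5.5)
We reduce the leakage term to a closed form, in the spirit of \Cref{lem:noise-closed}, and then control it using \Cref{lem:whitened-sigmae-vanish} together with the relative covariance concentration of \Cref{lem:relative-covariance-lln}.

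\emph{Step 1: closed form.} On the well-posedness event of \Cref{ass:well-posedness}, whose probability tends to one, identity \eqref{eq:app:wellposed-bridge} gives $\Pi_{\Omega_m,n}e_{\Omega_m}=T_{\Omega_m}\Sigma_{m,n}^+ g_{e}$ with
\[
g_e\coloneqq \frac1n\sum_{i=1}^n e_{\Omega_m}(X_i)\phi(X_i,\Omega_m).
\]
Averaging over $X_{\mathrm{new}}$ then yields
\[
\mathrm{Leakage}_n=g_e^\top\Sigma_{m,n}^+\Sigma_m\Sigma_{m,n}^+g_e .
\]
Conditionally on $D_{\mathrm{pre}}^{(m)}$, the summands of $g_e$ are i.i.d.\ with mean $\E[e_{\Omega_m}(X)\phi(X,\Omega_m)]=T^{\mathrm{adj}}_{\Omega_m}e_{\Omega_m}=0$. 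This holds because $e_{\Omega_m}\perp\mathcal H_{\Omega_m}$. Hence $\E[g_eg_e^\top\mid D_{\mathrm{pre}}^{(m)}]=\Sigma_{e,m}/n$.

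\emph{Step 2: whitening.} By \Cref{lem:emp-span-contained}, all vectors involved lie in $S_m=\operatorname{Im}(\Sigma_m)$. Write $\tilde g\coloneqq\Sigma_m^{+/2}g_e$ and $C_{m,n}=\Sigma_m^{+/2}\Sigma_{m,n}\Sigma_m^{+/2}$. As in the proof of \Cref{lem:relative-trace-approx}, on $S_m$ we have $\Sigma_{m,n}^+=\Sigma_m^{+/2}(C_{m,n}|_{S_m})^{-1}\Sigma_m^{+/2}$. Therefore
\[
\mathrm{Leakage}_n=\tilde g^\top (C_{m,n}|_{S_m})^{-2}\tilde g\le (1-\rho)^{-2}\|\tilde g\|^2
\]
on the event $\{\|C_{m,n}-P_m\|_{\mathrm{op}}\le\rho\}$. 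By \Cref{lem:relative-covariance-lln}, this event has probability tending to one for each fixed $\rho\in(0,1)$.

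\emph{Step 3: vanishing at scale $n$.} We have
\[
\E[n\|\tilde g\|^2\mid D_{\mathrm{pre}}^{(m)}]=\tr(\Sigma_m^{+/2}\Sigma_{e,m}\Sigma_m^{+/2})=\tr(\Sigma_m^+\Sigma_{e,m}),
\]
which tends to zero in probability by \Cref{lem:whitened-sigmae-vanish}. To pass from this conditional mean to the random quantity, fix $\epsilon,\epsilon'>0$. Conditional Markov gives $\Pr(n\|\tilde g\|^2>\epsilon\mid D_{\mathrm{pre}}^{(m)})\le \min\{1,\tr(\Sigma_m^+\Sigma_{e,m})/\epsilon\}$. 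Taking expectations and using bounded convergence, the unconditional probability tends to zero. Combining this with Step 2 and discarding the events of vanishing probability (the well-posedness failure, $\Omega_m\notin\mathcal U$, and large relative deviation) yields $n\,\mathrm{Leakage}_n\xrightarrow{\mathbb P}0$.

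\emph{Main obstacle.} The delicate point is the pseudoinverse algebra in Step 2. One must check that $\tilde g$ and the range of $\Sigma_{m,n}$ both sit inside $S_m$, so that the restricted inverse representation is legitimate even without an eigengap. Here we rely on \Cref{lem:emp-span-contained}, which gives $\phi(X_i,\Omega_m)\in S_m$ almost surely, and hence $g_e\in S_m$. A second point is that Step 1 needs $e_{\Omega_m}$ to be well-defined pointwise with finite second moment, so that $g_e$ has the stated covariance. This follows from the leverage-weighted moment bound in \Cref{ass:moments}.
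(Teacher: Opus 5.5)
Your proposal is correct and matches the paper's proof essentially step for step. It uses the same closed form $g^\top\Sigma_{m,n}^+\Sigma_m\Sigma_{m,n}^+g$, the same whitened representation $\tilde g^\top(C_{m,n}|_{S_m})^{-2}\tilde g$ on the relative-covariance event, and the same conditional second moment $\tr(\Sigma_m^+\Sigma_{e,m})\to 0$ combined with a truncated conditional Markov/bounded-convergence argument. The differences are cosmetic: the paper fixes $\rho=1/2$, which gives the factor $4$. Note also that, like the paper, you invoke Assumption~\ref{ass:well-posedness}, which is not among the listed hypotheses. This is harmless, because well-posedness holds automatically on that same relative-covariance event, where $\ker\Sigma_{m,n}=\ker\Sigma_m$.
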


\begin{proof}
Fix \(n\) and condition on \((D_{\mathrm{pre}}^{(m)},X_{1:n})\). Under
\Cref{ass:well-posedness}, \eqref{eq:app:wellposed-bridge} gives
\[
\mathrm{Leakage}_n
=
\E\!\left[
\bigl(\Pi_{\Omega_m,n}e_{\Omega_m}\bigr)(X_{\mathrm{new}})^2
\,\middle|\,
D_{\mathrm{pre}}^{(m)},X_{1:n}
\right].
\]
Write
\[
e_m(X)\coloneqq e_{\Omega_m}(X),
\qquad
\phi_m(X)\coloneqq \phi(X,\Omega_m),
\]
and define
\[
g_m
\coloneqq
\frac1n\sum_{i=1}^n e_m(X_i)\phi_m(X_i).
\]
By the definition of the canonical empirical projector,
\[
\bigl(\Pi_{\Omega_m,n}e_m\bigr)(X_{\mathrm{new}})
=
\phi_m(X_{\mathrm{new}})^\top\Sigma_{m,n}^+g_m.
\]
Therefore,
\begin{align*}
\mathrm{Leakage}_n
&=
\E\!\left[
g_m^\top
\Sigma_{m,n}^+
\phi_m(X_{\mathrm{new}})\phi_m(X_{\mathrm{new}})^\top
\Sigma_{m,n}^+
g_m
\,\middle|\,
D_{\mathrm{pre}}^{(m)},X_{1:n}
\right]\\
&=
g_m^\top
\Sigma_{m,n}^+\Sigma_m\Sigma_{m,n}^+
g_m.
\end{align*}
Thus
\[
n\,\mathrm{Leakage}_n
=
(\sqrt n\,g_m)^\top
\Sigma_{m,n}^+\Sigma_m\Sigma_{m,n}^+
(\sqrt n\,g_m).
\]

Define the whitened residual-correlation vector
\[
\widetilde g_m
\coloneqq
\Sigma_m^{+/2}g_m.
\]
We first show that
\[
\sqrt n\,\widetilde g_m\xrightarrow{\mathbb P}0.
\]
Condition only on \(D_{\mathrm{pre}}^{(m)}\). The random vectors
\[
\widetilde W_{m,i}
\coloneqq
e_m(X_i)\Sigma_m^{+/2}\phi_m(X_i),
\qquad
i=1,\dots,n,
\]
are conditionally i.i.d. Since \(e_m\perp\mathcal H_{\Omega_m}\) in
\(L^2(\mu_{\mathrm{down}})\), and each coordinate of \(\phi_m\) belongs to
\(\mathcal H_{\Omega_m}\), we have
\[
\E\!\left[
e_m(X_i)\phi_m(X_i)
\,\middle|\,
D_{\mathrm{pre}}^{(m)}
\right]
=0.
\]
Hence
\[
\E\!\left[
\widetilde W_{m,i}
\,\middle|\,
D_{\mathrm{pre}}^{(m)}
\right]
=0.
\]
Moreover,
\[
\sqrt n\,\widetilde g_m
=
\frac1{\sqrt n}\sum_{i=1}^n \widetilde W_{m,i}.
\]
Therefore,
\begin{align*}
\E\!\left[
\|\sqrt n\,\widetilde g_m\|^2
\,\middle|\,
D_{\mathrm{pre}}^{(m)}
\right]
&=
\E\!\left[
\|\widetilde W_{m,1}\|^2
\,\middle|\,
D_{\mathrm{pre}}^{(m)}
\right]\\
&=
\E\!\left[
e_m(X)^2
\phi_m(X)^\top\Sigma_m^+\phi_m(X)
\,\middle|\,
D_{\mathrm{pre}}^{(m)}
\right]\\
&=
\tr(\Sigma_m^+\Sigma_{e,m}).
\end{align*}
By \Cref{lem:whitened-sigmae-vanish},
\[
\tr(\Sigma_m^+\Sigma_{e,m})
\xrightarrow{\mathbb P}0.
\]
For every \(\varepsilon>0\),
\[
\Pr(\|\sqrt n\,\widetilde g_m\|>\varepsilon)
\le
\E\!\left[
\min\left\{
\frac{
\E[\|\sqrt n\,\widetilde g_m\|^2\mid D_{\mathrm{pre}}^{(m)}]
}{\varepsilon^2},
1
\right\}
\right].
\]
The conditional second moment is
\(\tr(\Sigma_m^+\Sigma_{e,m})\to_{\mathbb P}0\), and the expression inside
the expectation is bounded by \(1\). Hence dominated convergence along
convergence in probability gives
\[
\sqrt n\,\widetilde g_m\to_{\mathbb P}0.
\]

It remains to control the quadratic form. Let
\[
S_m\coloneqq \operatorname{Im}(\Sigma_m),
\qquad
P_m\coloneqq \Pi_{S_m},
\qquad
C_{m,n}
\coloneqq
\Sigma_m^{+/2}\Sigma_{m,n}\Sigma_m^{+/2}.
\]
By \Cref{lem:emp-span-contained},
\[
\operatorname{Im}(\Sigma_{m,n})\subseteq S_m
\]
almost surely. Work on the event
\[
\mathcal E_{m,n}^{\mathrm{rel}}
\coloneqq
\left\{
\left\|
C_{m,n}-P_m
\right\|_{\mathrm{op}}
\le \frac12
\right\}.
\]
By \Cref{lem:relative-covariance-lln},
\[
\Pr(\mathcal E_{m,n}^{\mathrm{rel}})\to1.
\]
On this event, \(C_{m,n}|_{S_m}\) is invertible and
\[
\left\|
\bigl(C_{m,n}|_{S_m}\bigr)^{-1}
\right\|_{\mathrm{op}}
\le 2.
\]
As in \Cref{lem:relative-trace-approx},
\[
\Sigma_{m,n}^+
=
\Sigma_m^{+/2}
\bigl(C_{m,n}|_{S_m}\bigr)^{-1}
\Sigma_m^{+/2}
\]
on \(S_m\), and both sides vanish on \(S_m^\perp\). Therefore,
\[
\Sigma_{m,n}^+\Sigma_m\Sigma_{m,n}^+
=
\Sigma_m^{+/2}
\bigl(C_{m,n}|_{S_m}\bigr)^{-2}
\Sigma_m^{+/2}.
\]
Substituting into the leakage quadratic form yields
\[
n\,\mathrm{Leakage}_n
=
(\sqrt n\,\widetilde g_m)^\top
\bigl(C_{m,n}|_{S_m}\bigr)^{-2}
(\sqrt n\,\widetilde g_m).
\]
Hence, on \(\mathcal E_{m,n}^{\mathrm{rel}}\),
\[
0
\le
n\,\mathrm{Leakage}_n
\le
4\|\sqrt n\,\widetilde g_m\|^2.
\]
Since
\[
\sqrt n\,\widetilde g_m\xrightarrow{\mathbb P}0
\]
and \(\Pr(\mathcal E_{m,n}^{\mathrm{rel}})\to1\), we conclude that
\[
n\,\mathrm{Leakage}_n\xrightarrow{\mathbb P}0.
\]
\end{proof}

\subsection{Dimension gap}
\label{app:dimension-gap}

We record a simple observation explaining the role of the dimension gap in the
downstream-only comparison. Work locally around a regular realization
$(\theta_\star,\Omega_\star)$ of $f_\star$, so that
\[
f_{\theta,\Omega}(x)=\ip{\theta}{\phi(x,\Omega)},
\qquad
f_\star=f_{\theta_\star,\Omega_\star}.
\]
Let
\[
\mathcal H_{\Omega_\star}
\coloneqq
\left\{
\ip{\theta}{\phi(\cdot,\Omega_\star)}:\theta\in\R^p
\right\}
\subset L^2(\mu_{\mathrm{down}})
\]
be the fixed-representation downstream class of the optimal representation $\Omega_\star$. We have define
\[
d_{\mathrm{eff}}(\Omega_\star)
\coloneqq
\dim \mathcal H_{\Omega_\star}.
\]
The downstream-only tangent space at $(\theta_\star,\Omega_\star)$ is
\[
\mathcal T_{\mathrm{base}}
\coloneqq
\Big\{
\ip{\delta\theta}{\phi(\cdot,\Omega_\star)}
+
\langle\theta_\star,D\phi(\cdot,\Omega_\star)[v]\rangle\Big|
\delta\theta\in\R^p,\;
v\in T_{\Omega_\star}\mathcal M
\Big\}
\subset L^2(\mu_{\mathrm{down}}),
\]
and we define
\[
d_{\mathrm{base}}
\coloneqq
\dim \mathcal T_{\mathrm{base}}.
\]
Finally, recall the linear map for the case \(\mathcal B_0=\{0\}\)
\[
\mathcal L(v)
\coloneqq
-D\Pi_{\Omega_\star}[v]f_\star,
\qquad
v\in T_{\Omega_\star}\mathcal M.
\]

\begin{proposition}
\label{prop:dimension-gap-L}
We have $d_{\mathrm{base}}\ge d_{\mathrm{eff}}(\Omega_\star)$. Moreover,
\[
d_{\mathrm{base}}=d_{\mathrm{eff}}(\Omega_\star)
\qquad\Longleftrightarrow\qquad
\mathcal L\equiv 0.
\]
\end{proposition}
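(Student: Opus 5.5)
We will decompose the tangent space $\mathcal T_{\mathrm{base}}$ as $\mathcal H_{\Omega_\star}+\mathcal D$, where
\[
\mathcal D\coloneqq\{DT_{\Omega_\star}[v]\theta_\star : v\in T_{\Omega_\star}\mathcal M\},
\]
using Lemma~\ref{lem:T-diff} to write $\langle\theta_\star,D\phi(\cdot,\Omega_\star)[v]\rangle=DT_{\Omega_\star}[v]\theta_\star$. Taking $v=0$ shows $\mathcal H_{\Omega_\star}\subseteq\mathcal T_{\mathrm{base}}$, which gives $d_{\mathrm{base}}\ge d_{\mathrm{eff}}(\Omega_\star)$ at once. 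Since $\mathcal H_{\Omega_\star}$ is finite-dimensional, hence closed, we have the orthogonal decomposition
\[
\mathcal T_{\mathrm{base}}=\mathcal H_{\Omega_\star}\oplus(I-\Pi_{\Omega_\star})\mathcal D .
\]
Indeed, any element $h+d$ can be rewritten as $\big(h+\Pi_{\Omega_\star}d\big)+(I-\Pi_{\Omega_\star})d$, and conversely each summand lies in $\mathcal T_{\mathrm{base}}$ because $\Pi_{\Omega_\star}d\in\mathcal H_{\Omega_\star}$. It follows that
\[
d_{\mathrm{base}}=d_{\mathrm{eff}}(\Omega_\star)+\dim\big((I-\Pi_{\Omega_\star})\mathcal D\big).
\]

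Next, we will identify $(I-\Pi_{\Omega_\star})\mathcal D$ with the image of $\mathcal L$. In the regime $\mathcal B_0=\{0\}$, Proposition~\ref{prop:proj-diff} and \eqref{eq:linear_map_interaction_zero} (equivalently \eqref{eq:linear-L-def} with $\Pi_{\mathcal B_0}=0$) give
\[
\mathcal L(v)=-(I-\Pi_{\Omega_\star})DT_{\Omega_\star}[v]\theta_\star,
\]
so $(I-\Pi_{\Omega_\star})\mathcal D=\operatorname{Im}(\mathcal L)$ as a set, the sign being irrelevant. Therefore
\[
d_{\mathrm{base}}-d_{\mathrm{eff}}(\Omega_\star)=\rank(\mathcal L),
\]
which vanishes if and only if $\mathcal L\equiv0$. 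This proves the equivalence, and in fact yields the sharper identity that the dimension gap equals the rank of $\mathcal L$.

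The main subtlety is justifying that the formula $\mathcal L(v)=-D\Pi_{\Omega_\star}[v]f_\star$ stated just before the proposition coincides with $-(I-\Pi_{\Omega_\star})DT_{\Omega_\star}[v]\theta_\star$, and that this expression is independent of the choice of $\theta_\star$. For the first point, we will differentiate the identity $(I-\Pi_{\Omega(v)})T_v\theta_\star$, which lies in $\mathcal H_v^\perp\cap\mathcal H_v=\{0\}$ and hence vanishes. Differentiating at $v=0$ gives
\[
-D\Pi_{\Omega_\star}[v]f_\star+(I-\Pi_{\Omega_\star})DT_{\Omega_\star}[v]\theta_\star=0 .
\]
This is valid because both $\Pi$ and $T$ are Fréchet differentiable by Proposition~\ref{prop:proj-diff} and Lemma~\ref{lem:T-diff}. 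Independence from $\theta_\star$ is Lemma~\ref{lem:L-well-defined}.

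We must also check the parametrization used in $d_{\mathrm{base}}$. The definition uses perturbations $(\delta\theta,\delta w)$ in the representative space rather than $v$ on $\mathcal M$. We will argue that the representative directions add nothing beyond $T_{\Omega_\star}\mathcal M$ together with $\delta\theta$, for two reasons. First, vertical (orbit) directions $\xi\cdot w_\star$ produce, by equivariance \eqref{eq:psi_orth_equiv}, the term $\langle\theta_\star,d\rho(\xi)\psi\rangle$, which lies in $\mathcal H_{\Omega_\star}$. Second, horizontal directions map onto $T_{\Omega_\star}\mathcal M$ through the lift $s$. With these two facts, the spans agree and the argument above applies.
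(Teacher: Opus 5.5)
Your proof is correct and follows the paper's: both write $\mathcal T_{\mathrm{base}}=\mathcal H_{\Omega_\star}+\{DT_{\Omega_\star}[v]\theta_\star : v\in T_{\Omega_\star}\mathcal M\}$, then differentiate the reproducing identity $\Pi_{\Omega}T_\Omega\theta_\star=T_\Omega\theta_\star$ at $\Omega_\star$ to obtain $\mathcal L(v)=-(I-\Pi_{\Omega_\star})DT_{\Omega_\star}[v]\theta_\star$. Your orthogonal splitting $\mathcal T_{\mathrm{base}}=\mathcal H_{\Omega_\star}\oplus\operatorname{Im}(\mathcal L)$ additionally gives the sharper identity $d_{\mathrm{base}}-d_{\mathrm{eff}}(\Omega_\star)=\rank(\mathcal L)$, and your vertical/horizontal check reconciles the main-text $(\delta\theta,\delta w)$ definition of $d_{\mathrm{base}}$ with the descriptor-level $(\delta\theta,v)$ version, which the paper adopts without comment.
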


\begin{proof}
The directions obtained by varying only the readout parameter are
\[
\left\{
\ip{\delta\theta}{\phi(\cdot,\Omega_\star)}
:
\delta\theta\in\R^p
\right\}
=
\mathcal H_{\Omega_\star}.
\]
These directions are contained in $\mathcal T_{\mathrm{base}}$, hence $\mathcal H_{\Omega_\star}\subseteq \mathcal T_{\mathrm{base}}$, which implies
\[
d_{\mathrm{base}}\ge d_{\mathrm{eff}}(\Omega_\star).
\]

For the equivalence, fix $v\in T_{\Omega_\star}\mathcal M$ and define
\[
h_v
\coloneqq
\langle\theta_\star,D\phi(\cdot,\Omega_\star)[v]\rangle.
\]
By definition,
\[
\mathcal T_{\mathrm{base}}
=
\mathcal H_{\Omega_\star}
+
\{h_v:v\in T_{\Omega_\star}\mathcal M\}.
\]
Since $\mathcal H_{\Omega_\star}\subseteq\mathcal T_{\mathrm{base}}$, we have
$d_{\mathrm{base}}=d_{\mathrm{eff}}(\Omega_\star)$ if and only if
\[
h_v\in\mathcal H_{\Omega_\star}
\qquad
\text{for every }v\in T_{\Omega_\star}\mathcal M.
\]

Now let $\Omega_t=\exp_{\Omega_\star}(tv)$. Since
$f_{\theta_\star,\Omega_t}\in\mathcal H_{\Omega_t}$, we have
\[
\Pi_{\Omega_t}f_{\theta_\star,\Omega_t}
=
f_{\theta_\star,\Omega_t}.
\]
Differentiating at $t=0$ gives
\[
D\Pi_{\Omega_\star}[v]f_\star
+
\Pi_{\Omega_\star}h_v
=
h_v.
\]
Therefore,
\[
D\Pi_{\Omega_\star}[v]f_\star
=
(I-\Pi_{\Omega_\star})h_v,
\]
and hence
\[
\mathcal L(v)
=
-D\Pi_{\Omega_\star}[v]f_\star
=
-(I-\Pi_{\Omega_\star})h_v.
\]
Thus $\mathcal L(v)=0$ if and only if $h_v\in\mathcal H_{\Omega_\star}$.
Since this holds for every $v\in T_{\Omega_\star}\mathcal M$, we conclude that
\[
\mathcal L\equiv 0
\qquad\Longleftrightarrow\qquad
d_{\mathrm{base}}=d_{\mathrm{eff}}(\Omega_\star).
\]
The final equivalence follows by taking the contrapositive together with
$d_{\mathrm{base}}\ge d_{\mathrm{eff}}(\Omega_\star)$.
\end{proof}

\section{Proofs of Section~\ref{sec:example_linear_spectral}}
\label{app:proof-cor-linear}

This appendix verifies that the linear spectral contrastive model of
\Cref{sec:example_linear_spectral} satisfies the standing assumptions of Theorem~\ref{thm:master-compatible}. We separate the verification into two parts. First, we verify the quotient geometry and downstream assumptions (Assumptions~\ref{ass:well-posedness}--\ref{ass:stable-null-span}). Second, we verify the pre-training CLT assumptions, namely~\Cref{ass:riem_mestimation}. Once these checks are in place,~\Cref{cor:linear-spectral-model} follows by a direct invocation of~\Cref{thm:master-compatible}.

\subsection{Geometry, descriptor, and quotient feature map}\label{sec:quot-feat-linear}
\paragraph{Quotient feature map $\phi(x,M)$ via a local section.}
To express downstream prediction purely in quotient coordinates, we define a feature map that depends on the descriptor $M\in\mathcal{M}_{d,k}$. Fix a regular point $M_\star\in\mathcal{M}_{d,k}$ and a neighborhood $\mathcal W\subset\mathcal{M}_{d,k}$ of $M_\star$ on which there exists a $C^2$ \emph{local section}
\[
s:\mathcal W\to\R^{k\times d},
\qquad s(M)^\T s(M)=M\ \ \text{for all }M\in\mathcal W,
\]
as constructed in Appendix~\ref{app:riem_bundle}.
We then define the \emph{quotient feature map}
\begin{align}
\label{eq:linear_spec_phi_def}
\phi(x,M)\coloneqq s(M)\,x\in\R^k,
\qquad (x,M)\in\R^d\times\mathcal W.
\end{align}
Any two choices of local section differ by a left-multiplication $s'(M)=Q(M)s(M)$ with $Q(M)\in O(k)$,
hence they generate features related by an orthogonal transform.
\paragraph{Concrete local choice of the section $s(M)$.}
Fix a reference point $M_\star\in\mathcal{M}_{d,k}$ and choose an orthonormal basis
$U_\star\in\R^{d\times k}$ of $\mathrm{range}(M_\star)$ (so $U_\star^\top U_\star=I_k$).
For $M$ in a sufficiently small neighborhood $\mathcal W$ of $M_\star$, let $P(M)$ denote the orthogonal projector
onto $\mathrm{range}(M)$. Define
\begin{align*}
B(M)\coloneqq U_\star^\top P(M)U_\star\in\R^{k\times k},\;
U(M)\coloneqq P(M)U_\star\,B(M)^{-1/2}\in\R^{d\times k},  
\end{align*}
so that $U(M)^\top U(M)=I_k$ and $\mathrm{range}(U(M))=\mathrm{range}(M)$.
Next set
\[
\Lambda(M)\coloneqq U(M)^\top M U(M)\in\R^{k\times k},
\]
which is positive definite on $\mathcal W$ and satisfies $M=U(M)\Lambda(M)U(M)^\top$.
A concrete local section is then
\[
s(M)\coloneqq \Lambda(M)^{1/2}U(M)^\top\in\R^{k\times d},
\]
so that $s(M)^\top s(M)=M$. Consequently, the quotient-level feature map can be written explicitly as
\[
\phi(x,M)=s(M)x=\Lambda(M)^{1/2}U(M)^\top x\in\R^k.
\]
All smoothness claims (existence of $\mathcal W$ and differentiability of $M\mapsto s(M)$) are proved in
Appendix~\ref{app:riem_bundle}.

\subsection{Population descriptor problem and regularity of $M_\star$}
\label{app:proof-cor-linear:target}

\paragraph{Population loss in descriptor space.} 
The following lemma expresses the population objective in terms of $M=A^\top A$.

\begin{lemma}
\label{lem:spec-loss-in-M}
Assume $\E\|x\|^4<\infty$.
Then
\begin{align}
\label{eq:Lspec-M-form}
L_{\mathrm{spec}}(M)
&=
-2\,\Tr\!\big(M \Sigma_{\mathrm{pre}}^{+}\big)
+\Tr\!\big((M \Sigma_{\mathrm{pre}})^2\big).
\end{align}
If furthermore $\Sigma_{\mathrm{pre}}$ is full-rank, then
\begin{align}
\label{eq:Lspec-square}
L_{\mathrm{spec}}(M)
&=
\big\| \Sigma_{\mathrm{pre}}^{1/2} M \Sigma_{\mathrm{pre}}^{1/2} - C \big\|_F^2
- \| C \|_F^2,
\end{align}
where $C \coloneqq  \Sigma_{\mathrm{pre}}^{-1/2}\, \Sigma_{\mathrm{pre}}^{+}\, \Sigma_{\mathrm{pre}}^{-1/2}$.
\end{lemma}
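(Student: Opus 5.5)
We plan to compute $L_{\mathrm{spec}}(M)=\E[\ell_{\mathrm{spec}}(A;z)]$ term by term, writing everything through $M=A^\top A$. For the first term, $\langle Ax, Ax^+\rangle = x^\top A^\top A x^+ = \Tr(M x^+ x^\top)$, so by linearity of trace and expectation, $\E\langle Ax,Ax^+\rangle = \Tr(M\,\E[x^+x^\top]) = \Tr(M\Sigma_{\mathrm{pre}}^{+})$. This requires only $\E\|x\|^2<\infty$, which follows from the fourth-moment assumption together with Cauchy--Schwarz applied to $x^+$; the latter has the same law as $x$ by exchangeability.

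For the second term, $\langle Ax,Ax^-\rangle^2 = (x^\top M x^-)^2 = x^\top M x^- (x^-)^\top M x$. We first condition on $x$ and use independence of $x^-$ from $x$ together with $\E[x^-(x^-)^\top]=\Sigma_{\mathrm{pre}}$. This gives $\E[(x^\top Mx^-)^2\mid x] = x^\top M\Sigma_{\mathrm{pre}} M x$. Taking expectation over $x$ then yields $\Tr(M\Sigma_{\mathrm{pre}}M\Sigma_{\mathrm{pre}})=\Tr((M\Sigma_{\mathrm{pre}})^2)$. To justify integrability, note that $|x^\top M x^-|^2\le \|M\|_{\mathrm{op}}^2\|x\|^2\|x^-\|^2$, whose expectation is $\|M\|_{\mathrm{op}}^2(\E\|x\|^2)^2<\infty$ by independence. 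Combining the two terms proves \eqref{eq:Lspec-M-form}.

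For \eqref{eq:Lspec-square}, we assume $\Sigma_{\mathrm{pre}}\succ0$ and set $\tilde M\coloneqq \Sigma_{\mathrm{pre}}^{1/2}M\Sigma_{\mathrm{pre}}^{1/2}$. Cyclicity of the trace gives $\Tr((M\Sigma_{\mathrm{pre}})^2)=\Tr(\tilde M^2)=\|\tilde M\|_F^2$, since $\tilde M$ is symmetric. It also gives $\Tr(M\Sigma_{\mathrm{pre}}^+) = \Tr(\tilde M\, \Sigma_{\mathrm{pre}}^{-1/2}\Sigma_{\mathrm{pre}}^+\Sigma_{\mathrm{pre}}^{-1/2}) = \langle \tilde M, C\rangle_F$, using that $C$ is symmetric because $\Sigma_{\mathrm{pre}}^+$ is symmetric by exchangeability. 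Completing the square, $\|\tilde M\|_F^2 - 2\langle\tilde M,C\rangle_F = \|\tilde M - C\|_F^2-\|C\|_F^2$, which is the claim.

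No step is a real obstacle. The only point needing care is the moment bookkeeping: the $L^1$ justification of the quartic term uses independence of $x$ and $x^-$, so second moments suffice there, while the stated fourth-moment condition is more than enough.
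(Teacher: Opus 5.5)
Your proposal is correct and follows essentially the same route as the paper: you compute the two expectations through trace identities and the independence of $x^-$ to get $-2\Tr(M\Sigma_{\mathrm{pre}}^{+})+\Tr(M\Sigma_{\mathrm{pre}}M\Sigma_{\mathrm{pre}})$, then whiten by $\Sigma_{\mathrm{pre}}^{1/2}$ and complete the square. Your integrability bookkeeping, which shows that second moments already suffice, is extra care the paper omits; also, the completing-the-square step does not need $C$ to be symmetric, because symmetry of $\tilde M$ alone gives $\Tr(\tilde M C)=\langle\tilde M,C\rangle_F$.
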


\begin{proof}
Let $\Sigma\coloneqq\Sigma_{\mathrm{pre}}=\E[xx^\top]$. Since $x^-$ is an independent copy of $x$, we have
\begin{align*}
    L_{\mathrm{spec}}(M) &= -2\E[x^\top Mx^+] + \E[(x^\top Mx^-)^2]=-2\tr(\E[Mx^+x^\top) +\tr(\E[Mx(x^-)^\top M xx^\top])\\
    &=-2\tr(M\Sigma_{\mathrm{pre}}^+) + \tr(M\Sigma M \Sigma)
\end{align*}
which proves \eqref{eq:Lspec-M-form}.
If $\Sigma$ is full rank, write
\[
\Tr\!\big((M\Sigma)^2\big)=\|\Sigma^{1/2}M\Sigma^{1/2}\|_F^2,
\qquad
\Tr(M\Sigma^+)=\big\langle \Sigma^{1/2}M\Sigma^{1/2},\,C\big\rangle_F,
\]
with $C=\Sigma^{-1/2}\Sigma^+\Sigma^{-1/2}$.
Completing the square gives \eqref{eq:Lspec-square}.
\end{proof}

\begin{remark}
Even though $\Sigma_{\mathrm{pre}}$ and $M$ are PSD, the matrix $C$ can be indefinite:
augmentations may induce negative correlations along some directions.
\end{remark}

\paragraph{Uniqueness of the descriptor minimizer.}
By \Cref{lem:spec-loss-in-M}, the population minimization in descriptor space reduces to
\begin{align}
M_\star
\in \argmin_{M \in \mathcal{M}_{d,k}}
\ \big\| \Sigma_{\mathrm{pre}}^{1/2} M \Sigma_{\mathrm{pre}}^{1/2} - C \big\|_F^2.
\label{eq:spec_loss_manifold}
\end{align}
We now state a simple eigengap condition that guarantees uniqueness.

Assume throughout $\Sigma_{\mathrm{pre}}\succ 0$ and \Cref{ass:C-positive-k}. Recall the whitened matrix
\[
C\coloneqq \Sigma_{\mathrm{pre}}^{-1/2}\Sigma_{\mathrm{pre}}^+\Sigma_{\mathrm{pre}}^{-1/2},
\]
and the descriptor manifold $\mathcal M_{d,k}\coloneqq \{M\succeq 0:\rank(M)=k\}$.
Thus, the population minimizer $M_\star$ exists, is unique,
and satisfies
\[
\Sigma_{\mathrm{pre}}^{1/2}M_\star\Sigma_{\mathrm{pre}}^{1/2}=U_k\Lambda_kU_k^\top,
\]
where $U_k\Lambda_kU_k^\top$ is the rank-$k$ truncation onto the top-$k$ \emph{positive} eigenvalues of $C$.
In particular, $M_\star\in\mathcal M_{d,k}$ is a regular point of the fixed-rank PSD manifold.

\subsection{Verification of Assumptions~\ref{ass:well-posedness}--\ref{ass:stable-null-span}}
This subsection verifies all assumptions of Theorem~\ref{thm:master-compatible} except the
pre-training CLT assumption~\ref{ass:riem_mestimation}.
\subsubsection{Local feature regularity and moments}
\label{app:proof-cor-linear:features}

Fix the regular point $M_\star\in\mathcal M_{d,k}$ and let $s(\cdot)$ be the $C^2$ local section from
Appendix~\ref{app:riem_bundle} (equivalently, the concrete construction in \Cref{sec:example_linear_spectral}),
defined on a neighborhood $\mathcal W\subset\mathcal M_{d,k}$ of $M_\star$ and satisfying $s(M)^\top s(M)=M$.
Define the quotient feature map
\[
\phi(x,M)\coloneqq s(M)x\in\R^k,
\qquad
(x,M)\in\R^d\times\mathcal W.
\]
By Appendix~\ref{app:riem_bundle}, the map $M\mapsto s(M)$ is $C^2$ on $\mathcal W$, hence
$M\mapsto \phi(x,M)$ is $C^1$ for each fixed $x$.

\begin{lemma}[Local-uniform feature and derivative moments]
\label{lem:linear:moments}
Assume $\E\|X\|^{4+\delta}<\infty$ for some $\delta>0$, where $X\sim\mu_{\mathrm{down}}$.
Then there exists a neighborhood $\mathcal U\subseteq\mathcal W$ of $M_\star$ and a constant $C<\infty$ such that
\[
\sup_{M\in\mathcal U}\E\|\phi(X,M)\|^{4+\delta}\le C,
\qquad
\sup_{M\in\mathcal U}\E\|D_M\phi(X,M)\|_{\mathrm{op}}^{4+\delta}\le C.
\]
In particular, the local $C^1$ feature regularity condition in~\Cref{ass:phi-C1-M} is satisfied.
\end{lemma}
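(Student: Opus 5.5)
Both bounds rest on the linear structure $\phi(x,M)=s(M)x$ and $D_M\phi(x,M)[H]=(Ds(M)[H])\,x$. Once the matrices $s(M)$ and $Ds(M)$ are bounded uniformly in $M$, the two moment bounds reduce to the single assumption $\E\|X\|^{4+\delta}<\infty$.

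\emph{Choice of neighborhood.} By Appendix~\ref{app:riem_bundle}, the section $M\mapsto s(M)$ is $C^2$ on $\mathcal W$. Concretely, $s(M)=\Lambda(M)^{1/2}U(M)^\top$ is built from $P(M)$, $B(M)^{-1/2}$ and $\Lambda(M)^{1/2}$, each of which is smooth near $M_\star$ because $B(M_\star)=I_k$ and $\Lambda(M_\star)\succ 0$. We take $\mathcal U$ to be a closed normal-coordinate ball around $M_\star$ that is compactly contained in $\mathcal W$, shrinking it if necessary so that $B(M)\succeq \tfrac12 I_k$ and $\Lambda(M)\succeq \tfrac12\lambda_{\min}(\Lambda(M_\star))I_k$ hold on it. Since $s$ and its derivative $Ds$ (expressed in normal coordinates, which are themselves smooth on a normal neighborhood) are continuous on the compact set $\mathcal U$, the constants
\[
K_0\coloneqq\sup_{M\in\mathcal U}\opnorm{s(M)}<\infty
\qquad\text{and}\qquad
K_1\coloneqq\sup_{M\in\mathcal U}\sup_{\|H\|=1}\opnorm{Ds(M)[H]}<\infty
\]
are finite.

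\emph{The two bounds.} For the features, $\|\phi(x,M)\|\le K_0\|x\|$, so
\[
\E\|\phi(X,M)\|^{4+\delta}\le K_0^{4+\delta}\,\E\|X\|^{4+\delta}.
\]
For the derivative, linearity in $x$ gives $\opnorm{D_M\phi(x,M)}=\sup_{\|H\|=1}\|Ds(M)[H]x\|\le K_1\|x\|$, and the same moment bound follows with $K_1$ in place of $K_0$. Taking $C=\max\{K_0,K_1\}^{4+\delta}\,\E\|X\|^{4+\delta}$ proves both displayed bounds.

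\emph{Verifying \Cref{ass:phi-C1-M}.} Continuous differentiability of $M\mapsto\phi(x,M)$ holds for every $x$, as the composition of the $C^2$ map $s$ with the linear evaluation $A\mapsto Ax$. Together with the derivative moment bound just proved, this is exactly \Cref{ass:phi-C1-M}.

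The only step needing care is the uniform bound $K_1$. The issue is not finiteness but ensuring that the derivative used is the one in normal coordinates on $\mathcal M_{d,k}$. We handle this with the chain rule: the derivative in normal coordinates equals the ambient derivative of $s$ composed with $D\exp_{M_\star}$, and both factors are continuous and hence bounded on a compact subset of a normal neighborhood.
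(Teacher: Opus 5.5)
Your proposal is correct and matches the paper's proof: choose a relatively compact neighborhood $\mathcal U\Subset\mathcal W$ on which $\opnorm{s(M)}$ and $\opnorm{Ds(M)}$ are uniformly bounded, then use linearity of $\phi(x,M)=s(M)x$ in $x$ to reduce both moment bounds to $\E\|X\|^{4+\delta}<\infty$. Your chain-rule remark, composing the ambient derivative of $s$ with $D\exp_{M_\star}$ to get the normal-coordinate derivative, is a small added detail that the paper leaves implicit.
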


\begin{proof}
Since $s(\cdot)$ is $C^2$ on $\mathcal W$, its operator norm and the operator norm of its derivative are locally bounded.
Choose a relatively compact neighborhood $\mathcal U\Subset\mathcal W$ of $M_\star$ so that
\[
\sup_{M\in\mathcal U}\|s(M)\|_{\mathrm{op}}<\infty,
\qquad
\sup_{M\in\mathcal U}\|D_M s(M)\|_{\mathrm{op}}<\infty.
\]
Then $\|\phi(X,M)\|_2\le \|s(M)\|_{\mathrm{op}}\|X\|_2$ and
$\|D_M\phi(X,M)\|_{\mathrm{op}}\le \|D_M s(M)\|_{\mathrm{op}}\|X\|_2$.
Raising to the power $4+\delta$ and taking expectations yields the stated bounds.
\end{proof}

\subsubsection{Downstream covariance, stable rank, and effective dimension}
\label{app:proof-cor-linear:downstream-rank}

Write $\Sigma_{\mathrm{down}}\coloneqq \E[XX^\top]$ and assume $\Sigma_{\mathrm{down}}\succ0$.
For $M\in\mathcal U$, define the downstream feature covariance
\[
\Sigma(M)\coloneqq \E[\phi(X,M)\phi(X,M)^\top]=\E[s(M)XX^\top s(M)^\top]=s(M)\Sigma_{\mathrm{down}}s(M)^\top\in\R^{k\times k}.
\]

\begin{lemma}[Stable rank/eigengap for $\Sigma(M)$ near $M_\star$]
\label{lem:linear:Sigma-stable}
Assume $\Sigma_{\mathrm{down}}\succ0$ and let $\mathcal U$ be as in Lemma~\ref{lem:linear:moments}.
Then there exist constants $\kappa_\Sigma,K_\Sigma\in(0,\infty)$ such that, for all $M\in\mathcal U$,
\[
\rank(\Sigma(M))=k,
\qquad
\lambda_k(\Sigma(M))\ge \kappa_\Sigma,
\qquad
\|\Sigma(M)\|_{\mathrm{op}}\le K_\Sigma.
\]
In particular, $d_{\mathrm{eff}}(M)=\tr(\Sigma(M)\Sigma(M)^+)=k$ for all $M\in\mathcal U$.
\end{lemma}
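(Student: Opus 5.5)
The plan is to exploit the explicit factorization $\Sigma(M)=s(M)\Sigma_{\mathrm{down}}s(M)^\top$ and reduce every claim to properties of the $k\times d$ matrix $s(M)$. Since $s(M)^\top s(M)=M$ and $M\in\mathcal M_{d,k}$ has rank $k$, the matrix $s(M)$ has rank $k$; equivalently, $s(M)$ has full row rank. Indeed, $\rank(s(M))=\rank(s(M)^\top s(M))=k$. It follows that $s(M)s(M)^\top\in\R^{k\times k}$ is positive definite and that $\sigma_k(s(M))^2=\lambda_k(s(M)s(M)^\top)=\lambda_k(M)$, because the nonzero eigenvalues of $s s^\top$ and $s^\top s$ coincide.

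For the lower eigenvalue bound, take any unit $u\in\R^k$. Then
\[
u^\top\Sigma(M)u=(s(M)^\top u)^\top\Sigma_{\mathrm{down}}(s(M)^\top u)\ \ge\ \lambda_{\min}(\Sigma_{\mathrm{down}})\,\|s(M)^\top u\|_2^2\ \ge\ \lambda_{\min}(\Sigma_{\mathrm{down}})\,\lambda_k(M).
\]
This gives $\rank(\Sigma(M))=k$ together with $\lambda_k(\Sigma(M))\ge\lambda_{\min}(\Sigma_{\mathrm{down}})\lambda_k(M)$. It therefore remains to show that $\lambda_k(M)$ is bounded below uniformly on $\mathcal U$. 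Eigenvalues are continuous in $M$ (Weyl's inequality), and $\lambda_k(M_\star)>0$ because $M_\star$ is a regular point of rank exactly $k$. Shrinking $\mathcal U$ if necessary, as Lemma~\ref{lem:linear:moments} allows since any smaller relatively compact neighborhood works, we may assume $\|M-M_\star\|_{\mathrm{op}}\le\lambda_k(M_\star)/2$ on $\mathcal U$. Then $\lambda_k(M)\ge\lambda_k(M_\star)/2$ there, and we set $\kappa_\Sigma\coloneqq\lambda_{\min}(\Sigma_{\mathrm{down}})\lambda_k(M_\star)/2$.

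The upper bound uses the same factorization:
\[
\|\Sigma(M)\|_{\mathrm{op}}\le\|\Sigma_{\mathrm{down}}\|_{\mathrm{op}}\|s(M)\|_{\mathrm{op}}^2=\|\Sigma_{\mathrm{down}}\|_{\mathrm{op}}\|M\|_{\mathrm{op}}.
\]
The right-hand side is bounded on the relatively compact set $\mathcal U$, since $\sup_{\mathcal U}\|s(M)\|_{\mathrm{op}}<\infty$ was already established in the proof of Lemma~\ref{lem:linear:moments}. Finiteness of $\Sigma_{\mathrm{down}}$ itself follows from the moment assumption on $X$.

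Finally, because $\Sigma(M)$ is invertible, $\Sigma(M)^+=\Sigma(M)^{-1}$, and by Lemma~\ref{lem:trace-rank} we get $\tr(\Sigma(M)\Sigma(M)^+)=\tr(I_k)=k$, which is the claimed value of $d_{\mathrm{eff}}(M)$. The only step needing care is the uniform lower bound, specifically checking that shrinking $\mathcal U$ is compatible with the earlier lemma. The rest is routine linear algebra, so I do not expect a genuine obstacle.
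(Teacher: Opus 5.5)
Your proof is correct. It shares the paper's skeleton: the factorization $\Sigma(M)=s(M)\Sigma_{\mathrm{down}}s(M)^\top$, the observation that $s(M)$ has rank $k$ because $s(M)^\top s(M)=M$, and positivity of $u^\top\Sigma(M)u$ coming from $\Sigma_{\mathrm{down}}\succ0$.

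You differ from the paper in how you get the uniform constants. The paper argues qualitatively. It shrinks $\mathcal U$ so that its closure is compact and lies inside the section's domain. It then uses continuity of $M\mapsto s(M)$ to make $M\mapsto\lambda_{\min}(\Sigma(M))$ and $M\mapsto\|\Sigma(M)\|_{\mathrm{op}}$ continuous there, so the first attains a positive minimum and the second a finite maximum.

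You instead prove two explicit bounds, $\lambda_k(\Sigma(M))\ge\lambda_{\min}(\Sigma_{\mathrm{down}})\lambda_k(M)$ and $\|\Sigma(M)\|_{\mathrm{op}}\le\|\Sigma_{\mathrm{down}}\|_{\mathrm{op}}\|M\|_{\mathrm{op}}$. These follow from $\sigma_k(s(M))^2=\lambda_k(M)$ and $\|s(M)\|_{\mathrm{op}}^2=\|M\|_{\mathrm{op}}$, and you then control $\lambda_k(M)$ near $M_\star$ by Weyl's inequality.

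Your route gives explicit constants such as $\kappa_\Sigma=\lambda_{\min}(\Sigma_{\mathrm{down}})\lambda_k(M_\star)/2$. These depend only on $M$ and $\Sigma_{\mathrm{down}}$, not on the chosen section, which matches the fact that changing the section changes $\Sigma(M)$ only by an orthogonal conjugation. The spectral bounds also need no continuity of $s$ at all. The paper's argument is shorter but non-constructive.
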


\begin{proof}
For each \(M\in\mathcal U\), the matrix \(s(M)\in\R^{k\times d}\) has rank \(k\)
because \(s(M)^\top s(M)=M\) and \(\rank(M)=k\). Since
\(\Sigma_{\mathrm{down}}\succ0\), for every nonzero \(u\in\R^k\),
\[
u^\top \Sigma(M)u
=
u^\top s(M)\Sigma_{\mathrm{down}}s(M)^\top u
=
\|\Sigma_{\mathrm{down}}^{1/2}s(M)^\top u\|_2^2
>0.
\]
Hence \(\Sigma(M)\succ0\) and \(\rank(\Sigma(M))=k\).

Shrink \(\mathcal U\), if necessary, so that its closure
\(\overline{\mathcal U}\) is compact. Since
\(M\mapsto s(M)\) is continuous, the maps
\[
M\mapsto \lambda_{\min}(\Sigma(M)),
\qquad
M\mapsto \|\Sigma(M)\|_{\mathrm{op}}
\]
are continuous on \(\overline{\mathcal U}\). The first is strictly positive on
\(\overline{\mathcal U}\), and the second is finite there. Therefore there exist
\(\kappa_\Sigma,K_\Sigma\in(0,\infty)\) such that, for all \(M\in\mathcal U\),
\[
\lambda_k(\Sigma(M))\ge \kappa_\Sigma,
\qquad
\|\Sigma(M)\|_{\mathrm{op}}\le K_\Sigma.
\]
Finally, since \(\Sigma(M)\) is invertible,
\[
d_{\mathrm{eff}}(M)
=
\Tr(\Sigma(M)\Sigma(M)^+)
=
\Tr(I_k)
=
k.
\]
\end{proof}
\subsubsection{Local leverage and leverage-weighted signal moments}
Recall that for $M\in \mathcal U$, we defined the population leverage score as
\[
q_M(X)\coloneqq \phi(X,M)^\top\Sigma(M)^+\phi(X,M).
\]

\begin{lemma}[Local-uniform leverage and signal moments]
\label{lem:linear:leverage-moments}
Assume $\E\|X\|^{4+\delta}<\infty$ for some $\delta>0$. Then there exist $\eta>0$ and constants $C_q,C_e<\infty$ such that, after possibly shrinking
$\mathcal U$,
\[
\sup_{M\in\mathcal U}\E\!\left[q_M(X)^{2+\eta}\right]\le C_q,
\qquad
\sup_{M\in\mathcal U}\E\!\left[e_M(X)^2q_M(X)^{1+\eta}\right]\le C_e.
\]
Thus, the local-uniform moment and leverage conditions in~\Cref{ass:moments} are satisfied.
\end{lemma}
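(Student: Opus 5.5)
Both bounds follow from the same device: bound the leverage score $q_M(X)$ by a constant multiple of $\|X\|_2^2$, uniformly over $M\in\mathcal U$. Then every moment of $q_M$ reduces to a moment of $\|X\|$, which is controlled by the assumption $\E\|X\|^{4+\delta}<\infty$ for a suitable choice of $\eta$.

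\textbf{Uniform leverage bound.} By \Cref{lem:linear:Sigma-stable}, $\Sigma(M)$ is invertible on $\mathcal U$ with $\lambda_k(\Sigma(M))\ge\kappa_\Sigma$, so $\Sigma(M)^+=\Sigma(M)^{-1}$ and $\|\Sigma(M)^{-1}\|_{\mathrm{op}}\le\kappa_\Sigma^{-1}$. From the proof of \Cref{lem:linear:moments}, after shrinking $\mathcal U$ to a relatively compact neighborhood we have $S:=\sup_{M\in\mathcal U}\|s(M)\|_{\mathrm{op}}<\infty$. Hence
\[
q_M(X)\le \kappa_\Sigma^{-1}\|s(M)X\|_2^2\le \kappa_\Sigma^{-1}S^2\|X\|_2^2 \qquad\text{for all } M\in\mathcal U .
\]
Alternatively, $q_M(X)=X^\top s(M)^\top(s(M)\Sigma_{\mathrm{down}}s(M)^\top)^{-1}s(M)X$ is the squared norm of the $\Sigma_{\mathrm{down}}$-orthogonal projection of $\Sigma_{\mathrm{down}}^{-1/2}X$ onto a $k$-dimensional subspace. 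This gives $q_M(X)\le X^\top\Sigma_{\mathrm{down}}^{-1}X$ without using $S$, and either bound suffices.

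\textbf{First moment bound.} Choose $\eta=\delta/2$. Then
\[
\E\!\left[q_M(X)^{2+\eta}\right]\le (\kappa_\Sigma^{-1}S^2)^{2+\eta}\,\E\|X\|^{4+\delta},
\]
which is finite and uniform in $M\in\mathcal U$.

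\textbf{Second moment bound.} Here I need a pointwise bound on the residual. Write $f_\star(x)=\beta^\top x$, which holds in this linear example, and use
\[
e_M(x)=f_\star(x)-(\Pi_M f_\star)(x),\qquad (\Pi_M f_\star)(x)=\phi(x,M)^\top\Sigma(M)^{-1}\E[\phi(X,M)f_\star(X)].
\]
By Cauchy--Schwarz and the uniform bounds $\|\Sigma(M)^{-1}\|_{\mathrm{op}}\le\kappa_\Sigma^{-1}$ and $\|\Sigma_{\mathrm{down}}\|_{\mathrm{op}}<\infty$, the coefficient vector $\Sigma(M)^{-1}\E[\phi f_\star]$ is uniformly bounded over $\mathcal U$. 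Therefore $|e_M(x)|\le C'\|x\|_2$ for some constant $C'$, which gives
\[
e_M(X)^2q_M(X)^{1+\eta}\le C''\|X\|^{4+2\eta}=C''\|X\|^{4+\delta}
\]
with the same choice $\eta=\delta/2$. Taking expectations yields the uniform bound $C_e$.

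\textbf{Main obstacle.} The only genuinely delicate step is keeping all constants uniform over $\mathcal U$. This requires shrinking $\mathcal U$ so that its closure is compact inside $\mathcal W$, and then invoking continuity of $s(\cdot)$ and the eigenvalue lower bound of \Cref{lem:linear:Sigma-stable}. Once uniformity is secured, both moment conditions reduce to the single assumption $\E\|X\|^{4+\delta}<\infty$.
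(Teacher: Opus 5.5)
Your proposal is correct and follows essentially the same route as the paper. Like the paper, you bound $q_M(X)\le \kappa_\Sigma^{-1}\|s(M)\|_{\mathrm{op}}^2\|X\|^2$ uniformly on a relatively compact $\mathcal U$, obtain $|e_M(X)|\le C\|X\|$ from the uniformly bounded coefficient vector $\Sigma(M)^{-1}\E[\phi(X,M)f_\star(X)]$, and take $\eta\le\delta/2$ so that both integrands are dominated by $C\|X\|^{4+\delta}$. Your alternative bound $q_M(X)\le X^\top\Sigma_{\mathrm{down}}^{-1}X$ is also valid and needs no compactness; note that it comes from a Euclidean (not $\Sigma_{\mathrm{down}}$-) orthogonal projection of $\Sigma_{\mathrm{down}}^{-1/2}X$ onto the row space of $s(M)\Sigma_{\mathrm{down}}^{1/2}$.
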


\begin{proof}
By \Cref{lem:linear:Sigma-stable}, $\|\Sigma(M)^+\|_{\mathrm{op}}\le \kappa_\Sigma^{-1}$ uniformly over
$M\in\mathcal U$. Hence
\[
q_M(X)\le \kappa_\Sigma^{-1}\|\phi(X,M)\|^2
\le C\|X\|^2
\]
uniformly over $M\in\mathcal U$.
Choose any $\eta\in(0,\delta/2]$. Then
\[
\sup_{M\in\mathcal U}\E q_M(X)^{2+\eta}
\le C\,\E\|X\|^{4+2\eta}
\le C\,\E(1+\|X\|^{4+\delta})<\infty.
\]

It remains to control the signal term. Since $f_\star(x)=\theta_\star^\top\phi(x,M_\star)$ and
$\phi(x,M)=s(M)x$ with $s(M)$ uniformly bounded on $\mathcal U$, we have
$|f_\star(X)|\le C\|X\|$.
Moreover,
\[
\Pi_M f_\star(x)
=
\phi(x,M)^\top\Sigma(M)^+\,\E[\phi(X,M)f_\star(X)].
\]
The vector $\Sigma(M)^+\E[\phi(X,M)f_\star(X)]$ is uniformly bounded over $M\in\mathcal U$ by
\Cref{lem:linear:Sigma-stable}, \Cref{lem:linear:moments}, and Cauchy--Schwarz. Therefore
$|\Pi_M f_\star(X)|\le C\|X\|$ uniformly over $M\in\mathcal U$, and hence
$|e_M(X)|\le C\|X\|$ uniformly over $M\in\mathcal U$. Combining this bound with
$q_M(X)\le C\|X\|^2$ gives
\[
e_M(X)^2q_M(X)^{1+\eta}
\le C\|X\|^{4+2\eta}.
\]
Taking expectations and using $2\eta\le\delta$ proves the claim.
\end{proof}

\subsubsection{Well-posedness of the empirical projector}
\label{app:proof-cor-linear:wellposed}

Recall that $\mathcal H_M=\{x\mapsto\theta^\top\phi(x,M):\theta\in\R^k\}$ is a $k$-dimensional linear class.
The well-posedness assumption of Appendix~\ref{app:downstream-terms} is equivalent to requiring that the
empirical inner product is non-degenerate on $\mathcal H_M$, or equivalently that the $k\times k$ empirical
covariance $\Sigma_n(M)$ has full rank.

\begin{lemma}[Well-posedness holds almost surely for nondegenerate designs (\Cref{ass:well-posedness})]
\label{lem:linear:wellposed}
Assume $\mu_{\mathrm{down}}$ is nondegenerate in the sense that $X$ has a density on $\R^d$ and
$\Sigma_{\mathrm{down}}\succ0$. Fix any $M\in\mathcal U$ and any $n\ge k$.
Then, with probability one over $X_{1:n}$,
\[
\rank\!\Big(\Sigma_n(M)\Big)=k,
\qquad
\Sigma_n(M)\coloneqq \frac1n\sum_{i=1}^n \phi(X_i,M)\phi(X_i,M)^\top.
\]
Consequently, the empirical projector $\Pi_{M,n}$ acts as the identity on $\mathcal H_M$.
\end{lemma}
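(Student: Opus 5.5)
The plan is to reduce the claim to a statement about a single vector lying in a hyperplane, and then to use the fact that a proper hyperplane has measure zero under any distribution with a density.

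\emph{Reduction.} Write $\Phi_M\in\R^{n\times k}$ for the design matrix with rows $\phi(X_i,M)^\top=(s(M)X_i)^\top$, so that $\Sigma_n(M)=\frac1n\Phi_M^\top\Phi_M$ and $\rank(\Sigma_n(M))=\rank(\Phi_M)$. It therefore suffices to show that, almost surely, the vectors $s(M)X_1,\dots,s(M)X_k$ are linearly independent in $\R^k$. Using only the first $k$ samples is enough, since adding rows cannot lower the rank.

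\emph{Inductive hyperplane argument.} As in the proof of \Cref{lem:linear:Sigma-stable}, $s(M)$ has rank $k$ because $s(M)^\top s(M)=M$ and $\rank(M)=k$. We induct on $j\le k$, assuming $s(M)X_1,\dots,s(M)X_{j-1}$ are independent and span a subspace $V_{j-1}$ of dimension $j-1<k$. Since $s(M)$ is surjective onto $\R^k$, the preimage $s(M)^{-1}(V_{j-1})$ is a proper linear subspace of $\R^d$. Concretely, pick a unit vector $u\perp V_{j-1}$; the preimage is then contained in the hyperplane $\{x:\ (s(M)^\top u)^\top x=0\}$, and $s(M)^\top u\neq0$ because $s(M)^\top$ is injective. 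Now $X_j$ is independent of $X_{1:j-1}$ and has a density, so conditionally on $X_{1:j-1}$ the probability that $X_j$ lies in this Lebesgue-null set is zero. Integrating over $X_{1:j-1}$ (Fubini) gives $\Pr(s(M)X_j\in V_{j-1})=0$. After $k$ steps, $\rank(\Phi_M)=k$ almost surely.

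\emph{Conclusion and main subtlety.} Full rank of $\Sigma_n(M)$ means the evaluation map $\theta\mapsto\Phi_M\theta$ is injective on $\R^k$. Since $\Sigma(M)\succ0$ by \Cref{lem:linear:Sigma-stable}, $T_M$ is injective, so the evaluation map is also injective on $\mathcal H_M$. The empirical inner product is therefore non-degenerate on $\mathcal H_M$, and by \Cref{cor:exact-decomp-wp} the empirical projector $\Pi_{M,n}$ acts as the identity on $\mathcal H_M$. The only delicate point is that in the application $M=\hat M_m$ is random. This is harmless because $\hat M_m$ is independent of $X_{1:n}$: apply the fixed-$M$ statement conditionally on $D_{\mathrm{pre}}^{(m)}$ on the event $\{\hat M_m\in\mathcal U\}$, whose probability tends to one. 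This yields \Cref{ass:well-posedness}.
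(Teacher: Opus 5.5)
Your proof is correct and follows essentially the same route as the paper: reduce full rank of $\Sigma_n(M)$ to full column rank of the $n\times k$ design matrix, use that a proper hyperplane has probability zero under a law with a density, and handle the random $\hat M_m$ by conditioning on the independent pre-training sample. The only difference is presentational. You pull the hyperplane back to $\R^d$ and spell out the inductive Fubini step, whereas the paper pushes the density forward to $\R^k$ through the surjection $s(M)$ and cites the general-position fact for i.i.d.\ vectors with a density. (Your appeal to injectivity of $T_M$ is harmless but not needed, since full column rank of $\Phi_M$ already makes the evaluation map injective on $\mathcal H_M$.)
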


\begin{proof}
Write $\Phi\in\R^{n\times k}$ for the design matrix with rows $\phi(X_i,M)^\top$.
Then $\Sigma_n(M)=\frac1n\Phi^\top\Phi$ has rank $k$ if and only if $\Phi$ has rank $k$.
Since $\phi(X_i,M)=s(M)X_i$ and $s(M)$ has rank $k$, the random vector $\phi(X_i,M)$ has a density on $\R^k$
(because $X_i$ has a density on $\R^d$ and $s(M)$ is a surjective linear map $\R^d\to\R^k$).
For i.i.d.\ vectors in $\R^k$ with a density, the event that $k$ of them fall into a common proper hyperplane has probability $0$,
so $\Phi$ has rank $k$ almost surely when $n\ge k$.
The final claim is exactly the well-posedness implication used in Appendix~\ref{app:downstream-terms}.
\end{proof}

Since $M_m$ is independent of the downstream sample, Lemma~\ref{lem:linear:wellposed} applies conditionally on $M_m$
and yields the well-posedness requirement along the triangular array $(M_m,n)$ used in the master theorem.

Next, we show that the moment condition $\E[\|X\|^4] < \infty$ is in fact sufficient for~\Cref{ass:well-posedness}. 

\begin{proposition}[{\citet{oliveira2016lower}}]
Fix a $\delta \in (0, 1)$ and suppose $\norm{x}_{L^4} < \infty$.
Define the constants:
\begin{align}
    C_X \coloneq \sup_{v \in \mathbb{S}^{d-1}} \sqrt{\E[\ip{(\Sigma^+)^{1/2} x}{v}^4]}, \quad k \coloneq \rank(\Sigma).
\end{align}
Suppose that $n \geq c_0 C_X^2( k + \log(1/\delta) )$ for a universal $c_0$. Then with probability at least $1-\delta$:
\begin{align*}
    \Sigma_n \succcurlyeq \frac{1}{4} \Sigma.
\end{align*}
On this event, we also have that $\mathrm{Col}(\Sigma_n) = \mathrm{Col}(\Sigma)$.
\end{proposition}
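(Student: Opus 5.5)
The plan is to reduce to an isotropic lower-tail problem on the $k$-dimensional column space $S\coloneqq\mathrm{Col}(\Sigma)$, prove a one-direction concentration bound for a single direction, and then make it uniform over directions. First, by the argument of \Cref{lem:emp-span-contained}, $x\in S$ almost surely, so every $\Sigma_n$ has $\mathrm{Col}(\Sigma_n)\subseteq S$. Set $y\coloneqq(\Sigma^+)^{1/2}x$. Then $\E[yy^\top]=\Pi_S$, and the claim $\Sigma_n\succcurlyeq\frac14\Sigma$ is equivalent to
\[
\inf_{v\in S,\ \|v\|=1}\ \frac1n\sum_{i=1}^n\ip{v}{y_i}^2\ \ge\ \frac14 .
\]
Indeed, both sides vanish on $S^\perp$, and on $S$ we can conjugate by $\Sigma^{1/2}$. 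The hypothesis supplies the fourth-moment control $\E\ip{v}{y}^4\le C_X^2$ uniformly over unit $v$.

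The single-direction step is Maurer's lower-tail inequality for nonnegative variables. If $W\ge0$ with $\E W^2<\infty$, then
\[
\Pr\Big(\tfrac1n\textstyle\sum_i W_i\le \E W-t\Big)\le \exp\!\big(-nt^2/(2\E W^2)\big).
\]
Applied to $W=\ip{v}{y}^2$, which has $\E W=1$ and $\E W^2\le C_X^2$, this gives an exponential lower tail for each fixed $v$. Only the lower tail is needed, which is why no upper-tail or boundedness assumption appears.

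The main obstacle is making this uniform over the sphere of $S$ with only $k+\log(1/\delta)$ samples, up to the factor $C_X^2$. A naive $\varepsilon$-net fails, because transferring from net points to nearby $v$ requires upper control on $\opnorm{\Sigma_n}$, which heavy tails do not provide. I would follow Oliveira's PAC-Bayesian smoothing argument.

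1. For a unit $v\in S$, let $\rho_v=\mathcal N(v,\gamma^2\Pi_S/k)$ and fix the prior $\pi=\mathcal N(0,\gamma^2\Pi_S/k)$. Then $\mathrm{KL}(\rho_v\|\pi)=k/(2\gamma^2)$.
2. Smoothing gives $\E_{u\sim\rho_v}\ip{u}{y}^2=\ip{v}{y}^2+\frac{\gamma^2}{k}\|y\|^2$. The extra term appears on both the empirical and population sides, and $\E\|y\|^2=k$.
3. Combine the exponential-moment form of Maurer's bound, applied to $W_u=\ip{u}{y}^2$ for each $u$, with the Donsker--Varadhan change of measure. This yields, with probability $1-\delta$ and simultaneously for all $v$, a bound of the form
\[
\frac1n\sum_i\ip{v}{y_i}^2\ \ge\ 1-C\sqrt{\frac{C_X^2\big(k/\gamma^2+\log(1/\delta)\big)}{n}}-(\text{smoothing error}).
\]
4. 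The smoothing error compares $\frac{\gamma^2}{k}\cdot\frac1n\sum_i\|y_i\|^2$ with its mean $\gamma^2$. Only its lower deviation matters, since it is subtracted, and that is again controlled by Maurer's inequality. The fourth moments of $\|y\|$ are controlled via $C_X$, using that a sum over an orthonormal basis of $S$ has norm at most $kC_X$. Hence this term is $O(\gamma^2)$.
5. Choose $\gamma$ a small constant and take $c_0$ large. Then the right-hand side is at least $1/4$ once $n\ge c_0C_X^2(k+\log(1/\delta))$.

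For the final claim: on this event, if $\Sigma_n v=0$ then $v^\top\Sigma v\le 4v^\top\Sigma_n v=0$, so $\ker\Sigma_n\subseteq\ker\Sigma$. Thus $\mathrm{Col}(\Sigma)\subseteq\mathrm{Col}(\Sigma_n)$, and combined with the reverse inclusion from the first step, the two column spaces are equal.
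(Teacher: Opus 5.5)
\paragraph{Gap in step 4.} Your whitening reduction, steps 1--3, and the closing column-space argument are fine. The column-space argument is exactly the paper's. For the inequality $\Sigma_n\succcurlyeq\frac14\Sigma$, the paper does not reprove anything; it simply invokes \citet[Theorem~3.1]{oliveira2016lower}.

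If you want to reprove it, step 4 has the sign backwards. The change-of-measure inequality gives a lower bound on the \emph{smoothed empirical} quantity $\frac1n\sum_i\ip{v}{y_i}^2+\frac{\gamma^2}{k}\cdot\frac1n\sum_i\norm{y_i}^2$. To isolate $\frac1n\sum_i\ip{v}{y_i}^2$ you must subtract $\frac{\gamma^2}{k}\cdot\frac1n\sum_i\norm{y_i}^2$. So what you need is an \emph{upper} bound on $T\coloneqq\frac1{nk}\sum_i\norm{y_i}^2$; a lower deviation of $T$ only helps.

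Maurer's inequality says nothing about upper deviations, and under a fourth-moment hypothesis alone this upper tail is genuinely heavy. Chebyshev with $\E\norm{y}^4\le k^2C_X^2$ only gives $\Pr(T>1+t)\le C_X^2/(nt^2)$, which forces $n\gtrsim C_X^2/\delta$. Nothing better holds in general: a single large $\norm{y_i}^2$ already gives $T>1+t$ with probability about $n\Pr(\norm{y}^2>(1+t)nk)$. For example, take $y=\sqrt{k\xi}\,\theta$ with $\theta$ uniform on the unit sphere of $S$ and $\xi\ge0$ with $\E\xi=1$, $\E\xi^2<\infty$, and tail of order $s^{-2}\log^{-2}s$. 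Then $C_X$ stays bounded while $\Pr(T>1+t)\gtrsim 1/(n\log^2 n)$.

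With $\gamma$ a constant, as your step 5 requires to keep $\mathrm{KL}=k/(2\gamma^2)=O(k)$, you need the event $\{T\le 1+O(\gamma^{-2})\}$. Requiring it with probability $1-\delta$ forces $n\log^2 n\gtrsim 1/\delta$. So your route yields polynomial rather than logarithmic dependence on $1/\delta$, and the claimed sample size $c_0C_X^2(k+\log(1/\delta))$ does not follow.

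\paragraph{The fix: truncation.} The missing idea is truncation, which is legitimate precisely because only a lower bound is needed.
\begin{itemize}
\item Replace $W_u$ by $W_u'\coloneqq\ip{u}{y}^2\mathbf 1\{\norm{y}^2\le R\}\le W_u$. It is still nonnegative with $\E W_u'^2\le C_X^2\norm{u}^4$, so steps 1--3 go through verbatim for the truncated forms.
\item The smoothing term becomes $\frac{\gamma^2}{k}\cdot\frac1n\sum_i\norm{y_i}^2\mathbf 1\{\norm{y_i}^2\le R\}$. This is an average of variables in $[0,\gamma^2R/k]$ with variance at most $\gamma^4C_X^2$, so Bernstein's inequality controls its upper deviation at the $\log(1/\delta)$ scale.
\item The cost on the population side is $\E[\ip{v}{y}^2\mathbf 1\{\norm{y}^2>R\}]\le C_X\Pr(\norm{y}^2>R)^{1/2}\le kC_X^2/R$.
\item Choose $R=c\,kC_X^2$ with $c$ a large constant. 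This makes both the truncation bias and the Bernstein range term, of order $C_X^2\log(1/\delta)/n$, small.
\end{itemize}
A union bound over the PAC-Bayes event and the Bernstein event then gives $\Sigma_n\succcurlyeq\frac14\Sigma$ for $n\ge c_0C_X^2(k+\log(1/\delta))$, after adjusting $c_0$.
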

\begin{proof}
The first part of the claim, that $\Sigma_n \succcurlyeq \frac{1}{4} \Sigma$, is immediate from \citet[Theorem 3.1]{oliveira2016lower}.
To finish, suppose that $\Sigma_n \succcurlyeq \frac{1}{4} \Sigma$ holds.
Now, let $q \in \mathrm{Kern}(\Sigma_n)$. By the above, this implies that
\begin{align*}
    0 = q^\T \Sigma_n q \geq \frac{1}{4} q^\T \Sigma q \geq 0,
\end{align*}
and hence $q^\T \Sigma q = 0$, which implies $q \in \mathrm{Kern}(\Sigma^{1/2}) = \mathrm{Kern}(\Sigma)$.
Therefore, $\mathrm{Kern}(\Sigma_n) \subseteq \mathrm{Kern}(\Sigma)$, which by \Cref{lem:emp-span-contained} implies
$\mathrm{Col}(\Sigma_n) = \mathrm{Col}(\Sigma)$.
\end{proof}

\subsubsection{Stable limiting span of null directions}

\begin{lemma}[Stable limiting span of null directions]
\label{lem:linear:stable-null-span}
Assume $\Sigma_{\mathrm{down}}\succ0$. Then the stable limiting span condition in Assumption~\ref{ass:stable-null-span} holds with limiting null-residual span equal to $\{0\}$.
\end{lemma}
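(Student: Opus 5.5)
The plan is to show that in the linear spectral model the null space $N_\star=\ker(T_{M_\star})$ is trivial. Then $E_\star=\R^k$, so $\mathcal B_v$ is spanned by an empty set of directions, and Assumption~\ref{ass:stable-null-span} holds trivially with $\mathcal B_0=\{0\}$. Everything reduces to injectivity of the population feature operator $T_{M_\star}:\R^k\to L^2(\mu_{\mathrm{down}})$, $(T_{M_\star}\theta)(x)=\theta^\top s(M_\star)x$.

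\textbf{Step 1 (injectivity at $M_\star$).} Compute $\|T_{M_\star}\theta\|_{L^2}^2=\theta^\top\Sigma(M_\star)\theta$ with $\Sigma(M_\star)=s(M_\star)\Sigma_{\mathrm{down}}s(M_\star)^\top$. By Lemma~\ref{lem:linear:Sigma-stable}, which uses $\Sigma_{\mathrm{down}}\succ0$ and $\rank(s(M))=k$ because $s(M)^\top s(M)=M$ has rank $k$, this matrix is positive definite. Hence $\theta^\top\Sigma(M_\star)\theta>0$ for $\theta\neq0$, and $N_\star=\ker(T_{M_\star})=\{0\}$.

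\textbf{Step 2 (the null-residual span vanishes).} With $N_\star=\{0\}$, for every $v$ the restriction $T_v|_{N_\star}$ is the zero map on the zero space. So $\mathcal B_v=\operatorname{Im}\big((I-\Pi_{\mathcal A_v})T_v|_{N_\star}\big)=\{0\}$ and $\Pi_{\mathcal B_v}=0$ identically. Taking $\mathcal B_0=\{0\}\subseteq\mathcal H_{M_\star}^\perp$ gives $\|\Pi_{\mathcal B_v}-\Pi_{\mathcal B_0}\|_{\mathrm{op}}=0$ for all small $v$, which is the required convergence. The limiting span is finite-dimensional, trivially.

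\textbf{Step 3 (consistency check).} As a sanity check, $\mathcal H_v=\mathcal A_v$ has dimension $k$ for all $M$ in $\mathcal U$, again by Lemma~\ref{lem:linear:Sigma-stable}. This agrees with the constant-rank picture of Lemma~\ref{lem:B0-zero-implies-rank-stability}: the projector map is Fréchet differentiable (Proposition~\ref{prop:proj-diff}) and $\mathcal L(v)=-(I-\Pi_{M_\star})DT_{M_\star}[v]\theta_\star$.

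The only place needing care is Step 1, specifically that $s(M_\star)$ has full row rank $k$. That follows from $\rank(s^\top s)=\rank(s)$ and $\rank(M_\star)=k$, where $M_\star\in\mathcal M_{d,k}$ is guaranteed by Assumption~\ref{ass:C-positive-k}. The rest is bookkeeping.
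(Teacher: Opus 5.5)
Your proposal is correct and follows essentially the same route as the paper: you use $\Sigma(M_\star)=s(M_\star)\Sigma_{\mathrm{down}}s(M_\star)^\top\succ0$ (Lemma~\ref{lem:linear:Sigma-stable}) to get injectivity of $T_{M_\star}$, so $N_\star=\{0\}$, hence $\mathcal B_v=\{0\}$ and $\mathcal B_0=\{0\}$. Your Step~3 is an optional consistency check and is not needed for the lemma.
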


\begin{proof}
For every $M\in\mathcal U$, \Cref{lem:linear:Sigma-stable} gives $\Sigma(M)\succ0$. Recall that $(T_M \theta)(x) =\theta^\top \phi(x,M)$. Thus, for any $\theta\neq 0$, we have
\[
\norm{T_M \theta}_{L^2}^2 = \E\big[(\theta^\top \phi(X,M))^2 \big]= \theta^\top \Sigma(M) \theta >0.
\]
Equivalently, the map $T_M:\R^k\to L^2(\mu_{\mathrm{down}})$ is injective.
Thus the population null space $N_M=\ker(T_M)$ is $\{0\}$ for every $M\in\mathcal U$, and in particular
$N_\star=\{0\}$.
The residual subspace generated by perturbing null directions is therefore identically zero:
\[
\mathcal B_v
=
\operatorname{Im}\!\left((I-\Pi_{\mathcal A_v})T_v\big|_{N_\star}\right)
=
\{0\}.
\]
Hence the required limiting span exists and equals $\{0\}$.
\end{proof}

\subsection{Pre-training consistency and manifold CLT for the linear spectral loss}
\label{app:proof-cor-linear:pretrain-clt}

This subsection verifies~\Cref{ass:riem_mestimation} for the linear spectral descriptor estimator $\hat M_m$.
\paragraph{Model and loss.}
A pre-training observation is \(z=(x,x^+,x^-)\in(\R^d)^3\), where \((x,x^+)\) is a positive pair and \(x^-\) is an
independent negative: \(x^-\) is an independent copy of \(x\), independent of \((x,x^+)\).
Assume \(\E[x]=0\) and define
\[
\Sigma_{\mathrm{pre}}\coloneqq \E[xx^\top],
\qquad
\Sigma_{\mathrm{pre}}^+\coloneqq \E[x^+(x)^\top].
\]
For \(M\in\mathcal M_{d,k}\), recall the per-sample loss (well-defined for all symmetric \(M\))
\[
\ell_{\mathrm{spec}}(M;z)= -2\,x^\top M x^+ + (x^\top M x^-)^2.
\]
We minimize the empirical loss over the rank-\(k\) PSD manifold \(\mathcal M_{d,k}=\{M\succeq 0:\rank(M)=k\}\):
\[
\hat L_m(M)\coloneqq \frac1m\sum_{j=1}^m \ell_{\mathrm{spec}}(M;z_j),
\qquad
\hat M_m\in\argmin_{M\in\mathcal M_{d,k}}\hat L_m(M).
\]
Let \(L(M)\coloneqq \E[\ell_{\mathrm{spec}}(M;z)]\) be the population loss.

\paragraph{Moment assumption (for LLN and CLT of derivatives).}
Assume there exists \(\delta>0\) such that
\begin{equation}
\label{eq:ass:spec-moments}
\E\|x\|^{8+\delta}<\infty,
\qquad
\E\big[\|x\|^{4+\delta}\|x^+\|^{4+\delta}\big]<\infty.
\end{equation}
This ensures integrability of the score and Hessian random fields used below.

Let \(\operatorname{sym}(A)\coloneqq (A+A^\top)/2\). Viewing \(\ell_{\mathrm{spec}}(\cdot;z)\) as a function on \(\mathcal M_{d,k}\)
with the Frobenius inner product, its Euclidean gradient is
\begin{equation}
\label{eq:spec-eucl-grad}
\nabla_M \ell_{\mathrm{spec}}(M;z)
=
-2\,\operatorname{sym}(x(x^+)^\top)
+
2\,(x^\top M x^-)\,\operatorname{sym}(x(x^-)^\top).
\end{equation}
Its Euclidean Hessian is the linear map \(H\mapsto D(\nabla_M \ell_{\mathrm{spec}})(M;z)[H]\) given by
\begin{equation}
\label{eq:spec-eucl-hess}
D(\nabla_M \ell_{\mathrm{spec}})(M;z)[H]
=
2\,(x^\top H x^-)\,\operatorname{sym}(x(x^-)^\top).
\end{equation}
Taking expectations and using \(x^-\) independent of \((x,x^+)\) with \(\E[x^- (x^-)^\top]=\Sigma_{\mathrm{pre}}\), we obtain
\begin{align}
\label{eq:spec-pop-grad-hess}
\nabla L(M)
&=
-2\,\Sigma_{\mathrm{pre}}^+
+
2\,\Sigma_{\mathrm{pre}}\,M\,\Sigma_{\mathrm{pre}},
\\
\label{eq:spec-pop-hess}
D(\nabla L)(M)[H]
&=
2\,\Sigma_{\mathrm{pre}}\,H\,\Sigma_{\mathrm{pre}}.
\end{align}

Let
\begin{equation}
\label{eq:spec-tangent-score}
\varphi(z)
\coloneqq
\operatorname{Proj}_{T_{M_\star}\mathcal M_{d,k}}
\big(\nabla_M \ell_{\mathrm{spec}}(M_\star;z)\big)
\in T_{M_\star}\mathcal M_{d,k},
\qquad
\Sigma_\star\coloneqq \operatorname{Cov}(\varphi(z)).
\end{equation}
Also define
\begin{equation}
\label{eq:spec-tangent-hess}
H_\star
\coloneqq
\operatorname{Hess} L(M_\star):T_{M_\star}\mathcal M_{d,k}\to T_{M_\star}\mathcal M_{d,k}.
\end{equation}

\begin{lemma}[Verification of Assumption~\ref{ass:riem_mestimation} for the linear spectral loss]
\label{lem:linear:D5}
Assume \(\Sigma_{\mathrm{pre}}\succ0\), \Cref{ass:C-positive-k}, and
\eqref{eq:ass:spec-moments}.  Then the empirical objective
\[
\hat L_m(M)
=
\frac1m\sum_{j=1}^m \ell_{\mathrm{spec}}(M;z_j)
\]
satisfies~\Cref{ass:riem_mestimation} on \(\mathcal M_{d,k}\) at \(M_\star\).
\end{lemma}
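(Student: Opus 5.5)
The plan is to verify items (i)--(v) of \Cref{ass:riem_mestimation} directly. The key structural fact is that $\ell_{\mathrm{spec}}(M;z)$ is a polynomial of degree two in $M$, with random coefficients. We work on a normal neighborhood $U=\exp_{M_\star}(B(M_\star,\epsilon_0))$ of $M_\star$ in $\mathcal M_{d,k}$. We choose $\epsilon_0$ small enough that $\overline U$ is compact and stays inside the rank-$k$ stratum, with $\lambda_k(M)$ bounded below on $\overline U$. This is possible because $\mathcal M_{d,k}$ is a smooth embedded submanifold of the symmetric matrices and $M_\star$ is a regular point.

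\textbf{Smoothness, score moments and nondegeneracy.} Item (iii) is immediate. The loss is $C^\infty$ in the ambient symmetric matrix $M$, and its restriction to the embedded submanifold is $C^2$. The Riemannian gradient is the tangent projection $\varphi(z)$ of \eqref{eq:spec-eucl-grad}. Its norm is bounded by $C(\|x\|\|x^+\|+\|x\|^2\|x^-\|^2\|M_\star\|)$, so $\E\|\varphi(z)\|^2<\infty$ by \eqref{eq:ass:spec-moments} and independence of $x^-$.

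For (iv), the Riemannian Hessian on an embedded submanifold equals the projected Euclidean Hessian $2\,\mathrm{Proj}_T(\Sigma_{\mathrm{pre}} H\Sigma_{\mathrm{pre}})$ plus the Weingarten curvature term applied to the normal component of $\nabla L(M_\star)$. We plan to reduce by congruence $M\mapsto \Sigma_{\mathrm{pre}}^{1/2}M\Sigma_{\mathrm{pre}}^{1/2}$, which is a diffeomorphism of $\mathcal M_{d,k}$, to the problem $\min\|N-C\|_F^2$ over rank-$k$ PSD $N$. For that problem the Hessian at the truncated eigendecomposition $N_\star=U_k\Lambda_kU_k^\top$ is known from the fixed-rank matrix literature. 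On the "range" block it is the identity. On the mixed block $U_k^\top H U_\perp$ it equals $1-\lambda_j(C)/\lambda_i(C)$, for $i\le k<j$, restricted to positive eigenvalues on the relevant side. This coefficient is strictly positive exactly under the gap $\lambda_k(C)>\max\{\lambda_{k+1}(C),0\}$ of \Cref{ass:C-positive-k}. The resulting bound transfers back through the congruence, since $\Sigma_{\mathrm{pre}}\succ0$, and gives invertibility of $H_\star$.

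\textbf{Identification and localization.} This is the main obstacle, because $\mathcal M_{d,k}$ is not closed: an empirical minimizer could try to escape toward rank $<k$ or toward infinity. For (i), uniqueness of $M_\star$ is already established in \Cref{app:proof-cor-linear:target}. For separation, note that $L(M)=\|N-C\|_F^2-\|C\|_F^2$ is coercive. On the closure $\{N\succeq0,\ \rank N\le k\}$ its minimum over rank $<k$ is strictly larger than the rank-$k$ minimum, because the $k$-th positive eigenvalue of $C$ contributes a strict decrease. The sublevel sets just above $L(M_\star)$ are therefore compact subsets of $\mathcal M_{d,k}$. Combined with uniqueness, this yields the required positive infimum over $\{d_{\mathcal M}(M,M_\star)\ge\epsilon\}$.

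For (ii), the coefficients of the quadratic $\hat L_m$ are empirical means $\frac1m\sum x_jx_j^{+\top}$ and $\frac1m\sum (x_j\otimes x_j)(x_j^-\otimes x_j^-)^\top$. These converge almost surely by the SLLN under \eqref{eq:ass:spec-moments}. Hence $\hat L_m\to L$ uniformly on compact subsets of symmetric matrices, and $\hat L_m$ is coercive with probability tending to one. The same rank-drop comparison then forces any empirical minimizer to lie in $\exp_{M_\star}(\overline B(M_\star,\epsilon'))$ with probability tending to one. Finally, an Eckart--Young-type argument in the empirical whitened geometry shows that the empirical minimizer over $\mathcal M_{d,k}$ exists with high probability.

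\textbf{Hessian convergence.} For (v), the Euclidean Hessian \eqref{eq:spec-eucl-hess} does not depend on $M$, and the Euclidean gradient is affine in $M$ with empirical-mean coefficients. Both therefore converge uniformly on the compact ball by the SLLN. The transported Riemannian Hessian $\widetilde H_m(M)$ is a fixed continuous function of these Euclidean derivatives on $\overline U$: it is built from the tangent projection, the Weingarten map, and parallel transport, all smooth in $M$ on the regular stratum. Uniform convergence of $\widetilde H_m$ to $\widetilde H$ therefore follows, which completes the verification.
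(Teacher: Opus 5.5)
Your proposal is correct. It has the same skeleton as the paper's proof: verify (i)--(v) in turn, and your (iii) and the uniqueness half of (i) are essentially the paper's. Three steps are argued differently, two of them more carefully.

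\textbf{Localization in (ii).} The paper proves a uniform LLN only on the compact normal ball $\exp_{M_\star}(\overline B(M_\star,\epsilon'))$, using an integrable envelope and Newey--McFadden's Lemma~2.4. It then asserts argmin consistency from this plus separation. Uniform convergence on a small ball cannot by itself rule out empirical minimizers that are far away or near a rank drop, so that step presupposes the localization it is meant to deliver. Your global argument closes this gap. The finitely many coefficients of the quadratic $\hat L_m$ converge almost surely, so $\hat L_m\to L$ uniformly on every compact set and is strongly convex on all symmetric matrices with probability tending to one. Combined with the strict gap between the population infimum over $\{M\succeq0,\ \rank M<k\}$ and $L(M_\star)$, this confines minimizers near $M_\star$. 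The same gap makes explicit what the paper leaves implicit in (i): a limit point of a minimizing sequence cannot have rank below $k$.

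\textbf{Nondegeneracy in (iv).} The paper only asserts that the eigengap yields a nondegenerate minimum. Your computation, projected Euclidean Hessian plus Weingarten term for $\min\|N-C\|_F^2$, gives the explicit spectrum up to a factor $2$: $1$ on the range block and $1-\lambda_j(C)/\lambda_i(C)$, $i\le k<j$, on the mixed block. This shows exactly where $\lambda_k(C)>\max\{\lambda_{k+1}(C),0\}$ is used. The transfer back through $M\mapsto\Sigma_{\mathrm{pre}}^{1/2}M\Sigma_{\mathrm{pre}}^{1/2}$ is valid because $M_\star$ is a critical point and the congruence is a diffeomorphism of $\mathcal M_{d,k}$. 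It is not a Frobenius isometry, so $\Sigma_{\mathrm{pre}}\succ0$ alone is not the reason.

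\textbf{Hessian convergence in (v).} You use that the Euclidean Hessian is constant and the gradient affine in $M$. This replaces the paper's envelope argument with the SLLN on finitely many coefficients.

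The paper's route is shorter and template-like; it is reused for the factor and mixture examples. Yours exploits the quadratic structure of this loss and is more complete.

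Two small corrections. First, the coercivity must be uniform: there should be a deterministic radius containing all empirical minimizers with probability tending to one. This follows from $\frac1m\sum_j(x_j^\top Hx_j^-)^2\to\tr(H\Sigma_{\mathrm{pre}}H\Sigma_{\mathrm{pre}})\ge\lambda_{\min}(\Sigma_{\mathrm{pre}})^2\|H\|_F^2$ uniformly over unit-norm $H$. Second, the closing Eckart--Young remark is neither needed nor literally applicable, since $\frac1m\sum_j(x_j^\top Mx_j^-)^2$ is not of whitened-Frobenius form. Existence of a minimizer over $\mathcal M_{d,k}$ already follows from coercivity on the closed set $\{M\succeq0,\ \rank M\le k\}$ together with your rank-drop comparison.
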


\begin{proof}
We verify the five parts of~\Cref{ass:riem_mestimation}.

\paragraph{(i) Identification and separation.}
By \Cref{lem:spec-loss-in-M}, the population objective satisfies
\[
L(M)
=
\left\|
\Sigma_{\mathrm{pre}}^{1/2}M\Sigma_{\mathrm{pre}}^{1/2}
-
C
\right\|_F^2
-
\|C\|_F^2.
\]
Under \Cref{ass:C-positive-k}, the rank-\(k\) positive truncation of \(C\) is
unique.  Hence the population minimizer \(M_\star\in\mathcal M_{d,k}\) is unique. Moreover, since \(\Sigma_{\mathrm{pre}}\succ0\), the objective is coercive on
\(\mathcal M_{d,k}\), that is, \(L(M)\to\infty\) whenever \(\|M\|_F\to\infty\) within
\(\mathcal M_{d,k}\).  Therefore, all sufficiently low sublevel sets are compact
after closure.  If the separation condition failed for some \(\epsilon>0\), then
there would exist \(M_j\in\mathcal M_{d,k}\) such that
\[
d_{\mathcal M}(M_j,M_\star)\ge\epsilon,
\qquad
L(M_j)\downarrow L(M_\star).
\]
By coercivity, a subsequence is bounded and has a limit point in the closure of
the rank-\(k\) PSD stratum.  The continuity of the squared-distance objective
would make this limit point another minimizer, contradicting the uniqueness of
the rank-\(k\) positive truncation under \Cref{ass:C-positive-k}.  Hence, for
every \(\epsilon>0\),
\[
\inf_{M\in\mathcal M_{d,k}:d_{\mathcal M}(M,M_\star)\ge\epsilon}
\big(L(M)-L(M_\star)\big)>0.
\]

\paragraph{(ii) Uniform LLN on a compact set and localization.}
Let \(U=\exp_{M_\star}(B(M_\star,\epsilon_0))\) be a normal neighborhood of
\(M_\star\), and fix \(\epsilon'\in(0,\epsilon_0)\).  Set
\[
K_{\epsilon'}
\coloneqq
\exp_{M_\star}\big(\overline B(M_\star,\epsilon')\big).
\]
By \citet[Lemma~2.4]{newey1994largesample}, it is enough to verify that the class
\[
\mathcal F
\coloneqq
\{\ell_{\mathrm{spec}}(M;\cdot):M\in K_{\epsilon'}\}
\]
is pointwise continuous in \(M\) and dominated by an integrable envelope.  The
continuity is immediate because \(M\mapsto \ell_{\mathrm{spec}}(M;z)\) is a
polynomial for every fixed \(z\).  Since \(K_{\epsilon'}\) is compact, there is
\(R_K<\infty\) such that \(\sup_{M\in K_{\epsilon'}}\|M\|_{\mathrm{op}}\le R_K\).
Therefore, for all \(M\in K_{\epsilon'}\),
\[
|\ell_{\mathrm{spec}}(M;z)|
\le
2R_K\|x\|\,\|x^+\|
+
R_K^2\|x\|^2\|x^-\|^2.
\]
The right-hand side is integrable under \eqref{eq:ass:spec-moments}, using that
\(x^-\) is an independent copy of \(x\).  Hence
\begin{align}\label{eq:linear-LLN-loss}
 \sup_{M\in K_{\epsilon'}}
|\hat L_m(M)-L(M)|
\xrightarrow{\mathbb P}0.   
\end{align}

Equation~\ref{eq:linear-LLN-loss} with the separation in
part (i), gives the argmin consistency:
\[
\hat M_m\xrightarrow{\mathbb P} M_\star.
\]
Consequently, after fixing \(\epsilon'\in(0,\epsilon_0)\),
\[
\Pr\!\left(
\hat M_m\in \exp_{M_\star}\big(\overline B(M_\star,\epsilon')\big)
\right)\to 1.
\]

\paragraph{(iii) Local \(C^2\) smoothness and score moments.}
For every \(z=(x,x^+,x^-)\), the map
\(M\mapsto \ell_{\mathrm{spec}}(M;z)\) is a polynomial in \(M\).  Hence, in any
normal coordinate chart around \(M_\star\), it is \(C^2\).  The first two ambient
derivatives are given by \eqref{eq:spec-eucl-grad} and \eqref{eq:spec-eucl-hess}.

Since the Riemannian gradient is the tangent projection of the Euclidean
gradient,
\[
\|\operatorname{grad}\ell_{\mathrm{spec}}(M_\star;z)\|_{M_\star}
\le
\|\nabla_M\ell_{\mathrm{spec}}(M_\star;z)\|_F.
\]
Using \eqref{eq:spec-eucl-grad},
\[
\|\nabla_M\ell_{\mathrm{spec}}(M_\star;z)\|_F
\le
C\Big(
\|x\|\|x^+\|
+
\|x\|^2\|x^-\|^2
\Big),
\]
where \(C\) depends only on \(M_\star\).  The moment condition
\eqref{eq:ass:spec-moments}, together with independence of \(x^-\), implies
\[
\E\big[
\|\operatorname{grad}\ell_{\mathrm{spec}}(M_\star;Z)\|_{M_\star}^2
\big]
<\infty.
\]
Thus, the score moment condition in Assumption~\ref{ass:riem_mestimation}(iii) holds.

\paragraph{(iv) Nondegenerate minimizer.}
By \eqref{eq:Lspec-square}, the population objective is 
\[
L_{\mathrm{spec}}(w)=\big\| \Sigma_{\mathrm{pre}}^{1/2} M \Sigma_{\mathrm{pre}}^{1/2} - C \big\|_F^2
- \| C \|_F^2,
\]
restricted to $\mathcal{M}_{d,k}$.  The map $M\mapsto \Sigma_{\mathrm{pre}}^{1/2}M\Sigma_{\mathrm{pre}}^{1/2}$ is a local diffeomorphism on \(\mathcal M_{d,k}\).  Under
\Cref{ass:C-positive-k}, the rank-\(k\) positive truncation is separated by an
eigengap.  Therefore the restricted squared-distance objective has a
nondegenerate strict local minimum at \(M_\star\).  Equivalently, the Riemannian
Hessian
\[
H_\star
=
\operatorname{Hess}L(M_\star):
T_{M_\star}\mathcal M_{d,k}\to T_{M_\star}\mathcal M_{d,k}
\]
is positive definite, and in particular invertible.

\paragraph{(v) Uniform transported Hessian convergence.}
Let
\[
K_{\epsilon'}
=
\exp_{M_\star}\big(\overline B(M_\star,\epsilon')\big)
\]
as above.  In normal coordinates, the pulled-back loss $v\mapsto \ell_{\mathrm{spec}}(\exp_{M_\star}(v);z)$ has second derivatives that are continuous in \(v\).  Since \(K_{\epsilon'}\) is
compact and the exponential map and its derivatives are bounded on
\(\overline B(M_\star,\epsilon')\), these second derivatives admit an integrable
envelope of the form
\[
C_K\Big(
\|x\|\|x^+\|
+
\|x\|^2\|x^-\|^2
\Big),
\]
again by \eqref{eq:spec-eucl-grad}--\eqref{eq:spec-eucl-hess}.
Therefore, by \citet[Lemma~2.4]{newey1994largesample},
\[
\sup_{M\in K_{\epsilon'}}
\big\|
\widetilde H_m(M)-\widetilde H(M)
\big\|
\xrightarrow{\ \mathbb P\ } 0,
\]
where
\[
\widetilde H_m(M)
=
\mathcal P_{M\to M_\star}\circ
\operatorname{Hess}\hat L_m(M)\circ
\mathcal P_{M_\star\to M},
\qquad
\widetilde H(M)
=
\mathcal P_{M\to M_\star}\circ
\operatorname{Hess}L(M)\circ
\mathcal P_{M_\star\to M}.
\]
This is precisely Assumption~\ref{ass:riem_mestimation}(v).

Combining parts (i)--(v), Assumption~\ref{ass:riem_mestimation} holds.
\end{proof}

\begin{proposition}[Descriptor CLT for the linear spectral estimator]
\label{prop:spec-manifold-clt}
Assume \(\Sigma_{\mathrm{pre}}\succ0\), \Cref{ass:C-positive-k}, and
\eqref{eq:ass:spec-moments}.  Let \(\hat M_m\) be a measurable empirical
minimizer of \(\hat L_m\) over \(\mathcal M_{d,k}\).  Then
\[
\sqrt m\,\log_{M_\star}(\hat M_m)
\distconv
Z,
\qquad
Z\sim\mathcal N(0,V),
\quad
V\coloneqq H_\star^{-1}\Sigma_\star H_\star^{-1},
\]
as a random element in \(T_{M_\star}\mathcal M_{d,k}\).
\end{proposition}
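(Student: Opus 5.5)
This is essentially a direct corollary. We will combine \Cref{lem:linear:D5} with the general Riemannian $M$-estimation CLT, \Cref{thm:riem_mestimator_clt}, taking $\mathcal M=\mathcal M_{d,k}$ with the metric induced by the ambient Frobenius inner product, $f=\ell_{\mathrm{spec}}$, $F=L$, and $\hat\Omega_m=\hat M_m$. Under $\Sigma_{\mathrm{pre}}\succ0$, \Cref{ass:C-positive-k}, and \eqref{eq:ass:spec-moments}, \Cref{lem:linear:D5} verifies parts (i)--(v) of \Cref{ass:riem_mestimation} at $M_\star$. Once these are in hand, \Cref{thm:riem_mestimator_clt} gives
\[
\sqrt m\,\log_{M_\star}(\hat M_m)\distconv\mathcal N\big(0,H_\star^{-1}\Sigma_\star H_\star^{-1}\big),
\]
which is the claim.

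The main work is to check that the objects in \Cref{thm:riem_mestimator_clt} coincide with those defined in \eqref{eq:spec-tangent-score}--\eqref{eq:spec-tangent-hess}. For the score, $\mathcal M_{d,k}$ is an embedded submanifold of the symmetric matrices. Its Riemannian gradient is therefore the tangent projection of the Euclidean gradient \eqref{eq:spec-eucl-grad}, so $\mathrm{grad}\,\ell_{\mathrm{spec}}(M_\star;z)=\varphi(z)$.

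Next we confirm the mean and covariance of the score. We have $\E\varphi(z)=\operatorname{Proj}_{T_{M_\star}}\nabla L(M_\star)=\mathrm{grad}\,L(M_\star)=0$, because $M_\star$ is an interior minimizer of $L$ on $\mathcal M_{d,k}$. It follows that $\mathrm{Var}(\varphi)=\E[\varphi\varphi^\top]=\Sigma_\star$ as in \eqref{eq:spec-tangent-score}, with finite second moment by part (iii). The Hessian $H_\star$ is by definition the same operator in both statements, and it is invertible by part (iv).

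Measurability of $\hat M_m$ is assumed. The log map is only well defined on a normal neighborhood, but this is handled as in the remark following \Cref{thm:riem_mestimator_clt}: consistency from part (ii) places $\hat M_m$ in that neighborhood with probability tending to one.

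The only potential obstacle is the identification $\E\varphi=0$ together with the precise form of $H_\star$. This would only be at risk if $M_\star$ sat on the boundary of the rank-$k$ stratum. \Cref{ass:C-positive-k} rules this out by forcing $\lambda_k(C)>0$, so $M_\star$ has exactly rank $k$. Everything else is a direct invocation of earlier results.
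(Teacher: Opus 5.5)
Your proposal is correct and follows the paper's proof. The paper likewise uses \Cref{lem:linear:D5} to verify \Cref{ass:riem_mestimation} and then applies \Cref{thm:riem_mestimator_clt}, with $\Sigma_\star=\operatorname{Cov}(\varphi(Z))$ and $H_\star$ the Riemannian Hessian of $L$ at $M_\star$; your extra checks ($\mathrm{grad}\,\ell_{\mathrm{spec}}(M_\star;z)=\varphi(z)$ under the induced metric, the score having mean zero, and $\lambda_k(C)>0$ keeping $M_\star$ in the rank-$k$ stratum) only make explicit identifications the paper leaves implicit.
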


\begin{proof}
By \Cref{lem:linear:D5}, Assumption~\ref{ass:riem_mestimation} holds for the linear spectral empirical
objective on \(\mathcal M_{d,k}\).  The abstract Riemannian \(M\)-estimation CLT
therefore gives
\[
\sqrt m\,\log_{M_\star}(\hat M_m)
\distconv
\mathcal N(0,H_\star^{-1}\Sigma_\star H_\star^{-1}),
\]
where \(\Sigma_\star=\operatorname{Cov}(\varphi(Z))\) and \(H_\star\) is the
Riemannian Hessian of \(L\) at \(M_\star\).
\end{proof}

\subsection{Proof of \Cref{cor:linear-spectral-model}}
\label{app:proof-cor-linear:conclusion}

\begin{proof}[Proof of~\Cref{cor:linear-spectral-model}]
In Lemmas~\ref{lem:linear:moments}--\ref{lem:linear:stable-null-span}, we proved that all non-CLT assumptions of Theorem~\ref{thm:master-compatible} hold for the quotient feature map
\(\phi(x,M)=s(M)x\). By \Cref{prop:spec-manifold-clt}, the pre-training CLT assumption~\ref{ass:riem_mestimation} holds
for the descriptor estimator \(\hat M_m\). Therefore, all assumptions of Theorem~\ref{thm:master-compatible} are satisfied.
\end{proof}

\subsection{Explicit calculations for a concrete example}
\label{app:toy-scaling}

In this section, we first derive an asymptotic distribution for the spectral pre-training estimator in a general Gaussian model by computing the closed-form characterization of the limiting covariance operator $V_\star=H_\star^{-1}\Sigma_\star H_\star^{-1}$ in Proposition~\ref{prop:spec-manifold-clt}. We then specialize the result to the concrete diagonal example
presented in~\Cref{sec:example_linear_spectral}.
\paragraph{Fully Gaussian assumption.}
Assume $(X,X^+)$ is jointly Gaussian, and $X^-$ is an independent copy of $X$,
independent of $(X,X^+)$. In particular,
\[
\E[XX^\top]=\Sigma_{\mathrm{pre}},
\qquad
\E[X(X^+)^\top]=\Sigma_{\mathrm{pre}}^+.
\]
\paragraph{Bilinear form for the score covariance.}
Let $\mathcal{S}^k$ be the space of $d$ by $d$ symmetric matrices. For a symmetric direction $H\in\mathcal S^d$, define the scalar score functional at $M_\star$ by
\[
S_H(Z)
\coloneqq
\langle \nabla_M \ell_{\mathrm{spec}}(M_\star;Z), H\rangle
=
-2\,X^\top H X^+
+
2\,(X^\top M_\star X^-)\,(X^\top H X^-).
\]
where $Z=(X,X^+,X^-)$. Recall that $\Sigma_\star=\operatorname{Cov}(\varphi(Z))$ with $\varphi(Z)=\operatorname{Proj}_T(\nabla_M\ell_{\mathrm{spec}}(M_\star;Z))$ where $\operatorname{Proj}_T$ is the orthogonal projection onto $T_{M_\star}\mathcal M_{d,k}$. We know that $\operatorname{Proj}_T$ is self-adjoint and $\E[\varphi(Z)]=0$. Thus, for all $v_1,v_2\in T_{M_\star}\mathcal M_{d,k}$ we have
\begin{equation}
\label{eq:Sigma-star-bilin}
\langle v_1,\Sigma_\star v_2\rangle
=
\operatorname{Cov}\!\big(S_{v_1}(Z),S_{v_2}(Z)\big).
\end{equation}
The following proposition gives $\operatorname{Cov}(S_{H_1},S_{H_2})$ in closed form under joint Gaussianity.

\begin{proposition}[Exact score covariance under joint Gaussianity]
\label{prop:spec-gauss-general-score-cov}
Under the fully Gaussian assumption, for any $H_1,H_2\in\mathcal S^d$,
\begin{align}\label{eq:score-cov-gaussian}
\operatorname{Cov}\!\big(S_{H_1}(Z),S_{H_2}(Z)\big)
=
4\Big(
C_{aa}(H_1,H_2)
-
C_{ab}(H_1,H_2)
-
C_{ab}(H_2,H_1)
+
C_{bb}(H_1,H_2)
\Big),   
\end{align}

where the terms are given as follows:
\begin{enumerate}[label=(\roman*)]
    \item Positive-pair term: 
\begin{align}
\label{eq:Caa}
C_{aa}(H_1,H_2)
&\coloneqq
\tr (H_1\Sigma_{\mathrm{pre}}H_2\Sigma_{\mathrm{pre}})
+
\tr (H_1\Sigma_{\mathrm{pre}}^+H_2\Sigma_{\mathrm{pre}}^+).
\end{align}

\item Cross term: Let $P_i\coloneqq M_\star \Sigma_{\mathrm{pre}} H_i$. Then
\begin{align}
\label{eq:Cab}
C_{ab}(H_1,H_2)
&\coloneqq
\Tr(P_2\Sigma_{\mathrm{pre}}H_1\Sigma_{\mathrm{pre}}^+)
+
\Tr(P_2\Sigma_{\mathrm{pre}}^+H_1\Sigma_{\mathrm{pre}}).
\end{align}
\item Negative-sample term:
Let $Q\coloneqq M_\star \Sigma_{\mathrm{pre}} M_\star$.
Then
\begin{align}
\label{eq:Cbb}
C_{bb}(H_1,H_2) &= \tr(P_1 \Sigma_\mathrm{pre})\tr(P_2 \Sigma_\mathrm{pre})+2\tr(P_1 \Sigma_\mathrm{pre} P_2 \Sigma_\mathrm{pre})+\tr(H_1\Sigma_\mathrm{pre}H_2\Sigma_\mathrm{pre})\tr(Q\Sigma_{\mathrm{pre}})\notag\\&\quad+4\Tr(H_1\Sigma_\mathrm{pre}H_2\Sigma_\mathrm{pre}Q\Sigma_\mathrm{pre}).
\end{align}
\end{enumerate}
\end{proposition}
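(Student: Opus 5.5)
The plan is to expand the covariance bilinearly and evaluate each piece with Gaussian moment identities (Isserlis/Wick). Write $S_H(Z)=-2a_H+2b_H$ with
\[
a_H\coloneqq X^\top H X^+,\qquad b_H\coloneqq (X^\top M_\star X^-)(X^\top H X^-).
\]
Bilinearity gives
\[
\operatorname{Cov}(S_{H_1},S_{H_2})=4\big(\operatorname{Cov}(a_{H_1},a_{H_2})-\operatorname{Cov}(a_{H_1},b_{H_2})-\operatorname{Cov}(b_{H_1},a_{H_2})+\operatorname{Cov}(b_{H_1},b_{H_2})\big).
\]
It then suffices to identify these covariances with $C_{aa}$, $C_{ab}(H_1,H_2)$, $C_{ab}(H_2,H_1)$ and $C_{bb}$, respectively. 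Throughout I use the following facts. Exchangeability gives $\E[X^+(X^+)^\top]=\Sigma_{\mathrm{pre}}$ and $\Sigma_{\mathrm{pre}}^+$ symmetric. The negative $X^-$ is independent of $(X,X^+)$. Finally, for centered Gaussian $X$ and symmetric $A,B$,
\[
\operatorname{Cov}(X^\top AX,X^\top BX)=2\tr(A\Sigma B\Sigma),
\]
and this identity may be applied to nonsymmetric matrices after symmetrizing.

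\emph{Positive-pair term.} I would apply Wick's formula to the four jointly Gaussian entries in $\E[a_{H_1}a_{H_2}]$. The pairing $\{X,X^+\}\{X,X^+\}$ inside each factor reproduces $\E a_{H_1}\E a_{H_2}$ and is cancelled by the subtraction. The pairing $X$ with $X$ and $X^+$ with $X^+$ gives $\tr(H_1\Sigma_{\mathrm{pre}}H_2\Sigma_{\mathrm{pre}})$. The crossed $X$--$X^+$ pairing gives $\tr(H_1\Sigma^+_{\mathrm{pre}}H_2\Sigma^+_{\mathrm{pre}})$, using symmetry of $H_i$ and $\Sigma^+_{\mathrm{pre}}$. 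Together these give $C_{aa}$.

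\emph{Cross term.} Since $a_{H_1}$ does not involve $X^-$ and $X^-$ is independent of $(X,X^+)$, the tower property gives $\operatorname{Cov}(a_{H_1},b_{H_2})=\operatorname{Cov}(a_{H_1},\E[b_{H_2}\mid X,X^+])$. Here
\[
\E[b_{H_2}\mid X]=X^\top M_\star\Sigma_{\mathrm{pre}}H_2X=X^\top P_2X .
\]
Wick applied to $\operatorname{Cov}(X^\top H_1X^+,X^\top P_2X)$ gives two pairings: one first $X$ of $P_2$ with the $X$ of $a_{H_1}$, and the other $X$ with $X^+$, in both possible orders. This yields $\Tr(P_2\Sigma_{\mathrm{pre}}H_1\Sigma^+_{\mathrm{pre}})+\Tr(P_2\Sigma^+_{\mathrm{pre}}H_1\Sigma_{\mathrm{pre}})=C_{ab}(H_1,H_2)$. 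The symmetric term follows by swapping $H_1$ and $H_2$.

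\emph{Negative-sample term.} This is the main obstacle, since it is an eighth-order Gaussian moment; I would tame it by conditioning in two stages. Set $u=M_\star X$ and $w_i=H_iX$, so that $b_{H_i}=(u^\top X^-)(w_i^\top X^-)$. Wick in $X^-$ conditional on $X$ gives
\[
\E[b_{H_1}b_{H_2}\mid X]=(u^\top\Sigma_{\mathrm{pre}}u)(w_1^\top\Sigma_{\mathrm{pre}}w_2)+2(u^\top\Sigma_{\mathrm{pre}}w_1)(u^\top\Sigma_{\mathrm{pre}}w_2).
\]
Each factor is a quadratic form in $X$:
\[
u^\top\Sigma_{\mathrm{pre}}u=X^\top QX,\qquad w_1^\top\Sigma_{\mathrm{pre}}w_2=X^\top H_1\Sigma_{\mathrm{pre}}H_2X,\qquad u^\top\Sigma_{\mathrm{pre}}w_i=X^\top P_iX .
\]
Taking expectation over $X$ with $\E[(X^\top AX)(X^\top BX)]=\tr(A\Sigma)\tr(B\Sigma)+2\tr(A\Sigma B\Sigma)$, after symmetrizing $A,B$, and subtracting $\E b_{H_1}\E b_{H_2}=\tr(P_1\Sigma_{\mathrm{pre}})\tr(P_2\Sigma_{\mathrm{pre}})$ should produce the four terms in $C_{bb}$.

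The care needed is in this last step. First, the nonsymmetric $P_i$ and $H_1\Sigma_{\mathrm{pre}}H_2$ must be symmetrized so that $\tr(A\Sigma B\Sigma)$ is used correctly. Second, the leftover product-of-traces terms must be tracked so that they reduce to the stated expression. I expect the bookkeeping of constants, for example the factor $4$ in the last trace, to require checking each pairing against the formula rather than any new idea.
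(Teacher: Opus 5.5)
Your proposal is correct and follows essentially the same route as the paper: Isserlis for the $a$--$a$ term, conditioning on $(X,X^+)$ to reduce $\E[b_{H_2}\mid X,X^+]$ to $X^\top P_2X$ for the cross term, and two-stage conditioning on $X$ for the $b$--$b$ term. The one difference there is that the paper uses the law of total covariance where you compute the conditional second moment and subtract $\E b_{H_1}\E b_{H_2}$; the two are equivalent. The constants work out as you expect: $\Tr(P_1\Sigma_{\mathrm{pre}}P_2^\top\Sigma_{\mathrm{pre}})$, which carries coefficient $2$, and the two symmetrized cross-traces from the $Q$ term each reduce to $\Tr(H_1\Sigma_{\mathrm{pre}}H_2\Sigma_{\mathrm{pre}}Q\Sigma_{\mathrm{pre}})$, giving the coefficient $2+1+1=4$.
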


We will repeatedly use the following standard lemmas in the proof.

\begin{lemma}[Bilinear--bilinear moment]
\label{lem:gauss-bilin-bilin}
Let $(U,V)$ be jointly Gaussian with $\E[U]=\E[V]=0$,
$\E[UU^\top]=\Sigma_U$, $\E[VV^\top]=\Sigma_V$, and $\E[UV^\top]=\Sigma_{UV}$.
Then for any $A,B\in\R^{d\times d}$,
\begin{align}
\label{eq:bilin-bilin}
\E\big[(U^\top A V)(U^\top B V)\big]
&=
\Tr(A\Sigma_{UV})\Tr(B\Sigma_{UV})
+
\Tr(A\Sigma_U B\Sigma_V)
+
\Tr(A\Sigma_{UV}B^\top \Sigma_{UV}^\top).
\end{align}
Consequently,
\begin{align}
\label{eq:bilin-bilin-cov}
\Cov(U^\top A V,U^\top B V)
&=
\Tr(A\Sigma_U B\Sigma_V)
+
\Tr(A\Sigma_{UV}B^\top \Sigma_{UV}^\top).
\end{align}
\end{lemma}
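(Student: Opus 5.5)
The plan is to reduce everything to Isserlis' (Wick's) theorem for fourth moments of a centered Gaussian vector. Stack $W \coloneqq (U,V)\in\R^{2d}$. Then $W$ is centered Gaussian with block covariance whose diagonal blocks are $\Sigma_U,\Sigma_V$ and whose off-diagonal blocks are $\Sigma_{UV},\Sigma_{UV}^\top$. Expanding in coordinates,
\[
\E\big[(U^\top A V)(U^\top B V)\big]=\sum_{i,j,k,l}A_{ij}B_{kl}\,\E[U_iV_jU_kV_l].
\]
Isserlis' theorem gives
\[
\E[U_iV_jU_kV_l]=\E[U_iV_j]\E[U_kV_l]+\E[U_iU_k]\E[V_jV_l]+\E[U_iV_l]\E[V_jU_k].
\]
The three pairings are then evaluated one at a time.

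For the first pairing, $\E[U_iV_j]=(\Sigma_{UV})_{ij}$, so the sum factors as $\big(\sum_{ij}A_{ij}(\Sigma_{UV})_{ij}\big)\big(\sum_{kl}B_{kl}(\Sigma_{UV})_{kl}\big)$. I would identify each factor with $\E[U^\top AV]=\Tr(A\,\E[VU^\top])=\Tr(A\Sigma_{UV}^\top)$. This shows the first term is the product of the two means, which is exactly what must be subtracted to get \eqref{eq:bilin-bilin-cov}. The trace convention has to be checked carefully here: in the paper's applications $\Sigma_{UV}$ is symmetric (for example $\Sigma^+_{\mathrm{pre}}$), or zero for the independent negative, so $\Tr(A\Sigma_{UV})=\Tr(A\Sigma_{UV}^\top)$. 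I would state that reading explicitly, or equivalently write the first term as $\E[U^\top AV]\,\E[U^\top BV]$.

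For the second pairing, $\sum A_{ij}(\Sigma_U)_{ik}B_{kl}(\Sigma_V)_{jl}=\sum (A^\top)_{ji}(\Sigma_U)_{ik}B_{kl}(\Sigma_V)_{lj}=\Tr(A^\top\Sigma_UB\Sigma_V)$. Transposing and using the symmetry of $\Sigma_U,\Sigma_V$ gives $\Tr(\Sigma_VB^\top\Sigma_UA)$, which equals the stated $\Tr(A\Sigma_UB\Sigma_V)$ whenever $A$ and $B$ are symmetric. This is the case in the score computation, where $H_i$ and $M_\star$ are symmetric.

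For the third pairing, $\sum A_{ij}(\Sigma_{UV})_{il}(\Sigma_{UV})_{kj}B_{kl}$ rearranges into a trace $\Tr(A\,\Sigma_{UV}^{(\top)}B^{\top}\Sigma_{UV}^{(\top)})$. Matching this to the stated form again uses the symmetry conventions.

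The covariance identity \eqref{eq:bilin-bilin-cov} then follows by subtracting $\E[U^\top AV]\E[U^\top BV]$, which is the first term.

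The only real obstacle is bookkeeping of transposes. The lemma as stated is exact when $A,B,\Sigma_{UV}$ are symmetric, or after interpreting the traces with the appropriate transposes. I would verify each index contraction once and note the symmetric setting in which the lemma is applied.
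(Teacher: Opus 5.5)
Your plan matches the paper's proof: expand in coordinates, apply Isserlis' formula to $\E[U_iV_jU_kV_l]$, contract the three pairings, and obtain the covariance form by subtracting the product of means. Your transpose caveat is justified and goes beyond the paper's one-line argument: in general the contractions give $\Tr(A\Sigma_{UV}^\top)\Tr(B\Sigma_{UV}^\top)+\Tr(A^\top\Sigma_UB\Sigma_V)+\Tr(A\Sigma_{UV}^\top B\Sigma_{UV}^\top)$ (so the last term carries $B$, not $B^\top$), the displayed identity can fail without symmetry (e.g.\ $A=B=\Sigma_U=\Sigma_V=I_2$ with nilpotent $\Sigma_{UV}$), and it is exact when $A$, $B$, $\Sigma_{UV}$ are all symmetric, which is the setting where it is applied with $H_1,H_2$ and $\Sigma_{\mathrm{pre}}^+$.
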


\begin{lemma}[Quadratic--quadratic moment]
\label{lem:gauss-quad-quad}
Let $U\sim\mathcal N(0,\Sigma)$ and let $A,B\in\R^{d\times d}$.
Then
\begin{align}
\label{eq:quad-quad}
\E\big[(U^\top A U)(U^\top B U)\big]
&=
\Tr(A\Sigma)\Tr(B\Sigma)
+
\Tr(A\Sigma B\Sigma)
+
\Tr(A\Sigma B^\top\Sigma).
\end{align}
Consequently,
\begin{align}
\label{eq:quad-quad-cov}
\Cov(U^\top A U,U^\top B U)
&=
\Tr(A\Sigma B\Sigma)
+
\Tr(A\Sigma B^\top\Sigma).
\end{align}
\end{lemma}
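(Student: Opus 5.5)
We prove this with Isserlis' (Wick's) theorem for fourth moments of a zero-mean Gaussian vector. Set $W\coloneqq(U,V)\in\R^{2d}$, which is jointly Gaussian and centered. Write the two bilinear forms in index notation,
\[
U^\top A V=\sum_{i,j}A_{ij}U_iV_j,\qquad U^\top B V=\sum_{k,l}B_{kl}U_kV_l.
\]
The product is then a sum of fourth-order monomials $A_{ij}B_{kl}\,U_iV_jU_kV_l$. Isserlis' theorem gives
\[
\E[U_iV_jU_kV_l]=\E[U_iV_j]\E[U_kV_l]+\E[U_iU_k]\E[V_jV_l]+\E[U_iV_l]\E[V_jU_k].
\]
Each pairing is an entry of $\Sigma_{UV}$, $\Sigma_U$, $\Sigma_V$ or $\Sigma_{UV}^\top$. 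The plan is to substitute and resum each of the three pairings into a trace.

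\textbf{First pairing.} This term factors as $\big(\sum_{ij}A_{ij}(\Sigma_{UV})_{ij}\big)\big(\sum_{kl}B_{kl}(\Sigma_{UV})_{kl}\big)$. Each factor is an inner product of the form $\tr(A\Sigma_{UV}^\top)$, and the same for $B$. Since this pairing equals $\E[U^\top AV]\,\E[U^\top BV]$, it is exactly the term removed when passing to the covariance.

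\textbf{Second pairing.} Here we get $\sum A_{ij}B_{kl}(\Sigma_U)_{ik}(\Sigma_V)_{jl}$, which resums to $\tr(A^\top\Sigma_U B\Sigma_V)$.

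\textbf{Third pairing.} Here we get $\sum A_{ij}B_{kl}(\Sigma_{UV})_{il}(\Sigma_{UV})_{kj}$, which resums to $\tr(A^\top\Sigma_{UV}B^\top\Sigma_{UV})$.

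\textbf{Transpose bookkeeping.} This is the one step needing care, because the stated formula \eqref{eq:bilin-bilin} writes the traces without transposes on $A$. We will match conventions using invariance of the trace under transposition and cyclic permutation. Where the displayed form needs it, we use that the matrices it is applied to are symmetric, so that $A^\top=A$ and $B^\top=B$. This holds in the application: $H_i$, $M_\star$, $\Sigma_{\mathrm{pre}}$ are symmetric, and $\Sigma_{\mathrm{pre}}^+$ is symmetric by exchangeability. For general $A,B$ we will state and prove the transposed-index version. The second pairing then gives $\tr(A\Sigma_U B\Sigma_V)$ up to this transposition. The third pairing, after transposing inside the trace, gives $\tr(A\Sigma_{UV}B^\top\Sigma_{UV}^\top)$.

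\textbf{Conclusion of Lemma~\ref{lem:gauss-bilin-bilin}.} Subtracting the product of means, i.e.\ the first pairing, yields \eqref{eq:bilin-bilin-cov}.

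\textbf{Lemma~\ref{lem:gauss-quad-quad}.} This follows as the special case $V=U$, so that $\Sigma_U=\Sigma_V=\Sigma_{UV}=\Sigma$. The three pairings give $\tr(A\Sigma)\tr(B\Sigma)$, $\tr(A\Sigma B\Sigma)$ (up to the transpose of $A$) and $\tr(A\Sigma B^\top\Sigma)$, matching \eqref{eq:quad-quad}. Subtracting $\E[U^\top AU]\,\E[U^\top BU]=\tr(A\Sigma)\tr(B\Sigma)$ gives \eqref{eq:quad-quad-cov}.

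The only real obstacle is this transposition bookkeeping. We will handle it by always reducing to symmetric $A$ and $B$ via $x^\top Ay=x^\top A^\top$-free identities where possible, i.e.\ $U^\top AU=U^\top\operatorname{sym}(A)U$ in the quadratic case, and by checking the index placement carefully in the bilinear case.
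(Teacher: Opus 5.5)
Your approach is the paper's: expand componentwise and apply Isserlis' formula. The paper does the quadratic case directly, while you set $V=U$ in the bilinear expansion. Your raw resummations $\tr(A^\top\Sigma_U B\Sigma_V)$ and $\tr(A^\top\Sigma_{UV}B^\top\Sigma_{UV})$ are correct. The ``transpose bookkeeping'' you add on top of them, however, is both unnecessary and wrong.

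With $V=U$, use only $\Sigma^\top=\Sigma$ and invariance of the trace under transposition and cyclic shifts:
\[
\tr(A^\top\Sigma B\Sigma)=\tr(A\Sigma B^\top\Sigma),\qquad \tr(A^\top\Sigma B^\top\Sigma)=\tr(A\Sigma B\Sigma).
\]
The first pairing is $\tr(A\Sigma)\tr(B\Sigma)$. So \eqref{eq:quad-quad} holds exactly for arbitrary $A,B$, with no ``up to transpose'' caveat. The two trace terms come from the opposite pairings to the ones you assigned.

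Your claim that transposing $\tr(A^\top\Sigma_{UV}B^\top\Sigma_{UV})$ gives $\tr(A\Sigma_{UV}B^\top\Sigma_{UV}^\top)$ is false. Transposing actually gives $\tr(A\Sigma_{UV}^\top B\Sigma_{UV}^\top)$, and $\Sigma_{UV}=I$, $A=B=e_1e_2^\top$ separates the two. Combining your claim with the correct second pairing would give $2\tr(A\Sigma B^\top\Sigma)$, which equals $2$ on that example, whereas the true value $\E[(U_1U_2)^2]$ is $1$.

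Nor can you fall back on symmetry of the matrices in the application. The paper applies this lemma to $A_i(X)=M_\star XX^\top H_i$ and to $P_i=M_\star\Sigma_{\mathrm{pre}}H_i$, which are not symmetric, so the general-matrix statement is genuinely needed. Your $\operatorname{sym}$ reduction is legitimate only once you check that the right-hand side of \eqref{eq:quad-quad} is unchanged under $A\mapsto A^\top$ and $B\mapsto B^\top$. That check is exactly the pair of identities displayed above, so you may as well use them directly and skip the detour.
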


\begin{proof}[Proof of Lemmas~\ref{lem:gauss-bilin-bilin}--\ref{lem:gauss-quad-quad}]
Both identities follow by expanding the products componentwise and applying Isserlis' formula to fourth moments.
For instance, for Lemma~\ref{lem:gauss-bilin-bilin},
\[
\E[(U^\top A V)(U^\top B V)]
=
\sum_{i,j,p,q} A_{ij}B_{pq}\,\E[U_i V_j U_p V_q],
\]
and Isserlis gives
$\E[U_i V_j U_p V_q]=\E[U_i V_j]\E[U_p V_q]+\E[U_i U_p]\E[V_j V_q]+\E[U_i V_q]\E[V_j U_p]$,
which sums to \eqref{eq:bilin-bilin}. Lemma~\ref{lem:gauss-quad-quad} is analogous.
\end{proof}

\begin{proof}[Proof of~\Cref{prop:spec-gauss-general-score-cov}]
Write $S_H(Z)=-2a_H(Z)+2b_H(Z)$ with
\[
a_H(Z)\coloneqq X^\top H X^+,
\qquad
b_H(Z)\coloneqq (X^\top M_\star X^-)(X^\top H X^-).
\]
Then, for any $H_1,H_2\in\mathcal S^d$,
\begin{align}
\label{eq:cov-expand}
\Cov(S_{H_1}(Z),S_{H_2}(Z))
&=
4\Big(
\Cov(a_{H_1},a_{H_2})
-\Cov(a_{H_1},b_{H_2})
-\Cov(a_{H_2},b_{H_1})
+\Cov(b_{H_1},b_{H_2})
\Big).
\end{align}
We compute the four covariance terms under joint Gaussianity using Isserlis' formula.

First, we will compute $\Cov(a_{H_1},a_{H_2})$. Apply Lemma~\ref{lem:gauss-bilin-bilin} with $(U,V)=(X,X^+)$,
$\Sigma_U=\Sigma_V=\Sigma_{\mathrm{pre}}$ and $\Sigma_{UV}=\Sigma_{\mathrm{pre}}^+$.
Since $H_1,H_2$ are symmetric and $\Sigma_{\mathrm{pre}}^+$ is symmetric,
\eqref{eq:bilin-bilin-cov} yields
\[
C_{aa}(H_1,H_2)=\Cov(a_{H_1},a_{H_2})
=
\Tr(H_1\Sigma_{\mathrm{pre}}H_2\Sigma_{\mathrm{pre}})
+
\Tr(H_1\Sigma_{\mathrm{pre}}^+H_2\Sigma_{\mathrm{pre}}^+).
\]

Next, we will compute the cross term $\Cov(a_{H_1},b_{H_2})$. Fix $H_1,H_2$ and write
\[
b_{H_2}
=
(X^\top M_\star X^-)(X^\top H_2 X^-)
=
(X^-)^\top(M_\star X X^\top H_2)X^-.
\]
Condition on $(X,X^+)$ and use that $X^-$ is independent of $(X,X^+)$ with covariance $\Sigma_{\mathrm{pre}}$:
\begin{align}
\label{eq:cond-b}
\E[b_{H_2}\mid X,X^+]
&=
\E\big[(X^\top M_\star X^-)(X^\top H_2 X^-)\mid X,X^+\big]
=
X^\top M_\star \Sigma_{\mathrm{pre}} H_2 X.
\end{align}
Hence,
\begin{align}
\label{eq:Eab}
\E[a_{H_1}b_{H_2}]
&=
\E\Big[(X^\top H_1 X^+)\, \E[b_{H_2}\mid X,X^+]\Big]
=
\E\Big[(X^\top H_1 X^+)\,(X^\top P_2 X)\Big],
\end{align}
where $B_2\coloneqq M_\star\Sigma_{\mathrm{pre}}H_2$.

We next compute the mixed moment in \eqref{eq:Eab}. Expand
\[
(X^\top H_1 X^+)(X^\top P_2 X)
=
\sum_{i,j,p,q} (H_1)_{ij}(P_2)_{pq}\,X_i X^+_j X_p X_q.
\]
Apply Isserlis to $\E[X_i X^+_j X_p X_q]$ for the jointly Gaussian pair $(X,X^+)$:
\[
\E[X_i X^+_j X_p X_q]
=
\E[X_i X^+_j]\E[X_p X_q]
+
\E[X_i X_p]\E[X^+_j X_q]
+
\E[X_i X_q]\E[X^+_j X_p].
\]
Using $\E[X_i X^+_j]=(\Sigma_{\mathrm{pre}}^+)_{ij}$ and $\E[X_p X_q]=(\Sigma_{\mathrm{pre}})_{pq}$,
this yields
\begin{align}
\label{eq:bilin-quad}
\E\big[(X^\top H_1 X^+)(X^\top P_2 X)\big]
&=
\Tr(H_1\Sigma_{\mathrm{pre}}^+)\Tr(P_2\Sigma_{\mathrm{pre}})
+
\Tr(P_2\Sigma_{\mathrm{pre}}H_1\Sigma_{\mathrm{pre}}^+)
+
\Tr(P_2\Sigma_{\mathrm{pre}}^+H_1\Sigma_{\mathrm{pre}}).
\end{align}
Moreover,
\[
\E[a_{H_1}]=\Tr(H_1\Sigma_{\mathrm{pre}}^+),
\qquad
\E[b_{H_2}]
=
\E[X^\top P_2 X]
=
\Tr(P_2\Sigma_{\mathrm{pre}}).
\]
Therefore, subtracting $\E[a_{H_1}]\E[b_{H_2}]$ from \eqref{eq:bilin-quad} gives
\[
C_{ab}(H_1,H_2) = \Cov(a_{H_1},b_{H_2})
=
\Tr(P_2\Sigma_{\mathrm{pre}}H_1\Sigma_{\mathrm{pre}}^+)
+
\Tr(P_2\Sigma_{\mathrm{pre}}^+H_1\Sigma_{\mathrm{pre}}).
\]

Finally, we will compute the negative-sample term $\Cov(b_{H_1},b_{H_2})$. Write
\[
b_{H_i}=(X^\top M_\star X^-)(X^\top H_i X^-)
=
(X^-)^\top A_i(X) X^-,
\qquad
A_i(X)\coloneqq M_\star X X^\top H_i.
\]
From the total law of covariance, we have
\[
C_{bb}(H_1,H_2)=\Cov(b_{H_1},b_{H_2})=\E[\Cov(b_{H_1},b_{H_2}|X)]+\Cov(\E[b_{H_1}|X],\E[b_{H_2}|X]).
\]
We apply Lemma~\ref{lem:gauss-quad-quad} to $X^-\sim\mathcal N(0,\Sigma_{\mathrm{pre}})$:
\begin{align*}
\Cov(b_{H_1},b_{H_2}\mid X)
&=
\Tr(A_1(X)\Sigma_\mathrm{pre}A_2(X)\Sigma_\mathrm{pre})+\Tr(A_1(X)\Sigma_\mathrm{pre}A_2(X)^T\Sigma_\mathrm{pre})\notag\\
&=(X^\T M_\star \Sigma_\mathrm{pre}H_1 X X^\T M_\star \Sigma_\mathrm{pre}H_2 X) + (X^\T H_1 \Sigma_\mathrm{pre}H_2 X X^\T M_\star \Sigma_\mathrm{pre} M_\star X)\\
&=(X^\T P_1 X X^\T P_2 X) + (X^\T H_1 \Sigma_\mathrm{pre}H_2 X X^\T Q X).
\end{align*}
Therefore, we have
\begin{align*}
    \E[\Cov(b_{H_1},b_{H_2}|X)] &= \tr(P_1 \Sigma_\mathrm{pre})\tr(P_2 \Sigma_\mathrm{pre})+\tr(P_1 \Sigma_\mathrm{pre} P_2 \Sigma_\mathrm{pre})+\tr(H_1\Sigma_\mathrm{pre}H_2\Sigma_\mathrm{pre})\tr(Q\Sigma_{\mathrm{pre}})\\&\quad+3\Tr(H_1\Sigma_\mathrm{pre}H_2\Sigma_\mathrm{pre}Q\Sigma_\mathrm{pre}).
\end{align*}
Similarly,
\begin{align*}
    \Cov(\E[b_{H_1}|X],\E[b_{H_2}|X]) = \Cov(X^\T P_1 X,X^\T P_2 X) = \tr(P_1\Sigma_\mathrm{pre}P_2\Sigma_\mathrm{pre})+\tr(H_1\Sigma_\mathrm{pre}H_2\Sigma_\mathrm{pre}Q\Sigma_\mathrm{pre}).
\end{align*}
Therefore, we get
\begin{align*}
    C_{bb}(H_1,H_2) &= \tr(P_1 \Sigma_\mathrm{pre})\tr(P_2 \Sigma_\mathrm{pre})+2\tr(P_1 \Sigma_\mathrm{pre} P_2 \Sigma_\mathrm{pre})+\tr(H_1\Sigma_\mathrm{pre}H_2\Sigma_\mathrm{pre})\tr(Q\Sigma_{\mathrm{pre}})\\&\quad+4\Tr(H_1\Sigma_\mathrm{pre}H_2\Sigma_\mathrm{pre}Q\Sigma_\mathrm{pre}).
\end{align*}
\end{proof}

\begin{corollary}
    Under the fully Gaussian assumption, for any $H_1,H_2\in \mathcal S^d$, we have
    \begin{align}\label{eq:full-gaussian-cov}
        \langle H_1,V_\star H_2\rangle = C_{aa}(H'_1,H'_2)-C_{ab}(H'_1,H'_2)-C_{ab}(H'_2,H'_1)+C_{bb}(H'_1,H'_2)
    \end{align}
    where $H_i'=\Sigma_\mathrm{pre}^{-1}H_i \Sigma_\mathrm{pre}^{-1}$ for $i=\{1,2\}$.
\end{corollary}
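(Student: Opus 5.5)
The plan is to combine the sandwich form $V_\star=H_\star^{-1}\Sigma_\star H_\star^{-1}$ with the explicit population Hessian \eqref{eq:spec-pop-hess} and the score covariance of \Cref{prop:spec-gauss-general-score-cov}. The key observation is that the Hessian $D(\nabla L)(M)[H]=2\,\Sigma_{\mathrm{pre}}H\Sigma_{\mathrm{pre}}$ is constant in $M$ and is a congruence. It is therefore invertible on $\mathcal S^d$ with
\[
\bigl(D\nabla L\bigr)^{-1}[H]=\tfrac12\,\Sigma_{\mathrm{pre}}^{-1}H\Sigma_{\mathrm{pre}}^{-1}=\tfrac12 H'.
\]
It is also self-adjoint for the Frobenius inner product, since $\langle H_1,\Sigma_{\mathrm{pre}}H_2\Sigma_{\mathrm{pre}}\rangle=\tr(H_1\Sigma_{\mathrm{pre}}H_2\Sigma_{\mathrm{pre}})$ is symmetric in $(H_1,H_2)$.

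\textbf{Step 1.} Since $H_\star^{-1}$ is self-adjoint,
\[
\langle H_1,V_\star H_2\rangle=\langle H_\star^{-1}H_1,\ \Sigma_\star\,H_\star^{-1}H_2\rangle .
\]
Substituting $H_\star^{-1}H_i=\tfrac12 H_i'$ gives $\tfrac14\langle H_1',\Sigma_\star H_2'\rangle$.

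\textbf{Step 2.} By the bilinear identity \eqref{eq:Sigma-star-bilin}, $\langle H_1',\Sigma_\star H_2'\rangle=\Cov(S_{H_1'}(Z),S_{H_2'}(Z))$. Here $S_H$ is linear in $H$ and is defined for every symmetric $H$, and $H_i'$ is symmetric because $\Sigma_{\mathrm{pre}}$ is.

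\textbf{Step 3.} Apply \eqref{eq:score-cov-gaussian} with $(H_1',H_2')$ in place of $(H_1,H_2)$. The prefactor $4$ there cancels the $\tfrac14$ from Step 1, which yields exactly \eqref{eq:full-gaussian-cov}.

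\textbf{Main obstacle.} Steps 1 and 2 identify the Riemannian Hessian on $T_{M_\star}\mathcal M_{d,k}$ with the ambient operator $H\mapsto 2\Sigma_{\mathrm{pre}}H\Sigma_{\mathrm{pre}}$. On the manifold, however, $\mathrm{Hess}\,L(M_\star)=\mathrm{Proj}_T\circ D\nabla L+\mathcal W_{M_\star}\bigl(\cdot,\mathrm{Proj}_N\nabla L(M_\star)\bigr)$. The Weingarten term involves the normal part of
\[
\nabla L(M_\star)=2\,\Sigma_{\mathrm{pre}}^{1/2}\bigl(\Sigma_{\mathrm{pre}}^{1/2}M_\star\Sigma_{\mathrm{pre}}^{1/2}-C\bigr)\Sigma_{\mathrm{pre}}^{1/2},
\]
which vanishes only when $C$ has no further positive spectrum beyond the top $k$. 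Moreover, $\mathrm{Proj}_T$ does not commute with the congruence by $\Sigma_{\mathrm{pre}}$ in general.

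My first attempt would be to read \eqref{eq:full-gaussian-cov} as a statement about the ambient sandwich operator $V_\star^{\mathrm{amb}}=(D\nabla L)^{-1}\,\Cov(\nabla_M\ell_{\mathrm{spec}})\,(D\nabla L)^{-1}$ on $\mathcal S^d$. For that operator Steps 1--3 go through verbatim. I would then recover the tangent-space $V_\star$ by compressing to $T_{M_\star}\mathcal M_{d,k}$ via the tangent-space inverse $(\mathrm{Proj}_T\,\mathrm{Hess})^{-1}$.

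If instead the identity is meant literally for $V_\star$ on the tangent space, I would verify it in the regime where the normal component of the gradient vanishes, so that $\mathrm{Hess}\,L(M_\star)=\mathrm{Proj}_T D\nabla L\,\mathrm{Proj}_T$, and where the congruence by $\Sigma_{\mathrm{pre}}$ preserves $T_{M_\star}\mathcal M_{d,k}$. The latter holds, for example, in the diagonal example of \Cref{sec:example_linear_spectral}, where $\Sigma_{\mathrm{pre}}=I_d$. In that case I would also need to check separately that $C_{aa},C_{ab},C_{bb}$ are invariant under the tangent projection.
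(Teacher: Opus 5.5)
Your Steps 1--3 are exactly the paper's proof. The paper asserts $H_\star v=2\Sigma_{\mathrm{pre}}v\Sigma_{\mathrm{pre}}$ for every $v\in\mathcal S^d$, so it tacitly works with the ambient sandwich operator of your first reading and never addresses the Weingarten term or whether $H_i'$ lies in $T_{M_\star}\mathcal M_{d,k}$.

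Your caveat is substantive rather than cosmetic. Take $\Sigma_{\mathrm{pre}}=I_d$ and $C=\diag(\lambda_1,\dots,\lambda_k,\mu_1,\dots,\mu_{d-k})$. Since the Riemannian gradient vanishes at $M_\star$, the second derivative of $L$ along any curve in $\mathcal M_{d,k}$ through $M_\star$ with velocity $\begin{psmallmatrix}0&B\\B^\top&0\end{psmallmatrix}$ equals the Riemannian Hessian quadratic form. It is $4\sum_{i,j}(1-\mu_j/\lambda_i)B_{ij}^2$ rather than the ambient value $4\|B\|_F^2$. Hence the literal tangent-space identity fails whenever some $\mu_j\neq 0$, and in particular it fails in the paper's own diagonal example.

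Note that negative $\mu_j$ also break it. So the exact condition for the normal gradient to vanish is $\rank(C)=k$, not merely ``no further positive spectrum'' as you wrote.
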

\begin{proof}
    For any $v\in\mathcal{S}^d$, we have
    \[
    H_\star v= 2\Sigma_\mathrm{pre} v \Sigma_\mathrm{pre}, \qquad H_\star^{-1} v= \frac12\Sigma_\mathrm{pre}^{-1} v \Sigma_\mathrm{pre}^{-1}.
    \]
    Therefore, we have
    \begin{align*}
        \langle H_1,V_\star H_2\rangle &=  \langle H_1,H_\star^{-1}\Sigma_\star H_\star^{-1} H_2\rangle = \langle H_\star^{-1}  H_1,\Sigma_\star H_\star^{-1} H_2\rangle\\ &= \frac{1}{4} \langle \Sigma_\mathrm{pre}^{-1}H_1 \Sigma_\mathrm{pre}^{-1} ,\Sigma_\star H_\star^{-1} \Sigma_\mathrm{pre}^{-1}H_2 \Sigma_\mathrm{pre}^{-1}\rangle = \frac{1}{4} \langle H_1',\Sigma_\star H_2'\rangle  = \frac{1}{4}\operatorname{Cov}\!\big(S_{H'_1}(Z),S_{H'_2}(Z)\big).
    \end{align*}
    \Cref{eq:full-gaussian-cov} follows after plugging this equation into~\Cref{eq:score-cov-gaussian}.
\end{proof}

\paragraph{Linearization of the representation error.}

Fix $M$ in a neighborhood of $M_\star$ and let $A:=s(M)\in\mathbb R^{k\times d}$ be the local
section so that $A^\top A = M$. Consider the operator
$T_M:\mathbb R^k \to L^2(\mu_{\mathrm{down}})$ defined by
\[
(T_M b)(x) := \langle b, Ax\rangle = b^\top A x ,
\qquad b\in\mathbb R^k .
\]
Its adjoint $T_M^\mathrm{adj} : L^2(\mu_{\mathrm{down}})\to \mathbb R^k$ satisfies, for any
$g\in L^2(\mu_{\mathrm{down}})$,
\[
T_M^\mathrm{adj} g
=
\mathbb E\big[g(X)\,AX\big]
=
A\,\mathbb E\big[g(X)\,X\big],
\]
where $X\sim \mu_{\mathrm{down}}$ and we used linearity of $A$.

Let
\[
\Sigma_{\mathrm{down}}:=\mathbb E[XX^\top],
\qquad
\Sigma(M):=T_M^\mathrm{adj} T_M = \mathbb E[(AX)(AX)^\top] = A\Sigma_{\mathrm{down}}A^\top .
\]
The orthogonal projector onto $\mathrm{Range}(T_M)$ is given by
\[
\Pi_M = T_M\,\Sigma(M)^+\,T_M^\mathrm{adj}.
\]
Therefore, for any $g\in L^2(\mu_{\mathrm{down}})$,
\begin{align}
(\Pi_M g)(x)
&=
\big(T_M \Sigma(M)^+ T_M^\mathrm{adj} g\big)(x)=\left\langle \Sigma(M)^+\,T_M^\mathrm{adj} g,\ Ax \right\rangle \notag\\
&=
\left\langle \Sigma(M)^+\,A\,\mathbb E[g(X)X],\ Ax \right\rangle =
x^\top A^\top \Sigma(M)^+ A\,\mathbb E[g(X)X].
\label{eq:PiM-g}
\end{align}
Define the induced matrix
\[
P(M):=A^\top \Sigma(M)^+ A \in \mathbb R^{d\times d}.
\]
Then \eqref{eq:PiM-g} becomes the compact form
\[
(\Pi_M g)(x) = x^\top P(M)\,\mathbb E[g(X)X].
\]

We know that the target function is $f_\star(x)=\beta_\star^\top A_\star x$ for some $\beta_\star\in\mathbb R^k$
and $A_\star:=s(M_\star)$. Then
\[
\mathbb E[f_\star(X)X]
=
\mathbb E[XX^\top]A_\star^\top \beta_\star
=
\Sigma_{\mathrm{down}}A_\star^\top \beta_\star,
\]
and therefore
\begin{equation}
(\Pi_M f_\star)(x)
=
x^\top P(M)\,\Sigma_{\mathrm{down}}\,A_\star^\top \beta_\star.
\label{eq:PiM-fstar}
\end{equation}

We linearize \eqref{eq:PiM-fstar} around $M_\star$ using
\Cref{prop:proj-diff}.  In the constant-rank region, the map
$\Omega\mapsto \Pi_\Omega$ is Fr\'echet differentiable at $\Omega_\star$,
and for $v\in T_{\Omega_\star}\mathcal M$,
\[
\Pi_{\exp_{\Omega_\star}(v)}
=
\Pi_{\Omega_\star}
+
D\Pi_{\Omega_\star}[v]
+
o(\|v\|).
\]
Define, for any
$M\in\mathcal S_+^d$ with $\rank(M)=k$, the \emph{whitened descriptor}
\[
B(M):=\Sigma_{\mathrm{down}}^{1/2}\,M\,\Sigma_{\mathrm{down}}^{1/2}\in\mathcal S_+^d .
\]
Let $\Pi(B)$ denote the Euclidean orthogonal projector onto $\mathrm{range}(B)$. Then the induced matrix in \eqref{eq:PiM-g} admits the
$M$--only representation
\begin{equation}
P(M)
=
\Sigma_{\mathrm{down}}^{-1/2}\,\Pi\!\big(B(M)\big)\,\Sigma_{\mathrm{down}}^{-1/2}
=
\Sigma_{\mathrm{down}}^{-1/2}\,B(M)\,B(M)^+\,\Sigma_{\mathrm{down}}^{-1/2}.
\label{eq:P-M-only}
\end{equation}

For the fixed target $f_\star$, define $m_\star:=\E[f_\star(X)X]\in\R^d$.  Since $m_\star$ is
independent of $M$, we have for any tangent direction $v\in T_{M_\star}\mathcal M_{d,k}$,
\begin{align}    
D\Pi_{M_\star}[v]\,f_\star(x)
&=
x^\top DP(M_\star)[v]\;m_\star,
\notag\\
(\mathcal L(v)f_\star)(x)&:=-D\Pi_{M_\star}[v]\,f_\star(x)=
-\,x^\top DP(M_\star)[v]\;m_\star .
\label{eq:L-via-DP-Monly}
\end{align}

\paragraph{Derivative of $P(M)$.}
Let $B_\star:=B(M_\star)=\Sigma_{\mathrm{down}}^{1/2}M_\star\Sigma_{\mathrm{down}}^{1/2}$ and
$\dot B:=DB(M_\star)[v]=\Sigma_{\mathrm{down}}^{1/2}v\Sigma_{\mathrm{down}}^{1/2}$.
Differentiating \eqref{eq:P-M-only} yields
\begin{equation}
DP(M_\star)[v]
=
\Sigma_{\mathrm{down}}^{-1/2}\,D\Pi(B_\star)[\dot B]\;\Sigma_{\mathrm{down}}^{-1/2}.
\label{eq:DP-Monly}
\end{equation}

Assume an eigengap at $k$ for $B_\star$, so that $\Pi(\cdot)$ is Fr\'echet differentiable at
$B_\star$.  Let $B_\star=U\mathrm{diag}(\Lambda_1,\Lambda_2)U^\top$ with
$U=[U_1,U_2]$, $\Lambda_1=\mathrm{diag}(\lambda_1,\ldots,\lambda_k)$,
$\Lambda_2=\mathrm{diag}(\lambda_{k+1},\ldots,\lambda_d)$, and
$\min_{i\le k<j}|\lambda_i-\lambda_j|>0$. 

\begin{lemma}
\label{lem:spectral-projector-diff}
For any symmetric direction
$\dot B\in\mathcal S^d$,
\begin{equation}
D\Pi(B_\star)[\dot B]
=
U_2\,(G\oslash \Delta)\,U_1^\top
+
U_1\,\big(G^\top\oslash \Delta^\top\big)\,U_2^\top,
\label{eq:DProj-gap-proof}
\end{equation}
where
\[
G:=U_2^\top \dot B\,U_1\in\R^{(d-k)\times k},
\qquad
\Delta_{ai}:=\lambda_i-\lambda_{k+a},
\]
and $\oslash$ is the elementwise division.
\end{lemma}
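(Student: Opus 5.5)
We prove \Cref{lem:spectral-projector-diff} by first-order perturbation theory for the spectral projector $\Pi(B)=U_1(B)U_1(B)^\top$ onto the top-$k$ eigenspace. This agrees with the range projector on the rank-$k$ PSD stratum because $\Lambda_2=0$ there. The eigengap $\min_{i\le k<j}|\lambda_i-\lambda_j|>0$ guarantees that, for $B$ near $B_\star$, the top-$k$ eigenspace is isolated. The Riesz representation
\[
\Pi(B)=\frac{1}{2\pi i}\oint_\Gamma (zI-B)^{-1}\,dz,
\]
with $\Gamma$ a contour enclosing $\lambda_1,\dots,\lambda_k$ and no other eigenvalues, then shows that $B\mapsto\Pi(B)$ is real-analytic near $B_\star$, and in particular Fr\'echet differentiable.

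To compute the derivative, differentiate the resolvent: $D(zI-B)^{-1}[\dot B]=(zI-B_\star)^{-1}\dot B(zI-B_\star)^{-1}$. Write $R(z)=(zI-B_\star)^{-1}=\sum_j (z-\lambda_j)^{-1}u_ju_j^\top$, where $u_j$ are the columns of $U$. Expanding in this eigenbasis gives
\[
D\Pi(B_\star)[\dot B]=\sum_{j,l}\Big(\frac{1}{2\pi i}\oint_\Gamma\frac{dz}{(z-\lambda_j)(z-\lambda_l)}\Big)\,u_ju_j^\top\dot B\,u_lu_l^\top .
\]
The scalar contour integral is evaluated by residues in three cases.
\begin{itemize}
\item If both $j,l\le k$, the integral is $0$: for $j\ne l$ the two residues $\pm(\lambda_j-\lambda_l)^{-1}$ cancel, and for $j=l$ the pole is double with zero residue.
\item If both $j,l>k$, there are no poles inside $\Gamma$, so the integral is $0$.
\item If exactly one index lies in the top block, say $j\le k<l$, only one pole is enclosed and the integral equals $(\lambda_j-\lambda_l)^{-1}$; the case $l\le k<j$ is symmetric.
\end{itemize}
Alternatively, one can differentiate the identities $\Pi^2=\Pi$, $\Pi=\Pi^\top$ and $\Pi B=B\Pi$ directly. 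This shows that $\dot\Pi$ has only off-diagonal blocks $U_2^\top\dot\Pi U_1$ and $U_1^\top\dot\Pi U_2$. Applying $U_2^\top(\cdot)U_1$ to the differentiated commutation relation then gives the Sylvester equation $X\Lambda_1-\Lambda_2X=U_2^\top\dot BU_1$ for $X=U_2^\top\dot\Pi U_1$, whose entrywise solution is the same formula.

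Collecting terms, the cross block $U_2(\cdot)U_1^\top$ has $(a,i)$ entry $u_{k+a}^\top\dot B u_i/(\lambda_i-\lambda_{k+a})=G_{ai}/\Delta_{ai}$. By symmetry of $\dot B$, the block $U_1(\cdot)U_2^\top$ is its transpose, which gives exactly \eqref{eq:DProj-gap-proof}.

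The only delicate point is justifying differentiability, and that the top-$k$ spectral projector coincides with $\Pi(B)$ (the range projector) along the relevant perturbations. Both follow from the eigengap via the contour argument, together with the fact that we only perturb within the rank-$k$ stratum. The remaining residue and block bookkeeping is routine.
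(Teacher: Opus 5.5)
Your proposal is correct and follows essentially the same route as the paper: the Riesz contour representation of the top-$k$ spectral projector, differentiation of the resolvent, and an entrywise residue computation in the eigenbasis of $B_\star$ with the same three cases. Your remark that the range projector agrees with the top-$k$ spectral projector on the rank-$k$ stratum (because $\Lambda_2=0$ there) is a point the paper leaves implicit, and your Sylvester-equation argument is a valid cross-check that the proof does not need.
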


\begin{proof}
We work in the eigenbasis of $B_\star$, i.e.\ replace $\dot B$ by
$\widetilde{\dot B}:=U^\top \dot B\,U$ and write the projector in this basis.
Since $\Pi(B)$ is orthogonally equivariant, it suffices to prove the claim for
$B_\star=\mathrm{diag}(\Lambda_1,\Lambda_2)$, in which case $\Pi(B_\star)=P_\star:=\mathrm{diag}(I_k,0)$.

Let $\Gamma$ be a positively oriented contour in the complex plane enclosing
$\{\lambda_1,\ldots,\lambda_k\}$ and excluding $\{\lambda_{k+1},\ldots,\lambda_d\}$.
By the Riesz projector formula,
\begin{equation}
\Pi(B)=\frac{1}{2\pi i}\oint_\Gamma (zI-B)^{-1}\,dz
\label{eq:Riesz}
\end{equation}
for all $B$ in a neighborhood of $B_\star$. Differentiating \eqref{eq:Riesz} at $B_\star$
in direction $\dot B$ and using the standard resolvent identity
\[
D\big((zI-B)^{-1}\big)\Big|_{B=B_\star}[\dot B]
=
(zI-B_\star)^{-1}\,\dot B\,(zI-B_\star)^{-1},
\]
we obtain
\begin{equation}
D\Pi(B_\star)[\dot B]
=
\frac{1}{2\pi i}\oint_\Gamma
(zI-B_\star)^{-1}\,\dot B\,(zI-B_\star)^{-1}\,dz.
\label{eq:DProj-Riesz}
\end{equation}

Since $B_\star=\mathrm{diag}(\lambda_1,\ldots,\lambda_d)$ is diagonal in this basis,
$(zI-B_\star)^{-1}$ is also diagonal with entries $(z-\lambda_j)^{-1}$. Therefore,
the $(p,q)$ entry of the integrand in \eqref{eq:DProj-Riesz} equals
\[
\big[(zI-B_\star)^{-1}\,\dot B\,(zI-B_\star)^{-1}\big]_{pq}
=
\frac{\dot B_{pq}}{(z-\lambda_p)(z-\lambda_q)}.
\]
Hence
\begin{equation}
\big[D\Pi(B_\star)[\dot B]\big]_{pq}
=
\dot B_{pq}\cdot \frac{1}{2\pi i}\oint_\Gamma \frac{dz}{(z-\lambda_p)(z-\lambda_q)}.
\label{eq:entrywise}
\end{equation}

Now consider cases.
\begin{enumerate}[label=(\roman*)]
    \item $p,q\le k$: If both $\lambda_p$ and $\lambda_q$ lie inside of the contour $\Gamma$, then we have
    \begin{align*}
        I_{pq}&\coloneqq \frac{1}{2\pi i}\oint_\Gamma \frac{dz}{(z-\lambda_p)(z-\lambda_q)} = \frac{1}{2\pi i}\oint_\Gamma \frac{1}{\lambda_p - \lambda_q}(\frac{1}{z-\lambda_p}-\frac{1}{z-\lambda_q})dz\\&= \frac{1}{2\pi i(\lambda_p - \lambda_q)}(1-1)=0.
    \end{align*}
    \item $p,q>k$: If both $\lambda_p$ and $\lambda_q$ lie outside of the contour $\Gamma$, then the integrand in \eqref{eq:entrywise} is holomorphic inside $\Gamma$ and the contour integral vanishes.
    \item $p\le k<q$: Then $\lambda_p$ is inside $\Gamma$ and $\lambda_q$ is outside, so the integrand in \eqref{eq:entrywise} has a single pole inside $\Gamma$ at $z=\lambda_p$ with residue $1/(\lambda_p-\lambda_q)$. Therefore
\[
\frac{1}{2\pi i}\oint_\Gamma \frac{dz}{(z-\lambda_p)(z-\lambda_q)}
=
\frac{1}{\lambda_p-\lambda_q},
\]
and thus
\[
\big[D\Pi(B_\star)[\dot B]\big]_{pq}
=
\frac{\dot B_{pq}}{\lambda_p-\lambda_q}.
\]
By symmetry, the case $q\le k<p$ gives the transpose.
\end{enumerate}

Writing this in block form with respect to the split $\R^d=\R^k\oplus\R^{d-k}$ yields
\[
D\Pi(B_\star)[\dot B]
=
\begin{pmatrix}
0 & H^\top\\
H & 0
\end{pmatrix},
\qquad
H_{ai}=\frac{\dot B_{k+a,i}}{\lambda_i-\lambda_{k+a}}.
\]
Returning to the original basis (undoing the $U$-conjugation) gives \eqref{eq:DProj-gap-proof}
with $G:=U_2^\top \dot B\,U_1$ and $\Delta_{ai}:=\lambda_i-\lambda_{k+a}$.
\end{proof}

\paragraph{pre-training fluctuations.}
We will derive the asymptotic distribution of $m\norm{\mathcal{L}(\mathrm{log}_{M_\star}(\hat M_m)}$.
\begin{align}\label{eq:linear-pre-fluctuation-exact}
    m\norm{\mathcal{L}(\mathrm{log}_{M_\star}(\hat M_m))}_{L^2(\mu_\mathrm{down})}^2 &= \norm{x^\top DP(M_\star)[\sqrt{m}\mathrm{log}_{M_\star}(\hat M_m)]\;m_\star}_{L^2(\mu_\mathrm{down})}^2\notag\\&= \beta_\star^\top A_\star \Sigma_{\mathrm{down}}DP(M_\star)[\sqrt{m}\mathrm{log}_{M_\star}(\hat M_m)]^TDP(M_\star)[\sqrt{m}\mathrm{log}_{M_\star}(\hat M_m)] \Sigma_{\mathrm{down}} A_\star^\top \beta_\star\notag\\
    &\distconv  \beta_\star^\top A_\star \Sigma_{\mathrm{down}}DP(M_\star)[Z]^TDP(M_\star)[Z] \Sigma_{\mathrm{down}} A_\star^\top \beta_\star
\end{align}
where the exact form of $DP(M_\star)$ is calculated in~\Cref{eq:DP-Monly} and~\eqref{eq:DProj-gap-proof}, and $Z$ is a mean-zero Gaussian with the covariance $V_\star$ (\Cref{eq:full-gaussian-cov}).

\subsection{Comparison to \cite{cabannes23SSLinterplay}}\label{sec:comparison-ssl}
We now study the following concrete example to compare our asymptotic result with the general upper bound of \citet[Thm.~4]{cabannes23SSLinterplay}. Assume that $\Sigma_{\mathrm{down}}=I_d$ and
\[
\Sigma_{\mathrm{pre}}
=
\begin{pmatrix}
\Sigma_{\mathrm{pre},1} & 0\\
0 & \Sigma_{\mathrm{pre},2}
\end{pmatrix},
\qquad
\Sigma_{\mathrm{pre}}^+
=
\begin{pmatrix}
\Sigma_{\mathrm{pre},1}^+ & 0\\
0 & \Sigma_{\mathrm{pre},2}^+
\end{pmatrix}
\]
with
\[
\Sigma_{\mathrm{pre},1}=\operatorname{diag}(a_1,\dots,a_k),
\qquad
\Sigma_{\mathrm{pre},2}=\operatorname{diag}(b_1,\dots,b_{d-k}),
\]
and
\[
\Sigma_{\mathrm{pre},1}^+=\operatorname{diag}(c_1,\dots,c_k),
\qquad
\Sigma_{\mathrm{pre},2}^+=\operatorname{diag}(e_1,\dots,e_{d-k}).
\]
Furthermore, we assume that all diagonal entries are strictly positive and 
\[
\frac{c_i}{a_i}>\frac{e_j}{b_j},
\qquad
\forall\, i\in\{1,\dots,k\},\ \forall\, j\in\{1,\dots,d-k\}.
\]

In this case, the whitened cross-covariance matrix $C
=
\Sigma_{\mathrm{pre}}^{-1/2}\Sigma_{\mathrm{pre}}^{+}\Sigma_{\mathrm{pre}}^{-1/2}$
is diagonal and given by
\[
C
=
\operatorname{diag}\!\left(
\frac{c_1}{a_1},\dots,\frac{c_k}{a_k},
\frac{e_1}{b_1},\dots,\frac{e_{d-k}}{b_{d-k}}
\right).
\]
By the ordering assumption, the top-$k$ eigenvalues of $C$ are $\frac{c_1}{a_1},\dots,\frac{c_k}{a_k}$, so the constrained population minimizer over $\mathcal M_{d,k}$ is
\[
M_\star
=
\begin{pmatrix}
R & 0\\
0 & 0
\end{pmatrix}
\]
where $R:=\operatorname{diag}(r_1,\dots,r_k)$ and $r_i:=\frac{c_i}{a_i^2}$. Any tangent matrix $H\in T_{M_\star}\mathcal M_{d,k}$ has the block form
\[
H=
\begin{pmatrix}
A & B\\
B^\top & 0
\end{pmatrix},
\qquad
A\in\mathbb S^k,\quad B\in\mathbb R^{k\times(d-k)}.
\]
Let $\mu_i\coloneqq\frac{e_i}{b_i}$, $\lambda_i\coloneqq\frac{c_i}{a_i}$, and $\tau\coloneqq\sum_{i=1}^k \lambda_i^2$.
By Corollary~\eqref{eq:full-gaussian-cov}, the covariance operator $V_\star$ is given by
\[
\langle H_1,V_\star H_2\rangle
=
C_{aa}(H_1',H_2')
-
C_{ab}(H_1',H_2')
-
C_{ab}(H_2',H_1')
+
C_{bb}(H_1',H_2'),
\]
where
\[
H_\ell'=\Sigma_{\mathrm{pre}}^{-1}H_\ell\Sigma_{\mathrm{pre}}^{-1}
=
\begin{pmatrix}
\Sigma_{\mathrm{pre},1}^{-1}A_\ell\Sigma_{\mathrm{pre},1}^{-1}
&
\Sigma_{\mathrm{pre},1}^{-1}B_\ell\Sigma_{\mathrm{pre},2}^{-1}\\
\Sigma_{\mathrm{pre},2}^{-1}B_\ell^\top\Sigma_{\mathrm{pre},1}^{-1}
&
0
\end{pmatrix}.
\]

We now compute the three terms in \eqref{eq:Caa}--\eqref{eq:Cbb}. Using the block-diagonal form of $\Sigma_{\mathrm{pre}}$ and $\Sigma_{\mathrm{pre}}^+$, a direct multiplication gives
\[
H_1'\Sigma_{\mathrm{pre}}
=
\begin{pmatrix}
\Sigma_{\mathrm{pre},1}^{-1}A_1 &
\Sigma_{\mathrm{pre},1}^{-1}B_1\\
\Sigma_{\mathrm{pre},2}^{-1}B_1^\top & 0
\end{pmatrix},
\qquad
H_2'\Sigma_{\mathrm{pre}}
=
\begin{pmatrix}
\Sigma_{\mathrm{pre},1}^{-1}A_2 &
\Sigma_{\mathrm{pre},1}^{-1}B_2\\
\Sigma_{\mathrm{pre},2}^{-1}B_2^\top & 0
\end{pmatrix}.
\]
Hence
\[
\tr(H_1'\Sigma_{\mathrm{pre}}H_2'\Sigma_{\mathrm{pre}})
=
\tr(\Sigma_{\mathrm{pre},1}^{-1}A_1\Sigma_{\mathrm{pre},1}^{-1}A_2)
+
2\tr(\Sigma_{\mathrm{pre},1}^{-1}B_1\Sigma_{\mathrm{pre},2}^{-1}B_2^\top).
\]
Similarly,
\[
\tr(H_1'\Sigma_{\mathrm{pre}}^+H_2'\Sigma_{\mathrm{pre}}^+)
=
\tr(\Sigma_{\mathrm{pre},1}^{-1}A_1\Sigma_{\mathrm{pre},1}^{+}\Sigma_{\mathrm{pre},1}^{-1}A_2\Sigma_{\mathrm{pre},1}^{+})
+
2\tr(\Sigma_{\mathrm{pre},1}^{-1}B_1\Sigma_{\mathrm{pre},2}^{+}\Sigma_{\mathrm{pre},2}^{-1}
B_2^\top
\Sigma_{\mathrm{pre},1}^{+}\Sigma_{\mathrm{pre},1}^{-1}).
\]
Since all these matrices are diagonal, this becomes entrywise
\[
\begin{aligned}
C_{aa}(H_1',H_2')
&=
\sum_{i,j=1}^k
\frac{1+\lambda_i\lambda_j}{a_i a_j}\,
(A_1)_{ij}(A_2)_{ij} +
2\sum_{i=1}^k\sum_{j=1}^{d-k}
\frac{1+\lambda_i \mu_j}{a_i b_j}\,
(B_1)_{ij}(B_2)_{ij}.
\end{aligned}
\]
Since $M_\star = \begin{pmatrix}
R\Sigma_{\mathrm{pre}}&0\\0&0    
\end{pmatrix}$, we obtain
\[
P_\ell=M_\star\Sigma_{\mathrm{pre}}H_\ell'
=
\begin{pmatrix}
RA_\ell\Sigma_{\mathrm{pre},1}^{-1}
&
RB_\ell\Sigma_{\mathrm{pre},2}^{-1}\\
0 & 0
\end{pmatrix}.
\]
Then
\[
\begin{aligned}
C_{ab}(H_1',H_2')
&=
\tr(P_2\Sigma_{\mathrm{pre}}H_1'\Sigma_{\mathrm{pre}}^+)
+
\tr(P_2\Sigma_{\mathrm{pre}}^+H_1'\Sigma_{\mathrm{pre}})\\
&=
\sum_{i,j=1}^k
\frac{r_i(\lambda_i+\lambda_j)}{a_j}\,
(A_1)_{ij}(A_2)_{ij} +
2\sum_{i=1}^k\sum_{j=1}^{d-k}
\frac{\lambda_i^2}{a_i b_j}\,
(B_1)_{ij}(B_2)_{ij}.
\end{aligned}
\]
By symmetry,
\[
\begin{aligned}
C_{ab}(H_2',H_1')
&=
\sum_{i,j=1}^k
\frac{r_j(\lambda_i+\lambda_j)}{a_i}\,
(A_1)_{ij}(A_2)_{ij}+
2\sum_{i=1}^k\sum_{j=1}^{d-k}
\frac{\lambda_i^2}{a_i b_j}\,
(B_1)_{ij}(B_2)_{ij}.
\end{aligned}
\]
We have
\[
Q:=M_\star\Sigma_{\mathrm{pre}}M_\star
=
\begin{pmatrix}
R\Sigma_{\mathrm{pre},1}R & 0\\
0 & 0
\end{pmatrix},
\]
so that $\tr(Q\Sigma_{\mathrm{pre}})=\sum_{i=1}^k \lambda_i^2=\tau$. Also, $\tr(P_\ell\Sigma_{\mathrm{pre}})
=
\sum_{i=1}^k r_i (A_\ell)_{ii}$. Further,
\[
\tr(P_1\Sigma_{\mathrm{pre}}P_2\Sigma_{\mathrm{pre}})
=
\sum_{i,j=1}^k r_i r_j\,(A_1)_{ij}(A_2)_{ij}.
\]
Finally, using again that all matrices are diagonal in block form,
\[
\tr(H_1'\Sigma_{\mathrm{pre}}H_2'\Sigma_{\mathrm{pre}}Q\Sigma_{\mathrm{pre}})
=
\sum_{i,j=1}^k
\frac{\lambda_i^2}{a_i a_j}\,(A_1)_{ij}(A_2)_{ij}
+
\sum_{i=1}^k\sum_{j=1}^{d-k}
\frac{\lambda_i^2}{a_i b_j}\,(B_1)_{ij}(B_2)_{ij}.
\]
Substituting into \eqref{eq:Cbb}, we obtain
\[
\begin{aligned}
C_{bb}(H_1',H_2')
&=
\Big(\sum_{i=1}^k r_i (A_1)_{ii}\Big)
\Big(\sum_{j=1}^k r_j (A_2)_{jj}\Big)\\
&\quad
+
\tau\sum_{i,j=1}^k \frac{1}{a_i a_j}(A_1)_{ij}(A_2)_{ij}
+
2\tau\sum_{i=1}^k\sum_{j=1}^{d-k}\frac{1}{a_i b_j}(B_1)_{ij}(B_2)_{ij}\\
&\quad
+
4\sum_{i,j=1}^k
\frac{\lambda_i^2}{a_i a_j}(A_1)_{ij}(A_2)_{ij}
+
4\sum_{i=1}^k\sum_{j=1}^{d-k}
\frac{\lambda_i^2}{a_i b_j}(B_1)_{ij}(B_2)_{ij}.
\end{aligned}
\]

Substituting the previous expressions into Corollary~\eqref{eq:full-gaussian-cov} yields

\begin{align*}
\langle H_1,V_\star H_2\rangle
&=
\Big(\sum_{i=1}^k r_i (A_1)_{ii}\Big)
\Big(\sum_{j=1}^k r_j (A_2)_{jj}\Big)+\sum_{i=1}^k
\frac{1+\tau+3\lambda_i^2}{a_i^2}\,
(A_1)_{ii}(A_2)_{ii}\\
&\quad
+2\sum_{1\le i<j\le k}
\frac{1+\tau+\lambda_i^2+\lambda_i\lambda_j+\lambda_j^2}{a_i a_j}\,
(A_1)_{ij}(A_2)_{ij}+2\sum_{i=1}^k\sum_{j=1}^{d-k}
\frac{1+\tau+\lambda_i^2}{a_i b_j}\,
(B_1)_{ij}(B_2)_{ij}.
\end{align*}

Therefore, if
\[
Z=
\begin{pmatrix}
Z_{11} & Z_{12}\\
Z_{12}^\top & 0
\end{pmatrix}
\sim \mathcal N(0,V_\star),
\]
then the covariance structure is as follows:
\begin{enumerate}[label=(\roman*)]
\item  The diagonal entries of $Z_{11}$ are jointly Gaussian with covariance
\[
\operatorname{Diag}\!\left(\frac{1+\tau+3\lambda_i^2}{a_i^2}\right)_{i=1}^k
+
rr^\top,
\qquad
r=(r_1,\dots,r_k)^\top.
\]
\item The off-diagonal entries $(Z_{11})_{ij}$, $1\le i<j\le k$, are independent with variances
\[
\frac{2(1+\tau+\lambda_i^2+\lambda_i\lambda_j+\lambda_j^2)}{a_i a_j}.
\]
\item The entries of $Z_{12}$ are independent with variances
\[
\frac{2(1+\tau+\lambda_i^2)}{a_i b_j}.
\]
\end{enumerate}

We now specialize \eqref{eq:linear-pre-fluctuation-exact} to this setting. In this case, $B(M)=\Sigma_{\mathrm{down}}^{1/2}M\Sigma_{\mathrm{down}}^{1/2}=M$ so that the derivative of the spectral projector simplifies to
\[
DP(M_\star)[H]=D\Pi(M_\star)[H].
\]

By \Cref{eq:DProj-gap-proof}, the Fréchet derivative of the projector at \(M_\star\) in direction \(Z\) is
\[
DP(M_\star)[Z]
=
\begin{pmatrix}
0 & R^{-1}Z_{12}\\
Z_{12}^\top R^{-1} & 0
\end{pmatrix}.
\]
Consequently,
\[
DP(M_\star)[Z]^\top DP(M_\star)[Z]
=
\begin{pmatrix}
R^{-1}Z_{12}Z_{12}^\top R^{-1} & 0\\
0 & Z_{12}^\top R^{-2}Z_{12}
\end{pmatrix}.
\]

Substituting this into \eqref{eq:linear-pre-fluctuation-exact} yields
\[
L:=
\beta_\star^\top A_\star \Sigma_{\mathrm{down}}DP(M_\star)[Z]^\top DP(M_\star)[Z]\Sigma_{\mathrm{down}}A_\star^\top\beta_\star
=
\beta_\star^\top A_\star DP(M_\star)[Z]^\top DP(M_\star)[Z]A_\star^\top\beta_\star.
\]
Writing
\[
m_\star:=A_\star^\top\beta_\star
=
\begin{pmatrix}
m_1\\
m_2
\end{pmatrix},
\qquad
m_1\in\mathbb R^k,\quad m_2\in\mathbb R^{d-k},
\]
we obtain
\[
L
=
m_1^\top R^{-1}Z_{12}Z_{12}^\top R^{-1}m_1
+
m_2^\top Z_{12}^\top R^{-2}Z_{12}m_2.
\]
In particular, in the aligned basis where \(m_\star\) lies entirely in the leading \(k\)-dimensional eigenspace, $m_\star=
\begin{pmatrix}
R^{1/2}\beta_\star\\
0
\end{pmatrix}$, the second term vanishes and therefore
\begin{equation}\label{eq:L-isotropic-explicit}
L
=
\beta_\star^\top R^{-1/2}Z_{12}Z_{12}^\top R^{-1/2}\beta_\star
=
\|Z_{12}^\top R^{-1/2}\beta_\star\|_2^2=
\sum_{j=1}^{d-k}
\left(
\sum_{i=1}^k \frac{\beta_{\star,i}}{\sqrt{r_i}}(Z_{12})_{ij}
\right)^2.
\end{equation}

By the explicit covariance structure of \(V_\star\), the entries of \(Z_{12}\) are independent centered Gaussians with variances
\[
\operatorname{Var}\bigl((Z_{12})_{ij}\bigr)
=
\frac{2(1+\tau+\lambda_i^2)}{a_i b_j},
\qquad
1\le i\le k,\quad 1\le j\le d-k.
\]
Hence, for each \(j\),
\[
Y_j:=\sum_{i=1}^k \frac{\beta_{\star,i}}{\sqrt{r_i}}(Z_{12})_{ij}
\]
is centered Gaussian with variance
\[
\operatorname{Var}(Y_j)
=
\sum_{i=1}^k \frac{\beta_{\star,i}^2}{r_i}\,
\frac{2(1+\tau+\lambda_i^2)}{a_i b_j}.
\]
Using \(r_i=\lambda_i/a_i\), this simplifies to
\[
\operatorname{Var}(Y_j)
=
\frac{2}{b_j}
\sum_{i=1}^k
\beta_{\star,i}^2
\frac{(1+\tau+\lambda_i^2)}{\lambda_i}.
\]
Therefore, $L=\sum_{j=1}^{d-k} Y_j^2$ is a weighted chi-square random variable, namely
\begin{equation}\label{eq:L-isotropic-chi-square}
L
\overset{d}{=}
\sum_{j=1}^{d-k} \sigma_j^2 \chi_j^2(1),
\qquad
\sigma_j^2
=
\frac{2}{b_j}
\sum_{i=1}^k
\beta_{\star,i}^2
\frac{(1+\tau+\lambda_i^2)}{\lambda_i},
\end{equation}
where the \(\chi_j^2(1)\) are independent chi-square random variables with one degree of freedom.
\begin{proof}[Proof of~\Cref{cor:linear-concrete}]
The result follows by a direct substitution into the general expression for the pre-training interaction term given in~\Cref{eq:L-isotropic-chi-square}.

Under the diagonal model specified in~\Cref{sec:example_linear_spectral}, we have
\[
b_j = 1, \qquad \beta_{\star,i} = 1, \qquad \lambda_i = \frac{1}{i},
\qquad \text{and} \qquad \tau = \sum_{j=1}^k j^{-2}.
\]
Substituting these quantities into~\Cref{eq:L-isotropic-chi-square}, we obtain
\[
\sigma_j^2
=
2 \sum_{i=1}^k i \bigl(1 + i^{-2} + \tau\bigr),
\]
which does not depend on \(j\). Therefore, the weighted chi-square random variable reduces to
\[
L
=
\left(
2 \sum_{i=1}^k i \bigl(1 + i^{-2} + \tau\bigr)
\right)
\sum_{j=1}^{d-k} \chi_j^2(1)=\left(
2 \sum_{i=1}^k i \bigl(1 + i^{-2} + \tau\bigr)
\right)
\chi_{(d-k)}^2,
\]
establishing the stated convergence in distribution.

The scaling of the expectation follows immediately from the above expression, yielding the claimed order.

\end{proof}

\paragraph{Pre-training asymptotic sub-optimality.}Let
\[
\mathcal Q
:=
\frac12\bigl\langle Z,\,2\Sigma_{\mathrm{pre}}Z\Sigma_{\mathrm{pre}}\bigr\rangle_F
=
\bigl\langle Z,\Sigma_{\mathrm{pre}}Z\Sigma_{\mathrm{pre}}\bigr\rangle_F.
\]
We note that the population Hessian operator at $M_\star$ is given by the map $H\mapsto\langle H,2\Sigma_\mathrm{pre}H\Sigma_\mathrm{pre}\rangle$. A second-order Taylor expansion of $L_\mathrm{spec}$ around $M_\star$, together with the first-order optimality condition, yields

\[
m\bigl(L_\mathrm{spec}(\hat M_m)-L_\mathrm{spec}(M_\star)\bigr)\;\distconv\;\mathcal Q.
\]
We have
\begin{align*}
\mathcal Q
&=
\sum_{i,j=1}^k a_i a_j (Z_{11})_{ij}^2
+
2\sum_{i=1}^k\sum_{u=1}^{d-k} a_i b_u (Z_{12})_{iu}^2\\&=\sum_{i=1}^k a_i^2 (Z_{11})_{ii}^2
+
2\sum_{1\le i<j\le k} a_i a_j (Z_{11})_{ij}^2
+
2\sum_{i=1}^k\sum_{u=1}^{d-k} a_i b_u (Z_{12})_{iu}^2.
\end{align*}

We now compute the law of each term.

Let
\[
z_{\mathrm{diag}}
:=
\bigl((Z_{11})_{11},\dots,(Z_{11})_{kk}\bigr)^\top .
\]
By assumption,
\[
z_{\mathrm{diag}}\sim\mathcal N(0,K),
\qquad
K
=
\operatorname{Diag}\!\left(\frac{1+\tau+3\lambda_i^2}{a_i^2}\right)_{i=1}^k
+
rr^\top,
\qquad
r_i=\frac{\lambda_i}{a_i}.
\]
Hence
\[
\sum_{i=1}^k a_i^2 (Z_{11})_{ii}^2
=
z_{\mathrm{diag}}^\top \operatorname{Diag}(a_1^2,\dots,a_k^2)\,z_{\mathrm{diag}}.
\]
If \(\xi\sim\mathcal N(0,I_k)\), this quadratic form may be written as
\[
z_{\mathrm{diag}}^\top \operatorname{Diag}(a_1^2,\dots,a_k^2)\,z_{\mathrm{diag}}
\;\overset{d}{=}\;
\xi^\top M \xi,
\]
where
\[
M
=
\operatorname{Diag}(a_1,\dots,a_k)\,K\,\operatorname{Diag}(a_1,\dots,a_k)
=
\operatorname{Diag}(1+\tau+3\lambda_i^2)_{i=1}^k
+
\lambda\lambda^\top,
\]
with \(\lambda=(\lambda_1,\dots,\lambda_k)^\top\). Therefore, if
\(\rho_1,\dots,\rho_k\) denote the eigenvalues of \(M\), then
\[
\sum_{i=1}^k a_i^2 (Z_{11})_{ii}^2
\;\overset{d}{=}\;
\sum_{\ell=1}^k \rho_\ell \chi_\ell^2(1).
\]

For \(1\le i<j\le k\), the variables \((Z_{11})_{ij}\) are independent centered
Gaussians with variance
\[
\operatorname{Var}\bigl((Z_{11})_{ij}\bigr)
=
\frac{2(1+\tau+\lambda_i^2+\lambda_i\lambda_j+\lambda_j^2)}{a_i a_j}.
\]
Hence
\[
2a_i a_j (Z_{11})_{ij}^2
\;\overset{d}{=}\;
4\bigl(1+\tau+\lambda_i^2+\lambda_i\lambda_j+\lambda_j^2\bigr)\chi_{ij}^2(1),
\]
where the \(\chi_{ij}^2(1)\) are independent.

For \(1\le i\le k\) and \(1\le j\le d-k\), the variables \((Z_{12})_{ij}\) are
independent centered Gaussians with variance
\[
\operatorname{Var}\bigl((Z_{12})_{ij}\bigr)
=
\frac{2(1+\tau+\lambda_i^2)}{a_i b_j}.
\]
Therefore
\[
2a_i b_j (Z_{12})_{ij}^2
\;\overset{d}{=}\;
4(1+\tau+\lambda_i^2)\widetilde\chi_{ij}^2(1),
\]
where the \(\widetilde\chi_{ij}^2(1)\) are independent. Using Gaussian independence of the orthogonal coordinates, we obtain
\begin{equation}\label{eq:Q-exact-law}
\mathcal Q
\;\overset{d}{=}\;
\sum_{\ell=1}^k \rho_\ell \chi_\ell^2(1)
+
\sum_{1\le i<j\le k}
4\bigl(1+\tau+\lambda_i^2+\lambda_i\lambda_j+\lambda_j^2\bigr)\chi_{ij}^2(1)
+
\sum_{i=1}^k\sum_{j=1}^{d-k}
4(1+\tau+\lambda_i^2)\widetilde\chi_{ij}^2(1),
\end{equation}
where \(\rho_1,\dots,\rho_k\) are the eigenvalues of
\[
M
=
\operatorname{Diag}(1+\tau+3\lambda_i^2)_{i=1}^k+\lambda\lambda^\top.
\]

Taking expectations yields
\begin{align}
\mathbb E[\mathcal Q]
&=
\operatorname{tr}(M)
+
4\sum_{1\le i<j\le k}
\bigl(1+\tau+\lambda_i^2+\lambda_i\lambda_j+\lambda_j^2\bigr)
+
4\sum_{i=1}^k\sum_{u=1}^{d-k}(1+\tau+\lambda_i^2)\notag\\
&=
k(4d-2k-1)(1+\tau)+2(2d-1)\tau+2(\sum_{i=1}^k\lambda_i)^2.
\label{eq:Q-expectation}
\end{align}

\paragraph{Connection to~\cite{cabannes23SSLinterplay}.} We compare our result to the generalization bound of~\citet[Thm. 4]{cabannes23SSLinterplay}. In their result, the contribution of pre-training appears through a product of a conditioning factor and a
sub-optimality term, of the form
\[
\|T_\lambda^{-1}\Pi_{\mathcal F_\lambda} f_\star\|^2
\bigl(L_{\mathrm{spec}}(\hat M)-L_{\mathrm{spec}}(M_\star)\bigr).
\]
We now show how the conditioning factor $\|T_\lambda^{-1}\Pi_{\mathcal F_\lambda} f_\star\|^2$ reduces in our setting.

In the present model with $\lambda=0$, the operator $T_\lambda$
coincides with the covariance matrix $C= \operatorname{diag}(R,0)$. Moreover, the projection onto the feature space is given by
\[
\Pi_{\mathcal F_\lambda} = \operatorname{diag}(I_k,0).
\]
Since the target function satisfies $f_\star(x)=\beta_\star^\top A_\star x$,
its projection onto the feature space depends only on the first $k$
coordinates. Therefore, the action of $T_\lambda^{-1}\Pi_{\mathcal F_\lambda}$
amounts to inverting $R$ on the top-$k$ subspace, yielding
\[
\|T_\lambda^{-1}\Pi_{\mathcal F_\lambda} f_\star\|^2
=
\|C^{-1}A_\star^\top\beta_\star\|_2^2=\|R^{-\frac12}\beta_\star\|_2^2.
\]
Consequently,
\[
m\|T_\lambda^{-1}\Pi_{\mathcal F_\lambda} f_\star\|^2
\bigl(L_{\mathrm{spec}}(\hat M)-L_{\mathrm{spec}}(M_\star)\bigr)
\;\distconv\;
\|R^{-\frac12}\beta_\star\|_2^2\,\mathcal Q.
\]
In the setting where $\Sigma_{\mathrm{pre}} = I_d$, $\Sigma_{\mathrm{pre}}^{+} = \diag(1,1/2,\ldots,1/d)$, and $\beta_\star=(1,\cdots,1)^\top$, we have 
\[
\|R^{-\frac12}\beta_\star\|_2^2 = \sum_{i=1}^k i = \Theta(k^2),
\]
while
\[\E[\mathcal{Q}]=k(4d-2k-1)(1+\tau)+2(2d-1)\tau+2(\sum_{i=1}^k\lambda_i)^2=\Theta\big(k(d-k)\big).\]
Consequently, the expected pre-training contribution satisfies
\[
\mathbb E\!\left[\|R^{-\frac12}\beta_\star\|_2^2\,\mathcal Q\right]
=
\|R^{-\frac12}\beta_\star\|_2^2\,\mathbb E[\mathcal Q] = O\big(k^3(d-k)\big).
\]
Thus, the pre-training contribution scales as $O(k^3(d-k))$ in this model.

\section{Proofs of Section~\ref{sec:ex:factor}}
\label{app:proof-factor}

This appendix verifies that the factor-model example in~\Cref{sec:ex:factor}
satisfies the assumptions of~\Cref{thm:master-compatible}.  The downstream verification is reduced to the linear-feature argument used in
Appendix~\ref{app:proof-cor-linear}: after passing to the quotient descriptor \(M=AA^\top\), the feature map can be written as
\[
\phi(x,M)=s(M)^\top(I_d+M)^{-1}x=B_M^\top x,
\qquad
B_M\coloneqq (I_d+M)^{-1}s(M),
\]
where \(M\mapsto B_M\) is smooth and \(B_M\) has rank \(k\) locally around
\(M_\star\).  Consequently, the quotient geometry and
Assumptions~\ref{ass:well-posedness}--\ref{ass:stable-null-span} follow by the same linear-feature verification, which we summarize below.  The only
model-specific pretraining step is to verify the Riemannian \(M\)-estimation
condition, Assumption~\ref{ass:riem_mestimation}, for the quotient
estimator \(\hat M_m\).  After these checks, the corollary follows by directly invoking~\Cref{thm:master-compatible}.

Throughout this appendix,
\[
M_\star \coloneqq A_\star A_\star^\top,
\qquad
\Sigma_x\coloneqq I_d+M_\star,
\]
and $A_\star\in\R^{d\times k}$ has full column rank.  Thus
$M_\star\in\mathcal M_{d,k}\coloneqq\{M\succeq0:\rank(M)=k\}$ and
$\Sigma_x\succ0$.

We now derive the exact maximum likelihood estimator of the
quotient-level parameter $M = AA^\top$. Recall the unlabeled pretraining model
\begin{equation}
\label{eq:factor-pretrain-model}
X = A_\star Z + \mu,
\qquad
Z \sim \mathcal N(0,I_k),
\quad
\mu \sim \mathcal N(0,I_d),
\end{equation}
with $Z$ and $\mu$ independent and $k \ll d$. By marginalizing over the latent factor $Z$, the pretraining covariate $X$
is Gaussian with covariance
\begin{equation}
\label{eq:factor-marginal}
X \sim \mathcal N\bigl(0,\; \Sigma_\star\bigr),
\qquad
\Sigma_\star = A_\star A_\star^\top + I_d = M_\star + I_d,
\end{equation}
where $M_\star = A_\star A_\star^\top$ is a rank-$k$ positive semidefinite matrix.

The negative log-likelihood of the pretraining sample
$\{X_i\}_{i=1}^m$ as a function of $M \succeq 0$ is
\begin{equation}
\label{eq:factor-loglik}
\hat L_m(M)
=
\frac{1}{2}\Big(
\log\det(I_d + M)
+
\tr\!\big(S_m (I_d + M)^{-1}\big)
\Big),
\end{equation}
up to an additive constant independent of $M$. Here, we defined $S_m\coloneqq \frac{1}{m}\sum_{i=1}^{m}X_iX_i^\top$. This maximum likelihood problem is exactly \emph{probabilistic principal component analysis} (PPCA); see \citet{tipping1999probabilistic}.

Let $S_m = U\Lambda U^\top $ be the eigenvalue decomposition of the sample covariance, and $\Lambda=\mathrm{diag}(\lambda_1,\cdots,\lambda_d)$ where $\lambda_1\ge\cdots\geq\lambda_d$. Let $U_k\in\R^{d\times k}$ be the $k$-dimensional leading eigenspace.
\begin{lemma}[PPCA maximum likelihood estimator for $M$]\label{lem:ppca-mle}
For the model $X\sim\mathcal N(0,I_d+M_\star)$ with $\mathrm{rank}(M_\star) = k$, a maximum likelihood estimator is
\begin{align}\label{eq:mle-factor-pre}
\hat M_m \;=\; U_k\bigl(\mathrm{diag}((\lambda_1 - 1)_+ \cdots,(\lambda_k - 1)_+)\bigr)U_k^\top.
\end{align}
\end{lemma}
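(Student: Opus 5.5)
The plan is to reduce the minimization of \eqref{eq:factor-loglik} over $\{M\succeq 0:\rank(M)\le k\}$ to a scalar problem on the eigenvalues of $\Sigma\coloneqq I_d+M$. The key step is a von Neumann trace inequality, after which the remaining work is one-dimensional calculus. Every feasible $M$ yields $\Sigma=I_d+M$ with eigenvalues $\sigma_1\ge\cdots\ge\sigma_d\ge1$, at most $k$ of which exceed $1$. Conversely, any such spectrum is realized by a rank-$\le k$ PSD $M$. Since $\log\det\Sigma=\sum_i\log\sigma_i$ depends only on the spectrum, for a fixed spectrum we only need to minimize $\tr(S_m\Sigma^{-1})$ over the eigenvectors of $\Sigma$.

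For that inner problem I will use von Neumann's (or Ruhe's) trace inequality. The eigenvalues of $\Sigma^{-1}$ are $1/\sigma_1\le\cdots\le1/\sigma_d$, arranged in increasing order, so
\[
\tr(S_m\Sigma^{-1})\ \ge\ \sum_{i=1}^d \lambda_i/\sigma_i .
\]
Equality holds when $\Sigma$ shares the eigenbasis $U$ of $S_m$, with the largest $\sigma_i$ paired with the largest $\lambda_i$. We may therefore take $\Sigma=U\,\diag(\sigma_1,\dots,\sigma_d)U^\top$. The constraint then forces $\sigma_i=1$ for $i>k$, since the $k$ eigenvalues exceeding $1$ sit on the top-$k$ eigenvectors; the constraint is not violated because at most $k$ slots are allowed above $1$. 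One check is needed here: putting the free coordinates on the top-$k$ eigenvalues is optimal. This follows by a rearrangement argument, because the per-coordinate gain from freeing $\sigma_i$ is monotone in $\lambda_i$.

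The objective now separates as
\[
\tfrac12\sum_{i\le k}\big(\log\sigma_i+\lambda_i/\sigma_i\big)+\text{const}.
\]
For each $i\le k$ we minimize $g(\sigma)=\log\sigma+\lambda_i/\sigma$ over $\sigma\ge1$. Since $g'(\sigma)=(\sigma-\lambda_i)/\sigma^2$, the minimizer is $\sigma_i=\max(\lambda_i,1)$. Hence $M=\Sigma-I_d=U_k\diag((\lambda_i-1)_+)U_k^\top$, which is exactly \eqref{eq:mle-factor-pre}. I will also note that the minimum is attained because the objective is coercive ($g\to\infty$ as $\sigma\to\infty$). This is also where "a" maximum likelihood estimator enters: ties in the $\lambda_i$ make $U_k$ nonunique, and any choice works.

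The main obstacle is the trace-inequality step together with the rank constraint. The constraint is on $M$, not on $\Sigma$, so I must argue carefully that restricting to a common eigenbasis loses nothing, and that the $k$ non-unit eigenvalues go to the $k$ largest $\lambda_i$. I would first try to handle this by directly comparing the optimal value $\sum_{i\in I}h(\lambda_i)$, with $h(\lambda)=\min_{\sigma\ge1}(\log\sigma+\lambda/\sigma)-\lambda$ nonincreasing, across index sets $I$ of size $k$. This shows $I=\{1,\dots,k\}$ is optimal. Alternatively, I would simply cite \citet{tipping1999probabilistic}, where this closed form is derived.
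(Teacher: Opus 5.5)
Your argument is correct, but it takes a different route from the paper, whose proof consists only of a citation to \citet{tipping1999probabilistic}. That derivation works in the loading parameterization. It solves the stationarity equation $S_m(I_d+AA^\top)^{-1}A=A$ using an SVD of $A$, and shows that the left singular vectors must be eigenvectors of $S_m$ with squared singular values $\lambda_j-1$ or $0$. It then compares the likelihood across these stationary points; choices other than the top eigenvectors are saddles. You instead optimize over $\Sigma=I_d+M$ directly. The feasible set $\{\Sigma\succeq I_d,\ \rank(\Sigma-I_d)\le k\}$ is orthogonally invariant and $\log\det\Sigma$ depends only on the spectrum, so von Neumann's inequality gives an attained lower bound that decouples into $k$ scalar problems on $[1,\infty)$. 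Your route gives a global optimality certificate without enumerating critical points or arguing separately that a maximizer exists, and the truncation $(\cdot)_+$ falls out of the constraint $\sigma\ge 1$. The stationary-point analysis, in exchange, describes the whole likelihood landscape and also handles an unknown noise variance.

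Two small points. First, the extra rearrangement check you flag is already contained in the von Neumann step: the bound holds for every eigenbasis of $\Sigma$, and a sorted feasible spectrum automatically has $\sigma_i=1$ for $i>k$. Second, you should state that the coordinatewise minimizers $\max(\lambda_i,1)$ are nonincreasing in $i$, so dropping the ordering constraint $\sigma_1\ge\cdots\ge\sigma_k$ loses nothing.

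Your feasible set $\rank(M)\le k$ is also the right reading of the lemma. Over all $M\succeq 0$, as \eqref{eq:factor-loglik} is literally phrased, the minimizer would keep every term $(\lambda_i-1)_+$ for $i\le d$. Over rank exactly $k$, the infimum is not attained when $\lambda_k\le 1$.
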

\begin{proof}
See \citet{tipping1999probabilistic}.
\end{proof}

\begin{lemma}[Downstream reduction for the factor model]
\label{lem:factor:downstream-reduction}
The factor-model quotient feature map
\[
\phi(x,M)=s(M)^\top(I_d+M)^{-1}x
\]
satisfies Assumptions~\ref{ass:well-posedness}--\ref{ass:stable-null-span} by
the same argument as the linear spectral feature map, with $s(M)$ replaced by
$B_M^\top$, where
\[
B_M\coloneqq (I_d+M)^{-1}s(M).
\]
\end{lemma}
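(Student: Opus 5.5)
The plan is to reduce everything to the linear spectral verification of \Cref{app:proof-cor-linear}, with $s(M)$ replaced by the matrix $B_M^\top = s(M)^\top (I_d+M)^{-1}\in\R^{k\times d}$, so that $\phi(x,M)=B_M^\top x$. Each lemma there used only three properties of the feature matrix: (a) $M\mapsto s(M)$ is $C^2$ with locally bounded operator norm and derivative on a neighborhood $\mathcal W$ of $M_\star$; (b) the feature matrix has rank $k$ on $\mathcal W$; and (c) $X\sim\mu_{\mathrm{down}}$ has a density, $\Sigma_{\mathrm{down}}\succ0$, and enough moments. So the first step is to check (a) and (b) for $B_M$. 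For (a), $M\mapsto (I_d+M)^{-1}$ is smooth (indeed real-analytic) on the open set $\{M\succeq 0\}\supset\mathcal M_{d,k}$, since $I_d+M\succeq I_d$ is invertible with $\|(I_d+M)^{-1}\|_{\mathrm{op}}\le 1$. Hence $B_M=(I_d+M)^{-1}s(M)$ is $C^2$ as a product of $C^2$ maps, and its operator norm and derivative are bounded on a relatively compact $\mathcal U\Subset\mathcal W$. For (b), $(I_d+M)^{-1}$ is invertible and $s(M)$ has rank $k$, because $s(M)s(M)^\top=M$ in the factor convention. Therefore $\rank(B_M)=k$.

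Next I would run through the linear lemmas in order. For \Cref{lem:linear:moments}, the bounds $\|\phi(X,M)\|\le\|B_M\|_{\mathrm{op}}\|X\|$ and $\|D_M\phi\|_{\mathrm{op}}\le\|D_MB_M\|_{\mathrm{op}}\|X\|$ give \Cref{ass:phi-C1-M} and the first part of \Cref{ass:moments}. Here $X\sim\mathcal N(0,I_d+M_\star)$ has all moments. For \Cref{lem:linear:Sigma-stable}, $\Sigma(M)=B_M^\top\Sigma_{\mathrm{down}}B_M$ with $\Sigma_{\mathrm{down}}=I_d+M_\star\succ0$ is positive definite. By continuity and compactness it has a uniform eigenvalue lower bound $\kappa_\Sigma$, and $d_{\mathrm{eff}}=k$. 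The leverage and signal moment bounds of \Cref{lem:linear:leverage-moments} then follow verbatim, using $q_M(X)\le\kappa_\Sigma^{-1}\|B_M\|_{\mathrm{op}}^2\|X\|^2$ and $|e_M(X)|\le C\|X\|$. The latter uses $f_\star(x)=\beta_\star^\top A_\star^\top(I_d+M_\star)^{-1}x=\theta_\star^\top\phi(x,M_\star)$ for the appropriate $\theta_\star$, because $A_\star=s(M_\star)Q$ for some $Q\in O(k)$.

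Well-posedness (\Cref{lem:linear:wellposed}) holds almost surely because $B_M^\top X$ has a density on $\R^k$: it is a surjective linear image of a nondegenerate Gaussian. This holds conditionally on the independent $\hat M_m$, for $n\ge k$. Stable null span (\Cref{lem:linear:stable-null-span}) holds with $\mathcal B_0=\{0\}$ because $T_M$ is injective when $\Sigma(M)\succ0$.

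The main obstacle I expect is the bookkeeping around the section: the paper's linear case uses $s(M)^\top s(M)=M$ with $s(M)\in\R^{k\times d}$, while the factor case needs $s(M)s(M)^\top=M$ with $s(M)\in\R^{d\times k}$. I would resolve this by simply taking the transpose of the section constructed in \Cref{sec:quot-feat-linear}, which preserves $C^2$ smoothness. I would also confirm that well-specification holds at the descriptor level under this transposed section. Everything else is a mechanical substitution.
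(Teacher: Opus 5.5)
Your proposal is correct and follows the paper's proof essentially step for step. You verify that $B_M=(I_d+M)^{-1}s(M)$ is locally $C^2$ with rank $k$, so that $\phi(x,M)=B_M^\top x$ is a rank-$k$ linear feature map, and then rerun the linear-spectral lemmas with $\Sigma(M)=B_M^\top\Sigma_xB_M\succ 0$; your extra bookkeeping on the transposed section and on $\theta_\star=Q\beta_\star$ is detail the paper leaves implicit. One small slip: the PSD cone is closed, not open, so state smoothness of $M\mapsto(I_d+M)^{-1}$ on the open set $\{M=M^\top: I_d+M\succ 0\}$, which contains $\mathcal M_{d,k}$.
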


\begin{proof}
The map $M\mapsto s(M)$ is $C^2$ by the local-section construction, and
$M\mapsto(I_d+M)^{-1}$ is smooth because $I_d+M\succ0$ for all $M\succeq0$.
Hence
\[
M\mapsto B_M=(I_d+M)^{-1}s(M)
\]
is $C^2$ locally.  Since $s(M)$ has rank $k$ and $I_d+M$ is invertible, $B_M$
has rank $k$ locally.  The quotient feature map is therefore a rank-$k$ linear
feature map in the covariate:
\[
\phi(x,M)=B_M^\top x.
\]

All downstream verifications are then identical to the linear spectral case.
Indeed, local boundedness of $B_M$ and $D_MB_M$ gives the local-uniform feature and derivative moment bounds from Gaussian moments of $X$ (\Cref{lem:linear:moments}). The downstream
covariance is
\[
\Sigma(M)
=
\E[\phi(X,M)\phi(X,M)^\top]
=
B_M^\top\Sigma_xB_M,
\]
which is positive definite because $B_M$ has rank $k$ and $\Sigma_x\succ0$. Thus, the stable rank/eigengap condition holds locally and $d_{\mathrm{eff}}(M)=k$ (\Cref{lem:linear:Sigma-stable}). The leverage and leverage-weighted signal moment bounds follow as in the linear case from
\[
q_M(X)\le C\|\phi(X,M)\|^2\le C\|X\|^2
\]
and from the fact that both $f_\star$ and $\Pi_Mf_\star$ are linear functions of $X$ with locally bounded coefficients (\Cref{lem:linear:leverage-moments}). Empirical well-posedness follows because
$B_M^\top X$ has a density on $\R^k$, so the feature design has rank $k$ almost
surely for $n\ge k$. Finally, since $\Sigma(M)\succ0$, the
coefficient null space $\ker(T_M)$ is $\{0\}$, so the stable null-span condition holds trivially (\Cref{lem:linear:stable-null-span}).
\end{proof}

\paragraph{Verification of~\Cref{ass:riem_mestimation}.} Next, we will prove the maximum likelihood estimator in Equation~\ref{eq:mle-factor-pre} satisfies~\Cref{ass:riem_mestimation}.
\begin{lemma}\label{lem:factor:D5}
Assume \(A_\star\) has full column rank, so that
\(M_\star=A_\star A_\star^\top\in\mathcal M_{d,k}\) and
\(\lambda_k(M_\star)>0\).  Let \(L(M)=\E[\hat L_m(M)]\).  Then Assumption~\ref{ass:riem_mestimation} holds at \(M_\star\).    
\end{lemma}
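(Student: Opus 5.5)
We verify the five parts of \Cref{ass:riem_mestimation} in turn, following the template of \Cref{lem:linear:D5}. Throughout we use the population objective
\[
L(M)=\tfrac12\big(\log\det(I_d+M)+\tr(\Sigma_x(I_d+M)^{-1})\big),
\]
which equals, up to the constant $\tfrac12(\log\det\Sigma_x+d)$, the Gaussian KL divergence $\mathrm{KL}(\mathcal N(0,\Sigma_x)\,\|\,\mathcal N(0,I_d+M))$. The per-sample loss $\ell(M;x)=\tfrac12(\log\det(I_d+M)+x^\top(I_d+M)^{-1}x)$ is real-analytic in $M$ on all of $\{M\succeq0\}$, because $I_d+M\succeq I_d$. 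Its $M$-dependence enters only through $(I_d+M)^{-1}$, whose norm is at most one. Moreover, every derivative of $\ell$ in $M$ is bounded by $C(1+\|x\|^2)$ uniformly over $M\succeq 0$.

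\emph{(i) Identification and separation.} KL is nonnegative and vanishes iff $I_d+M=\Sigma_x$. Hence $M_\star$ is the unique minimizer on $\mathcal M_{d,k}$, and indeed on all PSD matrices. For separation, we show that $L$ is coercive on $\mathcal M_{d,k}$. If $\|M\|\to\infty$, then $\log\det(I_d+M)\to\infty$, while the trace term is nonnegative. So a minimizing sequence $M_j$ with $d_{\mathcal M}(M_j,M_\star)\ge\epsilon$ would be bounded. It would then have a limit point $\bar M$ in the closure $\{M\succeq0,\ \rank M\le k\}$ with $L(\bar M)=L(M_\star)$, forcing $\bar M=M_\star$ by the strict KL uniqueness. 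This is the step that needs care: $d_{\mathcal M}$ is the intrinsic distance on the (noncomplete) manifold. We must argue that $d_{\mathcal M}(M_j,M_\star)\ge\epsilon$ together with $M_j\to M_\star$ in Frobenius norm is impossible. This holds because $M_\star$ has rank exactly $k$, so near $M_\star$ the manifold is an embedded submanifold and the intrinsic and ambient distances are locally equivalent.

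\emph{(ii) Uniform LLN and localization.} On the compact set $K_{\epsilon'}=\exp_{M_\star}(\overline B(M_\star,\epsilon'))$ we have $|\ell(M;x)|\le C(1+\|x\|^2)$. The loss is continuous in $M$, and Gaussian moments give an integrable envelope, so \citet[Lemma~2.4]{newey1994largesample} yields the uniform LLN. For localization we use \Cref{lem:ppca-mle} directly rather than abstract argmin arguments. Since $S_m\to\Sigma_x$ a.s., and $\Sigma_x$ has eigenvalues $1+\lambda_i(M_\star)>1$ for $i\le k$ with an eigengap to the remaining eigenvalues (all equal to $1$), the top-$k$ eigenpairs of $S_m$ converge by Davis--Kahan. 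It follows that $\hat M_m\to M_\star$ a.s. In particular, $\rank\hat M_m=k$ eventually and $\hat M_m\in K_{\epsilon'}$ with probability tending to one.

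\emph{(iii) Smoothness and score moments.} The Euclidean gradient is
\[
\nabla\ell=\tfrac12\big(W-WxxW\big),\qquad W=(I_d+M)^{-1},
\]
and its tangent projection has norm at most $C(1+\|x\|^2)$. This is square integrable under Gaussianity.

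\emph{(iv) Nondegeneracy.} The Euclidean Hessian of $L$ at $M_\star$ is the Fisher information of a Gaussian covariance family:
\[
H\mapsto\tfrac12\Sigma_x^{-1}H\Sigma_x^{-1}.
\]
At a critical point of the ambient function, the Riemannian Hessian on $T_{M_\star}\mathcal M_{d,k}$ is the restriction of the Euclidean Hessian plus a curvature term involving $\nabla L(M_\star)$. Here $\nabla L(M_\star)=0$ because $M_\star$ is also the unconstrained PSD minimizer, so the curvature term vanishes. The resulting quadratic form $\tfrac12\|\Sigma_x^{-1/2}H\Sigma_x^{-1/2}\|_F^2$ is positive definite on the tangent space.

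\emph{(v) Uniform transported Hessian convergence.} In normal coordinates the second derivatives of the pulled-back loss are continuous in $v$. They are dominated on $\overline B(M_\star,\epsilon')$ by $C_K(1+\|x\|^2)$, using boundedness of the derivatives of $\exp_{M_\star}$ and of parallel transport on this compact ball. Another application of \citet[Lemma~2.4]{newey1994largesample} therefore gives part (v).

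We expect the main obstacle to be the separation step in (i), namely relating the intrinsic distance to Frobenius convergence near the boundary strata of $\mathcal M_{d,k}$. The remaining steps are routine once the uniform $(1+\|x\|^2)$ envelope is in hand.
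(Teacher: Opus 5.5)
Your proposal is correct and follows the paper's proof essentially step for step. Like the paper, you use the KL representation plus coercivity for identification and separation, Newey--McFadden's Lemma~2.4 with a $C(1+\|x\|^2)$ envelope for the uniform LLN and the transported-Hessian convergence, PPCA consistency via Davis--Kahan and Weyl for localization, and the Gaussian Fisher form $\tfrac12\tr(\Sigma_x^{-1}H\Sigma_x^{-1}H)$ for nondegeneracy.

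Two of your steps make explicit what the paper leaves implicit: relating intrinsic distance to Frobenius convergence at the rank-$k$ point $M_\star$ in the separation argument, and noting that the Weingarten term drops out because $\nabla L(M_\star)=0$. One small imprecision: when $M_\star$ has repeated eigenvalues, the individual top-$k$ eigenpairs of $S_m$ need not converge. What converges, by the eigengap, is the top-$k$ projector and the rank-$k$ truncation, and that is all you need for $\hat M_m\to M_\star$.
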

\begin{proof}
    We verify the five parts of~\Cref{ass:riem_mestimation}.

    \paragraph{(i) Identification and separation.}
The population objective is
\[
L(M)
=
\frac12\left[
\log\det(I_d+M)
+
\tr\!\big(\Sigma_x(I_d+M)^{-1}\big)
\right],
\qquad
\Sigma_x=I_d+M_\star .
\]
This is the Gaussian negative log-likelihood for covariance \(I_d+M\), up to an
additive constant.  Equivalently,
\[
L(M)-L(M_\star)
=
\frac12\left[
\tr\!\big(\Sigma_x(I_d+M)^{-1}\big)
-
\log\det\!\big(\Sigma_x(I_d+M)^{-1}\big)
-d
\right].
\]
The bracketed term is nonnegative and vanishes if and only if
\(I_d+M=\Sigma_x\), i.e. \(M=M_\star\).  Thus \(M_\star\) is the unique minimizer
of \(L\) on \(\mathcal M_{d,k}\).

The separation condition follows from uniqueness and coercivity.  Indeed,
\(L(M)\to\infty\) whenever \(\|M\|_F\to\infty\) within \(\mathcal M_{d,k}\),
because \(\log\det(I_d+M)\to\infty\).  If separation failed for some
\(\epsilon>0\), there would exist \(M_j\in\mathcal M_{d,k}\) such that
\[
d_{\mathcal M}(M_j,M_\star)\ge \epsilon,
\qquad
L(M_j)\downarrow L(M_\star).
\]
By coercivity, a subsequence is bounded.  Since the PSD cone is closed, a
further subsequence converges to some \(M_\infty\succeq0\).  By continuity of
the Gaussian likelihood on the PSD cone,
\(L(M_\infty)=L(M_\star)\), so the uniqueness statement above gives
\(M_\infty=M_\star\), contradicting
\(d_{\mathcal M}(M_j,M_\star)\ge\epsilon\).  Hence, for every \(\epsilon>0\),
\[
\inf_{M\in\mathcal M_{d,k}:d_{\mathcal M}(M,M_\star)\ge \epsilon}
\bigl(L(M)-L(M_\star)\bigr)>0.
\]
\paragraph{(ii) Uniform LLN on a compact set.}
Let $K_{\epsilon'}
\coloneqq
\exp_{M_\star}\!\big(\overline B(M_\star,\epsilon')\big)$ be the compact normal-coordinate set appearing in Assumption~\ref{ass:riem_mestimation}.
For each fixed \(x\), the map
\[
M\mapsto
\ell_{\mathrm{pre}}(M;x)
=
\frac12\left[
\log\det(I_d+M)+x^\top(I_d+M)^{-1}x
\right]
\]
is continuous on \(K_{\epsilon'}\).  Moreover, since \(K_{\epsilon'}\) is compact,
\(\log\det(I_d+M)\) is uniformly bounded on \(K_{\epsilon'}\), and
\(\|(I_d+M)^{-1}\|_{\mathrm{op}}\le1\) for all \(M\succeq0\).  Therefore
\[
\sup_{M\in K_{\epsilon'}}
|\ell_{\mathrm{pre}}(M;x)|
\le
C_{K}(1+\|x\|^2).
\]
The right-hand side is integrable because \(X\sim\mathcal N(0,\Sigma_x)\).
Thus, by \citet[Lemma~2.4]{newey1994largesample},
\[
\sup_{M\in K_{\epsilon'}}
|\hat L_m(M)-L(M)|
\xrightarrow{\mathbb P}0.
\]

It remains to verify localization of \(\hat M_m\).  We prove consistency of the
PPCA estimator.  By the strong law of large numbers,
\(S_m\to\Sigma_x\) almost surely in operator norm.  Fix an outcome in this
almost-sure event and write $\Delta_m\coloneqq S_m-\Sigma_x$. Thus, $\|\Delta_m\|_{\mathrm{op}}\to0$. Let \(P_\star\) denote the orthogonal projector onto the top-\(k\) eigenspace of
\(\Sigma_x\), and let $\hat P_m\coloneqq \sum_{j=1}^k \hat u_j\hat u_j^\top$ be the projector onto the top-\(k\) eigenspace of \(S_m\).  By the
Davis--Kahan \(\sin\Theta\) theorem~\citep{davis1970rotation}, for all \(m\) large enough,
\begin{equation}
\label{eq:dk}
\|\hat P_m-P_\star\|_{\mathrm{op}}
\le
\frac{2\|\Delta_m\|_{\mathrm{op}}}{\lambda_k(\Sigma_x)-\lambda_{k+1}(\Sigma_x)}
=
\frac{2\|\Delta_m\|_{\mathrm{op}}}{\lambda_k(\Sigma_x)-1}
\longrightarrow 0.
\end{equation}
Thus \(\hat P_m\to P_\star\) in operator norm, hence also in Frobenius norm.

Let $\gamma\coloneqq \lambda_k(\Sigma_x)-1>0$. By Weyl's inequality, $|\hat\lambda_i-\lambda_i(\Sigma_x)|
\le
\|\Delta_m\|_{\mathrm{op}}$ for all $i\in[d]$. Hence, for all \(m\) large enough so that
\(\|\Delta_m\|_{\mathrm{op}}\le \gamma/2\),
\[
\hat\lambda_k
\ge
\lambda_k(\Sigma_x)-\|\Delta_m\|_{\mathrm{op}}
\ge
1+\gamma/2.
\]
Therefore \(\hat\lambda_i>1\) for all \(i\le k\), and the \((\cdot)_+\)
truncation in the PPCA estimator is inactive.

Define the rank-\(k\) truncation map
\[
\Pi_k(B)
\coloneqq
\sum_{j=1}^k \lambda_j(B)u_j(B)u_j(B)^\top .
\]
Then, $\hat M_m=\Pi_k(S_m)-\hat P_m$. Similarly, since
\(\lambda_{k+1}(\Sigma_x)=\cdots=\lambda_d(\Sigma_x)=1\),
\[
\Pi_k(\Sigma_x)
=
M_\star+P_\star,
\qquad
M_\star=\Pi_k(\Sigma_x)-P_\star.
\]
Consequently,
\begin{equation}
\label{eq:factor-consistency-decomp}
\hat M_m-M_\star
=
\bigl(\Pi_k(S_m)-\Pi_k(\Sigma_x)\bigr)
-
\bigl(\hat P_m-P_\star\bigr).
\end{equation}
The projector term converges to zero by \eqref{eq:dk}.  For the truncation term,
the top-\(k\) spectral truncation is continuous at \(\Sigma_x\) because of the
eigengap at \(k\).  Hence
\[
\|\Pi_k(S_m)-\Pi_k(\Sigma_x)\|_F\to0.
\]
Combining this with \eqref{eq:factor-consistency-decomp} gives $\hat M_m\xrightarrow{\mathbb P} M_\star$. Therefore, for the compact normal-coordinate set \(K_{\epsilon'}\),
\[
\Pr(\hat M_m\in K_{\epsilon'})\to1
\]
after fixing \(\epsilon'>0\) sufficiently small.

\paragraph{(iii) Local \(C^2\) smoothness and score moments.}
For every \(x\), the map
\[
M\mapsto \ell_{\mathrm{pre}}(M;x)
=
\frac12\left[
\log\det(I_d+M)+x^\top(I_d+M)^{-1}x
\right]
\]
is \(C^2\) on a neighborhood of \(M_\star\).  Its Euclidean gradient is
\[
\nabla_M\ell_{\mathrm{pre}}(M;x)
=
\frac12\left[
(I_d+M)^{-1}
-
(I_d+M)^{-1}xx^\top(I_d+M)^{-1}
\right].
\]
Thus, at \(M_\star\),
\[
\|\operatorname{grad}\ell_{\mathrm{pre}}(M_\star;X)\|_{M_\star}
\le
C(1+\|X\|^2).
\]
Since \(X\) is Gaussian,
\[
\E\Big[
\|\operatorname{grad}\ell_{\mathrm{pre}}(M_\star;X)\|_{M_\star}^2
\Big]
<\infty.
\]

\paragraph{(iv) Nondegenerate minimizer.}
The population gradient is
\[
\nabla L(M)
=
\frac12\left[
(I_d+M)^{-1}
-
(I_d+M)^{-1}\Sigma_x(I_d+M)^{-1}
\right],
\]
which vanishes at \(M=M_\star\).  The population Hessian at \(M_\star\) has
quadratic form
\[
D^2L(M_\star)[H,H]
=
\frac12\Tr\!\left(\Sigma_x^{-1}H\Sigma_x^{-1}H\right),
\qquad
H\in T_{M_\star}\mathcal M_{d,k}.
\]
Because \(\Sigma_x\succ0\), this quadratic form is strictly positive for every
nonzero \(H\in T_{M_\star}\mathcal M_{d,k}\).  Hence the Riemannian Hessian
\[
H_\star
=
\operatorname{Hess}L(M_\star):
T_{M_\star}\mathcal M_{d,k}\to T_{M_\star}\mathcal M_{d,k}
\]
is positive definite, and in particular invertible.

\paragraph{(v) Uniform transported Hessian convergence.}
On the compact normal-coordinate set \(K_{\epsilon'}\), the transported Hessian
entries are continuous functions of \(M\).  Moreover, the derivatives of the
exponential map, parallel transport, and the local coordinate maps are bounded
on \(K_{\epsilon'}\), while the first two derivatives of
\(\ell_{\mathrm{pre}}(M;X)\) are dominated by an integrable envelope of the form $C_K(1+\|X\|^2)$. Applying \citet[Lemma~2.4]{newey1994largesample} entrywise to the finite-dimensional
matrix representation of the transported Hessian gives
\[
\sup_{M\in K_{\epsilon'}}
\bigl\|
\widetilde H_m(M)-\widetilde H(M)
\bigr\|
\xrightarrow{\mathbb P}0,
\]
where
\[
\widetilde H_m(M)
=
\mathcal P_{M\to M_\star}
\circ
\operatorname{Hess}\hat L_m(M)
\circ
\mathcal P_{M_\star\to M},
\]
and
\[
\widetilde H(M)
=
\mathcal P_{M\to M_\star}
\circ
\operatorname{Hess}L(M)
\circ
\mathcal P_{M_\star\to M}.
\]
This verifies Assumption~\ref{ass:riem_mestimation}(v).
\end{proof}

\paragraph{Explicit form of the Gaussian tangent limit.} \Cref{lem:factor:D5} gives the abstract descriptor CLT.  We now record the explicit form of the limiting Gaussian tangent vector \(Z\), which is used in the evaluation of the pretraining interaction term. Let $X\sim\mathcal N(0,\Sigma_x)$ with
\[
\Sigma_x = I_d + M_\star,\qquad \operatorname{rank}(M_\star)=k.
\]
First, we write the eigendecomposition
\[
\Sigma_x = U_\star
\begin{pmatrix}
I_k + D & 0\\
0 & I_{d-k}
\end{pmatrix}
U_\star^\top,
\qquad
D=\operatorname{diag}(\sigma_1,\dots,\sigma_k),\ \ \sigma_k>0,
\]
so $M_\star = U_\star \begin{psmallmatrix} D & 0\\ 0 & 0\end{psmallmatrix}U_\star^\top$.
Let the sample covariance be $S_m=\frac1m\sum_{i=1}^m X_iX_i^\top$.
Recall the (fixed-$k$) PPCA MLE can be written as
\[
\hat M_m = \hat U_k\,\operatorname{diag}(\hat\lambda_1-1,\dots,\hat\lambda_k-1)\,\hat U_k^\top,
\]
where $\hat\lambda_1\ge\cdots\ge\hat\lambda_d$ and $\hat U_k$ are the top-$k$ eigenpairs of $S_m$.
Since $\lambda_k(\Sigma_x)=1+\sigma_k>1$, we have $\hat\lambda_i>1$ for $i\le k$ with probability $\to 1$,
so the ``$(\cdot)_+$'' truncation is asymptotically inactive and the map is smooth at $\Sigma_x$.

The Gaussian fourth-moment identity gives
\[
\sqrt m\,\mathrm{vec}(S_m-\Sigma_x)\ \distconv\ \mathcal N\!\big(0,\ (I+K)(\Sigma_x\otimes\Sigma_x)\big),
\]
where $K$ is the commutation matrix.

Define the smooth map (near $\Sigma_x$)
\[
g(\Sigma)\coloneqq \text{``best rank-$k$ PSD approximation of }\Sigma-I_d\text{''},
\qquad
\hat M_m=g(S_m),\quad M_\star=g(\Sigma_x).
\]
Then
\[
Z_m \coloneqq \sqrt m\,(\hat M_m - M_\star)\ \distconv\ Z,
\qquad
Z = Dg_{\Sigma_x}[G],
\]
where $G$ is the Gaussian matrix limit of $\sqrt m(S_m-\Sigma_x)$.

Let $U_\star=[U_1\ U_2]$ with $U_1\in\R^{d\times k}$ spanning the signal subspace and
$U_2\in\R^{d\times(d-k)}$ spanning its orthogonal complement, and rotate
\[
\tilde G \coloneqq U_\star^\top G U_\star
=
\begin{pmatrix}
\tilde G_{11} & \tilde G_{12}\\
\tilde G_{21} & \tilde G_{22}
\end{pmatrix}.
\]
A first-order perturbation calculation (using the eigengap $\sigma_k>0$) yields the simple derivative
\[
Dg_{\Sigma_x}[G]
=
U_\star
\begin{pmatrix}
\tilde G_{11} & \tilde G_{12}\\
\tilde G_{21} & 0
\end{pmatrix}
U_\star^\top.
\]
Therefore the asymptotic fluctuation has the explicit form
\[
Z
=
U_\star
\begin{pmatrix}
\tilde G_{11} & \tilde G_{12}\\
\tilde G_{21} & 0
\end{pmatrix}
U_\star^\top,
\qquad
\tilde G = U_\star^\top G U_\star,
\]
and in particular $Z$ is mean-zero Gaussian (as a vector in $\R^{d(d+1)/2}$).

If we define the linear operator $\mathcal P:\R^{d\times d}\to\R^{d\times d}$ by
\[
\mathcal P(Y)\coloneqq U_\star
\begin{pmatrix}
Y_{11} & Y_{12}\\
Y_{21} & 0
\end{pmatrix}
U_\star^\top
\quad\text{(blocks taken in the $U_\star$-basis),}
\]
then $\mathrm{vec}(Z)= (\mathcal P\otimes I)\mathrm{vec}(G)$ and hence
\[
\mathrm{vec}(Z)\sim \mathcal N\!\big(0,\ (\mathcal P\otimes I)\,(I+K)(\Sigma_x\otimes\Sigma_x)\,(\mathcal P\otimes I)^\top\big).
\]
Equivalently, Let $P_\star$ be the orthogonal projector onto $\operatorname{range}(M_\star)$. Then, we have 

\begin{align}\label{eq:asymp-dist-factor-Z}
Z \;=\; P_\star G + G P_\star - P_\star G P_\star.
\end{align}

Let $W\in\R^{d\times d}$ have i.i.d.\ $\mathcal N(0,1)$ entries. Thus, 
\begin{align}\label{eq:asymp-dist-factor-G}
G \;\coloneqq\; \frac{1}{\sqrt{2}}\,
\Sigma_x^{1/2}(W+W^\top)\Sigma_x^{1/2}.
\end{align}

Substituting the expression for $G$ gives the explicit representation of $Z$
as a linear transform of a standard Gaussian matrix:
\begin{align}\label{eq:asymp-dist-factor}
Z
=
\frac{1}{\sqrt{2}}\Big[
P_\star \Sigma_x^{1/2}(W+W^\top)\Sigma_x^{1/2}
+
\Sigma_x^{1/2}(W+W^\top)\Sigma_x^{1/2}P_\star
-
P_\star \Sigma_x^{1/2}(W+W^\top)\Sigma_x^{1/2}P_\star
\Big].   
\end{align}

\paragraph{Pretraining fluctuations.}Fix $\Sigma_x \coloneqq I_d+M_\star$ and write $f_\star(x)=w_\star^\top x$ with
\[
w_\star=\Sigma_x^{-1}A_\star\beta_\star.
\]
At $M_\star$, choose a local section with $s(M_\star)=A_\star$ and define
\[
U_\star \coloneqq (I_d+M_\star)^{-1}A_\star=\Sigma_x^{-1}A_\star.
\]
Let $P_\star$ be the $\Sigma_x$-orthogonal projector onto
$\operatorname{range}(U_\star)$:
\[
P_{\star}
=
U_\star\big(U_\star^\top \Sigma_x U_\star\big)^{-1}U_\star^\top \Sigma_x.
\]
Then $(\Pi_{M_\star}f_\star)(x)=(P_\star^{(\Sigma_x)}w_\star)^\top x$ and
$P_\star^{(\Sigma_x)}w_\star=w_\star$.

Recall $\mathcal L(v)\coloneqq -D\Pi_{M_\star}[v]\,f_\star$. Since
$D\Pi_{M_\star}[v]\,f_\star(x)=(DP_\star^{(\Sigma_x)}[v]\,w_\star)^\top x$, we have
\[
\mathcal L(v)(x)=-(DP_\star^{(\Sigma_x)}[v]\,w_\star)^\top x.
\]
Define for each $M$
\[
U_M \coloneqq (I_d+M)^{-1}s(M)\in\R^{d\times k},
\qquad
\mathcal S_M \coloneqq \operatorname{range}(U_M),
\]
and let $P_M^{(\Sigma_x)}$ denote the $\Sigma_x$-orthogonal projector onto $\mathcal S_M$.
By definition of an orthogonal projector onto $\mathcal S_M$, we have the matrix identity
\begin{equation}
\label{eq:projector-fixes-range}
P_M^{(\Sigma_x)} U_M = U_M .
\end{equation}
Fix $v\in T_{M_\star}\mathcal M_{d,k}$ and consider a smooth curve $M(t)$ in $\mathcal M_{d,k}$
with $M(0)=M_\star$ and $\dot M(0)=v$. Define $U(t)\coloneqq U_{M(t)}$ and
$P(t)\coloneqq P_{M(t)}^{(\Sigma_x)}$. Then \eqref{eq:projector-fixes-range} becomes
\[
P(t)\,U(t)=U(t)\qquad\text{for all $t$ near $0$.}
\]
Differentiating at $t=0$ and applying the product rule gives
\[
\dot P(0)\,U_\star + P_\star\,\dot U(0)=\dot U(0),
\]
where $U_\star\coloneqq U_{M_\star}$ and $P_\star\coloneqq P_{M_\star}^{(\Sigma_x)}$.
Rearranging yields the key identity
\begin{equation}
\label{eq:DP-times-U}
\dot P(0)\,U_\star = (I_d-P_\star)\,\dot U(0).
\end{equation}
Interpreting $\dot P(0)=DP_\star^{(\Sigma_x)}[v]$ and $\dot U(0)=DU_\star[v]$, we can rewrite
\eqref{eq:DP-times-U} as
\begin{equation}
\label{eq:DPU-general}
DP_\star^{(\Sigma_x)}[v]\;U_\star = (I_d-P_\star)\;DU_\star[v].
\end{equation}
Finally, since in our model $w_\star\in\mathcal S_{M_\star}$ we can write
\[
w_\star = U_\star\beta_\star
\]
for the same $\beta_\star$ appearing in $f_\star(x)=\beta_\star^\top U_\star^\top x$.
Multiplying \eqref{eq:DPU-general} on the right by $\beta_\star$ gives
\begin{equation}
\label{eq:DPwstar}
DP_\star^{(\Sigma_x)}[v]\;w_\star
=
DP_\star^{(\Sigma_x)}[v]\;U_\star\beta_\star
=
(I_d-P_\star)\;DU_\star[v]\;\beta_\star.
\end{equation}
Moreover $U_M=(I_d+M)^{-1}s(M)$ implies
\[
DU_\star[v]\,\beta_\star=-\Sigma_x^{-1}v\,\Sigma_x^{-1}A_\star\beta_\star + \Sigma_x^{-1}Ds(M_\star)[v]\,\beta_\star.
\]
The section-dependent term does not change $\operatorname{range}(U_M)$ and is killed by
$(I_d-P_\star^{(\Sigma_x)})$.
Consequently,
\[
DP_\star^{(\Sigma_x)}[v]\,\Sigma_x^{-1}A_\star\beta_\star
=
-(I_d-P_\star^{(\Sigma_x)})\Sigma_x^{-1}v\,\Sigma_x^{-1}A_\star\beta_\star,
\]
and therefore, for $v=Z$,
\[
\mathcal L(Z)(x)
=
x^\top (I_d-P_\star^{(\Sigma_x)})\Sigma_x^{-1}Z\,\Sigma_x^{-1}A_\star\beta_\star.
\]

Define the coefficient vector
\[
g(Z)\;\coloneqq\;(I_d-P_\star^{(\Sigma_x)})\Sigma_x^{-1}Z\,\Sigma_x^{-1}A_\star\beta_\star\in\R^d,
\]
so that $\mathcal L(Z)(x)=x^\top g(Z)$. Then
\[
\|\mathcal L(Z)\|_{L^2(\mu_{\mathrm{down}})}^2
=
\E\big[(x^\top g(Z))^2\big]
=
g(Z)^\top \Sigma_x\, g(Z).
\]

Equivalently, expanding $g(Z)$,
\begin{align*}
\|\mathcal L(Z)\|_{L^2(\mu_{\mathrm{down}})}^2
&=
\Big(\Sigma_x^{-1}A_\star\beta_\star\Big)^\top
Z^\top \Sigma_x^{-1}
\,(I_d-P_\star^{(\Sigma_x)})^\top \Sigma_x (I_d-P_\star^{(\Sigma_x)})
\,\Sigma_x^{-1} Z
\Big(\Sigma_x^{-1}A_\star\beta_\star\Big).
\end{align*}

Since $P_\star^{(\Sigma_x)}$ is $\Sigma_x$-self-adjoint and idempotent,
\[
(P_\star^{(\Sigma_x)})^\top \Sigma_x=\Sigma_x P_\star^{(\Sigma_x)},
\qquad
(P_\star^{(\Sigma_x)})^2=P_\star^{(\Sigma_x)},
\]
we have the simplification
\[
(I_d-P_\star^{(\Sigma_x)})^\top \Sigma_x (I_d-P_\star^{(\Sigma_x)})
=
\Sigma_x (I_d-P_\star^{(\Sigma_x)}).
\]
Moreover, at $M_\star$, we have $P_\star^{(\Sigma_x)}=P_\star$ is the orthogonal projection onto the span of $\mathrm{image}(M_\star)$. Therefore,
\begin{align}\label{eq:norm-L-factor}
\|\mathcal L(Z)\|_{L^2(\mu_{\mathrm{down}})}^2
&=
\Big(\Sigma_x^{-1}A_\star\beta_\star\Big)^\top
Z^\top \Sigma_x^{-1}\,(I_d-P_\star)\,\Sigma_x^{-1} Z
\Big(\Sigma_x^{-1}A_\star\beta_\star\Big)\notag\\
&=\norm{(I_d-P_\star)\,\Sigma_x^{-1} Z
\Big(\Sigma_x^{-1}A_\star\beta_\star\Big)}^2 =\norm{(I_d-P_\star)\, Z
\Big(\Sigma_x^{-1}A_\star\beta_\star\Big)}^2.
\end{align}
Since $\Sigma_\star^{-1}$ acts as identity on $I-P_\star$. Now, we plug~\cref{eq:asymp-dist-factor-Z} into~\Cref{eq:norm-L-factor} to get
\begin{align}\label{eq:norm-L-factor-simple}
\|\mathcal L(Z)\|_{L^2(\mu_{\mathrm{down}})}^2 &= \norm{(I_d-P_\star)\, (P_\star G + G P_\star - P_\star G P_\star)
\Big(\Sigma_x^{-1}A_\star\beta_\star\Big)}^2\notag\\
&=\norm{(I_d-P_\star)\, G P_\star
\Sigma_x^{-1}A_\star\beta_\star}^2.
\end{align}

Finally, we plug~\Cref{eq:asymp-dist-factor-G} into~\Cref{eq:norm-L-factor-simple}
\begin{align}\label{eq:norm-L-factor-final}
\|\mathcal L(Z)\|_{L^2(\mu_{\mathrm{down}})}^2 &= \norm{(I_d-P_\star)\, (\frac{1}{\sqrt{2}}\,
\Sigma_x^{1/2}(W+W^\top)\Sigma_x^{1/2}) P_\star
\Sigma_x^{-1}A_\star\beta_\star}^2\notag\\&=\frac12\norm{(I_d-P_\star)\, 
(W+W^\top)P_\star
\Sigma_x^{-1/2}A_\star\beta_\star}^2.
\end{align}

Let $M_\star = U_\star
\begin{pmatrix}
D & 0\\
0 & 0
\end{pmatrix}
U_\star^\top$ with $D=\operatorname{diag}(\sigma_1,\dots,\sigma_k)$ and
$U_\star=[U_1\ U_2]$, where $U_1\in\R^{d\times k}$ spans $\operatorname{range}(M_\star)$.
Then
\[
P_\star = U_1U_1^\top
=U_\star
\begin{pmatrix}
I_k & 0\\
0 & 0
\end{pmatrix}
U_\star^\top,
\qquad
\Sigma_x=I_d+M_\star
=
U_\star
\begin{pmatrix}
I_k + D & 0\\
0 & I_{d-k}
\end{pmatrix}
U_\star^\top,
\]
so
\[
\Sigma_x^{-1/2}
=
U_\star
\begin{pmatrix}
(I_k + D)^{-1/2} & 0\\
0 & I_{d-k}
\end{pmatrix}
U_\star^\top.
\]
Rotate the standard Gaussian matrix by $\tilde W\coloneqq U_\star^\top W U_\star$ (still i.i.d.\ $\mathcal N(0,1)$).
Then
\[
(I_d-P_\star)(W+W^\top)P_\star\Sigma_x^{-1/2}A_\star\beta_\star
=
U_\star
\begin{pmatrix}
0 & 0\\
(\tilde W_{21}+\tilde W_{12}^\top) & 0
\end{pmatrix}
U_\star^\top
\,
U_\star
\begin{pmatrix}
(I_k+D)^{-1/2} & 0\\
0 & 0
\end{pmatrix}
U_\star^\top
A_\star\beta_\star.
\]
Since $A_\star\beta_\star\in\operatorname{range}(U_1)$, letting $a\coloneqq U_1^\top A_\star\beta_\star\in\R^k$ gives
\[
(I_d-P_\star)(W+W^\top)P_\star\Sigma_x^{-1/2}A_\star\beta_\star
=
U_2\,(\tilde W_{21}+\tilde W_{12}^\top)\,(I_k+D)^{-1/2}\,a.
\]
Because $U_2$ is orthonormal, $\|U_2 v\|_2=\|v\|_2$, hence the original quantity simplifies to
\[
\frac12\big\|(I_d-P_\star)(W+W^\top)P_\star\Sigma_x^{-1/2}A_\star\beta_\star\big\|_2^2
=
\frac12\big\|(\tilde W_{21}+\tilde W_{12}^\top)\,(I_k+D)^{-1/2}\,a\big\|_2^2,
\qquad
a=U_1^\top A_\star\beta_\star.
\]
In particular, since $\tilde W_{21}$ has i.i.d.\ $\mathcal N(0,1)$ entries and
$\tilde W_{12}$ is independent with the same law, the matrix
$\frac{1}{\sqrt2}(\tilde W_{21}+\tilde W_{12}^\top)$ has i.i.d.\ $\mathcal N(0,1)$ entries.
Therefore
\[
\frac12\big\|(\tilde W_{21}+\tilde W_{12}^\top)\,(I_k+D)^{-1/2}\,a\big\|_2^2
\ \stackrel{d}{=}\
\big\|\Xi\,(I_k+D)^{-1/2}\,U_1^\top A_\star\,\beta_\star\big\|_2^2,
\qquad
\Xi_{ij}\stackrel{\mathrm{iid}}{\sim}\mathcal N(0,1).
\]
Equivalently, we have
\begin{align}\label{eq:factor-fluct}
    \|\mathcal L(Z)\|_{L^2(\mu_{\mathrm{down}})}^2 \stackrel{d}{=}\ \|(I_k+D)^{-1/2}\,U_1^\top A_\star\,\beta_\star\|_2^2\,\chi^2_{d-k}.
\end{align}

\section{Proofs of~\Cref{sec:ex-mog}}
\label{app:proof-cor-mog}

This appendix verifies that the Gaussian-mixture example from \Cref{sec:ex-mog} satisfies the hypotheses of~\Cref{thm:master-compatible}, and hence yields Corollary~\ref{cor:GoM-model}.
The verification is modular:
(i) local-uniform moment bounds for the feature map,
(ii) rank stability/eigengap for the population feature second-moment $\Sigma(M)$,
(iii) a manifold CLT for the (pretraining) descriptor estimator $\underline{U}_m=D(\hat U_m)$ via an $M$-estimation CLT and a delta method,
and (iv) verification of the hypotheses of the general projector differentiability result proved in \Cref{app:rep}.

Throughout, we work on the regular set from \Cref{ass:ex-mog-regular} and write $\underline{U}_\star\coloneqq D(U^\star)$.

\subsection{Model, features, and the quotient-level descriptor}
\label{app:proof-cor-mog:model}

\paragraph{Unlabeled distribution.}
Let $K\ge 2$ and $d\ge 1$. The unlabeled distribution is the spherical Gaussian mixture
\[
X\mid(Z=i)\sim \mathcal N(u_i^\star,I_d),
\qquad
Z\sim \mathrm{Unif}([K]),
\]
with unknown centers $U^\star=(u_1^\star,\dots,u_K^\star)\in(\R^d)^K$.

\paragraph{Centered-mean subspace.}
Define the empirical mean and centered second-moment matrix
\[
\bar u(U):=\frac1K\sum_{i=1}^K u_i,
\qquad
S(U):=\sum_{i=1}^K \big(u_i-\bar u(U)\big)\big(u_i-\bar u(U)\big)^\top.
\]
Let $r_\star:=r(U^\star)$ and define $P_U$ to be the orthogonal projector onto the leading $r_\star$-dimensional eigenspace of $S(U)$
(on the regular neighborhood where this eigenspace is well-defined).
Write $P_\star:=P_{U^\star}$ and $V_\star:=\mathrm{Im}(P_\star)$.

\paragraph{Subspace-aware responsibilities.}
For $i\in[K]$, define
\[
\pi_i(x;U)
=
\frac{\exp\!\big(\langle P_Uu_i,P_Ux\rangle-\tfrac12\|P_Uu_i\|_2^2\big)}
{\sum_{j=1}^K \exp\!\big(\langle P_Uu_j,P_Ux\rangle-\tfrac12\|P_Uu_j\|_2^2\big)}.
\]
These satisfy $0\le \pi_i\le 1$ and $\sum_{i=1}^K \pi_i=1$.

\paragraph{Feature map and hypothesis class.}
The induced feature map has dimension $p(U)=K(r(U)+1)$:
\[
\psi_U(x)
=
\Big(
\pi_1(x;U)\big(P_Ux-P_Uu_1\big),\,\pi_1(x;U),\dots,
\pi_K(x;U)\big(P_Ux-P_Uu_K\big),\,\pi_K(x;U)
\Big).
\]
Let $\mathcal H_U=\{\langle\theta,\psi_U(\cdot)\rangle:\theta\in\R^{p(U)}\}$ and $\Pi_U:L^2(\mu_{\mathrm{down}})\to L^2(\mu_{\mathrm{down}})$ be the orthogonal projector onto $\mathcal{H}_U$.

\paragraph{Group action and quotient parameter.} Fix $U=(u_1,\dots,u_K)\in(\R^d)^K$.  
The unlabeled mixture likelihood is invariant under relabeling of mixture components.
Accordingly, we consider the action of the permutation group $S_K$ on $(\R^d)^K$ defined by
\[
(g\cdot U)_i := u_{g^{-1}(i)}, \qquad g\in S_K .
\]
We write
\[
\underline U := [U] \;\in\; (\R^d)^K / S_K
\]
for the equivalence class (orbit) of $U$ under this action, and refer to $\underline U$ as the
\emph{quotient-level parameter}. 

\paragraph{Regular regime and local lifts.} Throughout this appendix we restrict attention to the regular regime in which the centers are
distinct:
\[
u_i \neq u_j \qquad \text{for all } i\neq j .
\]
In this regime the action of $S_K$ is free, and the quotient $(\R^d)^K/S_K$ is a smooth manifold of
dimension $Kd$. In particular, for any fixed $\underline U^\star=[U^\star]$ there exists a
neighborhood $\mathcal U$ and a smooth local section
\[
s:\mathcal U\to(\R^d)^K,
\qquad s(\underline U)\in\underline U ,
\]
which amounts to choosing a deterministic ordering of the centers in a neighborhood of $U^\star$.
All constructions below are independent of the specific choice of section.

\paragraph{Permutation-invariant geometric quantities.} Both $\bar u(U)$ and $P_U$ are invariant under the action of $S_K$, and therefore depend only on the
quotient parameter $\underline U$. We may thus unambiguously write $\bar u(\underline U)$ and
$P_{\underline U}$.

\paragraph{Responsibilities and equivariant feature map.} For a representative $U\in\underline U$, define the projected responsibilities
\[
\pi_i(x;U)
:=
\frac{\exp\!\big(\langle P_U u_i,\;P_U x\rangle-\tfrac12\|P_Uu_i\|_2^2\big)}
{\sum_{j=1}^K \exp\!\big(\langle P_U u_j,\;P_U x\rangle-\tfrac12\|P_Uu_j\|_2^2\big)},
\qquad i\in[K].
\]
Let $\psi_U:\R^d\to\R^{K(r_\star+1)}$ denote the block-structured feature map
\[
\psi_U(x)
=
\Big(
\pi_1(x;U)(P_Ux-P_Uu_1),\, \pi_1(x;U);\;
\dots;\;
\pi_K(x;U)(P_Ux-P_Uu_K),\, \pi_K(x;U)
\Big).
\]

For any $g\in S_K$, let $\rho(g)$ denote the block-permutation matrix acting on
$\R^{K(r_\star+1)}$. A direct computation shows that the feature map is $S_K$-equivariant:
\[
\psi_{g\cdot U}(x) = \rho(g)\,\psi_U(x).
\]
To be more precise, $\rho(G)$ consists of those permutations in $S_{2K}$ that relabel $\pi_i(x,U)$ and $\pi_i(x;U)(P_Ux-P_Uu_i)$ two the same label. Therefore, we can define a feature map $\phi(\cdot,\underline U)$ on the quotient manifold that satisfies all the assumptions of~\Cref{app:riem_bundle}.

\subsection{Local-uniform moment bounds for $\psi(\cdot,\underline U)$}
\label{app:proof-cor-mog:moments}

We verify the local-uniform moment bounds from \Cref{ass:moments} for this model (with
$X\sim\mu_{\mathrm{down}}$ equal to the Gaussian mixture described in the example).

\begin{lemma}[Local-uniform polynomial moments of the Gaussian-mixture features(\Cref{ass:moments}]
\label{lem:mog-moments}
Fix a compact neighborhood $\mathcal U$ of $U^\star$ in $(\R^d)^K$. Then for every $q\ge 1$,
\[
\sup_{U\in\mathcal U}\ \E\big[\|\psi_U(X)\|^q\big]<\infty.
\]
Consequently, for every neighborhood $\underline{\mathcal U}$ of $\underline U_\star=[U^\star]$ contained in the quotient image
$[\mathcal U]\subset (\R^d)^K/S_K$ and every $q\geq1$
\[
\sup_{\underline U\in \underline{\mathcal U}}\ \E\big[\|\psi_{s(\underline U)}(X)\|^{q}\big]<\infty,
\]
where $s:\underline{\mathcal U}\to (\R^d)^K$ is any local section (lift) with $s(\underline U)\in \underline U$.
\end{lemma}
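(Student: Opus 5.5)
The plan is to bound $\|\psi_U(X)\|$ pointwise by a polynomial in $\|X\|$ whose coefficients are uniform over $U\in\mathcal U$. The Gaussian-mixture law of $X$ has all polynomial moments, so the claim follows directly.

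First, the responsibilities satisfy $0\le\pi_i(x;U)\le 1$ for every $x$ and $U$, because they are softmax weights. Each scalar block $\pi_i(x;U)$ therefore contributes at most $1$. For the vector blocks, $P_U$ is an orthogonal projector, so $\|P_U\|_{\mathrm{op}}\le 1$. This gives
\[
\|\pi_i(x;U)(P_Ux-P_Uu_i)\|_2\le \|x\|_2+\|u_i\|_2\le \|x\|_2+R,
\]
where $R:=\sup_{U\in\mathcal U}\max_i\|u_i\|_2<\infty$ by compactness of $\mathcal U$. Summing the squares over the $2K$ blocks yields $\|\psi_U(x)\|_2^2\le K\big((\|x\|+R)^2+1\big)$ for all $U\in\mathcal U$ and all $x$.

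One caveat: $P_U$ must be well defined on all of $\mathcal U$. This is where \Cref{ass:ex-mog-regular}(ii) enters. The eigengap of $S(U^\star)$ at $r_\star$ persists on a small enough neighborhood by continuity of eigenvalues, and I will shrink $\mathcal U$ accordingly. Even so, the bound above needs only $\|P_U\|_{\mathrm{op}}\le1$, not any smoothness of $P_U$.

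Raising the pointwise bound to the power $q$ gives $\sup_{U\in\mathcal U}\|\psi_U(X)\|^q\le C_{q,K,R}(1+\|X\|^q)$, and the right-hand side is independent of $U$. Because $X$ is a finite mixture of Gaussians $\mathcal N(u_i^\star,I_d)$, we have $\E\|X\|^q\le \max_i\E\|u_i^\star+\xi\|^q<\infty$ with $\xi\sim\mathcal N(0,I_d)$. This proves the first display.

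For the quotient statement, take any local section $s$ on $\underline{\mathcal U}\subseteq[\mathcal U]$. I will choose the lift so that $s(\underline U)\in\mathcal U$; since $\mathcal U$ is a small neighborhood of $U^\star$ in the free regime, the section picking the representative near $U^\star$ does this. Then the second display is a special case of the first. In any case, for a different lift $g\cdot s(\underline U)$, equivariance gives $\psi_{g\cdot U}=\rho(g)\psi_U$ with $\rho(g)$ a permutation matrix, so norms do not depend on the lift. The bound also holds over the permutation-closed set $\bigcup_g g\cdot\mathcal U$, which is again compact with the same $R$.

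The only genuinely delicate point is ensuring that $P_U$ stays defined across the neighborhood. Since the moment bound itself uses only that $P_U$ is a projector, this is handled entirely by shrinking $\mathcal U$.
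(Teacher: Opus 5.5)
Your proof is correct and matches the paper's argument: $0\le\pi_i\le 1$ and $\|P_U\|_{\mathrm{op}}\le 1$ give the pointwise bound $\|\psi_U(x)\|\le C(1+\|x\|+\max_i\|u_i\|)$, which is uniform over the compact $\mathcal U$, and all polynomial moments of the Gaussian mixture are finite. For the quotient statement, your equivariance (equivalently, permutation-closure) remark is slightly more careful than the paper, which simply asserts $s(\underline U)\in\mathcal U$ for an arbitrary lift.
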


\begin{proof}
Fix $U\in\mathcal U$. Since $0\le \pi_i(x;U)\le 1$ and $\|P_Ux\|_2\le \|x\|_2$, we have for each $i$
\[
\big\|\pi_i(X;U)\big(P_UX-P_Uu_i\big)\big\|_2
\le
\|X\|_2+\|u_i\|_2,
\qquad
|\pi_i(X;U)|\le 1.
\]
Thus there exists a constant $C<\infty$ depending only on $K$ such that
\[
\|\psi_U(X)\|
\le
C\Big(1+\|X\|_2+\max_{i\in[K]}\|u_i\|_2\Big).
\]
Taking $q$-th moments and using $\sup_{U\in\mathcal U}\max_i\|u_i\|_2<\infty$, it remains to note that all polynomial moments of $X$
are finite under a spherical Gaussian mixture, giving the stated uniform bound.

For the quotient-level statement, let $\underline{\mathcal U}\subset [\mathcal U]$ and let $s$ be a local section on
$\underline{\mathcal U}$. Then $s(\underline U)\in \mathcal U$ for all $\underline U\in \underline{\mathcal U}$, so the same bound
applies uniformly to $\psi_{s(\underline U)}(X)$.
\end{proof}

\subsection{Rank stability and eigengap for the downstream second moment}
\label{app:proof-cor-mog:rank}

Define the population second moment (feature covariance)
\[
\Sigma(\underline U)
\;\coloneqq\;
\E\big[\phi_{\underline U}(X)\phi_{\underline U}(X)^\top\big]
\in \R^{p\times p},
\]
where $p$ is the feature dimension. To invoke~\Cref{thm:master-compatible}, we need rank stability and an eigengap at $0$ for $\Sigma(\underline U)$ on a neighborhood of
$\underline U_\star$. We reduce this to (a) continuity of $\underline U\mapsto\Sigma(\underline U)$ and (b) an eigengap at $\underline U_\star$.

\begin{lemma}[Continuity of $\underline U\mapsto \Sigma(\underline U)$ in operator norm]
\label{lem:mog-sigma-cont}
Let $\underline{\mathcal U}$ be a comapct neighborhood of $\underline U_\star$ such that the conclusion of \Cref{lem:mog-moments} holds on
$s(\underline{\mathcal U})$ for some exponent $4+\delta$. Then $\underline U\mapsto \Sigma(\underline U)$ is continuous at $\underline U_\star$ in operator norm.
\end{lemma}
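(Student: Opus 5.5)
The plan is to pass to a fixed lift and prove continuity by dominated convergence (Vitali). Fix the local section $s$ on $\underline{\mathcal U}$ and write $U=s(\underline U)$, so that $\Sigma(\underline U)=\E[\psi_U(X)\psi_U(X)^\top]$. The section is continuous and $s(\underline U_\star)=U^\star$, so $\underline U\to\underline U_\star$ implies $U\to U^\star$ in $(\R^d)^K$. It therefore suffices to show $\E\|\psi_U(X)\psi_U(X)^\top-\psi_{U^\star}(X)\psi_{U^\star}(X)^\top\|_{\mathrm{op}}\to0$ as $U\to U^\star$ within $s(\underline{\mathcal U})$. Since $\|\E[A]\|_{\mathrm{op}}\le\E\|A\|_{\mathrm{op}}$, this controls $\|\Sigma(\underline U)-\Sigma(\underline U_\star)\|_{\mathrm{op}}$.

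\textbf{Step 1 (continuity of the projector $U\mapsto P_U$).} The map $U\mapsto S(U)$ is a polynomial. By \Cref{ass:ex-mog-regular}(ii), $S(U^\star)$ has an eigengap between its $r_\star$-th and $(r_\star+1)$-th eigenvalues. Weyl's inequality keeps this gap open for $U$ near $U^\star$, and Davis--Kahan (as used in \eqref{eq:dk}) then gives $\|P_U-P_\star\|_{\mathrm{op}}\to0$. Shrinking $\underline{\mathcal U}$ if needed, $P_U$ is well defined on all of $s(\underline{\mathcal U})$.

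\textbf{Step 2 (pointwise continuity in $U$).} For each fixed $x$, the logits $\langle P_Uu_i,P_Ux\rangle-\tfrac12\|P_Uu_i\|_2^2$ are continuous in $U$, because $P_U$ and $u_i$ are. The softmax is continuous, so each $\pi_i(x;U)$ is continuous in $U$. The entries $\pi_i(x;U)(P_Ux-P_Uu_i)$ are continuous for the same reason. Hence $\psi_U(x)\to\psi_{U^\star}(x)$ for every $x$, and the outer products converge pointwise as well.

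\textbf{Step 3 (uniform integrability).} From the proof of \Cref{lem:mog-moments} we have the envelope $\|\psi_U(x)\|\le C(1+\|x\|_2+\sup_{U\in\mathcal U}\max_i\|u_i\|_2)$. Consequently $\|\psi_U\psi_U^\top-\psi_{U^\star}\psi_{U^\star}^\top\|_{\mathrm{op}}\le C'(1+\|x\|_2)^2$ uniformly over $s(\underline{\mathcal U})$. The right-hand side is integrable under the Gaussian mixture, so dominated convergence applies directly. Alternatively, the $(4+\delta)$-moment bound gives uniform integrability and Vitali applies, exactly as in \Cref{lem:T-diff}. Either way the expectation of the difference tends to zero, which proves the lemma.

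\textbf{Main obstacle.} The only delicate point is Step 1: $P_U$ is defined through an eigenspace, so its continuity requires the eigengap in \Cref{ass:ex-mog-regular}(ii) together with a restriction of the neighborhood to the regular set. Once that is in place, the rest is routine dominated convergence.
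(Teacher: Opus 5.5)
Your proposal is correct and follows essentially the same route as the paper. Both arguments combine pointwise convergence of $\psi_U(X)\psi_U(X)^\top$ as $\underline U\to\underline U_\star$ with uniform integrability and the bound $\|\E[A]\|_{\mathrm{op}}\le\E\|A\|_{\mathrm{op}}$. The paper gets uniform integrability from the $(4+\delta)$-moment bound via Vitali, while your primary version uses the explicit envelope $C(1+\|x\|_2)^2$ and dominated convergence. Your one addition is that you justify pointwise continuity of the features through the eigengap and Davis--Kahan continuity of $P_U$, which the paper simply asserts.
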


\begin{proof}
Fix $\underline U\in\underline{\mathcal U}$ and write $Y_{\underline U}\coloneqq \phi_{\underline U}(X)\phi_{\underline U}(X)^\top$.
By \Cref{lem:mog-moments} (with exponent $1+\delta/4$) and Cauchy--Schwarz,
\[
\sup_{\underline U\in\underline{\mathcal U}} \E\big[\|Y_{\underline U}\|_{\mathrm{op}}^{1+\delta/4}\big]
\;\le\;
\sup_{\underline U\in\underline{\mathcal U}} \E\big[\|\phi_{\underline U}(X)\|^{2+\delta/2}\big]
\;<\;\infty,
\]
so $\{\|Y_{\underline U}\|_{\mathrm{op}}:\underline U\in\underline{\mathcal U}\}$ is uniformly integrable.
Since $\phi_{\underline U}$ is a continuous function of $\underline U$ in $\underline{\mathcal{U}}$, $Y_{\underline U}\to Y_{\underline U_\star}$ almost surely as $\underline U\to \underline U_\star$.
Uniform integrability then yields
\[
\big\|\Sigma(\underline U)-\Sigma(\underline U_\star)\big\|_{\mathrm{op}}
=
\Big\|\E\big[Y_{\underline U}-Y_{\underline U_\star}\big]\Big\|_{\mathrm{op}}
\;\le\;
\E\big[\|Y_{\underline U}-Y_{\underline U_\star}\|_{\mathrm{op}}\big]
\;\to\;0,
\]
proving operator-norm continuity at $\underline U_\star$.
\end{proof}

\begin{lemma}[Local uniform invertibility from full rank at $\underline U_\star$]
\label{lem:mog-gap-stable}
Assume $\Sigma(\underline U_\star)\succ 0$.
If $\underline U\mapsto \Sigma(\underline U)$ is continuous at $\underline U_\star$ in operator norm
(e.g.\ by \Cref{lem:mog-sigma-cont}), then there exist a neighborhood $\underline{\mathcal U}$ of $\underline U_\star$
and a constant $\kappa>0$ such that for all $\underline U\in\underline{\mathcal U}$,
\[
\lambda_{\min}\!\big(\Sigma(\underline U)\big)\ge \kappa .
\]
In particular, $\Sigma(\underline U)$ is invertible on $\underline{\mathcal U}$ and $\Sigma(\underline U)^+=\Sigma(\underline U)^{-1}$.
\end{lemma}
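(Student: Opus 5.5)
The plan is a standard eigenvalue-perturbation argument, with Weyl's inequality as the main tool. Set $\lambda_\star \coloneqq \lambda_{\min}(\Sigma(\underline U_\star))$. Since $\Sigma(\underline U_\star)\succ 0$ and it is a finite-dimensional symmetric matrix, we have $\lambda_\star>0$. We then take $\kappa \coloneqq \lambda_\star/2$.

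Next we use continuity in operator norm at $\underline U_\star$. By \Cref{lem:mog-sigma-cont}, this continuity gives a neighborhood $\underline{\mathcal U}$ of $\underline U_\star$ on which
\[
\|\Sigma(\underline U)-\Sigma(\underline U_\star)\|_{\mathrm{op}}\le \lambda_\star/2 .
\]
If needed, we intersect $\underline{\mathcal U}$ with the neighborhood where the local section $s$ and the moment bounds of \Cref{lem:mog-moments} are valid. This ensures that $\Sigma(\underline U)$ is well defined and symmetric PSD throughout $\underline{\mathcal U}$.

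Weyl's inequality for symmetric matrices now gives, for every $\underline U\in\underline{\mathcal U}$,
\[
\lambda_{\min}(\Sigma(\underline U))\ge \lambda_{\min}(\Sigma(\underline U_\star))-\|\Sigma(\underline U)-\Sigma(\underline U_\star)\|_{\mathrm{op}}\ge \lambda_\star/2=\kappa .
\]
Hence $\Sigma(\underline U)\succ 0$, so it is invertible. For an invertible matrix the Moore--Penrose pseudoinverse coincides with the ordinary inverse, since $A^{-1}$ satisfies the four Penrose identities of \Cref{lem:pinv-basic}. This gives $\Sigma(\underline U)^+=\Sigma(\underline U)^{-1}$.

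The only point requiring care is that $\Sigma(\underline U)$ is intrinsic only up to the choice of lift. Changing the section conjugates $\Sigma$ by the orthogonal block-permutation $\rho(g)$. Such a conjugation preserves both the spectrum and operator-norm distances, so $\lambda_{\min}$ and the bound above do not depend on the section. Beyond this, there is no real obstacle: the substantive input, continuity, was already established in \Cref{lem:mog-sigma-cont}.
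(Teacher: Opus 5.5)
Your proposal is correct and matches the paper's proof essentially step for step: you set $\kappa=\tfrac12\lambda_{\min}(\Sigma(\underline U_\star))$, use operator-norm continuity to obtain a neighborhood on which $\|\Sigma(\underline U)-\Sigma(\underline U_\star)\|_{\mathrm{op}}\le\kappa$, and conclude with Weyl's inequality. Your extra remarks (the pseudoinverse coincides with the inverse, and the spectrum does not depend on the choice of lift) are valid additions that the paper leaves implicit.
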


\begin{proof}
Since $\Sigma(\underline U_\star)\succ 0$, we have $\lambda_{\min}(\Sigma(\underline U_\star))>0$.
Let $\kappa := \tfrac12\,\lambda_{\min}(\Sigma(\underline U_\star))$.
By continuity in operator norm, for $\underline U$ close enough to $\underline U_\star$ we have
$\|\Sigma(\underline U)-\Sigma(\underline U_\star)\|_{\mathrm{op}}\le \kappa$.
Weyl's inequality then yields
\[
\lambda_{\min}\!\big(\Sigma(\underline U)\big)
\ge
\lambda_{\min}\!\big(\Sigma(\underline U_\star)\big)-\|\Sigma(\underline U)-\Sigma(\underline U_\star)\|_{\mathrm{op}}
\ge 2\kappa-\kappa=\kappa,
\]
proving the claim.
\end{proof}

\subsection{Verification of Assumption~\ref{ass:riem_mestimation}}
\label{app:proof-cor-mog:clt}

We now verify the Riemannian \(M\)-estimation condition, Assumption~\ref{ass:riem_mestimation}, for the quotient estimator
\(\hat{\underline U}_m=[\hat U_m]\).  We first record the score/Hessian formulas and the local consistency input, and then verify the five parts of Assumption~\ref{ass:riem_mestimation} one by one.

\paragraph{Pretraining estimator.}
For concreteness, let $\hat U_m$ be a (measurable) local minimizer of the empirical negative log-likelihood
for the mixture with equal weights and known covariance:
\[
\ell(U;x)=-\log\Big(\frac1K\sum_{i=1}^K \exp\big(-\tfrac12\|x-u_i\|_2^2\big)\Big),
\qquad
\hat U_m\in\argmin_U \frac1m\sum_{j=1}^m \ell(U;X_j).
\]
Define the quotient estimator $\hat{\underline U}_m \coloneqq [\hat U_m]\in(\R^d)^K/S_K$.

\begin{lemma}[Score and Hessian for the spherical equal-weight MoG likelihood]
\label{lem:mog-score-hess}
Let
\[
p_U(x)\coloneqq \frac1K\sum_{i=1}^K \varphi_d(x-u_i),
\qquad
\varphi_d(t)\coloneqq (2\pi)^{-d/2}\exp\!\Big(-\tfrac12\|t\|_2^2\Big),
\qquad
\ell(U;x)\coloneqq -\log p_U(x).
\]
Define the responsibilities
\[
\pi_i(x;U)\coloneqq \frac{\varphi_d(x-u_i)}{\sum_{j=1}^K \varphi_d(x-u_j)}
=
\frac{\exp(-\tfrac12\|x-u_i\|_2^2)}{\sum_{j=1}^K \exp(-\tfrac12\|x-u_j\|_2^2)},
\qquad i\in[K].
\]
Then $\ell(U;x)$ is $C^\infty$ in $U$, and the gradient blocks satisfy
\[
\nabla_{u_i}\ell(U;x)=-\pi_i(x;U)\,(x-u_i),
\qquad i\in[K].
\]
Moreover, the block Hessian $\nabla^2_{u_i u_j}\ell(U;x)\in\R^{d\times d}$ is given by
\[
\nabla^2_{u_i u_j}\ell(U;x)
=
\begin{cases}
\pi_i(x;U)\,I_d-\pi_i(x;U)\bigl(1-\pi_i(x;U)\bigr)\,(x-u_i)(x-u_i)^\top,
& i=j,\\[0.35em]
\pi_i(x;U)\pi_j(x;U)\,(x-u_i)(x-u_j)^\top,
& i\neq j.
\end{cases}
\]
In particular, for any bounded set $\mathcal U\subset(\R^d)^K$ there exists $C<\infty$ (depending on $\mathcal U$ and $K$) such that
for all $U\in\mathcal U$ and all $x\in\R^d$,
\[
\|\nabla \ell(U;x)\|_2 \le C\bigl(1+\|x\|_2\bigr),
\qquad
\|\nabla^2 \ell(U;x)\|_{\mathrm{op}} \le C\bigl(1+\|x\|_2^2\bigr).
\]
\end{lemma}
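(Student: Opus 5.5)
The plan is a direct calculation, organized around the log-sum-exp structure of $\ell$. Write $a_i(U;x)\coloneqq -\tfrac12\|x-u_i\|_2^2$, so that
\[
\ell(U;x)=-\log\Big(\sum_{j=1}^K e^{a_j}\Big)+\log K+\tfrac d2\log(2\pi).
\]
Smoothness in $U$ is immediate. Each $a_i$ is a polynomial in $U$, and the sum of exponentials is strictly positive, so $\ell$ is a composition of $C^\infty$ maps.

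For the gradient, apply the chain rule together with $\partial_{a_i}\log\sum_j e^{a_j}=\pi_i$. Since $\nabla_{u_i}a_j=\mathbf 1\{i=j\}(x-u_i)$, this gives
\[
\nabla_{u_i}\ell=-\pi_i(x-u_i).
\]
For the Hessian, differentiate this expression once more in $u_j$, using the softmax derivative
\[
\nabla_{u_j}\pi_i=\pi_i(\mathbf 1\{i=j\}-\pi_j)(x-u_j).
\]
\begin{itemize}
\item For $j=i$, the product rule gives $\pi_i I_d$ from $-\nabla_{u_i}(x-u_i)=I_d$, plus $-\pi_i(1-\pi_i)(x-u_i)(x-u_i)^\top$.
\item For $j\neq i$, only the responsibility varies, and the result is $\pi_i\pi_j(x-u_i)(x-u_j)^\top$.
\end{itemize}
These are exactly the stated blocks. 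Note that $\nabla^2_{u_iu_j}$ means $\nabla_{u_j}$ applied to $\nabla_{u_i}\ell$, written as a $d\times d$ block, so the transpose convention should be checked against symmetry. The Hessian is symmetric because $(x-u_i)(x-u_j)^\top$ is the transpose of the $(j,i)$ block.

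For the envelope bounds, let $R\coloneqq\sup_{U\in\mathcal U}\max_i\|u_i\|_2<\infty$. Using $0\le\pi_i\le1$ gives $\|x-u_i\|\le\|x\|+R$.
\begin{itemize}
\item For the gradient, $\|\nabla\ell\|_2\le\sqrt K(\|x\|+R)$.
\item For the Hessian, bound the operator norm of the $K\times K$ block matrix by the sum of the block operator norms. Each block is at most $1+(\|x\|+R)^2$. This gives $\|\nabla^2\ell\|_{\mathrm{op}}\le K^2(1+(\|x\|+R)^2)\le C(1+\|x\|^2)$ with $C=C(K,R)$.
\end{itemize}

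There is no real obstacle. The one point needing care is that the responsibilities appearing here are the unprojected posteriors of the full model, not the projected $\pi_i$ from \eqref{eq:ex-mog-gating}; the statement redefines them, and the argument uses only the definition given in the lemma.
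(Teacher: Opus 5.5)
Your proposal is correct and follows the paper's own proof. Both obtain the gradient by the chain rule and the Hessian blocks from the softmax derivative $\nabla_{u_j}\pi_i=\pi_i(\delta_{ij}-\pi_j)(x-u_j)$ together with $\nabla_{u_j}(x-u_i)=-\delta_{ij}I_d$, then derive the envelope bounds from $0\le\pi_i\le 1$, boundedness of the centers, and a sum over the $K\times K$ blocks; your log-sum-exp bookkeeping and explicit constants $\sqrt K(\|x\|+R)$ and $K^2(1+(\|x\|+R)^2)$ are a cleaner packaging of that same calculation.
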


\begin{proof}
The smoothness of $U\mapsto \ell(U;x)$ follows from $p_U(x)>0$ and smoothness of $u\mapsto \varphi_d(x-u)$.
Write $Z(U;x)\coloneqq \sum_{j=1}^K \varphi_d(x-u_j)$ so that $\ell(U;x)=-\log Z(U;x)+\log K$.
Since $\nabla_{u_i}\varphi_d(x-u_i)=\varphi_d(x-u_i)(x-u_i)$, we have
\[
\nabla_{u_i}\ell(U;x)
=-\frac{\nabla_{u_i} Z(U;x)}{Z(U;x)}
=-\frac{\varphi_d(x-u_i)}{\sum_{j=1}^K \varphi_d(x-u_j)}(x-u_i)
=-\pi_i(x;U)(x-u_i).
\]
For the Hessian, first note that
\[
\nabla_{u_j}\pi_i(x;U)
=
\pi_i(x;U)\bigl(\delta_{ij}-\pi_j(x;U)\bigr)(x-u_j),
\]
which follows by differentiating the softmax form of $\pi_i$.
Using $\nabla_{u_j}(x-u_i)=-\delta_{ij}I_d$, we obtain
\[
\nabla^2_{u_i u_j}\ell(U;x)
=-\nabla_{u_j}\!\bigl(\pi_i(x;U)(x-u_i)\bigr)
=
-\bigl(\nabla_{u_j}\pi_i(x;U)\bigr)(x-u_i)^\top+\pi_i(x;U)\delta_{ij}I_d.
\]
Substituting the expression for $\nabla_{u_j}\pi_i$ yields the displayed block formulas for $i=j$ and $i\neq j$.

Finally, the bounds use $0\le \pi_i\le 1$ and, on a bounded set $\mathcal U$, the uniform bound $\max_i\|u_i\|_2\le C_\mathcal U$:
\[
\|\nabla_{u_i}\ell(U;x)\|_2 \le \|x-u_i\|_2 \le \|x\|_2 + C_\mathcal U,
\]
and each Hessian block is a sum of terms bounded by $1$ and $\|x-u_i\|_2\|x-u_j\|_2$, hence by
$C(1+\|x\|_2^2)$ uniformly over $U\in\mathcal U$. Summing over the $K\times K$ blocks gives the stated operator-norm bound.
\end{proof}

\begin{lemma}[Local uniqueness of the MLE after fixing the permutation symmetry]
\label{lem:mog-mle-unique-local}
Let $L(U)=\E[\ell(U;X)]$ and $\hat L_m(U)=\frac1m\sum_{j=1}^m \ell(U;X_j)$, where $X\sim p_{U^\star}$.
Fix a gauge-fixing set $\Theta\subset(\R^d)^K$ that removes the permutation symmetry locally (e.g.\ a local ordering rule),
so that $U^\star\in\Theta$ and no nontrivial permutation of $U^\star$ belongs to $\Theta$.
Assume:
\begin{enumerate}[label=(\roman*)]
\item if $p_U=p_{U^\star}$ then $U$ equals a permutation of $U^\star$ (identifiability up to permutation);
\item $L$ is twice differentiable on $\Theta$ and there exist $\mu>0$ and $\rho>0$ such that
$\nabla^2 L(U)\succeq \mu I$ for all $U\in\Theta\cap B(U^\star,\rho)$;
\item $\sup_{U\in\Theta\cap B(U^\star,\rho)}\|\nabla^2\hat L_m(U)-\nabla^2 L(U)\|_{\mathrm{op}}\to 0$ in probability.
\end{enumerate}
Then, with probability tending to one, $\hat L_m$ is $\mu/2$-strongly convex on $\Theta\cap B(U^\star,\rho)$
and has a unique minimizer there, denoted $\hat U_m$, and $\hat U_m\to U^\star$ in probability.
\end{lemma}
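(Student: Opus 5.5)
The plan is a direct adaptation of the consistency argument in \Cref{prop:euclid_consistency}, with strong convexity on the gauge-fixed ball supplying uniqueness. Throughout, set $N_\rho\coloneqq \Theta\cap \overline B(U^\star,\rho)$ and shrink $\rho$ if necessary so that $N_\rho$ is convex and compact. For instance, a local ordering rule defined by strict inequalities on a neighborhood of $U^\star$ gives a convex cone near $U^\star$, since the centers are distinct. Shrinking $\rho$ also ensures that no nontrivial permutation of $U^\star$ lies in $N_\rho$.

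First I establish uniqueness of the population minimizer and strong convexity of the empirical loss. By (i), $U^\star$ is the unique minimizer of $L$ on $N_\rho$: indeed, $L(U)-L(U^\star)=\mathrm{KL}(p_{U^\star}\|p_U)\ge 0$, with equality only if $p_U=p_{U^\star}$, which forces $U$ to be a permutation of $U^\star$, and the only such permutation in $N_\rho$ is $U^\star$ itself. Next, on the event $E_m\coloneqq\{\sup_{N_\rho}\|\nabla^2\hat L_m-\nabla^2L\|_{\mathrm{op}}\le\mu/2\}$, which has probability tending to one by (iii), Weyl's inequality and (ii) give $\nabla^2\hat L_m\succeq(\mu/2)I$ on $N_\rho$. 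Since $N_\rho$ is convex, $\hat L_m$ is then $\mu/2$-strongly convex there. It is continuous on a compact set, so a minimizer $\hat U_m$ exists, and strong convexity makes it unique.

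Second I prove consistency, using a quadratic lower bound on the convex set. Because $\nabla L(U^\star)=0$ (by $\E[\nabla\ell(U^\star;X)]=0$ and dominated differentiation using the bounds of \Cref{lem:mog-score-hess}), strong convexity on $E_m$ yields, for all $U\in N_\rho$,
\[
\hat L_m(U)\ \ge\ \hat L_m(U^\star)+\langle\nabla\hat L_m(U^\star),U-U^\star\rangle+\tfrac{\mu}{4}\|U-U^\star\|^2.
\]
Evaluating this at $U=\hat U_m$ and using $\hat L_m(\hat U_m)\le\hat L_m(U^\star)$ together with Cauchy--Schwarz gives $\|\hat U_m-U^\star\|\le (4/\mu)\|\nabla\hat L_m(U^\star)\|$. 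The score bound $\|\nabla\ell(U^\star;x)\|\le C(1+\|x\|)$ from \Cref{lem:mog-score-hess} gives finite second moments under the Gaussian mixture, so by the LLN (even the CLT) $\|\nabla\hat L_m(U^\star)\|=O_{\mathbb P}(m^{-1/2})\to0$. Hence $\hat U_m\to U^\star$ in probability; it also eventually lies in the interior of $N_\rho$, so it is a genuine local minimizer.

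The main obstacle I expect is the geometric point that $\Theta\cap B(U^\star,\rho)$ must be convex, since strong convexity along segments requires this. I will handle it by choosing the gauge-fixing set as a small open neighborhood on which the ordering is constant. Near a point with distinct centers, such a set can simply be a ball, because the permutation orbit points are separated by at least $\min_{i\neq j}\|u_i^\star-u_j^\star\|>0$.
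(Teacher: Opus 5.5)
Your proof is correct. The strong-convexity step is the same as the paper's: Weyl's inequality on the high-probability event from (iii) gives $\nabla^2\hat L_m\succeq(\mu/2)I$, hence a unique minimizer. Your consistency argument, however, is genuinely different.

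The paper argues in the classical Wald style. Identifiability (i) makes $U^\star$ the unique minimizer of $L$ on $\Theta\cap\overline B(U^\star,\rho)$. A uniform law of large numbers for $\hat L_m$ on that compact set then forces $\hat U_m\to U^\star$; the paper asserts this uniform LLN follows from the regularity behind (iii) and the mixture's tails, but does not prove it.

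You instead use the quadratic lower bound from strong convexity at $U^\star$ to get $\|\hat U_m-U^\star\|\le(4/\mu)\|\nabla\hat L_m(U^\star)\|$. You then only need $\nabla L(U^\star)=0$ and a finite second moment of the score at the single point $U^\star$. This buys you three things:
\begin{itemize}
\item a self-contained argument with no uniform LLN for the loss;
\item an explicit $O_{\mathbb P}(m^{-1/2})$ rate;
\item no real use of (i). Your appeal to it in the first step is redundant, since strong convexity plus $\nabla L(U^\star)=0$ already isolates $U^\star$ locally.
\end{itemize}
The paper's route avoids the first-order condition and score moments, but leans on an unverified uniform convergence.

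You also repair a point the paper passes over. A Hessian lower bound yields strong convexity and uniqueness of the minimizer only on a convex domain, and existence needs compactness plus an interior argument. Shrinking to a closed ball inside $\Theta$ is the right fix; it is possible because the orbit points $g\cdot U^\star$ with $g\neq e$ are bounded away from $U^\star$. The price is that you prove the conclusion on a possibly smaller neighborhood than the stated $\Theta\cap B(U^\star,\rho)$. That is unavoidable when that set is not convex.
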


\begin{proof}
This is identical to the standard strong-convexity plus uniform-Hessian-consistency argument; we include it here for completeness.
By (iii), with probability tending to one,
\[
\sup_{U\in \Theta\cap B(U^\star,\rho)}
\big\|\nabla^2\hat L_m(U)-\nabla^2 L(U)\big\|_{\mathrm{op}}
\le \mu/2.
\]
On this event, (ii) implies $\nabla^2\hat L_m(U)\succeq (\mu/2)I$ for all $U\in\Theta\cap B(U^\star,\rho)$, hence $\hat L_m$
is $\mu/2$-strongly convex there and has a unique minimizer. Consistency follows from (i) together with uniform convergence of $\hat L_m$ to $L$
on $\Theta\cap\overline{B}(U^\star,\rho)$, which is implied by the same regularity underlying (iii) and the polynomial tail control of $X$ under the mixture.
\end{proof}

\begin{lemma}
\label{lem:mog:D5}
Assume \Cref{ass:ex-mog-regular} and the conditions of
\Cref{lem:mog-mle-unique-local} for a gauge-fixing set \(\Theta\).  Let
\[
F(\underline U)
\coloneqq
\E[\ell(s(\underline U);X)],
\qquad
\hat F_m(\underline U)
\coloneqq
\frac1m\sum_{j=1}^m \ell(s(\underline U);X_j),
\]
where \(s\) is any local section of the quotient map near
\(\underline U_\star=[U^\star]\).  Then Assumption~\ref{ass:riem_mestimation}
holds for the quotient objective \(F\) at \(\underline U_\star\).
\end{lemma}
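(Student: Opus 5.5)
The plan is to reduce everything to Euclidean statements in the gauge-fixed chart. Because the $S_K$-action is free on the regular set (\Cref{ass:ex-mog-regular}(i)), the quotient near $\underline U_\star$ is an open subset of $\R^{Kd}$. It is locally isometric to $\Theta\cap B(U^\star,\rho)$ via the section $s$, and $s$ can be taken as the inverse of the projection restricted to $\Theta$. In these coordinates the exponential map is $v\mapsto s^{-1}(U^\star+v)$, the log map is $\hat{\underline U}_m\mapsto s(\hat{\underline U}_m)-U^\star$, and parallel transport is the identity. The Riemannian gradient and Hessian of $F$ and $\hat F_m$ are therefore the Euclidean $\nabla L$, $\nabla^2 L$, $\nabla\hat L_m$, $\nabla^2\hat L_m$ evaluated at $s(\underline U)$. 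Consequently, each item of \Cref{ass:riem_mestimation} becomes the corresponding item of \Cref{ass:euclid_mestimation} for $L$ on $\Theta$. Since the loss is $S_K$-invariant, $F$ does not depend on the choice of section.

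Next I verify items (iii)--(v) via \Cref{lem:mog-score-hess}. For (iii), smoothness is given there. The score bound $\|\nabla\ell(U^\star;X)\|\le C(1+\|X\|)$, together with finite Gaussian-mixture moments, yields $\E\|\mathrm{grad}\,f(\underline U_\star;X)\|^2<\infty$. Item (iv) follows from hypothesis (ii) of \Cref{lem:mog-mle-unique-local}: $\nabla^2L(U^\star)\succeq\mu I$, so $H_\star$ is invertible. Item (v) is hypothesis (iii) of that lemma, because transported Hessians coincide with Euclidean ones in the flat chart. If one prefers to derive it, use the envelope $C(1+\|x\|^2)$ for the Hessian on the compact ball, local Lipschitz continuity of the Hessian in $U$ with envelope $C(1+\|x\|^3)$ obtained by differentiating the block formulas once more, and then apply \citet[Lemma~2.4]{newey1994largesample} entrywise.

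For (ii), the uniform LLN on the compact set $s^{-1}(\Theta\cap\overline B(U^\star,\epsilon'))$ follows from the same lemma. The argument uses continuity in $U$ and the envelope $|\ell(U;x)|\le \log K+\tfrac d2\log(2\pi)+\tfrac12\max_i\|x-u_i\|^2\le C(1+\|x\|^2)$, which holds because $\ell$ is bounded below by the nearest-center term. Localization $\Pr(\hat{\underline U}_m\in\cdot)\to1$ is exactly the consistency conclusion of \Cref{lem:mog-mle-unique-local}, with $\hat U_m$ read as the local minimizer it provides.

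The main obstacle is (i), the global identification and separation on the quotient, since \Cref{ass:riem_mestimation}(i) is stated over all of $\mathcal M$. Uniqueness of $\underline U_\star$ as a minimizer of $F$ follows from $L(U)-L(U^\star)=\mathrm{KL}(p_{U^\star}\,\|\,p_U)\ge0$, with equality only if $p_U=p_{U^\star}$. By hypothesis (i) of \Cref{lem:mog-mle-unique-local}, this forces $[U]=\underline U_\star$.

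For separation I would argue by contradiction. Take a sequence with $d(\underline U_j,\underline U_\star)\ge\epsilon$ and $\mathrm{KL}\to0$. If the sequence is bounded, extract a convergent subsequence; continuity of KL gives a limit minimizer at distance at least $\epsilon$, contradicting uniqueness. The limit may have coincident centers, but identifiability still forces it to be a permutation of $U^\star$, so the contradiction stands. If some center diverges, then $p_{U_j}$ loses mass $1/K$ to infinity, so $p_{U_j}\to p_{U^\star}$ fails and KL stays bounded below by Pinsker's inequality. If this global step proves delicate, the fallback is to note that only the local version (separation on the compact neighborhood, which follows from strong convexity) is used in the proof of \Cref{prop:riem_consistency} once localization is granted by \Cref{lem:mog-mle-unique-local}.
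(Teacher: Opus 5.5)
Your proposal is correct and follows the same item-by-item verification as the paper. That is: KL divergence plus identifiability for (i), Newey--McFadden's Lemma~2.4 with a quadratic envelope together with the consistency conclusion of \Cref{lem:mog-mle-unique-local} for (ii), the bounds of \Cref{lem:mog-score-hess} for (iii), and the strong-convexity and uniform-Hessian hypotheses of \Cref{lem:mog-mle-unique-local} for (iv)--(v). Two parts of your write-up make rigorous what the paper compresses into ``compactness of local sublevel sets'': the explicit flat-chart reduction (trivial exponential map and parallel transport), and the separation argument in (i), which rules out escaping centers via the lost mass $1/K$ and Pinsker and handles coincident-center limits via identifiability. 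Your remark that only local separation is actually used in \Cref{prop:riem_consistency} is also accurate.
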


\begin{proof}
We verify the five parts of Assumption~\ref{ass:riem_mestimation}.

\paragraph{(i) Identification and separation.}
The population objective is the cross-entropy
\[
F(\underline U)
=
\E_{X\sim p_{U^\star}}\big[-\log p_{s(\underline U)}(X)\big].
\]
Equivalently,
\[
F(\underline U)-F(\underline U_\star)
=
\mathrm{KL}\!\left(p_{U^\star}\,\|\,p_{s(\underline U)}\right)\ge 0.
\]
Equality holds if and only if \(p_{s(\underline U)}=p_{U^\star}\).  By the
identifiability condition in \Cref{lem:mog-mle-unique-local}, this happens if
and only if \(s(\underline U)\) equals a permutation of \(U^\star\), i.e.
\(\underline U=\underline U_\star\).  Thus \(\underline U_\star\) is the unique
minimizer of \(F\) on the quotient.

The separation condition follows from the same identifiability and the local
gauge fixing.  Indeed, if for some \(\epsilon>0\) the separation failed, there
would exist \(\underline U_j\) such that
\[
d(\underline U_j,\underline U_\star)\ge \epsilon,
\qquad
F(\underline U_j)\downarrow F(\underline U_\star).
\]
Choosing the local gauge representative whenever \(\underline U_j\) lies near
\(\underline U_\star\), the compactness of local sublevel sets and continuity of
the likelihood give a limit point \(\underline U_\infty\) satisfying
\(F(\underline U_\infty)=F(\underline U_\star)\).  By the uniqueness just proved,
\(\underline U_\infty=\underline U_\star\), contradicting the distance lower
bound.  Hence, for every \(\epsilon>0\),
\[
\inf_{\underline U:\,d(\underline U,\underline U_\star)\ge \epsilon}
\big(F(\underline U)-F(\underline U_\star)\big)>0.
\]

\paragraph{(ii) Uniform LLN on a compact set.}
Let $K_{\epsilon'}
\coloneqq
\exp_{\underline U_\star}
\big(\overline B(\underline U_\star,\epsilon')\big)$ be the compact normal-coordinate set appearing in
Assumption~\ref{ass:riem_mestimation}.  On this set, choose a smooth local
section \(s\).  For each fixed \(x\), the map $\underline U\mapsto \ell(s(\underline U);x)$ is continuous.  Moreover, the centers \(s(\underline U)\) remain in a bounded
subset of \((\R^d)^K\) as \(\underline U\) ranges over \(K_{\epsilon'}\).  Hence
there exists \(C_K<\infty\) such that
\[
\sup_{\underline U\in K_{\epsilon'}}
|\ell(s(\underline U);x)|
\le
C_K(1+\|x\|^2).
\]
The right-hand side is integrable under the spherical Gaussian mixture.  Thus,
by \citet[Lemma~2.4]{newey1994largesample},
\[
\sup_{\underline U\in K_{\epsilon'}}
|\hat F_m(\underline U)-F(\underline U)|
\xrightarrow{\mathbb P}0.
\]

It remains to check localization.  By \Cref{lem:mog-mle-unique-local}, after
fixing the local gauge, the empirical likelihood has a unique local minimizer
\(\hat U_m\) with probability tending to one and $\hat U_m\xrightarrow{\mathbb P}U^\star$. Therefore
\[
\hat{\underline U}_m=[\hat U_m]\xrightarrow{\mathbb P} [U^\star]=\underline U_\star.
\]
Consequently, $\Pr\!\left(
\hat{\underline U}_m\in K_{\epsilon'}
\right)\to 1$ for every sufficiently small fixed \(\epsilon'>0\).

\paragraph{(iii) Local \(C^2\) smoothness and score moments.}
By \Cref{lem:mog-score-hess}, \(U\mapsto \ell(U;x)\) is \(C^\infty\) for every
\(x\).  Since the local section \(s\) is smooth, the quotient-coordinate map $\underline U\mapsto \ell(s(\underline U);x)$ is \(C^2\) on a normal neighborhood of \(\underline U_\star\).

The Riemannian gradient in quotient coordinates is obtained from the Euclidean
score in the local gauge by a smooth change of coordinates.  On bounded gauge
neighborhoods, \Cref{lem:mog-score-hess} gives
\[
\|\nabla \ell(U;x)\|_2
\le
C(1+\|x\|).
\]
Therefore,
\[
\|\operatorname{grad}\ell(\underline U_\star;X)\|_{\underline U_\star}
\le
C(1+\|X\|).
\]
Since \(X\) has finite moments of all orders under the Gaussian mixture,
\[
\E\!\left[
\|\operatorname{grad}\ell(\underline U_\star;X)\|_{\underline U_\star}^2
\right]<\infty.
\]
Thus Assumption~\ref{ass:riem_mestimation}(iii) holds.

\paragraph{(iv) Nondegenerate minimizer.}
By \Cref{lem:mog-mle-unique-local}, in the gauge-fixed coordinates there exist
\(\mu>0\) and \(\rho>0\) such that
\[
\nabla^2 L(U)\succeq \mu I
\qquad
\text{for all }U\in\Theta\cap B(U^\star,\rho).
\]
In particular, the Hessian at \(U^\star\) is positive definite in the
gauge-fixed coordinates.  Since the gauge chart is a smooth local coordinate
chart for the quotient manifold, this is equivalent to positive definiteness of
the Riemannian Hessian
\[
H_\star
=
\operatorname{Hess}F(\underline U_\star):
T_{\underline U_\star}\big((\R^d)^K/S_K\big)
\to
T_{\underline U_\star}\big((\R^d)^K/S_K\big).
\]
Thus \(H_\star\) is invertible.

\paragraph{(v) Uniform transported Hessian convergence.}
On the compact normal-coordinate set \(K_{\epsilon'}\), the local section,
the exponential map, and parallel transport are smooth with uniformly bounded
derivatives.  Therefore the transported Hessian entries are finite-dimensional
linear combinations of the Euclidean Hessian entries in the gauge chart with
smooth bounded coefficients.

By \Cref{lem:mog-score-hess}, on bounded gauge neighborhoods,
\[
\|\nabla^2\ell(U;X)\|_{\mathrm{op}}
\le
C(1+\|X\|^2),
\]
and the right-hand side is integrable under the Gaussian mixture.  Hence, again
by \citet[Lemma~2.4]{newey1994largesample}, applied entrywise to the transported
Hessian matrix,
\[
\sup_{\underline U\in K_{\epsilon'}}
\big\|
\widetilde H_m(\underline U)-\widetilde H(\underline U)
\big\|
\xrightarrow{\mathbb P}0,
\]
where
\[
\widetilde H_m(\underline U)
=
\mathcal P_{\underline U\to\underline U_\star}
\circ
\operatorname{Hess}\hat F_m(\underline U)
\circ
\mathcal P_{\underline U_\star\to\underline U},
\]
and
\[
\widetilde H(\underline U)
=
\mathcal P_{\underline U\to\underline U_\star}
\circ
\operatorname{Hess}F(\underline U)
\circ
\mathcal P_{\underline U_\star\to\underline U}.
\]
This is exactly Assumption~\ref{ass:riem_mestimation}(v).
\end{proof}

\subsection{Fr\'echet differentiability of $\underline U\mapsto \Pi_{\underline U}$}
\label{app:proof-cor-mog:proj}

The master theorem requires Fr\'echet differentiability of $\underline U\mapsto\Pi_{\underline U}$ at $\underline U_\star$ in operator norm on $L^2(\mu_{\mathrm{down}})$.
We do not re-prove this here; instead we verify the hypotheses of the general differentiability result proved in \Cref{app:rep}.

\begin{lemma}[Verification of the projector differentiability hypotheses]
\label{lem:mog-proj-diff-hyp}
Let $\mathcal V$ be a neighborhood of $\underline U_\star$ on which \Cref{lem:mog-moments} holds.
Then:
\begin{enumerate}[label=(\roman*)]
\item In the local chart fixing the permutation ambiguity, $\underline U\mapsto \phi(\cdot,\underline U)$ is $C^1$ on $\mathcal V$.
\item There exist $\delta>0$ and $C_{\partial\phi}<\infty$ such that
\[
\sup_{\underline U\in\mathcal V}\ \E\big[\|D_{\underline U}\phi(X,\underline U)\|_{\mathrm{op}}^{4+\delta}\big]\le C_{\partial\phi}.
\]
\item If, additionally, $\Sigma(\underline U)$ has stable rank/eigengap on $\mathcal V$ (as in \Cref{lem:mog-gap-stable}),
then $\underline U\mapsto \Pi_{\underline U}$ is Fr\'echet differentiable at $\underline U_\star$ in operator norm.
\end{enumerate}
\end{lemma}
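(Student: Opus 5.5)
For part (i), I would fix a local section $s$ on $\mathcal V$ that implements the ordering rule, so that $\phi(x,\underline U)=\psi_{s(\underline U)}(x)$. The section $s$ is $C^\infty$ because the $S_K$ action is free on the regular set of \Cref{ass:ex-mog-regular}. This reduces the claim to $C^1$ regularity of $U\mapsto\psi_U(x)$ near $U^\star$.

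The first ingredient is the projector $P_U$. It is a spectral projector of $S(U)$, and $S(U)$ is polynomial in $U$. By the eigengap in \Cref{ass:ex-mog-regular}(ii), the Riesz formula used in \Cref{lem:spectral-projector-diff} shows that $U\mapsto P_U$ is $C^\infty$ on a neighborhood, with locally bounded derivative. The responsibilities $\pi_i(x;U)$ are then a softmax of the logits
\[
a_i(x;U)=\langle P_Uu_i,P_Ux\rangle-\tfrac12\|P_Uu_i\|^2,
\]
each of which is smooth in $U$ for fixed $x$. Hence every block $\pi_i(x;U)P_U(x-u_i)$ and $\pi_i(x;U)$ is $C^1$ (indeed $C^\infty$) in $U$, and composing with $s$ gives (i).

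For part (ii), I would differentiate explicitly and bound the result polynomially in $\|x\|$, uniformly over $U$ in a compact neighborhood. The softmax derivative is $\partial\pi_i=\pi_i\big(\partial a_i-\sum_j\pi_j\partial a_j\big)$, and $|\partial a_j|\le C(1+\|x\|)$ since $\|DP_U\|$, $\|u_i\|$ and $\|Ds\|$ are locally bounded. Combining these gives
\[
\|D\pi_i\|\le C(1+\|x\|),\qquad \|D[\pi_iP_U(x-u_i)]\|\le C(1+\|x\|)^2 ,
\]
using $0\le\pi_i\le1$ and the product rule. Therefore $\|D_{\underline U}\phi(x,\underline U)\|_{\mathrm{op}}\le C(1+\|x\|)^2$ uniformly on $\mathcal V$. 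Since a spherical Gaussian mixture has finite moments of every order, the $(4+\delta)$-th moment is bounded for any $\delta>0$, as in \Cref{lem:mog-moments}.

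For part (iii), I would verify the hypotheses of \Cref{prop:proj-diff}. \Cref{ass:phi-C1-M} follows from (i)–(ii). The moment part of \Cref{ass:moments} follows from \Cref{lem:mog-moments}. By \Cref{ass:ex-mog-regular}(iii), $\Sigma(\underline U_\star)\succ0$, so \Cref{lem:mog-gap-stable} gives $\lambda_{\min}(\Sigma)\ge\kappa$ on $\mathcal V$. The leverage bounds then follow from $q_{\underline U}\le\kappa^{-1}\|\phi\|^2$, and $|e_{\underline U}|\le C(1+\|x\|)$ is proved as in \Cref{lem:linear:leverage-moments}. Because $\Sigma$ is invertible, $\ker T_{\underline U_\star}=\{0\}$ and $\mathcal B_v=\{0\}$, so \Cref{ass:stable-null-span} holds with $\mathcal B_0=\{0\}$, exactly as in \Cref{lem:linear:stable-null-span}. \Cref{prop:proj-diff} then yields Fréchet differentiability of $\underline U\mapsto\Pi_{\underline U}$.

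The main obstacle I expect is the uniform derivative bound in (ii). One has to make sure that the derivative of $P_U$ through the eigengap is uniformly bounded on the chosen neighborhood, and that the bound does not degrade through the softmax; shrinking $\mathcal V$ so that the eigengap stays at least half its value at $U^\star$ should handle this.
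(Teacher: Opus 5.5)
Your proposal is correct and follows essentially the same route as the paper. For (i) you use smoothness of $P_U$ via the eigengap and treat $\pi_i$ as a softmax of smooth logits. For (ii) you bound the derivatives polynomially in $\|x\|$ and use finite Gaussian-mixture moments. For (iii) you appeal to \Cref{prop:proj-diff}. You are more explicit than the paper, notably in writing out the softmax derivative and in checking the leverage moments and $\mathcal B_0=\{0\}$. Deriving $\Sigma(\underline U_\star)\succ0$ from \Cref{ass:ex-mog-regular}(iii) is optional, since the lemma already grants the stable-rank/eigengap condition as a hypothesis in (iii).
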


\begin{proof}
(i) On the regular set, $U\mapsto P_U$ and $U\mapsto (P_Uu_i)$ are smooth, and $(x,U)\mapsto \pi_i(x;U)$ is a softmax of smooth functions.
In a local chart around $U^\star$ fixing the permutation ambiguity, these objects become smooth functions of $\underline U$, hence
$\underline U\mapsto \phi(x,\underline U)$ is $C^1$ for each $x$.

(ii) Differentiating $\phi$ produces finite sums of terms involving $\pi_i$, $D\pi_i$, $P_U$, and $DP_U$, multiplied by $x$ and the bounded centers.
On any bounded neighborhood of $U^\star$, these derivatives are bounded by polynomials in $\|x\|_2$, while $0\le \pi_i\le 1$.
Since $X$ under the mixture has finite moments of all orders, we obtain the stated bound for some $\delta>0$.

(iii) This is exactly the implication of the general projector differentiability result in \Cref{app:rep} once the moment bounds and
the stable-rank/eigengap condition are in place.
\end{proof}

\subsection{Proof of Corollary~\ref{cor:GoM-model}}
\label{app:proof-cor-mog:conclude}

\begin{proof}[Proof of Corollary~\ref{cor:GoM-model}]
Under \Cref{ass:ex-mog-regular} and the model-specific verifications above:
\begin{itemize}
\item \Cref{lem:mog-moments} gives the local-uniform moment bounds in \Cref{ass:moments}.
\item \Cref{lem:mog-sigma-cont} and~\Cref{lem:mog-gap-stable} give rank stability/eigengap for $\Sigma(M)$ on a neighborhood of $M_\star$.
\item \Cref{lem:mog:D5} yields the manifold CLT
$Z_m=\sqrt m\,\log_{M_\star}(\hat M_m)\distconv Z\sim\mathcal N(0,V)$.
\item \Cref{lem:mog-proj-diff-hyp} verifies the hypotheses needed to invoke the general differentiability theorem for $M\mapsto\Pi_M$
from \Cref{app:rep}.
\end{itemize}
Assuming compatibility $\mathrm{Rep}(M_\star)=0$, the hypotheses of the master theorem in the compatible regime
(Theorem~\ref{thm:master-compatible} in the main text) hold for this model.
Applying that theorem along any joint limit $(m,n)\to(\infty,\infty)$ with $m/n\to\alpha\in(0,\infty)$ yields the claimed limit
in Corollary~\ref{cor:GoM-model}, with $\mathcal L(v)=-D\Pi_{M_\star}[v]f_\star$.
\end{proof}

\section{Experiment Details}
\label{app:experiment-details}

This section provides the full setup underlying
Figure~\ref{fig:gmm-rates-scan}. Our goal is to numerically evaluate, in the
synthetic Gaussian-mixture model of \Cref{sec:ex-mog}, the pretraining
interaction quantity
\[
\mathbb{E}\!\left[\|\mathcal{L}(Z)\|_{L^2(\mu_{\mathrm{down}})}^2\right],
\]
and to study how it depends on the number of mixture components \(K\), the
ambient dimension \(d\), and the separation parameter \(\beta\).

\paragraph{Gaussian-mixture model.}
We consider an equally weighted \(K\)-component Gaussian mixture in
\(\mathbb{R}^d\) with identity covariance and component means
\[
U^\star = \beta I_{K,d},
\]
where \(I_{K,d}\) denotes the \(K\times d\) rectangular identity matrix.
Thus the \(k\)th component is centered at \(\beta e_k\), for
\(k=1,\dots,K\), and in particular we require \(d \ge K\).

\paragraph{Downstream signal.}
The downstream target is taken to be linear in the projected feature map, with
a block structure aligned with the mixture components. For each
\(i \in [K]\), we associate a parameter block \((\theta_i^\star,b_i^\star)\),
with block magnitude proportional to \(1/i\). Equivalently, the concatenated
parameter vector
\[
(\theta_1^\star,b_1^\star,\ldots,\theta_K^\star,b_K^\star)
\]
is proportional to
\[
\left(\mathbf{1}_{d+1},\frac{1}{2}\mathbf{1}_{d+1},\ldots,
\frac{1}{K}\mathbf{1}_{d+1}\right),
\]
and is then normalized to have unit Euclidean norm.

\paragraph{Monte Carlo procedure.}
For each parameter triple \((K,d,\beta)\), we estimate
\(\mathbb{E}[\|\mathcal{L}(Z)\|_{L^2(\mu_{\mathrm{down}})}^2]\) using three Monte Carlo
stages:
\begin{enumerate}
    \item estimation of the Fisher information matrix;
    \item estimation of the projection coefficients defining the downstream predictor;
    \item estimation of the final quadratic quantity using fresh evaluation samples.
\end{enumerate}
In our main experiments, we use \(100{,}000\) samples for the
Fisher-information estimate, \(1{,}000{,}000\) samples for the projection step,
and \(1{,}000{,}000\) fresh samples for the evaluation step.

\paragraph{Descriptive fits and observed trends.}
To summarize the empirical scaling behavior, we overlay simple descriptive fits
in each panel. In panel (a), the dependence on \(K\) is monotone increasing and
concave over the plotted range, and is well captured by a quadratic fit
\(aK^2 + bK + c\) with \(R^2=0.99\). In panel (b), the dependence on \(d\)
shows slow growth and is reasonably described by a logarithmic fit
\(a + b \log d\) with \(R^2=0.89\). In panel (c), the dependence on
\(\beta\) decays rapidly and is well summarized by a power-law fit
\(C\beta^\alpha\) with \(R^2=0.90\). These fits should be interpreted as
compact summaries of the numerical trends rather than as formal asymptotic
claims.

\paragraph{Finite-sample distributional validation.}
We also simulate the full two-stage procedure in order to check the distributional prediction of~\Cref{thm:master-compatible}. We fix a representative GMM instance with
\[
K=4,\qquad d=20,\qquad \beta=2,
\qquad \alpha \coloneqq m/n = 2.
\]
For each downstream sample size $n$, we set $m=\lfloor \alpha n\rfloor$, draw $m$ unlabeled pretraining samples from the mixture, estimate the centers by EM, draw an independent downstream design $X_{1:n}$, and compute the conditional scaled excess risk
\[
n\bigl(R(D_{\mathrm{pre}}^{(m)},X_{1:n})-\sigma^2\bigr).
\]
The conditional expectation over downstream label noise is computed analytically using the exact conditional risk decomposition according to~\Cref{prop:exact-decomp}, rather than by repeatedly resampling labels. Thus, each Monte Carlo repetition produces the total scaled risk together with its finite-sample variance and pretraining components.

The asymptotic comparison is made against the limiting random variable
\[
E_\alpha
=
\sigma^2 d_{\mathrm{eff}}(\Omega_\star)
+
\alpha^{-1}\|\mathcal{L}(Z)\|^2_{L^2},
\qquad
Z\sim\mathcal N(0,H_\star^{-1}\Sigma_\star H_\star^{-1}).
\]
For the pretraining component alone, the corresponding limiting law is $\alpha^{-1}\|\mathcal{L}(Z)\|^2_{L^2}$, whereas the variance contribution converges in probability to the deterministic limit $\sigma^2 d_{\mathrm{eff}}(\Omega_\star)$. Figure~\ref{fig:gmm-cdfs} displays empirical CDFs for a representative subset of sample sizes from a logarithmic grid. The dashed black curves denote the asymptotic laws. For visual clarity, the CDF panels show only a subset of the simulated $n$-values, while the quantitative metrics below are computed on the full logarithmic grid. For the finite-sample validation, we use \(1000\) repetitions per \(n\), \(2\times 10^4\) Gaussian draws for the asymptotic CDFs, \(10^5\) samples for the Fisher estimate, and \(2\times 10^5\) samples for each projection, evaluation, and test-risk computation.

\begin{figure}[t]
    \centering
    \includegraphics[width=0.7\textwidth]{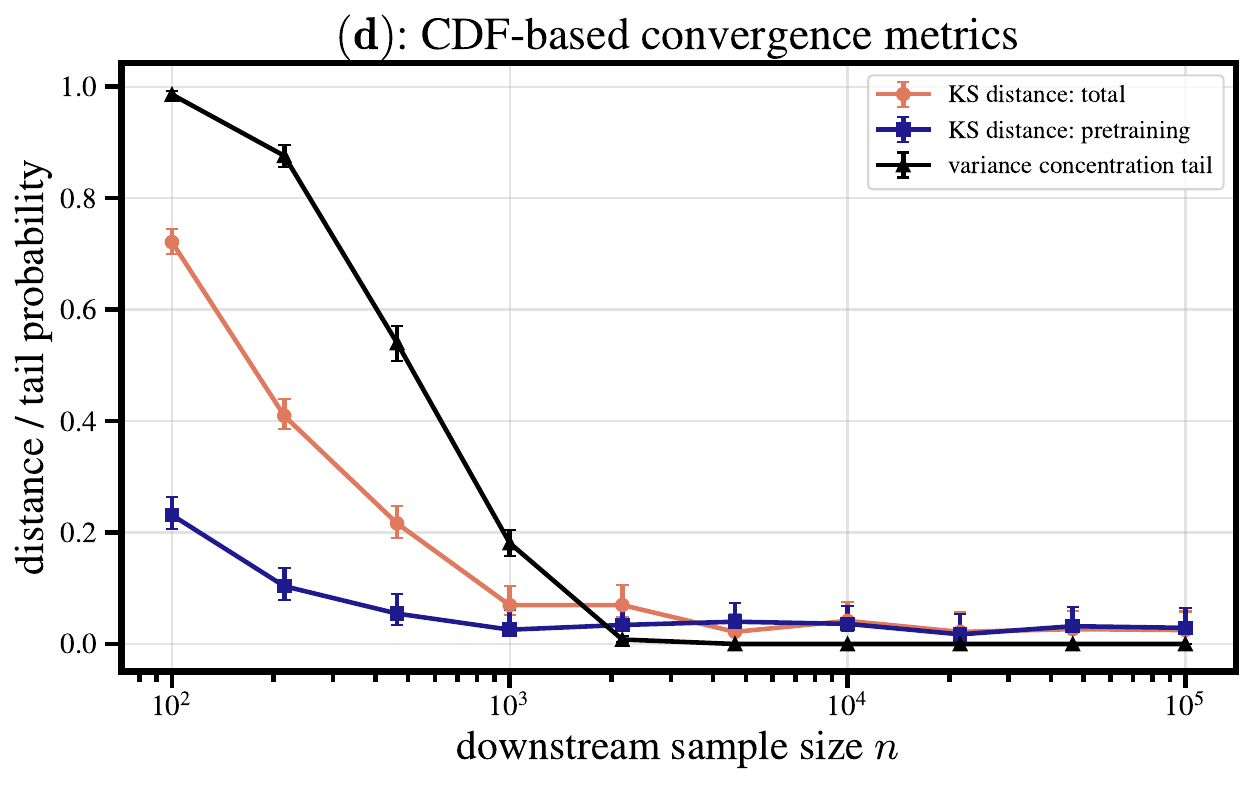}
    \caption{CDF-based convergence metrics for the GMM distributional validation. We plot the Kolmogorov distances \(D_{\mathrm{KS}}^{\mathrm{tot}}(n)\) and \(D_{\mathrm{KS}}^{\mathrm{pre}}(n)\) between the empirical CDFs and the corresponding asymptotic CDFs for the total scaled excess risk and pretraining contribution. Since the scaled variance contribution has the degenerate limit \(\sigma^2d_{\mathrm{eff}}(\Omega_\star)\), we instead plot the concentration probability \(p_{\mathrm{var}}(n;\varepsilon)\). Metrics are computed over the logarithmic grid $n \in \operatorname{round}\!\left(\operatorname{geomspace}(100,10^5,10)\right)$.}
    \label{fig:gmm-metric}
\end{figure}

\paragraph{Distributional metrics.}
To quantify the convergence observed in the CDF plots, we report Kolmogorov distances for the non-degenerate limiting distributions. For the total scaled excess risk, let \(\widehat F_{n,\mathrm{tot}}\) denote the empirical CDF of \(n\bigl(R(D_{\mathrm{pre}}^{(m)},X_{1:n})-\sigma^2\bigr)\) over repeated two-stage simulations, and let \(\widehat F_{\infty,\mathrm{tot}}\) denote the Monte Carlo CDF of the limiting law \(E_\alpha\). We define
\[
D_{\mathrm{KS}}^{\mathrm{tot}}(n)
\coloneqq
\sup_t
\left|
\widehat F_{n,\mathrm{tot}}(t)
-
\widehat F_{\infty,\mathrm{tot}}(t)
\right|.
\]
Similarly, for the pretraining component, we define
\[
D_{\mathrm{KS}}^{\mathrm{pre}}(n)
\coloneqq
\sup_t
\left|
\widehat F_{n,\mathrm{pre}}(t)
-
\widehat F_{\infty,\mathrm{pre}}(t)
\right|,
\]
where \(\widehat F_{n,\mathrm{pre}}\) denotes the empirical CDF of $n\mathrm{Rep}(\Omega_m)$ and \(\widehat F_{\infty,\mathrm{pre}}\) denotes the Monte Carlo CDF of
\(\alpha^{-1}\|\mathcal{L}(Z)\|^2_{L^2}\).

For the variance contribution, the limiting distribution is the point mass at
\(\sigma^2d_{\mathrm{eff}}(\Omega_\star)\). Since the limiting CDF is discontinuous, an ordinary Kolmogorov distance to this point mass is not a stable convergence metric. Instead, we report the concentration probability
\[
p_{\mathrm{var}}(n;\varepsilon)
\coloneqq
\widehat{\mathbb P}
\left(
\left|
V_{m,n}
-
\sigma^2d_{\mathrm{eff}}(\Omega_\star)
\right|
>
\varepsilon
\right),
\]
where \(V_{m,n}\) denotes the scaled variance contribution and we take
\[
\varepsilon
=
0.05\,\sigma^2d_{\mathrm{eff}}(\Omega_\star).
\]
The quantities \(D_{\mathrm{KS}}^{\mathrm{tot}}(n)\), \(D_{\mathrm{KS}}^{\mathrm{pre}}(n)\), and \(p_{\mathrm{var}}(n;\varepsilon)\) are reported in~\Cref{fig:gmm-metric}. Decreasing values across the logarithmic $n$-grid provide a quantitative summary of the distributional convergence shown in Figure~\ref{fig:gmm-cdfs}.

\section{AI Tool Usage}
\label{sec:appendix:AI_tool}

OpenAI's ChatGPT was used to 
assist in identifying relevant papers during the literature review,
brainstorm high-level proof strategies,
proofread mathematical arguments for clarity, 
edit portions of the manuscript,
and assist in writing code for experiments.
Anthropic's Claude was used to assist with generating schematic illustrations.

\end{document}